\documentclass{article} 
\usepackage{iclr2027_conference}
\iclrpreprintcopy

\usepackage{times}
\usepackage{booktabs}
\usepackage{algorithmic}
\usepackage[utf8]{inputenc} 
\usepackage[T1]{fontenc}    
\usepackage{url}            
\usepackage{booktabs}       
\usepackage{amsfonts}       
\usepackage{nicefrac}       
\usepackage{microtype}      
\usepackage{soul}
\usepackage{graphicx}
\usepackage{amsmath}
\usepackage{outlines}
\usepackage{xurl}

\usepackage{amsmath,amsfonts,bm}

\def\eqref#1{equation~\ref{#1}}

\def\1{\bm{1}}

\DeclareMathAlphabet{\mathsfit}{\encodingdefault}{\sfdefault}{m}{sl}
\SetMathAlphabet{\mathsfit}{bold}{\encodingdefault}{\sfdefault}{bx}{n}

\usepackage{multirow}
\usepackage{paralist}

\usepackage{multicol}
\usepackage{diagbox}

\usepackage[ruled,noend]{algorithm2e}

\newtheorem{theorem}{Theorem}
\SetCommentSty{mycommfont}

\usepackage{here}

\usepackage{amsmath,amssymb,amsfonts,amsbsy,amsfonts,latexsym}
\usepackage{makecell}
\usepackage{xcolor}
\usepackage{colortbl}

\usepackage{tabularx,colortbl,xcolor}
\usepackage[normalem]{ulem}
\useunder{\uline}{\ul}{}

\usepackage{enumitem}

\usepackage{xparse}

\SetKwInput{KwInput}{Input}
\SetKwInput{KwRequire}{Require}

\usepackage{longtable}

\NewDocumentCommand{\var}{O{s} m O{}}{%
  \ensuremath{#1_{#2}^{#3}}
}
\usepackage{siunitx}

\newcommand{\commentout}[1]{}

\definecolor{light-gray}{gray}{0.80}

\def\t{{\bf t}}

\usepackage{amsthm}

\usepackage{amsthm}

\newtheorem{mytheorem}{Theorem}[section]
\newtheorem{theorem}[mytheorem]{Theorem}
\newtheorem{corollary}{Corollary}[section]
\newtheorem{assumption}{Assumption}
\newtheorem{lemma}{Lemma}[section]
\theoremstyle{definition}
\newtheorem{definition}{Definition}[section]
\newtheorem{proposition}{Proposition}[section]
\newtheorem{remark}{Remark}[section]
\theoremstyle{plain}
\newtheorem{remk}{Remark}[section]

\definecolor{myblue}{rgb}{0,0.2,0.8}
\usepackage{hyperref}
\usepackage{xcolor}
\usepackage{listings}

\definecolor{upforestgreen}{rgb}{0.6, 0.8, 0.2}

\usepackage{chngcntr}
\usepackage{adjustbox}
\usepackage{wrapfig}
\usepackage{arydshln}
\usepackage{tcolorbox}
\usepackage{amsthm,amsmath,amssymb}
\usepackage{amsfonts,dsfont}
\usepackage{mathtools}
\usepackage{float}

\definecolor{mygreen}{HTML}{009901}
\definecolor{myred}{HTML}{A52A2A}

\title{Scaling Laws for Dynamic Mini-Batch SGD in Sketched Linear Regression}

\author{%
  Ziyan Chen \\
  The University of Sydney \\
  \texttt{ziyan.chen@sydney.edu.au}
  \And
  Zhongzhu Zhou \\
  Together AI \\
  \texttt{zhongzhu.zhou@sydney.edu.au}
  \And
  Dingxuan Zhou \\
  The University of Sydney \\
  \texttt{dingxuan.zhou@sydney.edu.au}
}

\begin{document}

\maketitle

\begin{abstract}
Mini-batching is central to large-scale optimization, yet its role in
statistical scaling laws remains limited. We study one-pass and
multi-pass batch SGD for sketched linear regression under power-law spectral
and source conditions. Our analysis reveals a two-horizon phenomenon induced
by warmup--stable--decay schedules: deterministic learning is governed by the
full optimization trajectory, while stochastic error retains only a shorter
terminal memory. For dynamic batch schedules, the individual batch sizes enter
through influence-weighted summaries that measure how strongly each update
affects the final risk. Consequently, batching leaves the approximation and
optimization-bias laws unchanged at a fixed update horizon, but controls the
one-pass variance and the multi-pass fluctuation around full-batch gradient
descent. We obtain matching one-pass variance bounds and nearly matching
multi-pass fluctuation bounds, recover static-batch and full-batch behavior as
special cases, and derive an oracle square-root rule for allocating a fixed
iteration budget. These results identify WSD horizon
separation and final-risk influence as the mechanisms governing dynamic
mini-batch scaling.
\end{abstract}


\section{Introduction}

Scaling laws have become a standard language for describing progress in modern
machine learning: across many domains, prediction error follows regular power
laws in model size, data size, and compute. This pattern has been documented
from early large-scale studies across translation, language, vision, and speech
\citep{hestness2017predictable} to modern language-model scaling analyses
\citep{kaplan2020neural,hoffmann2022training} and multimodal autoregressive
modeling \citep{henighan2020autoregressive}. As a result, scaling laws are now
used not only to summarize experiments, but also to guide forecasting,
resource allocation, and training design
\citep{rosenfeld2019constructive,zhai2022scaling,alabdulmohsin2022revisiting,besiroglu2024chinchilla,muennighoff2023scaling,paquette2024phases}.

Empirical scaling laws do not by themselves explain where the exponents come
from, which parts of the risk they describe, or how algorithmic choices change
them. Rigorous results are therefore rarer. In statistically transparent models,
however, approximation, optimization, and sampling effects can be separated.
Sketched linear regression with spectral and source conditions provides a
setting in which these components admit explicit scaling laws. It also lets us
isolate two questions that are not visible in a
single-horizon or static-batch description: how WSD separates persistent
learning from terminal stochastic memory, and how a time-varying batch
schedule is aggregated according to its influence on final risk. This
perspective connects to theoretical accounts based on manifold arguments
\citep{sharma2020neural,bahri2021explaining}, constructive learning curves and
solvable models \citep{hutter2021learning,maloney2022solvable}, dynamical and
feature-learning analyses \citep{bordelon2024dynamical,bordelon2024feature},
and renormalized high-dimensional asymptotics, quantization, and emergent
scaling behavior
\citep{michaud2023quantization,atanasov2024scaling,dohmatob2024tale,paquette2024phases,ren2025emergence}.

Batch size is one of the most important large-scale training knobs because it
affects hardware utilization and gradient noise. Empirically, large-batch rules enabled ImageNet training with batches up to
8192 \citep{goyal2017accurate}, LARS pushed convolutional training to 8K and
32K \citep{you2017largebatch}, and later studies showed that the gains are
highly workload-dependent and eventually saturate
\citep{shallue2019dataparallel,golmant2018largebatch}. An
influential synthesis is the gradient-noise-scale viewpoint of
\citet{mccandlish2018largescalebatch}, and batch-dependent empirical scaling
laws have recently been studied for language models
\citep{shuai2024batchsize}. Most recently, the theory-guided empirical study
of \citet{li2026joint} proposed joint loss laws for learning-rate and batch-size
sequences, used them to derive budget-aware schedules and validated the
predictions on deep-model settings.

There is also a growing theoretical understanding of batch size \(B\) through SGD
noise. \citet{smith2018bayesian} modeled SGD by an SDE with noise scale
proportional to \(\epsilon N/B\), suggesting that the optimal batch size should
grow with both the learning rate and the dataset size \(N\). \citet{smith2018dontdecay} studied the
closely related strategy of increasing batch size instead of decaying the
learning rate. In least-squares and related settings, the statistical effects
of mini-batching, multiple passes, tail averaging, and implicit regularization
have also been analyzed extensively
\citep{linrosasco2017,jain2017parallelizing,mucke2019tail,ge2019step,zou2021benefits,wu2022covshift,pillaud2018multiplepasses}.
What remains missing in the sketched linear-regression setting is a statistical
account of how WSD separates learning from stochastic memory and how a dynamic
schedule influences final risk under both one pass and data reuse.

This paper develops such a theory for sketched linear regression trained with
mini-batch methods. We study two stochastic procedures: one-pass batch SGD and
multi-pass batch SGD, with each multi-pass batch sampled uniformly without
replacement. For one-pass training we allow an arbitrary deterministic
batch-size schedule \(B_1,\dots,B_T\) with \(\sum_tB_t=N\); for multi-pass
training we allow any deterministic schedule \(B_1,\dots,B_L\) with
\(1\le B_t\le N\). We work within the sketched linear-regression framework of
\citet{lin2024scaling,lin2025datareuse}, while focusing on the mechanisms
specific to WSD and dynamic batching. WSD produces a full learning horizon and
a shorter stochastic-memory horizon, while the effect of a dynamic schedule is
summarized by final-risk influence weights. These mechanisms apply differently
to one-pass variance and multi-pass fluctuation, and they lead to an explicit
oracle allocation rule. Our main contributions are as follows.

\begin{itemize}
    \item \textbf{Dynamic-batch scaling law for one-pass WSD.}
    For every deterministic schedule satisfying \(\sum_t B_t=N\), we derive a
    sharp final-risk law that separates two WSD time scales. Approximation and
    optimization bias are governed by the sketch dimension and the full
    learning horizon, and are unchanged by reallocating a fixed
    number of updates. The centered variance instead depends on the shorter
    memory horizon \(\gamma T/\log T\), with the entire schedule summarized by
    the influence-weighted effective batch size \(B_{\mathrm{eff}}\). This
    characterization recovers the fixed-batch horizon--noise tradeoff as a
    special case; see Theorem~\ref{thm:onepass-batch-sgd-scaling}.

    \item \textbf{Dynamic-batch scaling law for multi-pass WSD.}
    We decompose the multi-pass risk into a full-batch GD reference contribution
    and a sampling fluctuation. At a fixed horizon \(L\), the approximation,
    GD bias, and GD variance are independent of the batch schedule; all schedule
    dependence is captured by the influence-weighted finite-population factor
    \(\rho_{\mathrm{eff}}\). The resulting law covers arbitrary
    without-replacement schedules, recovers the static-batch correction, and
    reduces exactly to GD when every batch is full; see
    Theorem~\ref{thm:multipass-batch-sgd-scaling}.

    \item \textbf{Influence-matched batch allocation.}
    The two scaling laws yield a common oracle design principle under a fixed
    update horizon and sample-processing budget. In the continuous interior
    regime, both the one-pass variance and multi-pass fluctuation are minimized
    by allocating \(B_t^\star\propto\sqrt{\kappa_t}\), where \(\kappa_t\)
    measures the final-risk influence of update \(t\). Thus the optimal
    schedule assigns more samples precisely where gradient noise survives most
    strongly at the final iterate; see
    Corollary~\ref{cor:influence-matched-allocation}.
\end{itemize}

\paragraph{Notation.}
For positive functions \(f\) and \(g\), we write \(f\lesssim g\) and
\(f\gtrsim g\) for upper and lower bounds up to logarithmic factors, and
\(f\asymp g\) when both hold. We use \(\Theta(g)\) for two-sided scaling up
to positive constant factors only, whereas \(\widetilde{\Theta}(g)\) may
additionally suppress multiplicative logarithmic factors. Vector inner products are denoted by
\(\langle u,v\rangle\) or \(u^\top v\), and matrix inner products by
\(\langle A,B\rangle:=\operatorname{tr}(A^\top B)\). The symbol
\(\|\cdot\|\) denotes the operator norm for matrices and the \(\ell_2\)-norm
for vectors; \(\|v\|_A^2:=v^\top A v\) for PSD \(A\). We write
\(A\preceq B\) when \(B-A\) is PSD, \(\mu_j(A)\) and \(r(A)\) for the
eigenvalues and rank of a symmetric matrix, and \(\log\) for the natural
logarithm.
\section{Preliminaries}
\label{sec:prelim}

We study the normal GD iterate \(\theta_t\), one-pass batch SGD, and
multi-pass batch SGD. Each stochastic update \(t\) averages a mini-batch of
size \(B_t\). When \(B_t=1\) at every update, the one-pass procedure reduces
to standard single-sample SGD. In the multi-pass method, each batch is sampled
without replacement, independently across iterations; this procedure should
therefore not be confused with random reshuffling across an epoch.

\paragraph{Problem setup.}
Let \(\mathcal{H}\) be a finite- or countably infinite-dimensional Hilbert
space, and let \(P\) be a Borel probability measure on
\(\mathcal{H}\times\mathbb{R}\). A generic input--response pair is denoted by
\((x,y)\sim P\). The input covariance is
\(H:=\mathbb{E}[xx^\top]\), with eigenvalues
\((\lambda_i)_{i\ge 1}\). For \(w\in\mathcal{H}\), define the population
square-loss risk and a population minimizer by
\[
R(w):=\mathbb{E}\bigl[(\langle x,w\rangle-y)^2\bigr],
\qquad
w^\ast\in\arg\min_{w\in\mathcal{H}}R(w).
\]
Rather than observing \(x\) directly, we observe its \(M\)-dimensional sketch
\(Sx\), where \(S:\mathcal{H}\to\mathbb{R}^M\). Throughout the main results,
\(S\) is a Gaussian sketch: in an orthonormal eigenbasis of \(H\), its entries
are independent \(\mathcal N(0,1/M)\) random variables. A sketched parameter
\(u\in\mathbb{R}^M\) represents the original-space predictor \(S^\top u\),
whose population risk is
\[
R_M(u):=R(S^\top u)
=\mathbb{E}\bigl[(\langle Sx,u\rangle-y)^2\bigr].
\]

Given \(N\) i.i.d. samples \(D=\{(x_i,y_i)\}_{i=1}^N\) drawn from \(P\), we observe
\(O=\{(Sx_i,y_i)\}_{i=1}^N\). Write the empirical covariates and responses as \(X:=(x_1,\dots,x_N)^\top\) and \(y:=(y_1,\dots,y_N)^\top\), and define
\[
\Sigma:=SHS^\top,
\qquad
\widehat{\Sigma}:=\frac{1}{N}SX^\top X S^\top,
\qquad
\widehat{b}:=\frac{1}{N}SX^\top y.
\]
Conditioned on \(S\), the minimizer of \(R_M\) is
\[
u^\ast=(SHS^\top)^{-1}SHw^\ast=\Sigma^{-1}SHw^\ast.
\]

\paragraph{Optimization procedures.}
We compare three methods using the same warmup--stable--decay (WSD) schedule
family. For a generic update horizon \(J\), choose a warmup endpoint
\(J_{\mathrm w}\) and a stable-phase endpoint \(J_{\mathrm s}\) with
\(1\le J_{\mathrm w}<J_{\mathrm s}<J\), and define
\begin{equation}
\gamma_t^{(J)}
=
\begin{cases}
\gamma t/J_{\mathrm w},&1\le t\le J_{\mathrm w},\\[1mm]
\gamma,&J_{\mathrm w}<t\le J_{\mathrm s},\\[1mm]
\gamma\exp\!\left(-\frac{(t-J_{\mathrm s})\log J}{J-J_{\mathrm s}}\right),
&J_{\mathrm s}<t\le J.
\end{cases}
\label{eq:wsd-schedule}
\end{equation}
For fixed constants \(c_{\mathrm s},c_{\mathrm d}>0\), we assume \(J_{\mathrm s}-J_{\mathrm w}\ge c_{\mathrm s}J\) and \(J-J_{\mathrm s}\ge c_{\mathrm d}J\).

For one-pass SGD we instantiate \(J=T\) and drop the superscript on
\(\gamma_t^{(T)}\). For normal GD and multi-pass batch SGD we instantiate
\(J=L\) and again write the rates as \(\gamma_t\). Consequently, the learning
horizons scale as \(\gamma T\) and \(\gamma L\), while the final stochastic
memory scales as \(\gamma T/\log T\) and \(\gamma L/\log L\), respectively.
We retain distinct symbols because \(T\) is constrained by the one-pass data
partition, whereas \(L\) is an independently chosen data-reuse horizon.

Below, the superscripts \(\mathrm{op}\) and \(\mathrm{mp}\) distinguish
procedure-specific iterates and empirical batch statistics. Shared quantities,
including \(B_t\), \(I_t\), and \(\gamma_t\), remain unsuperscripted.

\paragraph{1. Normal GD.}
Let \((\gamma_t)\) be the prescribed stepsize schedule. The normal GD iterate is
\begin{equation}
\theta_t
=
\theta_{t-1}
-
\gamma_t\widehat{\Sigma}\theta_{t-1}
+
\gamma_t\widehat{b},
\qquad
\theta_0=0.
\label{eq:proc-normal-gd}
\end{equation}

\paragraph{2. One-pass Batch SGD.}
Fix \(T\ge3\) and arbitrary positive integers \(B_1,\dots,B_T\) satisfying
\(\sum_{t=1}^T B_t=N\), and partition \([N]\) into disjoint batches
\([N]=I_1\sqcup\cdots\sqcup I_T\), \(|I_t|=B_t\). For each batch, define
\[
\widehat{\Sigma}_t^{\mathrm{op}}
:=
\frac{1}{B_t}\sum_{i\in I_t} Sx_i x_i^\top S^\top,
\qquad
\widehat{b}_t^{\mathrm{op}}
:=
\frac{1}{B_t}\sum_{i\in I_t} Sx_i y_i.
\]
The one-pass batch SGD iterate is
\begin{equation}
u_t^{\mathrm{op}}
=
u_{t-1}^{\mathrm{op}}
-
\gamma_t\widehat{\Sigma}_t^{\mathrm{op}} u_{t-1}^{\mathrm{op}}
+
\gamma_t\widehat{b}_t^{\mathrm{op}},
\qquad
t=1,\dots,T,
\qquad
u_0^{\mathrm{op}}=0.
\label{eq:proc-onepass-batch-sgd}
\end{equation}
Thus the method performs \(T\) updates and uses every observation exactly
once; the one-pass constraint is \(\sum_{t=1}^T B_t=N\), not \(T=N/B\).

\paragraph{3. Multi-pass Batch SGD.}
Fix a deterministic schedule \(B_1,\dots,B_L\) with \(1\le B_t\le N\). At each step \(t\in[L]\), independently sample a subset \(I_t\subset[N]\) of size \(B_t\), uniformly without replacement within the mini-batch.

For each sampled batch \(I_t\), define
\[
\widehat{\Sigma}_t^{\mathrm{mp}}
:=
\frac{1}{B_t}\sum_{i\in I_t} Sx_i x_i^\top S^\top,
\qquad
\widehat{b}_t^{\mathrm{mp}}
:=
\frac{1}{B_t}\sum_{i\in I_t} Sx_i y_i.
\]
The multi-pass batch SGD iterate is
\begin{equation}
u_t^{\mathrm{mp}}
=
u_{t-1}^{\mathrm{mp}}
-
\gamma_t \widehat{\Sigma}_t^{\mathrm{mp}} u_{t-1}^{\mathrm{mp}}
+
\gamma_t \widehat{b}_t^{\mathrm{mp}},
\qquad
t=1,\dots,L,
\qquad
u_0^{\mathrm{mp}}=0.
\label{eq:proc-multipass-batch-sgd}
\end{equation}
The number of updates remains \(L\), independently of the schedule. There is
no one-pass constraint on \(\sum_tB_t\); when a gradient-evaluation budget is
needed, we write \(D_{\mathrm{grad}}:=\sum_{t=1}^LB_t\) to avoid conflict with the dataset notation \(D\). If \(B_t=N\) at every step, this procedure is exactly normal GD.

In summary, all three procedures optimize the same sketched empirical problem
under a common WSD schedule. The one-pass procedure allows a deterministic schedule
\((B_t)_{t=1}^T\) with \(\sum_tB_t=N\), whereas the multi-pass procedure
allows any deterministic schedule \((B_t)_{t=1}^L\) with \(1\le B_t\le N\).
When comparing a stochastic iterate with its deterministic reference, we always use the same stepsize schedule and the same number of updates: \(T\) for one-pass case and \(L\) for multi-pass case.

Unless a subscript indicates otherwise, expectations in the theorem
statements are taken over \(w^\ast\), the sampled dataset, and the mini-batch
randomness when applicable. Probability statements over the Gaussian sketch
\(S\) are stated separately.

\section{Main Results}
\label{sec:main-results}

This section develops scaling laws for one-pass and multi-pass sketched SGD
under the power-law/source-condition model. The central phenomenon is that WSD
creates two distinct time scales: deterministic learning depends on the full
optimization trajectory, whereas stochastic error is governed by a shorter
terminal-memory horizon. Dynamic batch sizes do not alter the deterministic
contributions at a fixed update horizon; instead, their effects are aggregated
through final-risk influence weights in the one-pass variance and multi-pass
fluctuation.

We first state the assumptions needed for the WSD and dynamic-batch analysis,
then give a common risk decomposition and derive the two scaling laws.

\begin{assumption}[Data assumptions]
\label{ass:data}
Assume the following conditions on the data distribution \(P\).
\begin{enumerate}[label=\textbf{\Alph*.}, ref=\theassumption.\Alph*]
    \item \label{ass:data-gaussian} \textbf{Gaussian design.}
    The feature vector satisfies $x \sim \mathcal{N}(0,H).$

    \item \label{ass:data-well-specified} \textbf{Homoscedastic label model.}
    Conditional on \(w^\ast\), the response satisfies
    \[
    y=\langle x,w^\ast\rangle+\varepsilon,
    \qquad
    \varepsilon\perp x,
    \qquad
    \mathbb E[\varepsilon]=0,
    \qquad
    \mathbb E[\varepsilon^2]=\sigma^2.
    \]

    \item \label{ass:data-power-law} \textbf{Power-law spectrum.} Eigenvalues of \(H\) satisfy \(\lambda_i \asymp i^{-a}\) for some \(a>1\).

\end{enumerate}
\end{assumption}

\paragraph{Interpretation.}
Assumption~\ref{ass:data} is a stylized random-design regression model. Gaussian
features and homoscedastic noise are idealizations, but they are standard in
least-squares theory because they isolate spectral effects and provide exact
moment identities; related SGD analyses work under broader moment-controlled
designs \citep{dieuleveut2016nonparametric,jain2017parallelizing}.
The power law means that the data contain many progressively weaker directions,
rather than having a sharp finite-dimensional cutoff. Such spectral decay is a
common capacity model in learning theory and has also been observed in
high-dimensional neural population responses \citep{stringer2019geometry}.

\begin{assumption}[Source condition in diagonal coordinates]
\label{ass:source-diag}
Assume without loss of generality that \(H\) is diagonal with non-increasing diagonal entries \(H=\operatorname{diag}(\lambda_1,\lambda_2,\dots)\). Assume the true parameter \(w^\ast\) satisfies: for some \(b>1\),
\[
\mathbb{E}[w_i^\ast w_j^\ast]=0
\quad\text{for all } i\neq j; \qquad 
\mathbb{E}[\lambda_i (w_i^\ast)^2]\asymp i^{-b}
\quad\text{for all } i\ge 1.
\]

\end{assumption}

\paragraph{Interpretation.}
Assumption~\ref{ass:source-diag} measures how the target aligns with the
eigendirections of the data covariance. Larger \(b\) places less target energy
in weak, high-index directions and therefore represents a smoother or easier
prediction problem. Conditions of this type are usually called source
conditions and are standard in kernel regression and statistical inverse
problems because, together with eigenvalue decay, they determine approximation
and learning rates \citep{caponnetto2007optimal,dieuleveut2016nonparametric,
pillaud2018multiplepasses}. Diagonalizing \(H\) is only a coordinate choice;
the uncorrelated-coordinate prior is the substantive average-case
simplification.

\begin{assumption}[Empirical contraction]
\label{ass:stepsize}
Under the notation of the theorem and its proof, assume
\[
\gamma\mu_1(\widehat\Sigma)\le c_0<1.
\]
\end{assumption}

\paragraph{Interpretation.}
Assumption~\ref{ass:stepsize} retains only the sample-dependent contraction
condition used by the empirical recursions. The peak-stepsize restriction is
stated directly in the multi-pass theorem.
The trace, effective-dimension, and maximum-norm conditions used later are
consequences of the Gaussian power-law model and the theorem's horizon
condition and therefore need not be imposed separately. We retain empirical
contraction because a fixed, unclipped WSD stepsize cannot satisfy it pathwise
for every realization of an unbounded Gaussian design.

For both methods, let \(\mu_1,\dots,\mu_M\) be the eigenvalues of
\(\Sigma=SHS^\top\), and define the final-risk influence of update \(t\) over a total horizon \(J\) by
\[
\kappa_t^{(J)}
:=
\gamma_t^2
\sum_{j=1}^M
\mu_j^2
\prod_{s=t+1}^J(1-\gamma_s\mu_j)^2,
\qquad t=1,\dots,J.
\]
Similarly, \(J=T\) denotes the one-pass influence and \(J=L\) for multi-pass. For one-pass and multi-pass respectively, define the effective batch size and effective fluctuation prefactor
\begin{equation*}
\frac{1}{B_{\mathrm{eff}}}
:=
(\displaystyle\sum_{t=1}^T \frac{\kappa_t^{(T)}}{B_t}) / (\displaystyle\sum_{t=1}^T\kappa_t^{(T)}),
\quad
\rho_{\mathrm{eff}}
:=
(\displaystyle\sum_{t=1}^L
\rho_t\kappa_t^{(L)})/(\displaystyle\sum_{t=1}^L\kappa_t^{(L)}),
\quad \text{where } 
\rho_t
:=
\frac{N-B_t}{B_t(N-1)}.
\end{equation*}
On the full-rank sketch event the denominator is positive. If \(B_{\min}:=\min_tB_t\) and \(B_{\max}:=\max_tB_t\), then \(B_{\min}\le B_{\mathrm{eff}}\le B_{\max}\); a constant schedule \(B_t\equiv B\) gives \(B_{\mathrm{eff}}=B\).

\begin{remark}[Meaning of final-risk influence]
The weight \(\kappa_t^{(J)}\) measures how much stochastic-gradient noise
injected at iteration \(t\) remains in the prediction risk at the final
iterate. Under the power-law spectrum and before the spectral cutoff reaches
the model dimension, it has the scaling
\[
\kappa_t^{(J)}
\asymp
\gamma_t^2\bigl(1+R_t^{(J)}\bigr)^{1/a-2},
\qquad
R_t^{(J)}:=\sum_{s=t+1}^J\gamma_s;
\]
see Appendix~\ref{subsec:app-final-risk-influence} for the justification and more interpretation.
\end{remark}

The next proposition packages the two stochastic procedures into one
structural statement. Their risks share the same baseline contribution: the
one-pass method further splits into bias plus variance, while the multi-pass
method splits into a common GD reference contribution plus a fluctuation term.

\begin{proposition}[Risk decompositions for the two stochastic procedures]
\label{prop:main-risk-decomp}
Assume Assumption~\ref{ass:data-well-specified}. Let $\bar u_T:=\mathbb{E}[u_T^{\mathrm{op}}].$ Then the risks decompose as follows:
\[
\begin{aligned}
\mathbb{E}[R_M(u_T^{\mathrm{op}})]
={}&
\underbrace{R_M(u^\ast)}_{\substack{\text{common baseline risk}}}
+
\underbrace{R_M(\bar u_T)-R_M(u^\ast)}_{\substack{\text{one-pass bias}\\\text{excess risk}}}
+
\underbrace{\mathbb{E}\bigl[R_M(u_T^{\mathrm{op}})-R_M(\bar u_T)\bigr]}_{\substack{\text{one-pass variance}\\\text{excess risk}}}.
\end{aligned}
\]
\[
\begin{aligned}
\mathbb{E}[R_M(u_L^{\mathrm{mp}})]
={}&
\underbrace{R_M(u^\ast)}_{\substack{\text{common baseline risk}}}
+
\underbrace{\mathbb{E}\bigl[R_M(\theta_L)-R_M(u^\ast)\bigr]}_{\substack{\text{common GD-reference}\\\text{excess risk}}}
+
\underbrace{\mathbb{E}\bigl[R_M(u_L^{\mathrm{mp}})-R_M(\theta_L)\bigr]}_{\substack{\text{multi-pass fluctuation}\\\text{excess risk}}}.
\end{aligned}
\]
Note that \(R_M(u^\ast) = \underbrace{R(w^\ast)}_{\substack{\text{irreducible risk}}} + \underbrace{\bigl[R_M(u^\ast)-R(w^\ast)\bigr]}_{\substack{\text{approximation risk}}}.\)
\end{proposition}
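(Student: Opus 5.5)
The two displayed identities are telescoping additions and subtractions, so most of the argument is bookkeeping. The only real content is in justifying that each term is well defined and that the expectations can be split. I would write \(\mathbb{E}[R_M(u_T^{\mathrm{op}})] = R_M(u^\ast) + (R_M(\bar u_T)-R_M(u^\ast)) + \mathbb{E}[R_M(u_T^{\mathrm{op}})-R_M(\bar u_T)]\), and similarly insert \(\pm R_M(\theta_L)\) and \(\pm R_M(u^\ast)\) for the multi-pass iterate. Linearity of expectation then gives both displays once every term is finite. Here \(u^\ast\) is a deterministic function of \(S\) and \(w^\ast\), and in the expectations \(R_M\) is evaluated conditionally on \(w^\ast\).

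To make this rigorous and interpretable, I would first use Assumption~\ref{ass:data-well-specified} to write \(R_M\) as a quadratic. Conditional on \(S\) and \(w^\ast\), \(y=\langle x,w^\ast\rangle+\varepsilon\) with \(\varepsilon\perp x\). The first-order condition defining \(u^\ast=\Sigma^{-1}SHw^\ast\) is \(\mathbb{E}[Sx(\langle Sx,u^\ast\rangle-y)]=\Sigma u^\ast-SHw^\ast=0\). Expanding the square around \(u^\ast\) therefore kills the cross term and gives
\[
R_M(u)=R_M(u^\ast)+\|u-u^\ast\|_{\Sigma}^2 .
\]
With this identity, both excess terms are nonnegative quadratic forms, and the one-pass variance term becomes \(\mathbb{E}\|u_T^{\mathrm{op}}-\bar u_T\|_\Sigma^2\). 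The latter follows from a bias--variance split whose cross term \(2\,\mathbb{E}\langle u_T^{\mathrm{op}}-\bar u_T,\Sigma(\bar u_T-u^\ast)\rangle\) vanishes. This requires \(\bar u_T\) to be the conditional mean given \(S,w^\ast\), which is how I would read the definition. The decomposition of \(R_M(u^\ast)\) is likewise a trivial \(\pm R(w^\ast)\). Using \(R(w)=\sigma^2+\|w-w^\ast\|_H^2\), it identifies \(R(w^\ast)=\sigma^2\) as irreducible and the approximation term as \(\|S^\top u^\ast-w^\ast\|_H^2\ge 0\).

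The only step needing care is integrability, so that each expectation is finite and the splitting is legitimate. The iterates are polynomial in the Gaussian data \(Sx_i\) and the labels \(y_i\), of degree bounded by the horizon, with deterministic stepsizes. Gaussian design gives all moments of \(x\), and \(\varepsilon\) has a finite second moment. Since each iterate is linear in the labels, only second moments of \(\varepsilon\) enter \(\|u\|_\Sigma^2\), so the moments are finite. I also need finiteness of \(\mathbb{E}[\lambda_i (w_i^\ast)^2]\)-type quantities, which holds under Assumption~\ref{ass:source-diag} when invoked. On the full-rank sketch event, \(\Sigma\) is invertible, so \(u^\ast\) is defined. I expect no genuine obstacle: the main point to state explicitly is the order of conditioning, namely on \(S\) first, then on \(w^\ast\) and the data.
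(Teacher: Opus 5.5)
Your proof is correct for the proposition as stated, but it takes a more elementary route than the paper and proves less along the way.

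You treat both displays as what they literally are: telescoping identities. The only content is then finiteness of each term, with \(u^\ast\) and \(\bar u_T\) read conditionally on \(S,w^\ast\), and \(R_M(u)=R_M(u^\ast)+\|u-u^\ast\|_\Sigma^2\) serves only as interpretation. The paper instead makes that quadratic identity the engine. It splits \(u-u^\ast\) around \(\bar u_T\) (one pass) or around \(\theta_L\) (multi-pass) and shows the cross terms vanish. Each summand is thereby identified with a squared \(\Sigma\)-norm, in particular
\[
\mathbb{E}\bigl[R_M(u_L^{\mathrm{mp}})-R_M(\theta_L)\bigr]=\mathbb{E}\|u_L^{\mathrm{mp}}-\theta_L\|_\Sigma^2 .
\]
This is the form the later fluctuation bounds actually control, since \(\mathrm{Fluc}^{\mathrm{wor}}_{B_{1:L}}\) is defined as a squared norm.

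You obtain the analogous one-pass identities, but you do not address the multi-pass one. It needs an extra ingredient. Because \(\theta_L-u^\ast\) is data-dependent, making \(\mathbb{E}\langle\theta_L-u^\ast,\Sigma(u_L^{\mathrm{mp}}-\theta_L)\rangle\) vanish requires centering conditionally on the data:
\[
\mathbb{E}[u_L^{\mathrm{mp}}-\theta_L\mid\mathcal D_{\mathrm{data}}]=0.
\]
This follows by induction from \(\Delta_t=(I-\gamma_t\widehat\Sigma_t^{\mathrm{mp}})\Delta_{t-1}+\gamma_t\xi_t\). It uses that \(I_t\) is independent of \(I_1,\dots,I_{t-1}\) given the data, and that \(\mathbb{E}[\widehat\Sigma_t^{\mathrm{mp}}\mid\mathcal D_{\mathrm{data}}]=\widehat\Sigma\) and \(\mathbb{E}[\widehat b_t^{\mathrm{mp}}\mid\mathcal D_{\mathrm{data}}]=\widehat b\).

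In the other direction, your integrability discussion covers a point the paper passes over silently. You are right to appeal to Gaussian design for it. Under the label-model assumption alone, even \(\mathbb{E}\|u_1^{\mathrm{op}}\|_\Sigma^2\) involves \(\mathbb{E}[\langle x,w^\ast\rangle^2\,\|Sx\|_\Sigma^2]\). So \(\mathbb{E}[R_M(u_T^{\mathrm{op}})]\) can be infinite without higher moments of \(x\).
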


\noindent The proof is given at the end of Appendix~\ref{sec:app-prelim}.
Proposition~\ref{prop:main-risk-decomp} is the organizing principle for the
rest of the section: the next theorem specializes the one-pass decomposition,
and the theorem after that specializes the multi-pass decomposition.

\begin{mytheorem}[One-pass dynamic-batch scaling law]
\label{thm:onepass-batch-sgd-scaling}
Assume Assumptions~\ref{ass:data-gaussian}, \ref{ass:data-well-specified},
\ref{ass:data-power-law}, and \ref{ass:source-diag}, together with the
one-pass WSD conditions in Section~\ref{sec:prelim}. Suppose
\[
b<a+1,
\qquad
\sigma^2\asymp 1,
\qquad
\frac{\gamma T}{\log T}\gtrsim 1,
\qquad
0<\gamma\le c.
\]
Then every deterministic schedule \(B_1,\dots,B_T\in\mathbb N\) satisfying
\(\sum_tB_t=N\) obeys, with probability at least \(1-\exp(-\Omega(M))\) over the randomness of \(S\),
\[
\begin{aligned}
\mathbb{E}\bigl[R_M(u_T^{\mathrm{op}})\bigr] - \sigma^2
=
\underbrace{\Theta\!\left(
M^{1-b}+(\gamma T)^{\frac{1-b}{a}}
\right)}_{\mathrm{Approx+Bias}}+\underbrace{\Theta\!\left(
\frac{\log T}{TB_{\mathrm{eff}}}
\min\!\left\{M,\left(\frac{\gamma T}{\log T}\right)^{\frac 1a}\right\}
\right)}_{\mathrm{Var}}.
\end{aligned}
\]
For the static schedule \(B_t\equiv B\), where \(B\mid N\) and \(T=N/B\ge3\), one has \(B_{\mathrm{eff}}=B\).
\end{mytheorem}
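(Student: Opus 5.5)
We start from the one-pass decomposition of Proposition~\ref{prop:main-risk-decomp} and treat its three pieces in turn: the approximation term \(R_M(u^\ast)-\sigma^2\), the bias \(R_M(\bar u_T)-R_M(u^\ast)\), and the centered variance \(\mathbb E[R_M(u_T^{\mathrm{op}})-R_M(\bar u_T)]\).

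\textbf{Approximation and bias.} For the approximation term we use the Gaussian-sketch analysis of \citet{lin2024scaling}. On an event of probability \(1-\exp(-\Omega(M))\), the spectrum of \(\Sigma\) satisfies \(\mu_j\asymp j^{-a}\) for \(j\le M\), and \(R_M(u^\ast)-R(w^\ast)\asymp M^{1-b}\) under the source condition with \(b<a+1\). This event is independent of the batch schedule.

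For the bias, the batches are disjoint and each \(\widehat\Sigma_t^{\mathrm{op}}\) is independent of \(u_{t-1}^{\mathrm{op}}\). Taking expectations therefore gives
\[
\bar u_t-u^\ast=(I-\gamma_t\Sigma)(\bar u_{t-1}-u^\ast).
\]
So \(\bar u_T\) is exactly the population-GD iterate, and the \(B_t\) do not enter. The bias equals \(\sum_j \mu_j\prod_t(1-\gamma_t\mu_j)^2\,\mathbb E[(u^\ast_j)^2]\). With the sketched source coefficients and \(R_T:=\sum_t\gamma_t\asymp\gamma T\), a standard spectral-cutoff computation then yields \(\Theta\bigl((\gamma T)^{(1-b)/a}\bigr)\), truncated at the \(M^{1-b}\) level.

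\textbf{Variance.} Write \(\eta_t:=u_t^{\mathrm{op}}-\bar u_t\). It obeys
\[
\eta_t=(I-\gamma_t\Sigma)\eta_{t-1}-\gamma_t(\widehat\Sigma_t^{\mathrm{op}}-\Sigma)\eta_{t-1}+\gamma_t\xi_t,
\]
where \(\xi_t=\widehat b_t^{\mathrm{op}}-\widehat\Sigma_t^{\mathrm{op}}\bar u_{t-1}\) is centered with covariance \(\frac1{B_t}\mathrm{Cov}\). Gaussian fourth-moment identities give the sandwich
\[
\frac{\sigma^2}{B_t}\Sigma\;\preceq\;\mathrm{Cov}(\xi_t)\;\preceq\;\frac{C}{B_t}\Bigl(\sigma^2+\|\bar u_{t-1}-u^\ast\|_\Sigma^2+\text{approx}\Bigr)\Sigma,
\]
together with the multiplicative term \(\mathbb E[(\widehat\Sigma_t-\Sigma)A(\widehat\Sigma_t-\Sigma)]\preceq \frac{C}{B_t}\operatorname{tr}(\Sigma A)\Sigma\).

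Unrolling the second-moment recursion for \(\mathbb E[\eta_t\eta_t^\top]\), projecting onto \(\Sigma\) at time \(T\), and ignoring the multiplicative term gives the leading-order expression
\[
\sum_t\frac{1}{B_t}\,\gamma_t^2\sum_j\mu_j^2\prod_{s>t}(1-\gamma_s\mu_j)^2\cdot\Theta(1)\;=\;\Theta\Bigl(\sum_t\kappa_t^{(T)}/B_t\Bigr)\;=\;\Theta\Bigl(\frac{\sum_t\kappa_t}{B_{\mathrm{eff}}}\Bigr).
\]
The lower bound uses \(\sigma^2\asymp1\). The upper bound uses the fact that the excess-risk factor is \(O(1)\) because the bias is bounded.

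\textbf{Controlling the multiplicative term.} This feedback term must be absorbed. We bootstrap: assume \(\operatorname{tr}(\Sigma\,\mathbb E\eta_t\eta_t^\top)\le C'\) for all \(t\). The feedback then adds at most \(C'\sum_t\kappa_t/B_t\) times a constant. The smallness needed to close the induction comes from \(\gamma\le c\) and \(\operatorname{tr}\Sigma=O(1)\), since the multiplicative noise is controlled by \(\gamma\operatorname{tr}(\Sigma)\le c\) per step, uniformly in \(B_t\ge1\). This is the step requiring the most care with a dynamic schedule: the usual static-batch argument must be replaced by a weighted induction whose weights are exactly the influences.

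\textbf{The influence sum.} It remains to evaluate \(\sum_t\kappa_t^{(T)}\asymp \frac{\log T}{T}\min\{M,(\gamma T/\log T)^{1/a}\}\). Using the remark \(\kappa_t\asymp\gamma_t^2(1+R_t)^{1/a-2}\) (and its \(M\)-truncated version \(\gamma_t^2\min\{M,R_t^{1/a}\}R_t^{-2}\)-type bound), the stable-phase contribution is \(\gamma^2\sum_{t\le T_s}(\gamma(T-t))^{1/a-2}\), which is negligible.

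The dominant part is the decay phase. There \(\gamma_t\) decays by a factor \(e\) every \(\Delta:=(T-T_s)/\log T\asymp T/\log T\) steps, and \(R_t\asymp\gamma_t\Delta\). Summing over geometric blocks gives
\[
\sum\Delta\,\gamma_t^2(\gamma_t\Delta)^{1/a-2}\asymp\Delta^{-1}(\gamma\Delta)^{1/a}.
\]
This is dominated by the block with \(\gamma_t\asymp\gamma\), i.e.\ by memory length \(\gamma\Delta\asymp\gamma T/\log T\), which gives the claimed \(\frac{\log T}{T}(\gamma T/\log T)^{1/a}\). When \((\gamma T/\log T)^{1/a}>M\), the sum over \(j\) saturates at \(M\).

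\textbf{Main obstacle.} The main obstacle is this decay-phase computation, specifically the matching lower bound for it. We must show that no single late step with tiny \(\gamma_t\) and large \(1/B_t\) breaks the equivalence. This is handled by the fact that the bounds are stated in terms of \(\sum_t\kappa_t/B_t\) exactly, through the definition of \(B_{\mathrm{eff}}\): only the factorization \(\sum\kappa_t/B_t=(\sum\kappa_t)/B_{\mathrm{eff}}\) is needed, and the \(\Theta\) estimate is required only for \(\sum\kappa_t\).
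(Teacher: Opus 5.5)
Your outline follows the paper's proof closely. Both use the three-term decomposition of Proposition~\ref{prop:main-risk-decomp} and the fact that $\bar u_t$ is population GD, so the schedule cannot enter Approx$+$Bias. Both use a second-moment recursion in which each driving term keeps its own factor $1/B_t$, with an a~priori bound absorbing the multiplicative term. Both use the exact identity $\sum_t\kappa_t^{(T)}/B_t=(\sum_t\kappa_t^{(T)})/B_{\mathrm{eff}}$, so only $\sum_t\kappa_t^{(T)}$ needs a two-sided estimate.

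Two of your variations are harmless. First, you merge the covariance-fluctuation and label-noise drivers into one $\xi_t$ instead of the paper's split $\delta_t=q_t+v_t$. This works: the cross term vanishes because $\eta_{t-1}$ is centered and independent of batch $t$, and the merged single-sample covariance is still $\succeq\sigma^2\Sigma$ because $\varepsilon\perp x$. As written, however, $\xi_t$ has mean $-\Sigma m_{t-1}$, so you must subtract $\Sigma(u^\ast-\bar u_{t-1})$. Second, you bootstrap on the scalar $\operatorname{tr}(\Sigma\,\mathbb E\eta_t\eta_t^\top)$ rather than on the paper's matrix bound $C_t\preceq K_vI$. This closes because $\sum_{s\le t}\gamma_s^2\mu^2\prod_{r=s+1}^t(1-\gamma_r\mu)^2\le\gamma\mu$, so the horizon-$t$ kernel is at most $\gamma\operatorname{tr}(\Sigma)$.

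\textbf{One step is wrong as stated: the stable phase.} For $t=T_{\mathrm s}-k$ the remaining mass is $R_t\asymp\gamma(k+\tau_{\mathrm d}^{\mathrm{op}})$, not $\gamma(T-t)=\gamma(k+T_{\mathrm d})$. The reason is that the exponential decay phase carries only mass $\asymp\gamma\tau_{\mathrm d}^{\mathrm{op}}=\gamma T_{\mathrm d}/\log T$. Your own decay-phase formula $R_t\asymp\gamma_t\Delta$ gives $\gamma\Delta$ at the boundary, a factor $\log T$ below $\gamma T_{\mathrm d}$, so your two formulas already disagree there.

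With the correct mass, $\gamma^2\sum_{k\ge0}\bigl(\gamma(k+\tau_{\mathrm d}^{\mathrm{op}})\bigr)^{1/a-2}\asymp(A_{\mathrm d}^{\mathrm{op}})^{1/a}/\tau_{\mathrm d}^{\mathrm{op}}$, which is exactly the decay-phase order (and $\lesssim M/\tau_{\mathrm d}^{\mathrm{op}}$ when saturated). So the stable phase is not negligible; in fact $\kappa_t^{(T)}$ increases throughout it and peaks near $T_{\mathrm s}$. The theorem still holds, because the $\Theta$ only needs the stable-phase sum to be bounded by the decay order, which is what Lemma~\ref{lem:batch-var-E1} proves. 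Replace your ``negligible'' by that bound.

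Two other places are thinner than the paper's argument.
\begin{itemize}
\item Your bias ``spectral-cutoff computation'' must handle the fact that $\mathbb E_{w^\ast}[u^\ast u^{\ast\top}]=\Sigma^{-1}\Sigma_w\Sigma^{-1}$ is not diagonal in the eigenbasis of $\Sigma$. The paper uses a head--tail split with the resolvent estimate for the upper bound, and von Neumann's inequality with $\mu_i(\Sigma_w)\asymp i^{-a-b}$ for the lower bound.
\item The pointwise form $\kappa_t\asymp\gamma_t^2(1+R_t)^{1/a-2}$ requires $(1+R_t)^{1/a}\le c_\star M$. In the saturated regime, the lower bound $M/\tau_{\mathrm d}^{\mathrm{op}}$ must therefore come either from decay steps with $R_t^{1/a}\asymp c_\star M$, or from the per-eigenvalue decay filter $\min\{1,(A_{\mathrm d}^{\mathrm{op}}\mu_j)^2\}/\tau_{\mathrm d}^{\mathrm{op}}$ that the paper uses.
\end{itemize}
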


\paragraph{Interpretation.}
The irreducible term \(\sigma^2\) is the label-noise floor. Within the
approximation-plus-bias contribution, \(M^{1-b}\) measures the information
lost by restricting prediction to the \(M\)-dimensional sketched model; since
\(b>1\), it decreases as the sketch dimension \(M\) grows. The term
\((\gamma T)^{(1-b)/a}\) is the optimization bias remaining after the full
WSD trajectory, and it decreases with the learning horizon \(\gamma T\).
Neither deterministic term depends on how a fixed number of updates is divided
into batches. The variance term instead reflects stochastic gradient noise.
Its spectral factor counts the directions that remain active over the shorter
WSD memory horizon \(\gamma T/\log T\), while \(B_{\mathrm{eff}}\) averages
the batch sizes according to their final-risk influence. Consequently, larger
batches reduce variance most when assigned to influential updates, and two
schedules with the same arithmetic mean \(N/T\) can have different risks. For
the static schedule \(B_t\equiv B\), one recovers \(B_{\mathrm{eff}}=B\) and
the usual horizon--noise tradeoff with \(T=N/B\). See Appendix~\ref{sec:app-main-theorem-proofs}, and in particular
Appendix~\ref{subsec:app-proof-onepass}, for the proof of
Theorem~\ref{thm:onepass-batch-sgd-scaling}.

We now turn to the multi-pass setting. The key question is how
mini-batching affects the decomposed risks, where the next theorem shows that only the fluctuation changes.

\begin{mytheorem}[Multi-pass dynamic-batch scaling law]
\label{thm:multipass-batch-sgd-scaling}
Assume Assumptions~\ref{ass:data-gaussian}, \ref{ass:data-well-specified},
\ref{ass:data-power-law}, \ref{ass:source-diag}, and
\ref{ass:stepsize}, together with the multi-pass WSD conditions in
Section~\ref{sec:prelim}. For a sufficiently small constant \(c_1>0\),
\[
b<a+1,
\quad
\sigma^2\asymp1,
\quad
\frac{\gamma L}{\log L}\gtrsim1,
\quad
\gamma\le\frac{c}{\log N},
\quad
(\gamma L)^{1/a}\le c_1M,
\quad
\gamma L\log N\le c_1N.
\]
Then every schedule \(1\le B_t\le N\) satisfies, with
probability at least \(1-\exp(-\Omega(M))\) over \(S\),
\[
\begin{aligned}
\mathbb E[R_M(u_L^{\mathrm{mp}})]-\sigma^2
=
\underbrace{\Theta(M^{1-b})}_{\mathrm{Approx}}
+\underbrace{\Theta((\gamma L)^{\frac{1-b}{a}})}_{\mathrm{GD\,Bias}}
+\underbrace{\Theta\!\left(
\frac{(\gamma L)^{\frac 1a}}{N}
\right)}_{\mathrm{GD\,Var}} 
+\underbrace{\widetilde{\Theta}\!\left(\rho_{\mathrm{eff}}
\frac{\log L}{L}
\left(\frac{\gamma L}{\log L}\right)^{\frac 1a}\right)}_{\mathrm{Fluc}}.
\end{aligned}
\]

For the static schedule \(B_t\equiv B\), with \(1\le B\le N\),
\(\rho_{\mathrm{eff}}\) is replaced by
\(\rho_B:=\frac{N-B}{B(N-1)}\). Here
\(\widetilde{\Theta}\) hides only the
\(\log N\) gap between the matching lower and upper bounds.
\end{mytheorem}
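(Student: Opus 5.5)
We start from the multi-pass decomposition in Proposition~\ref{prop:main-risk-decomp}, which splits the excess risk over $\sigma^2$ into four parts. The first is the approximation term $R_M(u^\ast)-R(w^\ast)$. The second is the GD-reference excess risk $\mathbb E[R_M(\theta_L)-R_M(u^\ast)]$. The third is the fluctuation $\mathbb E[R_M(u_L^{\mathrm{mp}})-R_M(\theta_L)]$. The fourth is $R(w^\ast)-\sigma^2$, which is zero under Assumption~\ref{ass:data-well-specified}. The approximation and GD-reference terms do not involve $(B_t)$, so they can be handled essentially as in the full-batch sketched analysis.

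For the approximation term, we use the Gaussian-sketch spectral estimates that hold with probability $1-\exp(-\Omega(M))$: the eigenvalues satisfy $\mu_j(\Sigma)\asymp j^{-a}$ for $j\le M$, and the residual $\|(I-H^{1/2}S^\top\Sigma^{-1}SH^{1/2})H^{1/2}w^\ast\|^2$ has expectation $\Theta(M^{1-b})$. These are the standard estimates underlying the approximation term of Theorem~\ref{thm:onepass-batch-sgd-scaling}.

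For the GD reference, write $\widehat b=\widehat\Sigma u^\ast+\widehat\xi$, where $\widehat\xi$ collects label noise and model misspecification. Unrolling the recursion gives $\theta_L-u^\ast=-\Pi_{1:L}u^\ast+\sum_t\gamma_t\Pi_{t+1:L}\widehat\xi$, with $\Pi_{t+1:L}=\prod_{s>t}(I-\gamma_s\widehat\Sigma)$. Under $\gamma L\log N\le c_1N$ and $(\gamma L)^{1/a}\le c_1M$, the empirical covariance $\widehat\Sigma$ is spectrally equivalent to $\Sigma$ on the top $(\gamma L)^{1/a}$ directions; we prove this by matrix concentration, with the $\log N$ entering through the uniform bound on $\|Sx_i\|$. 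Given this equivalence, the polynomial filter $\prod_s(1-\gamma_s\mu)$ with total step size $\sum_s\gamma_s\asymp\gamma L$ produces the bias $\Theta((\gamma L)^{(1-b)/a})$, because $b<a+1$ makes the sum over the unlearned tail dominate. The noise part becomes $\frac{\sigma^2}{N}\operatorname{tr}\bigl(\Sigma(I-\Pi)^2\widehat\Sigma^{-1}\bigr)$, which counts the learned directions and gives $\Theta((\gamma L)^{1/a}/N)$.

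The main step, and the main obstacle, is the fluctuation. Define $e_t=u_t^{\mathrm{mp}}-\theta_t$. Then
\[
e_t=(I-\gamma_t\widehat\Sigma)e_{t-1}-\gamma_t\zeta_t,
\qquad
\zeta_t:=(\widehat\Sigma_t^{\mathrm{mp}}-\widehat\Sigma)u_{t-1}^{\mathrm{mp}}-(\widehat b_t^{\mathrm{mp}}-\widehat b).
\]
Conditionally on the data and the past, $\zeta_t$ has mean zero, so the cross term vanishes and $\mathbb E[R_M(u_L)-R_M(\theta_L)]=\mathbb E\|e_L\|_\Sigma^2$. This equals the sum over $t$ of $\gamma_t^2\,\mathbb E\|\Pi_{t+1:L}\zeta_t\|_\Sigma^2$ in the empirical eigenbasis. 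Sampling without replacement gives the exact covariance identity $\mathrm{Cov}(\zeta_t\mid\cdot)=\rho_t\,\widehat C(u_{t-1})$, where $\widehat C(u)=\frac1N\sum_i(g_i(u)-\bar g(u))(g_i(u)-\bar g(u))^\top$ is the empirical covariance of the per-sample gradients $g_i(u)=Sx_i(x_i^\top S^\top u-y_i)$.

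The work is to show $\widehat C(u_{t-1})\asymp\sigma_{\mathrm{eff}}^2\,\widehat\Sigma$ from both sides, with $\sigma_{\mathrm{eff}}^2\asymp1$. The lower bound is driven by the label noise together with Gaussian fourth-moment identities. For the upper bound, we first control $\|u_{t-1}-u^\ast\|_\Sigma^2\lesssim1$ along the trajectory by an induction, using $\gamma\le c/\log N$ and Assumption~\ref{ass:stepsize}; this is where the $\log N$ factor enters and why we only obtain $\widetilde\Theta$. With that in hand, we replace $\widehat\Sigma$ by $\Sigma$ inside the products and obtain
\[
\mathbb E\|e_L\|_\Sigma^2\asymp\sum_t\rho_t\kappa_t^{(L)}=\rho_{\mathrm{eff}}\sum_t\kappa_t^{(L)}.
\]

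It remains to evaluate $\sum_t\kappa_t^{(L)}$, using the remark's formula $\kappa_t\asymp\gamma_t^2(1+R_t)^{1/a-2}$. Updates in the decay phase with $R_t\lesssim\gamma L/\log L$ contribute about $\gamma^2(\gamma L/\log L)^{1/a-2}$ each, but only roughly $\gamma L/\log L$ worth of them survive. Integrating over the exponential decay gives
\[
\sum_t\kappa_t^{(L)}\asymp\gamma\Bigl(\tfrac{\gamma L}{\log L}\Bigr)^{1/a-1}=\tfrac{\log L}{L}\Bigl(\tfrac{\gamma L}{\log L}\Bigr)^{1/a}.
\]
The stable phase is suppressed by the $R_t^{1/a-2}$ decay. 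Here $(\gamma L)^{1/a}\le c_1M$ ensures that the spectral cutoff does not saturate at $M$. For static batches $\rho_t\equiv\rho_B$, and the stated special case follows.
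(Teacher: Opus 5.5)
You have the same architecture as the paper: Proposition~\ref{prop:main-risk-decomp}, the cited approximation and GD lemmas, a martingale fluctuation whose conditional covariance carries the finite-population factor $\rho_t$, kernel replacement, and the WSD total-mass computation. Your linearization $e_t=(I-\gamma_t\widehat\Sigma)e_{t-1}-\gamma_t\zeta_t$ is a valid alternative to the paper's $\Delta_t=(I-\gamma_t\widehat\Sigma_t)\Delta_{t-1}+\gamma_t\xi_t$. It makes the second-moment expansion exact, and the kernel it produces, $\gamma_t^2\operatorname{tr}(\Sigma\widehat\Sigma\Pi_{t+1:L}^2)$, is exactly the paper's cross-kernel $\widehat\kappa_t^{(\times)}$. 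Replacing it by $\kappa_t^{\mathrm{mp}}$ therefore needs both Lemma~\ref{lem:dynamic-kernel-comparison} and Lemma~\ref{lem:dynamic-kernel-lower-comparison}.

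\textbf{The gap is the upper bound on $\widehat C(u_{t-1})$.} Your linearization puts all the randomness in $\widehat C(u_{t-1})$, evaluated at a random, data-dependent iterate, and your proposed bound does not control it. Since $\operatorname{tr}(P\widehat C(u))\le\frac1N\sum_i(z_i^\top Pz_i)\,r_i(u)^2$ with $r_i(u)=z_i^\top u-y_i$, you must control in-sample residuals of an iterate that depends on those same $z_i$. The bound $\|u_{t-1}-u^\ast\|_\Sigma^2\lesssim1$ says nothing about them. The quantity $z_i^\top(u_{t-1}-u^\ast)$ is not a fresh Gaussian with variance $\|u_{t-1}-u^\ast\|_\Sigma^2$. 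The deterministic conversions, via $\|\Sigma^{-1/2}z_i\|_2^2\approx M$ or via $\|z_i\|_2^2\|u_{t-1}-u^\ast\|_2^2$ (where $\|u^\ast\|_2$ grows with $M$ when $b<a+1$), lose far too much. The multiplicative part $(\widehat\Sigma_t^{\mathrm{mp}}-\widehat\Sigma)e_{t-1}$ of $\zeta_t$ has the same problem. Its contribution is $\frac1N\sum_iZ_iE_{t-1}Z_i\preceq\|E_{t-1}\|_2\,C_A\widehat\Sigma$ with $E_{t-1}:=\mathbb E[e_{t-1}e_{t-1}^\top]$, so it needs an operator-norm a priori bound, not a $\Sigma$-norm one.

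\textbf{What the paper does instead.} The missing idea is the paper's, in three steps.
\begin{itemize}
\item It keeps the noise on the deterministic GD path, $\xi_t=-(\widehat\Sigma_t-\widehat\Sigma)(\theta_{t-1}-u^\ast)+(\widehat c_t-\widehat c)$. This gives $G_t\preceq\sigma_\xi^2\widehat\Sigma$ with $\sigma_\xi^2=2\max_{i,t}[(x_i^\top S^\top(\theta_{t-1}-u^\ast))^2+\widetilde\varepsilon_i^2]$ (Lemma~\ref{lem:wr-dynamic-covariance}).
\item It bounds those in-sample maxima by comparing $\theta_t$ with leave-one-out iterates $\theta_t^{(-i)}$, which are independent of $x_i$. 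This is the role of $a_{\max}$, $B_\Delta$ and $F_{\mathrm{mp}}$.
\item It absorbs the random transition through the crude bound $\mathbb E_{\mathrm{batch}}[\Delta_t\Delta_t^\top]\preceq\sigma_\xi^2\gamma I$ together with $\gamma C_A\le1/8$ (Lemma~\ref{lem:batch-D2-analogue}).
\end{itemize}
The $\log N$ gap comes from that maximum over $N$ residuals and from $C_A\lesssim\log N$. The condition $\gamma\le c/\log N$ only buys $\gamma C_A\le1/8$; it is not the source of the logarithm.

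\textbf{Two further problems.} Your target $\widehat C(u_{t-1})\asymp\sigma_{\mathrm{eff}}^2\widehat\Sigma$ with $\sigma_{\mathrm{eff}}^2\asymp1$ would give a sandwich with no $\log N$ gap, which contradicts the $\widetilde\Theta$ you then claim. As a PSD upper bound it is also false in general. Take $M\ge N$, which the hypotheses allow, and Gaussian label noise. A direction $v$ orthogonal to all $z_j$ but the one with largest $|\widetilde\varepsilon_j|$ already makes the ratio of $v^\top\widehat C(u^\ast)v$ to $v^\top\widehat\Sigma v$ of order $\max_i\widetilde\varepsilon_i^2\asymp\log N$. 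On the lower side, you must show that the cross terms between the fresh label noise and the noise propagated through the iterate are small. In the paper this is the factor $\beta_\xi^{\mathrm{WSD}}$, which requires $A_LC_A/N\ll1$ and hence $\gamma L\log N\le c_1N$. Your appeal to ``label noise and fourth-moment identities'' does not address this.
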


\paragraph{Interpretation.}
The noise floor, approximation error, GD bias, and GD variance have their
standard meanings: they respectively represent irreducible label noise,
information lost through the finite sketch dimension, incomplete optimization
along the full GD trajectory, and estimation error from the finite dataset.
The new schedule-dependent term is the fluctuation around that full-batch GD
trajectory. A batch sampled without replacement contributes the
finite-population factor \(\rho_t=(N-B_t)/(B_t(N-1))\), and
\(\rho_{\mathrm{eff}}\) averages these factors according to the final-risk
influence of each update. Thus enlarging an influential batch reduces the
fluctuation, while a full batch has \(\rho_t=0\) and introduces none. The
remaining factor is governed by the shorter WSD stochastic-memory horizon.
Hence, when \(L\) and the learning-rate trajectory are fixed, the batch
schedule changes only the fluctuation; the approximation, GD bias, and GD
variance remain unchanged. See Appendix~\ref{subsec:app-proof-multipass} for the proof of
Theorem~\ref{thm:multipass-batch-sgd-scaling}.

\begin{remark}[Multi-pass sampling with replacement]
If each multi-pass mini-batch is sampled with replacement, the same scaling law holds after defining \(\rho_t=1/B_t\) in \(\rho_{\mathrm{eff}}\); all other terms are unchanged. See Appendix~\ref{subsec:wr-batch-fluc-source}, in particular Lemma~\ref{lem:wr-batch-fluc-source}.
\end{remark}

\subsection{Discussion and Related Works}

\paragraph{Empirical evidence.}
A substantial empirical literature shows that batch size is a central training
control. Large-batch ImageNet training motivated the linear learning-rate
scaling rule \citep{goyal2017accurate}; increasing the batch size during
training reproduced much of the effect of learning-rate decay while
improving parallelism \citep{smith2018dontdecay}; and measurements across
models and workloads identified a task-dependent critical batch size beyond
which additional parallelism yields diminishing returns
\citep{mccandlish2018largescalebatch,shallue2019dataparallel}. These studies
are primarily empirical and focus on training speed, hardware utilization, or
generalization in deep networks. The theory-inspired empirical work of
\citet{li2026joint} additionally fits joint loss laws to complete learning-rate
and batch-size sequences and validates the resulting budget-aware schedules
across deep-model settings. Our results address a complementary question:
within a tractable statistical model under one pass and data reuse, which iterations' batch sizes determine the final prediction risk?

\paragraph{Theoretical comparisons.}
\citet{jain2017parallelizing} analyze mini-batching and averaging for finite-dimensional least squares, but not power-law sketches, WSD, or dynamic
allocations. Under spectral and source conditions, \citet{mucke2019tail} give
rates for tail-averaged one-pass kernel SGD with mini-batching, but use
prescribed batches and an averaged estimator rather than a final WSD iterate or data reuse. \citet{wang2026fastcatchup} optimize batch schedules through a
continuous-time SDE and functional scaling laws for one-pass kernel regression,
but do not provide exact finite-horizon spectral weights or a multi-pass risk
decomposition. In contrast, our discrete analysis features another risk decomposition with dynamic
schedules governed by the influence \(\kappa_t\).

\paragraph{What batch size changes.}
The common message of the two theorems is that batch size acts as a
noise-control parameter rather than a deterministic regularizer. In the
one-pass theorem, batching lowers the centered variance; in the multi-pass
theorem, it lowers only the fluctuation around the GD reference path. This
interpretation is consistent with the gradient-noise-scale viewpoint of
\citet{smith2018bayesian,mccandlish2018largescalebatch} and with empirical
large-batch studies showing that large batches can be effective when properly
tuned but that their gains eventually saturate
\citep{goyal2017accurate,smith2018dontdecay,shallue2019dataparallel}. Our
theorems make this principle explicit in the present linear-regression
setting: once the stochastic terms fall below the approximation and
optimization terms, further increasing the influential batch sizes no longer
changes the leading statistical scaling. In the multi-pass setting, the
finite-population correction decreases as the batch grows. The benefit is therefore greatest when the most influential
iterations receive substantial batches.

\subsection{Influence-matched batch allocation}
\label{subsec:influence-matched-allocation}

The two theorems also yield an oracle allocation rule when the number of
updates and the total sample-processing budget are fixed.

\begin{corollary}[Influence-matched batch allocation]
\label{cor:influence-matched-allocation}
Assume the respective conditions of
Theorems~\ref{thm:onepass-batch-sgd-scaling} and
\ref{thm:multipass-batch-sgd-scaling}. In the continuous interior regime,
the optimal schedules are
\[
\boxed{
B_t^\star
=
(N\sqrt{\kappa_t^{(T)}})/
(\displaystyle\sum_{s=1}^T\sqrt{\kappa_s^{(T)}})
}
\qquad\text{(one pass, fixed \(T\), \(\sum_{t=1}^T B_t=N\))},
\]
and
\[
\boxed{
B_t^\star
=
(D_{\mathrm{grad}}\sqrt{\kappa_t^{(L)}})/
(\displaystyle\sum_{s=1}^L\sqrt{\kappa_s^{(L)}})
}
\qquad\text{(multi-pass, fixed \(L\), \(\sum_{t=1}^L B_t=D_{\mathrm{grad}}\))}.
\]
If a batch-size bound is active, the corresponding square-root allocation is
clipped through the usual KKT conditions; integer schedules can be obtained by
budget-preserving rounding.
\end{corollary}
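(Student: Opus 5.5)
The plan is to reduce each allocation problem to minimizing a single schedule-dependent functional and then solve it by Lagrange multipliers. By Theorem~\ref{thm:onepass-batch-sgd-scaling}, once \(T\) is fixed the approximation and bias terms \(M^{1-b}+(\gamma T)^{(1-b)/a}\) do not depend on \((B_t)\). Since \(\kappa_t^{(T)}\) depends only on the stepsizes and on \(\Sigma\), not on the batches, the same holds for \(\sum_t\kappa_t^{(T)}\) and for the spectral factor \(\min\{M,(\gamma T/\log T)^{1/a}\}\). The variance term is \(\frac{\log T}{T}\cdot\frac{1}{B_{\mathrm{eff}}}\cdot(\text{spectral factor})\). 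By the definition of \(B_{\mathrm{eff}}\), we have \(1/B_{\mathrm{eff}}=\sum_t(\kappa_t^{(T)}/B_t)/\sum_t\kappa_t^{(T)}\). The relevant objective is therefore \(F(B)=\sum_{t=1}^T\kappa_t^{(T)}/B_t\), up to a schedule-independent positive factor. I would actually work with the exact centered variance from the proof of the one-pass theorem, namely \(\frac{\sigma^2_{\mathrm{eff}}}{\cdot}\sum_t\kappa_t/B_t\) plus higher-order terms. This makes the optimization meaningful at the level of the leading expression and not only up to \(\Theta\)-constants. The statement ``optimal'' will be interpreted as minimizing this leading influence-weighted functional.

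For the multi-pass case, Theorem~\ref{thm:multipass-batch-sgd-scaling} shows that at fixed \(L\) only the fluctuation depends on the schedule. Through \(\rho_{\mathrm{eff}}\), it is proportional to \(\sum_t\rho_t\kappa_t^{(L)}\). We can write \(\rho_t=\frac{N}{N-1}\cdot\frac1{B_t}-\frac{1}{N-1}\). Hence \(\sum_t\rho_t\kappa_t^{(L)}=\frac{N}{N-1}\sum_t\kappa_t^{(L)}/B_t-\frac{1}{N-1}\sum_t\kappa_t^{(L)}\), and the second sum is schedule-independent. The multi-pass problem is therefore again the minimization of \(\sum_t\kappa_t/B_t\), now under the constraint \(\sum_tB_t=D_{\mathrm{grad}}\) and \(1\le B_t\le N\). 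For the with-replacement variant, \(\rho_t=1/B_t\) gives the same reduction.

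Next I would solve the relaxed continuous problem: minimize \(\sum_t\kappa_t/B_t\) over \(B_t>0\) subject to \(\sum_tB_t=K\), where \(K=N\) or \(K=D_{\mathrm{grad}}\). The objective is strictly convex on the positive orthant, since each \(\kappa_t>0\) on the full-rank sketch event, and the constraint is linear. Hence the KKT conditions are necessary and sufficient. Stationarity gives \(\kappa_t/B_t^2=\lambda\), so \(B_t=\sqrt{\kappa_t/\lambda}\). Imposing the budget yields \(B_t^\star=K\sqrt{\kappa_t}/\sum_s\sqrt{\kappa_s}\). A cleaner certificate is Cauchy--Schwarz:
\[
\Bigl(\sum_t\sqrt{\kappa_t}\Bigr)^2\le\Bigl(\sum_t\kappa_t/B_t\Bigr)\Bigl(\sum_tB_t\Bigr),
\]
with equality iff \(B_t\propto\sqrt{\kappa_t}\). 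This shows that the minimum value is \((\sum_t\sqrt{\kappa_t})^2/K\). ``Continuous interior regime'' means that this solution satisfies \(1\le B_t^\star\le N\). When a bound is active, the KKT system with multipliers for the box constraints gives \(B_t=\mathrm{clip}(\sqrt{\kappa_t/\lambda},1,N)\), with \(\lambda\) chosen to meet the budget; the objective is monotone in \(\lambda\), so such a \(\lambda\) exists. Budget-preserving rounding changes each \(1/B_t\) by a factor \(1+O(1/B_t)\), which is negligible when the \(B_t^\star\) are large.

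The main obstacle is justifying that the \(\Theta\)-level statements of the theorems actually identify an optimizer. A \(\Theta\)-bound only determines the risk up to constants, so minimizing the displayed expression does not by itself minimize the true risk. I would address this by going back into the appendix proofs, where the centered one-pass variance is expressed exactly (conditional on \(S\)) as \(\sum_t\frac{1}{B_t}\cdot(\text{noise second moment})\cdot\kappa_t\) plus a remainder. The Gaussian fourth-moment identity makes the per-update noise covariance proportional to \(\Sigma\)-weighted terms of the same form, so the schedule enters exactly through \(\kappa_t/B_t\). I would then state the corollary as exact optimality for this leading functional, treating the remainder as lower order. If the per-step noise level varies with \(t\) through the risk of the iterate, I would absorb it into a modified influence. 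In the regime of the theorems, where the noise is dominated by \(\sigma^2\asymp1\), this modification changes \(\kappa_t\) only by constant factors.
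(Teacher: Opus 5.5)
Your proposal is correct and follows essentially the same route as the paper. You isolate the schedule dependence in $\sum_t\kappa_t/B_t$, through $B_{\mathrm{eff}}$ for one pass and through the affine rewrite $\rho_t=\frac{N}{N-1}\frac{1}{B_t}-\frac{1}{N-1}$ for multi-pass, and then solve the budget-constrained problem by Lagrange stationarity, with KKT clipping and budget-preserving rounding.

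Your Cauchy--Schwarz certificate and your caveat that the $\Theta$-level laws only justify optimizing this influence-weighted proxy are sound additions; optimizing the proxy is all the paper's proof does. Note, however, that the appendix supplies only the sandwich $\sigma^2\mathcal K_{B_{1:T}}\le\mathrm{Var}_{B_{1:T}}\le C\,\mathcal K_{B_{1:T}}$, not the exact leading-term representation you plan to extract, so that refinement would need its own derivation.
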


\paragraph{Interpretation and related work.}
In both cases, \(B_t^\star\propto\sqrt{\kappa_t}\): more samples are assigned
to updates whose noise has greater final-risk influence. This is a
fixed-horizon oracle rule because \(\kappa_t\) depends on the population
sketched covariance. It optimizes the one-pass variance for fixed \(T\), or
the multi-pass fluctuation for fixed \(L\), without changing the deterministic
terms. By contrast, \citet{wang2026fastcatchup} study one-pass kernel
regression through continuous-time functional scaling laws and derive
task-dependent increasing or late-switch batch schedules. Our discrete rule
is based directly on final-risk influence at a fixed horizon and also applies
to multi-pass data reuse; it does not by itself imply a monotone or
single-peaked schedule. The proof is in
Appendix~\ref{subsec:app-proof-influence-matched-allocation}.

\subsection{Proof sketch}
The common idea is to examine each spectral direction and ask two questions:
how much signal is learned over the entire trajectory, and how much noise
injected at iteration \(t\) remains at the final iterate. WSD gives different
answers. Its cumulative stepsize controls learning and bias, while the decay
phase rapidly forgets earlier noise, leaving a shorter stochastic-memory
horizon.

For one-pass SGD, write the error as its mean plus a centered fluctuation,
\(e_t=m_t+\delta_t\). The mean follows population GD, so its final filter uses
the full WSD trajectory. By contrast, a batch of size \(B_t\) injects noise of
order \(1/B_t\), which is then contracted by all subsequent updates. The
quantity \(\kappa_t^{(T)}\) measures exactly how much of this time-\(t\) noise
survives in the final risk. Summing over independent batches therefore produces the schedule dependence \(B_{\mathrm{eff}}\). Separate covariance- and label-noise
comparisons give matching upper and lower bounds; the WSD decay phase then
determines the shorter variance horizon.

For multi-pass SGD, we instead subtract the full-batch GD trajectory and study
\(\Delta_t=u_t^{\mathrm{mp}}-\theta_t\). Conditioned on the dataset, each new
batch produces a centered perturbation. Without-replacement sampling scales
its covariance by \(\rho_t\), and future updates again damp it according to
\(\kappa_t^{(L)}\). Consequently, the entire dynamic schedule enters through \(\sum_{t=1}^L\rho_t\kappa_t^{(L)}\). 
This also explains why a full batch contributes no fluctuation. Covariance
comparison and martingale arguments turn this intuition into the two-sided
fluctuation estimates, up to the stated logarithmic factor.

Finally, the power-law spectrum and source condition convert these spectral
filters into the rates in Theorems~\ref{thm:onepass-batch-sgd-scaling} and
\ref{thm:multipass-batch-sgd-scaling}. Thus the proofs separate cleanly into a
full-horizon deterministic filter and an influence-weighted stochastic filter
with shorter memory.

\section{Experiments}
\label{sec:experiments}
\raggedbottom

We test whether final-risk influence organizes dynamic batch schedules and
whether the oracle allocation predicted by
Corollary~\ref{cor:influence-matched-allocation} improves on simple heuristic
schedules. We simulate the sketched Gaussian linear model with
\(a=2\), \(b=1.5\), \(d=10^4\), \(M=64\), \(N=512\),
\(\sigma=1\), and \(\gamma=0.05\). The sketch and target are fixed, while
the dataset and optimization randomness are refreshed over \(300\)
repetitions. Full implementation details are in
Appendix~\ref{sec:exp-setup}.

The first three experiments compare four schedules at each sample-processing
budget: the oracle square-root allocation
\(B_t\propto\sqrt{\kappa_t}\), a constant allocation, an
early-large/late-small allocation, and an early-small/late-large allocation.
All schedules have the same number of updates and total budget. For one-pass SGD, we fix \(T=64\), vary \(N\), and measure final-iterate variance. For multi-pass SGD, we fix \(N=L=512\), vary
\(D_{\mathrm{grad}}=\sum_tB_t\), and measure fluctuation errors.

\begin{figure}[t]
    \centering
    \begin{minipage}[t]{0.48\textwidth}
        \centering
        \includegraphics[width=\linewidth]{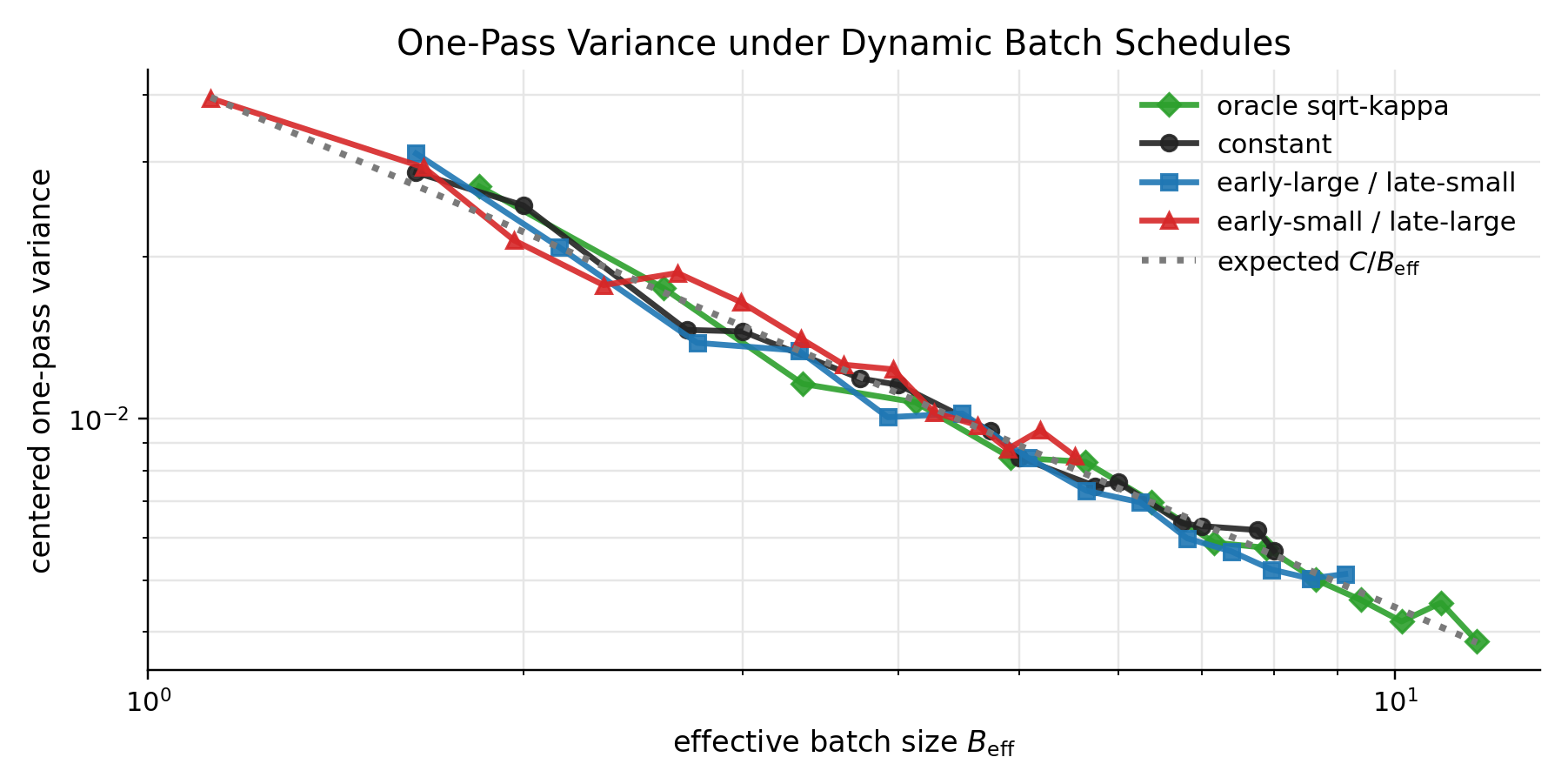}
        \par\smallskip
        {\small (a) One-pass \(B_{\mathrm{eff}}\) collapse\par}
    \end{minipage}
    \hfill
    \begin{minipage}[t]{0.48\textwidth}
        \centering
        \includegraphics[width=\linewidth]{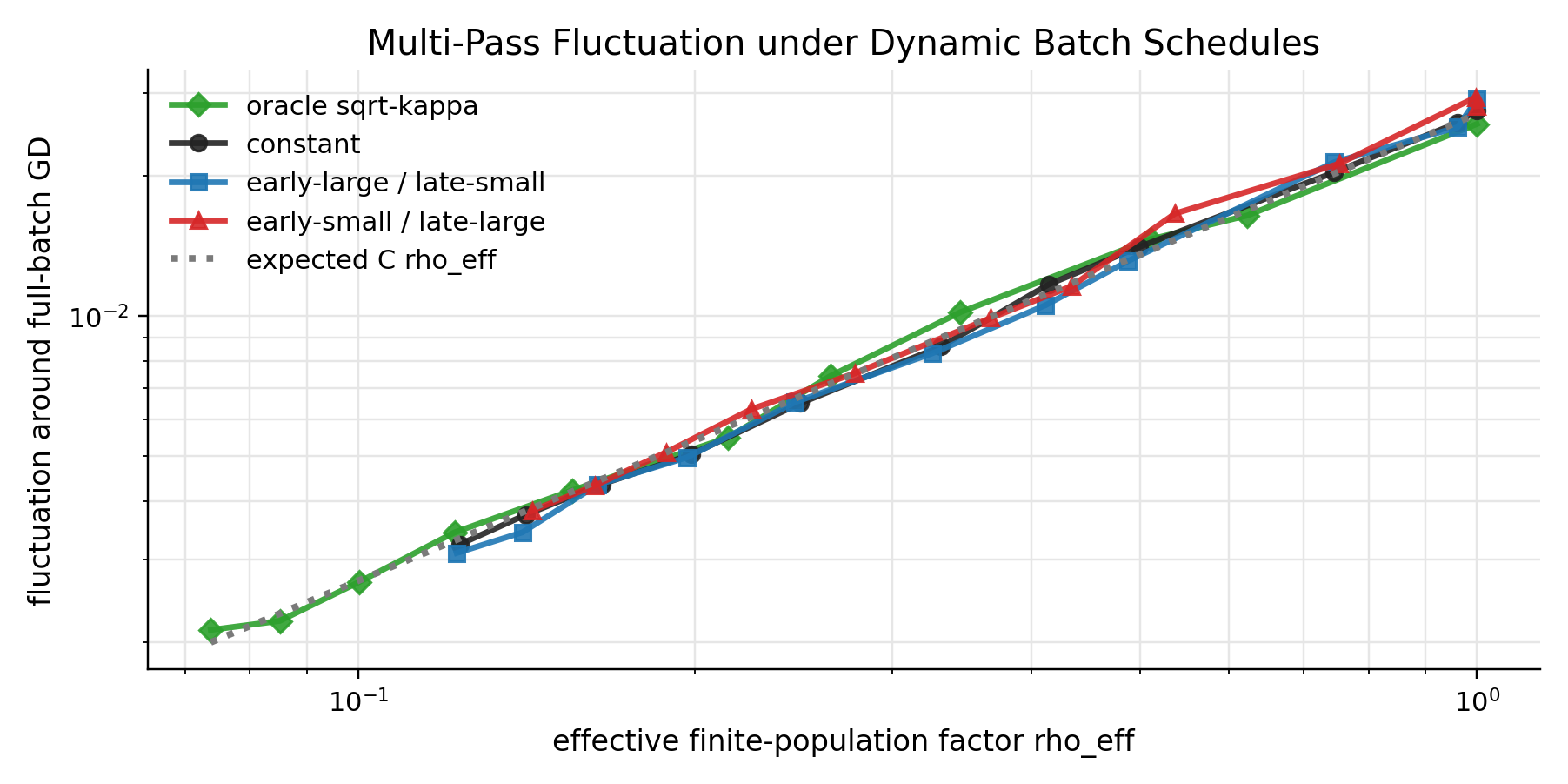}
        \par\smallskip
        {\small (b) Multi-pass \(\rho_{\mathrm{eff}}\) collapse\par}
    \end{minipage}

    \medskip

    \begin{minipage}[t]{0.48\textwidth}
        \centering
        \includegraphics[width=\linewidth]{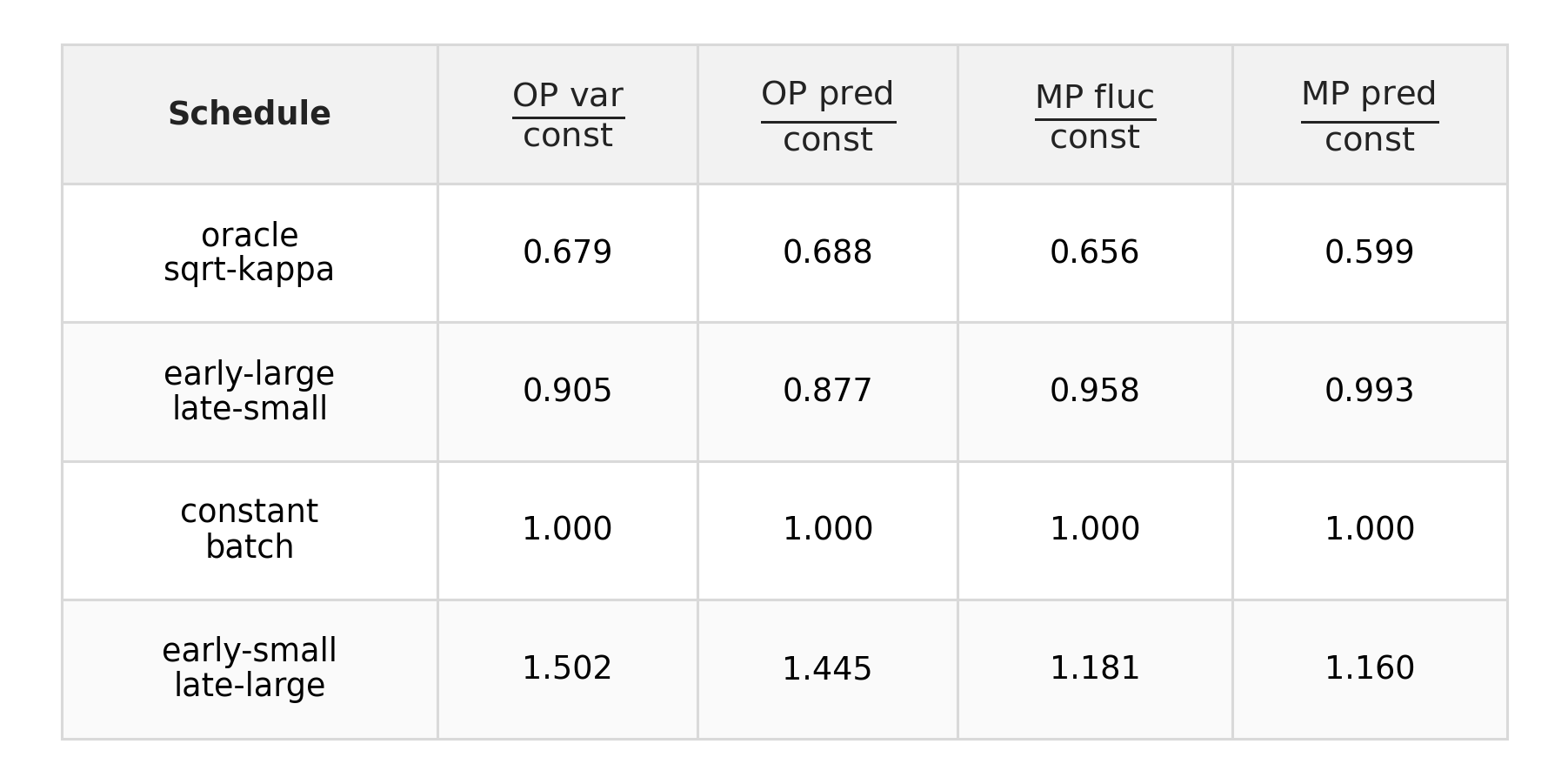}
        \par\smallskip
        {\small (c) Relative schedule comparison\par}
    \end{minipage}
    \hfill
    \begin{minipage}[t]{0.48\textwidth}
        \centering
        \includegraphics[width=\linewidth]{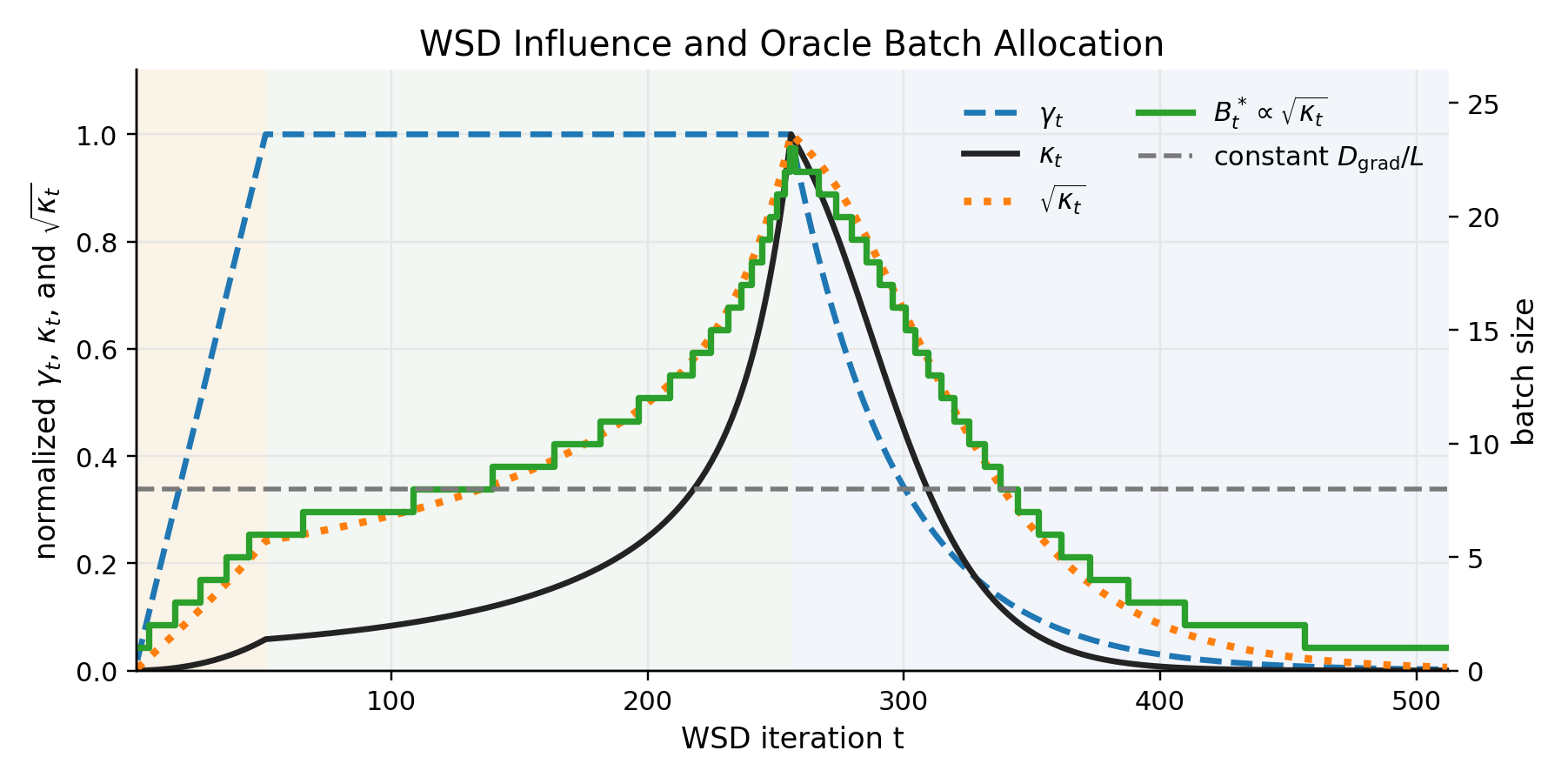}
        \par\smallskip
        {\small (d) WSD influence and oracle allocation\par}
    \end{minipage}
    \caption{Dynamic scheduling and oracle allocation. Panel (a) plots
    centered one-pass variance against \(B_{\mathrm{eff}}\), with a fitted
    \(B_{\mathrm{eff}}^{-1}\) reference. Panel (b) plots multi-pass
    fluctuation against \(\rho_{\mathrm{eff}}\), with a fitted linear
    reference. Panel (c) compares measured errors and influence-weighted
    predictions relative to the constant schedule. Panel (d) shows the WSD
    stepsize, influence profiles, and oracle allocation for \(L=512\) and
    \(D_{\mathrm{grad}}=4096\).}
    \label{fig:experiments}
\end{figure}

Figure~\ref{fig:experiments}(a) shows that all four one-pass schedule
families follow the predicted inverse trend when indexed by
\(B_{\mathrm{eff}}\). Figure~\ref{fig:experiments}(b) gives the
corresponding multi-pass collapse: fluctuation is approximately proportional
to \(\rho_{\mathrm{eff}}\). Thus schedules with different allocations become
comparable after weighting their batches by final-risk influence.

Experiment 3 compares the schedules at the largest displayed budgets,
\(N=512\) for one pass and \(D_{\mathrm{grad}}=4096\) for multi-pass. Relative
to the constant schedule, the oracle achieves one-pass variance \(0.679\) and
multi-pass fluctuation \(0.656\), close to the predicted ratios \(0.688\) and
\(0.599\). In contrast, the early-small/late-large schedule increases these
errors to \(1.502\) and \(1.181\), compared with predictions \(1.445\) and
\(1.160\). The full comparison in Figure~\ref{fig:experiments}(c)
therefore supports both the influence-weighted laws and the square-root oracle.

Experiment 4 explains the oracle's shape. Although the stepsize is constant
through most of the stable phase, \(\kappa_t\) increases toward its end because
later perturbations undergo less subsequent contraction. The influence peaks
at iteration \(256\), where the oracle batch also reaches its maximum of
\(23\), compared with the constant batch size \(8\). During decay, both
\(\sqrt{\kappa_t}\) and the oracle batch decrease. This confirms that the
optimal schedule follows final-risk influence rather than the learning rate
alone and need not be globally monotone.

\flushbottom

\section{Conclusion}

We established dynamic mini-batch scaling laws for one-pass and multi-pass
sketched SGD. The central finding is that WSD separates deterministic learning
from a shorter stochastic-memory horizon. At a fixed update horizon, batch
allocation leaves the deterministic risk contributions unchanged and affects
only one-pass variance or multi-pass fluctuation. These effects are summarized
by final-risk influence weights, which also yield a common square-root rule for
allocating a fixed sample-processing budget. Static batches, single-sample
updates, and full-batch gradient descent arise as special cases, and the
experiments support the predicted batch dependence.

Natural next steps include estimating the influence weights from data,
optimizing the batch schedule jointly with the training horizon, and testing
whether the two-horizon principle persists with momentum, adaptive methods,
and nonlinear feature-learning models.

\newpage
\bibliography{ref}
\bibliographystyle{iclr2027_conference}

\clearpage

\appendix
\counterwithin{mytheorem}{section}
\begin{center}
	\textbf{Scaling Laws for Dynamic Mini-Batch SGD in Sketched Linear Regression\\ \vspace{8pt} \textit{Supplementary Material}}
\end{center}

\counterwithin{figure}{section}
\counterwithin{table}{section}

\vspace*{1.1em}
\noindent{\LARGE\bfseries Table of Contents}\par
\vspace{0.4em}
\hrule
\vspace{0.8em}

\begingroup
\small
\setlength{\parindent}{0pt}
\newcommand{\apptocsection}[3]{%
  \noindent\hspace*{3.2em}%
  \makebox[1.8em][l]{\bfseries #1}%
  {\bfseries #2}\nobreak\hfill\makebox[2em][r]{\bfseries \pageref{#3}}\par%
}
\newcommand{\apptocsubsection}[3]{%
  \noindent\hspace*{5.6em}%
  \makebox[2.8em][l]{#1}%
  #2\nobreak\leaders\hbox to 0.55em{\hss.\hss}\hfill\makebox[2em][r]{\pageref{#3}}\par%
}

\apptocsection{A}{Appendix Preliminaries}{sec:app-prelim}
\apptocsubsection{A.1}{One-pass WSD and multi-pass smooth schedules}{subsec:app-smooth-schedule}
\apptocsubsection{A.2}{Spectral block notation}{subsec:app-block-notation}
\apptocsubsection{A.3}{Notations and setup formulas for normal GD}{subsec:app-gd-setup}
\apptocsubsection{A.4}{Notations and setup formulas for one-pass batch SGD}{subsec:app-onepass-batch-sgd-setup}
\apptocsubsection{A.5}{Notations and setup formulas for multi-pass batch SGD}{subsec:app-multipass-wor-setup}
\apptocsubsection{A.6}{Proof of the main risk decomposition}{subsec:app-proof-risk-decomp}
\apptocsubsection{A.7}{Influence-matched batch allocation}{subsec:app-proof-influence-matched-allocation}
\apptocsubsection{A.8}{Power-law interpretation of final-risk influence}{subsec:app-final-risk-influence}
\apptocsubsection{A.9}{Verification of derived multi-pass regularity}{subsec:app-derived-multipass-regularity}

\vspace{0.9em}
\apptocsection{B}{Assembling the proofs of the main scaling theorems}{sec:app-main-theorem-proofs}
\apptocsubsection{B.1}{Assembling the proof of the one-pass theorem}{subsec:app-proof-onepass}
\apptocsubsection{B.2}{Assembling the proof of the multi-pass theorem}{subsec:app-proof-multipass}

\vspace{0.9em}
\apptocsection{C}{Approximation Error}{sec:approx}
\apptocsubsection{C.1}{Upper bound}{subsec:approx-upper}
\apptocsubsection{C.2}{Lower bound}{subsec:approx-lower}
\apptocsubsection{C.3}{Bounds under our assumptions}{subsec:approx-ours}

\vspace{0.9em}
\apptocsection{D}{Bias Error under Normal GD}{sec:bias-gd}
\apptocsubsection{D.1}{Upper and lower bounds}{subsec:bias-gd-bounds}
\apptocsubsection{D.2}{Example under the source condition}{subsec:bias-gd-source}

\vspace{0.9em}
\apptocsection{E}{Variance Error under Normal GD}{sec:var-gd}
\apptocsubsection{E.1}{Upper and lower bounds}{subsec:var-gd-bounds}
\apptocsubsection{E.2}{Example under the source condition}{subsec:var-gd-source}

\vspace{0.9em}
\apptocsection{F}{One-pass Batch SGD: Excess Error Decomposition}{sec:onepass-batch-sgd-excess}
\apptocsection{G}{Bias Error for One-pass Batch SGD}{sec:batch-bias}
\apptocsubsection{G.1}{Upper and lower bounds for the bias term}{subsec:batch-bias-bounds}
\apptocsubsection{G.2}{Bounds under the source condition}{subsec:batch-bias-source}

\vspace{0.9em}
\apptocsection{H}{Variance Error for One-pass Batch SGD}{sec:batch-variance}
\apptocsubsection{H.1}{Time-dependent batch moments and the raw kernel}{subsec:batch-variance-raw}
\apptocsubsection{H.2}{The additive-noise component}{subsec:batch-modified-C1}
\apptocsubsection{H.3}{The covariance-fluctuation component}{subsec:batch-centered-covariance-component}
\apptocsubsection{H.4}{Bounds under the source condition}{subsec:batch-variance-source}
\apptocsubsection{H.5}{The WSD influence kernel}{subsec:batch-variance-spectral}

\vspace{0.9em}
\apptocsection{I}{Auxiliary with-replacement fluctuation analysis}{sec:multipass-batch-sgd-wr-fluc}
\apptocsubsection{I.1}{Conditional covariance identities}{subsec:batch-upper-bound}
\apptocsubsection{I.2}{A time-inhomogeneous stochastic approximation lemma}{subsec:batch-D2-analogue}
\apptocsubsection{I.3}{Influence kernels and covariance replacement}{subsec:multipass-dynamic-kernel}
\apptocsubsection{I.4}{Dynamic fluctuation bounds}{subsec:wr-batch-fluc-source}

\vspace{0.9em}
\apptocsection{J}{Fluctuation Error under Multi-pass Batch SGD without Replacement}{sec:multipass-batch-sgd-wor-fluc}

\vspace{0.9em}
\apptocsection{K}{Collected Auxiliary Lemmas}{sec:auxiliary-lemmas}
\apptocsubsection{K.1}{General concentration lemmas}{lem:aux-2025-E1}
\apptocsubsection{K.2}{Power-law auxiliary lemmas}{lem:aux-G4-E5}

\vspace{0.9em}
\apptocsection{L}{Experimental setup}{sec:exp-setup}
\apptocsection{M}{Limitations and Broader Effects}{sec:limitations-broader-effects}
\endgroup

\vspace{0.9em}
\hrule

\section{Appendix Preliminaries}
\label{sec:app-prelim}

We collect the common notation used throughout the appendix. Later appendix sections refer back to this section whenever possible, instead of restating the full stochastic-update setup each time.

Several proof ideas used throughout this appendix are borrowed from
\citet{lin2024scaling,lin2025datareuse}. We adapt those ideas to the WSD and
dynamic-batch setting developed in this paper.

\subsection{One-pass WSD and multi-pass smooth schedules}
\label{subsec:app-smooth-schedule}

For the one-pass procedure only, choose integers
\[
1\le T_{\mathrm w}<T_{\mathrm s}<T,
\qquad
T_{\mathrm d}:=T-T_{\mathrm s},
\qquad
\tau_{\mathrm d}^{\mathrm{op}}:=\frac{T_{\mathrm d}}{\log T},
\]
and use the WSD schedule
\begin{equation}
\gamma_t
=
\begin{cases}
\gamma t/T_{\mathrm w},&1\le t\le T_{\mathrm w},\\[1mm]
\gamma,&T_{\mathrm w}<t\le T_{\mathrm s},\\[1mm]
\gamma\exp(-(t-T_{\mathrm s})/\tau_{\mathrm d}^{\mathrm{op}}),
&T_{\mathrm s}<t\le T.
\end{cases}
\label{eq:app-onepass-wsd-schedule}
\end{equation}
Assume
\[
T_{\mathrm s}-T_{\mathrm w}\ge c_{\mathrm s}T,
\qquad
T-T_{\mathrm s}\ge c_{\mathrm d}T,
\]
and define
\[
A_T:=\sum_{t=1}^T\gamma_t,
\qquad
A_{\mathrm d}^{\mathrm{op}}:=\gamma\tau_{\mathrm d}^{\mathrm{op}}.
\]

\begin{lemma}[Mass and product calculus for the one-pass WSD schedule]
\label{lem:onepass-wsd-calculus}
Under the preceding phase-length conditions,
\[
A_T\asymp\gamma T,
\qquad
A_{\mathrm d}^{\mathrm{op}}\asymp\frac{\gamma T}{\log T}.
\]
If \(0\le\gamma\mu\le c_0<1\), then
\begin{equation}
\exp(-C\mu A_T)
\le
\prod_{t=1}^T(1-\gamma_t\mu)^2
\le
\exp(-c\mu A_T),
\label{eq:wsd-product-comparison}
\end{equation}
where \(c,C>0\) depend only on \(c_0\).
\end{lemma}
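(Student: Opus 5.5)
The plan is to compute the stepsize mass phase by phase and then reduce the product bound to elementary scalar inequalities for \(\log(1-x)\).

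\textbf{Mass.} The phases contribute as follows.
\begin{itemize}
\item \emph{Warmup:} \(\sum_{t\le T_{\mathrm w}}\gamma t/T_{\mathrm w}=\gamma(T_{\mathrm w}+1)/2\in[\gamma/2,\gamma T]\).
\item \emph{Stable phase:} exactly \(\gamma(T_{\mathrm s}-T_{\mathrm w})\), which lies between \(c_{\mathrm s}\gamma T\) and \(\gamma T\).
\item \emph{Decay phase:} \(\gamma\sum_{k=1}^{T_{\mathrm d}}e^{-k/\tau}\) with \(\tau=\tau_{\mathrm d}^{\mathrm{op}}\). This geometric sum is bounded above by \(\gamma/(e^{1/\tau}-1)\le\gamma\tau\). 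Since \(T_{\mathrm d}\ge c_{\mathrm d}T\), we have \(\tau\ge c_{\mathrm d}T/\log T\), and \(T\ge3\) gives \(\tau\gtrsim1\). Hence the sum is at least of order \(\gamma\tau(1-e^{-\log T})\cdot\min(1,1/\tau)^{0}\), that is, \(\gtrsim\gamma\tau\). If \(\tau<1\), bound the sum below by its first term and use \(\tau\lesssim1\) in that case.
\end{itemize}
Combining the phases, \(A_T\) lies between \(c_{\mathrm s}\gamma T\) and \(3\gamma T\), so \(A_T\asymp\gamma T\). Also \(A_{\mathrm d}^{\mathrm{op}}=\gamma T_{\mathrm d}/\log T\), and \(c_{\mathrm d}T\le T_{\mathrm d}\le T\) gives \(A_{\mathrm d}^{\mathrm{op}}\asymp\gamma T/\log T\). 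This is immediate from the definition.

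\textbf{Product.} Every \(\gamma_t\le\gamma\), so \(x_t:=\gamma_t\mu\in[0,c_0]\). Use the scalar inequalities
\[
-\frac{x}{1-c_0}\le\log(1-x)\le -x,\qquad x\in[0,c_0].
\]
The lower inequality holds because \(-\log(1-x)=\int_0^x\frac{ds}{1-s}\le x/(1-c_0)\). Summing \(2\log(1-\gamma_t\mu)\) over \(t\) gives
\[
-\frac{2\mu A_T}{1-c_0}\le\log\prod_t(1-\gamma_t\mu)^2\le-2\mu A_T .
\]
So \(c=2\) and \(C=2/(1-c_0)\), both depending only on \(c_0\).

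\textbf{Main obstacle.} The only delicate point is the lower bound on the decay mass in degenerate small-\(T\) regimes where \(\tau<1\). Even there it does not affect \(A_T\asymp\gamma T\), because the stable phase alone already supplies \(\gtrsim\gamma T\). Only \(A_{\mathrm d}^{\mathrm{op}}\) needs the decay-phase estimate, and that quantity is defined directly as \(\gamma\tau\), so the claim holds regardless.
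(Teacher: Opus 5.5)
Your proposal is correct and follows essentially the same route as the paper: you sum phase by phase, with the stable phase giving \(A_T\ge c_{\mathrm s}\gamma T\) and \(\gamma_t\le\gamma\) giving the upper bound, and then you sum the scalar bounds \(-x/(1-c_0)\le\log(1-x)\le -x\) over \(x=\gamma_t\mu\in[0,c_0]\) to obtain the product comparison with \(c=2\) and \(C=2/(1-c_0)\). You are also right that the decay-sum estimate is not needed, which is slightly sharper than the paper's wording: \(A_{\mathrm d}^{\mathrm{op}}=\gamma\tau_{\mathrm d}^{\mathrm{op}}=\gamma T_{\mathrm d}/\log T\) by definition and \(c_{\mathrm d}T\le T_{\mathrm d}\le T\) already gives \(A_{\mathrm d}^{\mathrm{op}}\asymp\gamma T/\log T\), so the garbled lower-bound line for the geometric sum can simply be deleted.
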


\begin{proof}
Direct summation gives
\[
A_T
=
\frac{\gamma(T_{\mathrm w}+1)}2
+\gamma(T_{\mathrm s}-T_{\mathrm w})
+\gamma\sum_{r=1}^{T_{\mathrm d}}e^{-r/\tau_{\mathrm d}^{\mathrm{op}}}.
\]
The stable-phase lower bound and \(\gamma_t\le\gamma\) imply
\(A_T\asymp\gamma T\). Since
\(e^{-T_{\mathrm d}/\tau_{\mathrm d}^{\mathrm{op}}}=T^{-1}\) and
\(1-e^{-1/\tau_{\mathrm d}^{\mathrm{op}}}\asymp(\tau_{\mathrm d}^{\mathrm{op}})^{-1}\), the final sum is
\(\asymp\tau_{\mathrm d}^{\mathrm{op}}\), proving the mass claims. Finally, summing
\(-u/(1-c_0)\le\log(1-u)\le-u\) with
\(u=\gamma_t\mu\) proves \eqref{eq:wsd-product-comparison}.
\end{proof}

For normal GD and multi-pass batch SGD, choose
\[
1\le L_{\mathrm w}<L_{\mathrm s}<L,
\qquad
L_{\mathrm d}:=L-L_{\mathrm s},
\qquad
\tau_{\mathrm d}^{\mathrm{mp}}:=\frac{L_{\mathrm d}}{\log L},
\]
and use the common WSD schedule
\begin{equation}
\gamma_t
=
\begin{cases}
\gamma t/L_{\mathrm w},&1\le t\le L_{\mathrm w},\\[1mm]
\gamma,&L_{\mathrm w}<t\le L_{\mathrm s},\\[1mm]
\gamma\exp(-(t-L_{\mathrm s})/\tau_{\mathrm d}^{\mathrm{mp}}),
&L_{\mathrm s}<t\le L.
\end{cases}
\label{eq:app-multipass-wsd-schedule}
\end{equation}
Assume
\[
L_{\mathrm s}-L_{\mathrm w}\ge c_{\mathrm s}L,
\qquad
L-L_{\mathrm s}\ge c_{\mathrm d}L,
\]
and define
\[
A_L:=\sum_{t=1}^L\gamma_t,
\qquad
A_{\mathrm d}^{\mathrm{mp}}:=\gamma\tau_{\mathrm d}^{\mathrm{mp}}.
\]

\begin{lemma}[Mass and product calculus for the multi-pass WSD schedule]
\label{lem:multipass-wsd-calculus}
Under the preceding phase-length conditions,
\[
A_L\asymp\gamma L,
\qquad
A_{\mathrm d}^{\mathrm{mp}}
\asymp\frac{\gamma L}{\log L}.
\]
If \(0\le\gamma\mu\le c_0<1\), then
\begin{equation}
\exp(-C\mu A_L)
\le
\prod_{t=1}^L(1-\gamma_t\mu)^2
\le
\exp(-c\mu A_L),
\label{eq:multipass-wsd-product-comparison}
\end{equation}
where \(c,C>0\) depend only on \(c_0\).
\end{lemma}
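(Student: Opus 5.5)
The proof follows the one-pass Lemma~\ref{lem:onepass-wsd-calculus} line by line, with \(T\) replaced by \(L\) and \(\tau_{\mathrm d}^{\mathrm{op}}\) by \(\tau_{\mathrm d}^{\mathrm{mp}}=L_{\mathrm d}/\log L\). First I sum the three phases of \eqref{eq:app-multipass-wsd-schedule} explicitly:
\[
A_L=\frac{\gamma(L_{\mathrm w}+1)}{2}+\gamma(L_{\mathrm s}-L_{\mathrm w})+\gamma\sum_{r=1}^{L_{\mathrm d}}e^{-r/\tau_{\mathrm d}^{\mathrm{mp}}}.
\]

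For the first mass claim, the upper bound \(A_L\le\gamma L\) follows from \(\gamma_t\le\gamma\). The lower bound \(A_L\ge\gamma(L_{\mathrm s}-L_{\mathrm w})\ge c_{\mathrm s}\gamma L\) uses the stable-phase length assumption. Together these give \(A_L\asymp\gamma L\).

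For \(A_{\mathrm d}^{\mathrm{mp}}=\gamma\tau_{\mathrm d}^{\mathrm{mp}}\), the bounds \(c_{\mathrm d}L\le L_{\mathrm d}\le L\) give \(\tau_{\mathrm d}^{\mathrm{mp}}\asymp L/\log L\), and hence \(A_{\mathrm d}^{\mathrm{mp}}\asymp\gamma L/\log L\). I also record that the decay-phase geometric sum is comparable to \(\tau_{\mathrm d}^{\mathrm{mp}}\). It equals
\[
e^{-1/\tau}\,\frac{1-e^{-L_{\mathrm d}/\tau}}{1-e^{-1/\tau}}, \qquad e^{-L_{\mathrm d}/\tau}=L^{-1}\le 1/3 .
\]
The only point needing care is \(1-e^{-1/\tau}\asymp\tau^{-1}\) when \(\tau\) is not large; in that regime \(\log L\) is of order \(L\), so \(L\) is bounded. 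In that bounded case the sum is between \(e^{-1/\tau}\) and \(L\), so every term is comparable up to constants. In any event the decay sum is at most \(\gamma L\), so it does not affect the bound on \(A_L\).

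For the product comparison, fix \(\mu\ge0\) with \(\gamma\mu\le c_0<1\). Then \(u_t:=\gamma_t\mu\in[0,c_0]\) for every \(t\), since \(\gamma_t\le\gamma\). I will use the elementary inequality \(-u/(1-c_0)\le\log(1-u)\le-u\) on \([0,c_0]\); it follows from \(\log(1-u)=-\int_0^u(1-s)^{-1}ds\). Multiplying by \(2\) and summing over \(t=1,\dots,L\) gives
\[
-\frac{2\mu A_L}{1-c_0}\le\log\prod_{t=1}^L(1-\gamma_t\mu)^2\le-2\mu A_L .
\]
Exponentiating yields \eqref{eq:multipass-wsd-product-comparison} with \(C=2/(1-c_0)\) and \(c=2\).

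There is no genuine obstacle in this argument. The only delicate point is the uniform comparison \(1-e^{-1/\tau}\asymp1/\tau\) for small \(\tau_{\mathrm d}^{\mathrm{mp}}\), which I handle by noting that small \(\tau_{\mathrm d}^{\mathrm{mp}}\) forces \(L\) to be bounded, so all quantities are then comparable up to constants.
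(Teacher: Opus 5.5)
Your proof is correct and takes the same approach as the paper, whose proof simply defers to Lemma~\ref{lem:onepass-wsd-calculus}: explicit phase summation gives \(A_L\asymp\gamma L\), the decay sum is compared with \(\tau_{\mathrm d}^{\mathrm{mp}}\), and summing \(-u/(1-c_0)\le\log(1-u)\le-u\) gives the product comparison. Your side remarks are also sound. \(A_{\mathrm d}^{\mathrm{mp}}\asymp\gamma L/\log L\) is immediate from \(c_{\mathrm d}L\le L_{\mathrm d}\le L\), and the small-\(\tau\) concern can be settled more directly by noting that \(\tau_{\mathrm d}^{\mathrm{mp}}\ge c_{\mathrm d}L/\log L\ge c_{\mathrm d}e\) is uniformly bounded below.
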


\begin{proof}
The proof is the same direct calculation as for
Lemma~\ref{lem:onepass-wsd-calculus}, with \(T\), \(T_{\mathrm w}\),
and \(T_{\mathrm s}\) replaced by \(L\), \(L_{\mathrm w}\), and
\(L_{\mathrm s}\). In particular, the stable phase supplies a constant
fraction of \(\gamma L\), the decay sum is
\(\asymp\gamma\tau_{\mathrm d}^{\mathrm{mp}}\), and summing the scalar
logarithmic bounds proves \eqref{eq:multipass-wsd-product-comparison}.
\end{proof}

\subsection{Spectral block notation}
\label{subsec:app-block-notation}

For integers \(0\le k_\ast \le k\) (allowing \(k=\infty\)), define
\[
H_{k_\ast:k}:=\operatorname{diag}(\lambda_{k_\ast+1},\dots,\lambda_k),
\qquad
w_{k_\ast:k}:=(w_{k_\ast+1},\dots,w_k)^\top.
\]
Similarly, let \(S_{k_\ast:k}\) denote the submatrix of \(S\) consisting of columns \(k_\ast+1,\dots,k\).

\subsection{Notations and setup formulas for normal GD}
\label{subsec:app-gd-setup}

This subsection records the notation for the normal GD procedure
\eqref{eq:proc-normal-gd}. Define
\[
\Sigma:=SHS^\top,
\qquad
\widehat{\Sigma}:=\frac{SX^\top X S^\top}{N},
\qquad
u^\ast:=\Sigma^{-1}SHw^\ast,
\qquad
\widehat{b}:=\frac{1}{N}SX^\top y.
\]
Let
\[
\widetilde{\varepsilon}_i:=y_i-x_i^\top S^\top u^\ast,
\qquad
\widetilde{\varepsilon}:=(\widetilde{\varepsilon}_1,\dots,\widetilde{\varepsilon}_N)^\top,
\qquad
\widehat{c}:=\frac{1}{N}SX^\top\widetilde{\varepsilon}.
\]
The normal GD iterate satisfies
\[
\theta_t
=
\theta_{t-1}
-
\gamma_t\widehat{\Sigma}\theta_{t-1}
+
\gamma_t\widehat{b},
\qquad
\theta_0=0.
\]

\subsection{Notations and setup formulas for one-pass batch SGD}
\label{subsec:app-onepass-batch-sgd-setup}

This subsection records the notation for the one-pass batch SGD procedure
\eqref{eq:proc-onepass-batch-sgd}. Fix positive integers
\(B_1,\dots,B_T\) satisfying \(\sum_{t=1}^T B_t=N\), and define
\[
T\ge 3,
\qquad
\Sigma:=SHS^\top,
\qquad
u^\ast:=\Sigma^{-1}SHw^\ast.
\]
Partition \([N]\) into disjoint batches \(I_1,\dots,I_T\) with
\(|I_t|=B_t\), and write each block as
\[
I_t=\{i_{t,1},\dots,i_{t,B_t}\}.
\]
Define
\[
\widehat{\Sigma}_t^{\mathrm{op}}
:=
\frac{1}{B_t}\sum_{i\in I_t} Sx_i x_i^\top S^\top,
\qquad
\widehat{b}_t^{\mathrm{op}}
:=
\frac{1}{B_t}\sum_{i\in I_t} Sx_i y_i,
\]
and
\[
\widehat{\xi}_t
:=
\frac{1}{B_t}\sum_{i\in I_t} Sx_i\bigl(y_i-x_i^\top S^\top u^\ast\bigr).
\]
The one-pass batch SGD iterate \((u_t^{\mathrm{op}})\) satisfies
\[
u_t^{\mathrm{op}}
=
u_{t-1}^{\mathrm{op}}
-
\gamma_t\widehat{\Sigma}_t^{\mathrm{op}}u_{t-1}^{\mathrm{op}}
+
\gamma_t\widehat{b}_t^{\mathrm{op}},
\qquad
u_0^{\mathrm{op}}=0.
\]
With the centered error \(e_t:=u_t^{\mathrm{op}}-u^\ast\), this becomes
\[
e_t
=
\bigl(I-\gamma_t\widehat{\Sigma}_t^{\mathrm{op}}\bigr)e_{t-1}
+
\gamma_t\widehat{\xi}_t.
\]
For later use, we also write
\[
z_{t,b}:=Sx_{i_{t,b}},
\qquad
\bar Z_t:=\frac{1}{B_t}\sum_{b=1}^{B_t} z_{t,b}z_{t,b}^\top,
\qquad
\bar \xi_t:=\frac{1}{B_t}\sum_{b=1}^{B_t} z_{t,b}\bigl(y_{i_{t,b}}-z_{t,b}^\top u^\ast\bigr),
\]
so that equivalently
\[
e_t=(I-\gamma_t\bar Z_t)e_{t-1}+\gamma_t\bar \xi_t.
\]
Since the blocks are disjoint subsets of an i.i.d. sample, the pairs
\((\bar Z_t,\bar \xi_t)\) are independent across \(t\). The mean error
\(m_t:=\mathbb{E}[e_t]\) therefore satisfies
\[
m_t=(I-\gamma_t\Sigma)m_{t-1},
\qquad
m_0=-u^\ast.
\]
Let \(\mu_1,\dots,\mu_M\) denote the eigenvalues of \(\Sigma\). Define
\[
\kappa_t^{\mathrm{op}}
:=
\gamma_t^2
\sum_{j=1}^M\mu_j^2
\prod_{s=t+1}^T(1-\gamma_s\mu_j)^2
=
\gamma_t^2
\left\langle
\Sigma,\,
\left[\prod_{s=t+1}^T(I-\gamma_s\Sigma)^2\right]\Sigma
\right\rangle
\]
and
\[
\mathcal K_{B_{1:T}}
:=
\sum_{t=1}^T\frac{\kappa_t^{\mathrm{op}}}{B_t},
\qquad
\frac{1}{B_{\mathrm{eff}}}
:=
\frac{\mathcal K_{B_{1:T}}}{\sum_{t=1}^T\kappa_t^{\mathrm{op}}}.
\]
Thus \(B_{\mathrm{eff}}\) is a weighted harmonic mean of the batch sizes.
In particular,
\[
B_{\min}:=\min_tB_t,
\qquad
B_{\max}:=\max_tB_t,
\qquad
B_{\min}\le B_{\mathrm{eff}}\le B_{\max},
\qquad
B_t\equiv B\ \Longrightarrow\ B_{\mathrm{eff}}=B.
\]
Throughout the one-pass appendix sections, \((\gamma_t)_{t=1}^T\) denotes
the WSD schedule in \eqref{eq:app-onepass-wsd-schedule}; \(A_T\) controls its
bias filter and \(A_{\mathrm d}^{\mathrm{op}}=\gamma\tau_{\mathrm d}^{\mathrm{op}}\) controls its variance
kernel. The multi-pass notation below continues to use the unchanged smooth
geometric schedule.

\subsection{Notations and setup formulas for multi-pass batch SGD}
\label{subsec:app-multipass-wor-setup}

This subsection records the notation for the main-text multi-pass batch SGD
procedure in \eqref{eq:proc-multipass-batch-sgd}, which samples without
replacement within every mini-batch. The main text uses the superscript
\(\mathrm{mp}\); only in the technical fluctuation proof do we write
\(\mathrm{wor}\) to distinguish this iterate from an auxiliary
with-replacement process. Fix a deterministic schedule \(B_1,\dots,B_L\)
satisfying \(1\le B_t\le N\), and define
\[
\Sigma:=SHS^\top,
\qquad
\widehat{\Sigma}:=\frac{SX^\top X S^\top}{N},
\qquad
u^\ast:=\Sigma^{-1}SHw^\ast,
\qquad
\widehat{b}:=\frac{1}{N}SX^\top y.
\]
Let
\[
\widetilde{\varepsilon}_i:=y_i-x_i^\top S^\top u^\ast,
\qquad
\widetilde{\varepsilon}:=(\widetilde{\varepsilon}_1,\dots,\widetilde{\varepsilon}_N)^\top,
\qquad
\widehat{c}:=\frac{1}{N}SX^\top\widetilde{\varepsilon},
\qquad
\mathcal D_{\mathrm{data}}:=\sigma(S,w^\ast,D).
\]
At each step \(t\in[L]\), sample a subset
\[
I_t\subset[N],
\qquad
|I_t|=B_t,
\]
uniformly without replacement from \([N]\), independently across iterations.
Define
\[
\widehat{\Sigma}_t^{\mathrm{wor}}
:=
\frac{1}{B_t}\sum_{i\in I_t} Sx_i x_i^\top S^\top,
\qquad
\widehat{b}_t^{\mathrm{wor}}
:=
\frac{1}{B_t}\sum_{i\in I_t} Sx_i y_i,
\]
and
\[
\widehat{c}_t^{\mathrm{wor}}
:=
\frac{1}{B_t}\sum_{i\in I_t} Sx_i\widetilde{\varepsilon}_i.
\]
The multi-pass batch SGD iterate without replacement is
\[
u_t^{\mathrm{wor}}
=
u_{t-1}^{\mathrm{wor}}
-
\gamma_t\widehat{\Sigma}_t^{\mathrm{wor}}u_{t-1}^{\mathrm{wor}}
+
\gamma_t\widehat{b}_t^{\mathrm{wor}},
\qquad
u_0^{\mathrm{wor}}=0.
\]
Define the multi-pass fluctuation process
\[
\Delta_t^{\mathrm{wor}}:=u_t^{\mathrm{wor}}-\theta_t,
\qquad
\Delta_0^{\mathrm{wor}}=0.
\]
Then
\[
\Delta_t^{\mathrm{wor}}
=
\bigl(I-\gamma_t\widehat{\Sigma}_t^{\mathrm{wor}}\bigr)\Delta_{t-1}^{\mathrm{wor}}
+
\gamma_t\xi_t^{\mathrm{wor}},
\]
where
\[
\xi_t^{\mathrm{wor}}
:=
-
\bigl(\widehat{\Sigma}_t^{\mathrm{wor}}-\widehat{\Sigma}\bigr)(\theta_{t-1}-u^\ast)
+
\bigl(\widehat{c}_t^{\mathrm{wor}}-\widehat{c}\bigr).
\]
Define
\[
\mathcal F_t^{\mathrm{wor}}
:=
\mathcal D_{\mathrm{data}}\vee\sigma(I_1,\dots,I_t).
\]
Then
\[
\mathbb E[\xi_t^{\mathrm{wor}}\mid\mathcal F_{t-1}^{\mathrm{wor}}]=0.
\]

\subsection{Proof of the main risk decomposition}
\label{subsec:app-proof-risk-decomp}
We record here the proof of the structural decomposition used later in
Section~\ref{sec:main-results}, since it only uses the basic identities from the present section.

\begin{proof}[Proof of Proposition~\ref{prop:main-risk-decomp}]
Since \(u^\ast\) minimizes the sketched risk \(R_M\), for every
\(u\in\mathbb{R}^M\) one has
\[
R_M(u)=R_M(u^\ast)+\|\Sigma^{1/2}(u-u^\ast)\|_2^2.
\]

For one-pass batch SGD, let \(\bar u_T:=\mathbb{E}[u_T^{\mathrm{op}}]\). Applying
the previous display with \(u=u_T^{\mathrm{op}}\) and taking expectation gives
\[
\mathbb{E}[R_M(u_T^{\mathrm{op}})]
=
R_M(u^\ast)+\mathbb{E}\bigl[\|\Sigma^{1/2}(u_T^{\mathrm{op}}-u^\ast)\|_2^2\bigr].
\]
Now write
\[
u_T^{\mathrm{op}}-u^\ast=(\bar u_T-u^\ast)+(u_T^{\mathrm{op}}-\bar u_T).
\]
Since \(\mathbb{E}[u_T^{\mathrm{op}}-\bar u_T]=0\), the cross term vanishes, so
\[
\mathbb{E}\bigl[\|\Sigma^{1/2}(u_T^{\mathrm{op}}-u^\ast)\|_2^2\bigr]
=
\bigl\|\Sigma^{1/2}(\bar u_T-u^\ast)\bigr\|_2^2
+
\mathbb{E}\bigl[\|\Sigma^{1/2}(u_T^{\mathrm{op}}-\bar u_T)\|_2^2\bigr].
\]
Since \(u^\ast\) minimizes \(R_M\), the first term is exactly
\[
R_M(\bar u_T)-R_M(u^\ast).
\]
Moreover,
\[
R_M(u_T^{\mathrm{op}})-R_M(\bar u_T)
=
2\bigl\langle \Sigma^{1/2}(\bar u_T-u^\ast),\Sigma^{1/2}(u_T^{\mathrm{op}}-\bar u_T)\bigr\rangle
+
\bigl\|\Sigma^{1/2}(u_T^{\mathrm{op}}-\bar u_T)\bigr\|_2^2,
\]
so taking expectation and using \(\mathbb{E}[u_T^{\mathrm{op}}-\bar u_T]=0\) gives
\[
\mathbb{E}\bigl[R_M(u_T^{\mathrm{op}})-R_M(\bar u_T)\bigr]
=
\mathbb{E}\bigl[\|\Sigma^{1/2}(u_T^{\mathrm{op}}-\bar u_T)\|_2^2\bigr].
\]
This proves the one-pass decomposition in excess-risk form.

For the main-text multi-pass iterate, identify
\(u_L^{\mathrm{mp}}=u_L^{\mathrm{wor}}\). The same identity gives
\[
\mathbb{E}[R_M(u_L^{\mathrm{mp}})]
=
R_M(u^\ast)+\mathbb{E}\bigl[\|\Sigma^{1/2}(u_L^{\mathrm{mp}}-u^\ast)\|_2^2\bigr].
\]
Writing
\[
u_L^{\mathrm{mp}}-u^\ast
=(\theta_L-u^\ast)+(u_L^{\mathrm{mp}}-\theta_L)
\]
and expanding the squared norm produces the cross term
\[
\mathbb{E}\bigl[
\langle
\Sigma^{1/2}(\theta_L-u^\ast),
\Sigma^{1/2}(u_L^{\mathrm{mp}}-\theta_L)
\rangle
\bigr].
\]
This term vanishes because the fluctuation process has zero conditional mean:
the recursion defining
\(\Delta_t^{\mathrm{wor}}=u_t^{\mathrm{wor}}-\theta_t\) has martingale-difference
driving noise with respect to the without-replacement sampling filtration, so
\[
\mathbb{E}[u_L^{\mathrm{mp}}-\theta_L\mid\mathcal D_{\mathrm{data}}]=0.
\]
Hence
\[
\mathbb{E}\bigl[\|\Sigma^{1/2}(u_L^{\mathrm{mp}}-u^\ast)\|_2^2\bigr]
=
\mathbb{E}\bigl[\|\Sigma^{1/2}(\theta_L-u^\ast)\|_2^2\bigr]
+
\mathbb{E}\bigl[\|\Sigma^{1/2}(u_L^{\mathrm{mp}}-\theta_L)\|_2^2\bigr],
\]
where the first term is
\[
\mathbb{E}\bigl[R_M(\theta_L)-R_M(u^\ast)\bigr]
\]
because \(u^\ast\) minimizes \(R_M\), and the second term is
\[
\mathbb{E}\bigl[R_M(u_L^{\mathrm{mp}})-R_M(\theta_L)\bigr]
\]
because the corresponding cross term vanishes by the same conditional-centering argument. This proves the multi-pass decomposition in excess-risk form.
\end{proof}

\subsection{Proof of the influence-matched batch-allocation corollary}
\label{subsec:app-proof-influence-matched-allocation}

\begin{proof}[Proof of Corollary~\ref{cor:influence-matched-allocation}]
In the appendix notation, \(\kappa_t^{(T)}=\kappa_t^{\mathrm{op}}\) and
\(\kappa_t^{(L)}=\kappa_t^{\mathrm{mp}}\). We work in the continuous interior
regime stated in the corollary.

\paragraph{One-pass allocation.}
For fixed \(T\), all factors in the variance law except
\(B_{\mathrm{eff}}\) are independent of the allocation, and
\[
\frac{1}{B_{\mathrm{eff}}}
=
\frac{\displaystyle\sum_{t=1}^T\kappa_t^{(T)}/B_t}
{\displaystyle\sum_{t=1}^T\kappa_t^{(T)}}.
\]
Hence it suffices to minimize \(\sum_t\kappa_t^{(T)}/B_t\) subject to
\(\sum_tB_t=N\). The Lagrange stationarity condition is
\[
-\frac{\kappa_t^{(T)}}{B_t^2}+\lambda=0,
\]
so normalization by the budget gives
\[
B_t^\star
=
\frac{N\sqrt{\kappa_t^{(T)}}}
{\displaystyle\sum_{s=1}^T\sqrt{\kappa_s^{(T)}}}.
\]

\paragraph{Multi-pass allocation.}
For without-replacement sampling,
\[
\rho_t
=
\frac{N-B_t}{B_t(N-1)}
=
\frac{N}{N-1}\frac1{B_t}-\frac1{N-1}.
\]
Consequently,
\[
\sum_{t=1}^L\rho_t\kappa_t^{(L)}
=
\frac{N}{N-1}
\sum_{t=1}^L\frac{\kappa_t^{(L)}}{B_t}
-
\frac1{N-1}\sum_{t=1}^L\kappa_t^{(L)}.
\]
The second term is allocation-independent. Applying the same stationarity
calculation under \(\sum_tB_t=D_{\mathrm{grad}}\) therefore yields
\[
B_t^\star
=
\frac{D_{\mathrm{grad}}\sqrt{\kappa_t^{(L)}}}
{\displaystyle\sum_{s=1}^L\sqrt{\kappa_s^{(L)}}}.
\]
If a lower or upper batch-size constraint is active, the KKT conditions clip
the same square-root rule. A budget-preserving rounding then gives an integer
schedule; naive coordinatewise rounding need not be exactly optimal.
\end{proof}

\subsection{Power-law interpretation of final-risk influence}
\label{subsec:app-final-risk-influence}

The exact influence weight retains the spectrum of the sketched covariance.
Under the power-law assumption, however, it admits a simpler pointwise form
in terms of the learning-rate mass remaining after an update.

\begin{lemma}[Pointwise power-law form of final-risk influence]
\label{lem:app-final-risk-influence}
Fix a horizon \(J\) and define
\[
R_t^{(J)}:=\sum_{s=t+1}^J\gamma_s,
\qquad
\kappa_t^{(J)}
:=
\gamma_t^2\sum_{j=1}^M\mu_j^2
\prod_{s=t+1}^J(1-\gamma_s\mu_j)^2.
\]
Suppose
\[
\mu_j\asymp j^{-a},
\qquad a>1,
\qquad
\gamma_s\mu_1\le c_0<1
\quad\text{for every }s\le J.
\]
There is a sufficiently small constant \(c_\star>0\), depending only on
the power-law and stability constants, such that whenever
\[
\bigl(1+R_t^{(J)}\bigr)^{1/a}\le c_\star M,
\]
we have
\begin{equation}
\kappa_t^{(J)}
\asymp
\gamma_t^2
\bigl(1+R_t^{(J)}\bigr)^{-(2-1/a)}
=
\gamma_t^2
\bigl(1+R_t^{(J)}\bigr)^{1/a-2}.
\label{eq:app-final-risk-influence-shape}
\end{equation}
Consequently, the comparison holds uniformly over \(t\) if
\[
\left(1+\sum_{s=1}^J\gamma_s\right)^{1/a}\le c_\star M.
\]
\end{lemma}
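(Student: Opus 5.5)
We prove the comparison one index \(t\) at a time. The strategy is to replace each product \(\prod_{s=t+1}^J(1-\gamma_s\mu_j)^2\) by an exponential in \(\mu_jR_t^{(J)}\), and then evaluate the resulting power-law sum over \(j\).

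\emph{Step 1: reduce the products to exponentials.} Fix \(t\) and write \(R:=R_t^{(J)}\). Since \(0\le\gamma_s\mu_j\le\gamma_s\mu_1\le c_0<1\), the scalar bounds \(-u/(1-c_0)\le\log(1-u)\le -u\) hold for each factor. This is the same computation as in Lemma~\ref{lem:onepass-wsd-calculus}, now summed over \(s=t+1,\dots,J\). It gives
\[
e^{-C\mu_jR}\le\prod_{s=t+1}^J(1-\gamma_s\mu_j)^2\le e^{-c\mu_jR}
\]
with \(c=2\) and \(C=2/(1-c_0)\). Hence it suffices to show that, for each fixed constant \(c'\in\{c,C\}\),
\[
\Phi_{c'}(R):=\sum_{j=1}^M\mu_j^2e^{-c'\mu_jR}\asymp(1+R)^{1/a-2}.
\]

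\emph{Step 2: upper bound.} Substitute \(\mu_j\asymp j^{-a}\) and split at \(j_\ast:=\lceil(1+R)^{1/a}\rceil\). The tail \(j>j_\ast\) contributes at most \(\sum_{j>j_\ast}j^{-2a}\lesssim j_\ast^{1-2a}\asymp(1+R)^{1/a-2}\), using that \(2a>1\). For \(j\le j_\ast\) we use \(x^2e^{-c'x R}\lesssim (1+R)^{-2}\) whenever \(x\gtrsim (1+R)^{-1}\), which holds here because \(\mu_j\gtrsim j_\ast^{-a}\asymp(1+R)^{-1}\). This bounds the head by \(j_\ast(1+R)^{-2}\asymp(1+R)^{1/a-2}\). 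If \(R\lesssim 1\), the whole sum is simply \(\lesssim\sum_j j^{-2a}\asymp 1\), which matches the claim.

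\emph{Step 3: lower bound, the main obstacle.} This is where the hypothesis \((1+R)^{1/a}\le c_\star M\) is essential. The lower bound needs roughly \(j_\ast\) indices with \(\mu_jR=O(1)\), and those indices must lie inside the truncation \(j\le M\). Take \(j\in[j_\ast,2j_\ast]\). For these, \(\mu_j\asymp(1+R)^{-1}\) with constants fixed by the power-law constants, so \(e^{-C\mu_jR}\gtrsim 1\). There are about \(j_\ast\) such terms, each of size \(\gtrsim(1+R)^{-2}\), so their sum is \(\gtrsim(1+R)^{1/a-2}\).

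For this window to fit inside \([1,M]\) we need \(2j_\ast\le M\). This follows from \(j_\ast\le(1+R)^{1/a}+1\le c_\star M+1\) once \(c_\star\le1/4\) and \(M\) is large. The small-\(M\) case is absorbed into constants, since then \(R\) is bounded and the term \(j=1\) alone gives \(\gtrsim1\). I expect the main care to go into making the constants in \(\mu_j\asymp j^{-a}\) compatible with the choice of \(c_\star\); this is exactly why \(c_\star\) depends only on the power-law and stability constants.

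\emph{Step 4: uniformity.} Multiplying by \(\gamma_t^2\) gives \eqref{eq:app-final-risk-influence-shape}. Uniformity in \(t\) follows because \(R_t^{(J)}\le\sum_{s=1}^J\gamma_s\), so the stated global condition implies the pointwise condition for every \(t\).
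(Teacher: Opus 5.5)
Your proposal is correct and follows essentially the same route as the paper: you compare the products with \(e^{-c'\mu_jR}\) via the logarithmic bounds, split the spectral sum at \(j\asymp(1+R)^{1/a}\), and get the lower bound from the window \([j_\ast,2j_\ast]\), which the cutoff condition keeps below \(M\). There are two minor differences, neither of which affects correctness. First, you bound each head term by \(\sup_{0\le x\le\mu_1}x^2e^{-c'xR}\lesssim(1+R)^{-2}\), which holds for every such \(x\), so the qualifier \(x\gtrsim(1+R)^{-1}\) is not needed; the paper uses dyadic blocks here instead. Second, your small-\(M\) caveat is vacuous: since \((1+R)^{1/a}\ge1\), the hypothesis already forces \(c_\star M\ge1\), so \(2j_\ast\le4(1+R)^{1/a}\le M\) whenever \(c_\star\le1/4\).
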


\begin{proof}
Write \(R:=R_t^{(J)}\). For \(0\le z\le c_0<1\), the scalar logarithmic
bounds
\[
-\frac{z}{1-c_0}\le\log(1-z)\le-z
\]
give constants \(c,C>0\) such that
\[
\exp(-C\mu_jR)
\le
\prod_{s=t+1}^J(1-\gamma_s\mu_j)^2
\le
\exp(-c\mu_jR).
\]
It therefore suffices to show that, for either fixed \(c'>0\),
\begin{equation}
\sum_{j=1}^M\mu_j^2e^{-c'R\mu_j}
\asymp
(1+R)^{-(2-1/a)}.
\label{eq:app-final-risk-spectral-sum}
\end{equation}

If \(R\le1\), the upper bound follows from
\(\sum_j\mu_j^2\lesssim\sum_jj^{-2a}\lesssim1\), while the \(j=1\) term
gives a constant lower bound. Now suppose \(R>1\) and put
\(k:=\lceil R^{1/a}\rceil\). For the spectral tail,
\[
\sum_{j\ge k}\mu_j^2e^{-c'R\mu_j}
\lesssim
\sum_{j\ge k}j^{-2a}
\lesssim
k^{1-2a}.
\]
For \(j<k\), divide the indices into dyadic blocks
\[
k2^{-(\ell+1)}<j\le k2^{-\ell},
\qquad \ell=0,1,\ldots.
\]
Each block contains at most \(k2^{-\ell}\) indices, while on that block
\(R\mu_j\asymp2^{a\ell}\). Hence its contribution is bounded by
\[
CkR^{-2}
2^{(2a-1)\ell}e^{-c''2^{a\ell}}.
\]
Summing over \(\ell\) gives an upper bound of order
\[
kR^{-2}\asymp R^{1/a-2}.
\]

For the lower bound, the cutoff condition ensures, after choosing
\(c_\star\) sufficiently small, that the block \(k\le j\le2k\) lies below
\(M\). On this block, \(R\mu_j\asymp1\), so every summand is of order
\(R^{-2}\). Since the block contains order \(k\) indices,
\[
\sum_{j=k}^{2k}\mu_j^2e^{-c'R\mu_j}
\gtrsim
kR^{-2}
\asymp
R^{1/a-2}.
\]
This proves \eqref{eq:app-final-risk-spectral-sum}, and multiplication by
\(\gamma_t^2\) proves
\eqref{eq:app-final-risk-influence-shape}.
\end{proof}

\paragraph{Interpretation.}
The factor \(\gamma_t^2\) measures the scale of the stochastic perturbation
injected at update \(t\). In direction \(j\), the factor \(\mu_j^2\) combines
the covariance scale of the injected gradient noise with its contribution to
prediction risk, while the product over \(s>t\) measures how much subsequent
optimization contracts that perturbation. Thus \(R_t^{(J)}\) acts as a
remaining forgetting horizon: directions with
\(\mu_jR_t^{(J)}\gg1\) are largely erased, whereas directions with
\(\mu_jR_t^{(J)}\lesssim1\) remain visible at the final iterate.

Under \(\mu_j\asymp j^{-a}\), the transition occurs near
\[
j\asymp\bigl(1+R_t^{(J)}\bigr)^{1/a}.
\]
There are order \((1+R_t^{(J)})^{1/a}\) directions near this transition,
each carrying squared spectral scale of order
\((1+R_t^{(J)})^{-2}\), which gives
\eqref{eq:app-final-risk-influence-shape}. It also yields the simpler oracle
allocation rule
\[
B_t^\star
\propto
\sqrt{\kappa_t^{(J)}}
\asymp
\gamma_t
\bigl(1+R_t^{(J)}\bigr)^{1/(2a)-1}.
\]
During the stable phase of WSD, \(\gamma_t\) is constant while
\(R_t^{(J)}\) decreases, so the influence grows toward the end of that
phase. During most of the exponential decay phase,
\(R_t^{(J)}\asymp\tau_{\mathrm d}\gamma_t\), where
\(\tau_{\mathrm d}\) denotes the WSD decay time constant for the chosen
horizon. When
\(R_t^{(J)}\gtrsim1\), this gives
\(\kappa_t^{(J)}\asymp
\tau_{\mathrm d}^{1/a-2}\gamma_t^{1/a}\), explaining why the influence
typically decreases after the stable--decay transition.

\subsection{Verification of derived multi-pass regularity}
\label{subsec:app-derived-multipass-regularity}

The following lemma discharges the trace, effective-dimension, and
maximum-sample-norm conditions that are consequences of the model.

\begin{lemma}[Derived regularity under the Gaussian power-law model]
\label{lem:app-derived-multipass-regularity}
Assume Assumptions~\ref{ass:data-gaussian} and
\ref{ass:data-power-law}, and let \(S\) be the Gaussian sketch specified in
Assumption~\ref{ass:gaussian-sketch}. Suppose
\[
0<\gamma\le\frac{c_\gamma}{\log N},
\qquad
\gamma L\log N\le c_LN,
\]
where \(c_\gamma,c_L>0\) are sufficiently small constants depending only on
\(a\). Then, for all sufficiently large \(N\), with probability at least
\(1-\exp(-\Omega(M))\) over \(S\),
\begin{equation}
\operatorname{tr}(\Sigma)\asymp1,
\qquad
\operatorname{tr}(\Sigma^2)\lesssim1,
\qquad
\gamma\le\frac{c}{\operatorname{tr}(\Sigma)},
\label{eq:app-derived-trace-regularity}
\end{equation}
and
\begin{equation}
\sum_{i=1}^M
\frac{\mu_i(\Sigma)}
{\mu_i(\Sigma)+1/(\gamma L)}
\le\frac N4.
\label{eq:app-derived-effective-dimension}
\end{equation}
Moreover, conditioned on every sketch in this event, there is an
\(a\)-dependent constant \(c_{\max}>0\) such that, for every \(r\ge1\),
\begin{equation}
\mathbb P_X\!\left(
8\max_{i\in[N]}\|Sx_i\|_2^2>\frac r\gamma
\ \middle|\ S
\right)
\le N^{-c_{\max}r}.
\label{eq:app-derived-maxnorm-tail}
\end{equation}
In particular, with conditional probability at least \(1-N^{-c_{\max}}\),
\[
\gamma\mu_1(\widehat\Sigma)
\le
\gamma\max_{i\in[N]}\|Sx_i\|_2^2
\le\frac18.
\]
\end{lemma}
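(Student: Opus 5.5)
The plan has three parts: first control the sketched spectrum on a high-probability event over \(S\), then reduce \eqref{eq:app-derived-effective-dimension} to a power-law counting estimate, and finally obtain \eqref{eq:app-derived-maxnorm-tail} from Gaussian concentration plus a union bound over samples.

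\textbf{Step 1: the sketch event.} Work in the eigenbasis of \(H\), so that \(S\) has i.i.d. \(\mathcal N(0,1/M)\) entries. Then \(\operatorname{tr}(\Sigma)=\sum_i\lambda_i\|S_{:,i}\|_2^2\) is a weighted sum of independent \(\chi^2_M/M\) variables with weights \(\lambda_i\asymp i^{-a}\) and \(\sum_i\lambda_i\asymp1\).

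Bernstein's inequality for sub-exponential sums gives \(\operatorname{tr}(\Sigma)\in[\tfrac12,2]\sum_i\lambda_i\) with probability \(1-\exp(-\Omega(M))\). Note that \(\max_i\lambda_i/\sum\lambda_i\) is constant, so the tail is \(\exp(-cM)\) rather than anything better.

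From this, \(\operatorname{tr}(\Sigma^2)\le\operatorname{tr}(\Sigma)^2\lesssim1\). The condition \(\gamma\le c/\operatorname{tr}(\Sigma)\) then follows from \(\gamma\le c_\gamma/\log N\) once \(N\) is large.

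On the same event I also want the eigenvalue comparison \(\mu_j(\Sigma)\asymp j^{-a}\) for \(j\le M/2\), with an upper bound \(\mu_j\lesssim j^{-a}\) for all \(j\). I would cite the sketched-spectrum lemma collected in Appendix~\ref{sec:auxiliary-lemmas}, borrowed from \citet{lin2024scaling}. If it is unavailable, the cruder bound \(\mu_j\le\) a tail-sum estimate suffices for the upper direction, which is all that is needed below.

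\textbf{Step 2: effective dimension.} Split the sum in \eqref{eq:app-derived-effective-dimension} at \(k=\lceil(\gamma L)^{1/a}\rceil\):
\[
\sum_{i=1}^M\frac{\mu_i}{\mu_i+1/(\gamma L)}\le k+\gamma L\sum_{i>k}\mu_i.
\]
Using \(\mu_i\lesssim i^{-a}\), the tail term is \(\lesssim\gamma L\,k^{1-a}\asymp(\gamma L)^{1/a}\).

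Since \(a>1\), we have \((\gamma L)^{1/a}\le\max(1,\gamma L)\). The hypothesis \(\gamma L\log N\le c_LN\) makes this at most \(N/4\) once \(c_L\) is small. If the eigenvalue upper bound for the tail is only available in a weaker form, I would fall back on \(\gamma L\operatorname{tr}(\Sigma)\lesssim\gamma L\le c_LN/\log N\), which already suffices.

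\textbf{Step 3: maximum norm tail.} Conditioned on \(S\), we have \(Sx_i\sim\mathcal N(0,\Sigma)\). The Hanson--Wright (or Laurent--Massart) bound gives
\[
\mathbb P\bigl(\|Sx_i\|^2>\operatorname{tr}\Sigma+2\sqrt{\operatorname{tr}(\Sigma^2)u}+2\|\Sigma\|u\bigr)\le e^{-u}.
\]
Take \(u\asymp r/(\gamma)\), which is at least \(r\log N/c_\gamma\). Since \(\operatorname{tr}\Sigma\), \(\|\Sigma\|\) and \(\operatorname{tr}(\Sigma^2)\) are all \(O(1)\), the threshold \(r/(8\gamma)\) dominates the deterministic part once \(r\ge1\) and \(c_\gamma\) is small.

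Each sample therefore has tail \(\le N^{-C'r}\) with \(C'\) large when \(c_\gamma\) is small. A union bound over the \(N\) samples loses one factor of \(N\), which is absorbed because \(C'r-1\ge c_{\max}r\) for \(r\ge1\). The final claim is the case \(r=1\), using \(\mu_1(\widehat\Sigma)\le\frac1N\sum_i\|Sx_i\|^2\le\max_i\|Sx_i\|^2\).

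\textbf{Main obstacle.} I expect the obstacle to be Step 1's uniform spectral control of \(\Sigma\) with \(\exp(-\Omega(M))\) probability over an infinite-dimensional \(H\). Here I would rely on the cited sketch lemma rather than reprove it. Everything else is routine bookkeeping of constants \(c_\gamma,c_L\).
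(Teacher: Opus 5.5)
Your proof is correct. Steps 2 and 3 follow the paper's argument almost exactly: the same split at \(k=\lceil(\gamma L)^{1/a}\rceil\), and the same Gaussian quadratic-form (Laurent--Massart) bound followed by a union bound over \(i\in[N]\). Your choice \(u\asymp r/\gamma\) instead of the paper's \(u=C_1r\log N\) is an inessential variation.

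The real difference is in Step 1 together with your fallback in Step 2.

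\begin{itemize}
\item \textbf{The paper's route.} It gets all the trace bounds by first invoking the sketched power-law spectrum lemma (\(\mu_i(\Sigma)\asymp i^{-a}\) on \([M]\), which the appendix states without proof) and then summing.
\item \textbf{Your route.} You get \(\operatorname{tr}(\Sigma)\asymp1\) directly from Bernstein's inequality for the weighted \(\chi^2_M/M\) sum. Then \(\operatorname{tr}(\Sigma^2)\le\operatorname{tr}(\Sigma)^2\) and \(\|\Sigma\|_2\le\operatorname{tr}(\Sigma)\) follow. Your fallback
\(\sum_i\mu_i/(\mu_i+1/(\gamma L))\le\gamma L\operatorname{tr}(\Sigma)\lesssim c_LN/\log N\le N/4\)
holds for large \(N\).
\end{itemize}

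Together these show the whole lemma needs only \(\sum_i\lambda_i<\infty\) (that is, \(a>1\)) and elementary Gaussian concentration. In particular, Step 3 uses only \(\operatorname{tr}(\Sigma)\), \(\operatorname{tr}(\Sigma^2)\), and \(\|\Sigma\|_2\), all of which are controlled on your Bernstein event. So the ``main obstacle'' you identify, uniform eigenvalue control of \(\Sigma\), is not needed for this lemma at all. Your argument is therefore more self-contained than the paper's.

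What the paper's route buys is the sharper effective-dimension estimate \(\lesssim1+(\gamma L)^{1/a}\). That is the form reused elsewhere, for example \(d_\lambda\lesssim A_L^{1/a}\) in the GD-bias and kernel-comparison lemmas. For the \(N/4\) threshold in this lemma, however, the hypothesis \(\gamma L\log N\le c_LN\) is strong enough that your crude bound suffices.
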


\begin{proof}
On the high-probability sketch event in
Lemma~\ref{lem:aux-G4-E5},
\[
\mu_i(\Sigma)\asymp i^{-a},
\qquad i\in[M].
\]
Since \(a>1\),
\[
\operatorname{tr}(\Sigma)
\asymp\sum_{i=1}^Mi^{-a}\asymp1,
\qquad
\operatorname{tr}(\Sigma^2)
\asymp\sum_{i=1}^Mi^{-2a}\lesssim1.
\]
After reducing \(c_\gamma\) if necessary, the imposed bound
\(\gamma\le c_\gamma/\log N\) therefore also implies
\(\gamma\le c/\operatorname{tr}(\Sigma)\), proving
\eqref{eq:app-derived-trace-regularity}.

For the effective dimension, split the spectrum at
\[
k:=\left\lceil(\gamma L)^{1/a}\right\rceil.
\]
The power law gives
\[
\begin{aligned}
\sum_{i=1}^M
\frac{\mu_i(\Sigma)}
{\mu_i(\Sigma)+1/(\gamma L)}
&\le
k+\gamma L\sum_{i>k}\mu_i(\Sigma)\\
&\lesssim
1+(\gamma L)^{1/a}.
\end{aligned}
\]
The horizon condition yields
\[
(\gamma L)^{1/a}
\le
\left(\frac{c_LN}{\log N}\right)^{1/a}.
\]
Because \(a>1\), this is \(o(N)\). Choosing \(c_L\) sufficiently small
and then \(N\) sufficiently large proves
\eqref{eq:app-derived-effective-dimension}.

It remains to establish the maximum-norm tail. Conditioned on \(S\), put
\(z_i:=Sx_i\). Then \(z_i\) are independent
\(\mathcal N(0,\Sigma)\) vectors. The Gaussian quadratic-form inequality
gives, for every \(u\ge1\),
\[
\mathbb P_X\!\left(
\|z_i\|_2^2>
\operatorname{tr}(\Sigma)
+2\sqrt{\operatorname{tr}(\Sigma^2)u}
+2\|\Sigma\|_2u
\ \middle|\ S
\right)
\le e^{-u}.
\]
The trace bounds above and
\(\|\Sigma\|_2\le\operatorname{tr}(\Sigma)\lesssim1\) show that the
threshold is at most \(C(1+u)\). Taking
\(u=C_1r\log N\), applying a union bound over \(i\in[N]\), and choosing
\(c_\gamma\) sufficiently small gives
\[
\mathbb P_X\!\left(
\max_{i\in[N]}\|z_i\|_2^2>\frac r{8\gamma}
\ \middle|\ S
\right)
\le N^{-c_{\max}r},
\]
which proves \eqref{eq:app-derived-maxnorm-tail}. Finally,
\[
\widehat\Sigma
=\frac1N\sum_{i=1}^Nz_iz_i^\top
\preceq
\left(\max_{i\in[N]}\|z_i\|_2^2\right)I,
\]
so the stated empirical contraction follows on the \(r=1\) good event.
\end{proof}

\section{Assembling the proofs of the main scaling theorems}
\label{sec:app-main-theorem-proofs}

This section gathers the appendix ingredients used in the proofs of
Theorems~\ref{thm:onepass-batch-sgd-scaling} and
\ref{thm:multipass-batch-sgd-scaling}. The detailed derivations are carried
out in the subsequent appendix sections; here we simply record how those
results fit together. Throughout, we work on the intersection of the
high-probability events from the cited lemmas. Since only finitely many such
results are invoked, a union bound still yields probability
\[
1-\exp(-\Omega(M)).
\]

\subsection{Assembling the proof of Theorem~\ref{thm:onepass-batch-sgd-scaling}}
\label{subsec:app-proof-onepass}

Start from Proposition~\ref{prop:main-risk-decomp}:
\[
\mathbb{E}[R_M(u_T^{\mathrm{op}})]
=
R_M(u^\ast)
+
\bigl(R_M(\bar u_T)-R_M(u^\ast)\bigr)
+
\mathbb{E}\bigl[R_M(u_T^{\mathrm{op}})-R_M(\bar u_T)\bigr].
\]
The three terms are supplied by the appendix as follows.

\begin{enumerate}[label=\textbf{(\arabic*)}]
    \item \textbf{Common baseline risk.}
    Appendix~\ref{sec:approx} defines
    \[
    \mathrm{Approx}
    :=
    R_M(u^\ast)-R(w^\ast).
    \]
    Under Assumption~\ref{ass:data-well-specified},
    \(R(w^\ast)=\sigma^2\), and therefore
    \[
    R_M(u^\ast)=\sigma^2+\mathrm{Approx}.
    \]
    Lemma~\ref{lem:approx-final} then gives
    \[
    \mathbb{E}_{w^\ast}[\mathrm{Approx}]
    \asymp
    M^{1-b}.
    \]

    \item \textbf{One-pass bias term.}
    In Appendix~\ref{sec:onepass-batch-sgd-excess}, the mean error satisfies
    \(m_T=\bar u_T-u^\ast\), so Definition~\ref{def:batch-bias} identifies
    \[
    R_M(\bar u_T)-R_M(u^\ast)
    =
    \|\Sigma^{1/2}(\bar u_T-u^\ast)\|_2^2
    =
    \mathrm{Bias}_T.
    \]
    Lemma~\ref{lem:batch-bias-source} yields the general upper bound
    \[
    \mathbb{E}_{w^\ast}[\mathrm{Bias}_T]
    \lesssim
    \min\!\bigl\{M,A_T^{1/a}\bigr\}^{1-b}.
    \]
    Moreover, when \(A_T^{1/a}\le M/c_1\), the same
    lemma gives the matching lower bound
    \[
    \mathbb{E}_{w^\ast}[\mathrm{Bias}_T]
    \gtrsim
    A_T^{(1-b)/a}.
    \]

    \item \textbf{One-pass variance term.}
    Proposition~\ref{prop:batch-centered-bv-decomp} and
    Proposition~\ref{prop:batch-centered-var-split} identify the remaining
    term with the one-pass variance quantity \(\mathrm{Var}_{B_{1:T}}\).
    Lemma~\ref{lem:batch-var-source} then yields
    \[
    \mathbb{E}_{w^\ast}[\mathrm{Var}_{B_{1:T}}]
    \asymp
    \frac{\min\{M,(\gamma\tau_{\mathrm d}^{\mathrm{op}})^{1/a}\}}
    {\tau_{\mathrm d}^{\mathrm{op}}B_{\mathrm{eff}}}.
    \]
    Here \(B_{\mathrm{eff}}\) is the influence-weighted harmonic mean. It preserves the exact
    \(1/B_t\) contribution of each update. For a constant schedule
    \(B_t\equiv B\), one has \(B_{\mathrm{eff}}=B\) and \(T=N/B\).
\end{enumerate}

From items \textbf{(1)} and \textbf{(2)}, the deterministic contribution obeys
\[
\mathbb{E}_{w^\ast}[\mathrm{Approx}+\mathrm{Bias}_T]
=
\Theta\!\left(M^{1-b}+A_T^{(1-b)/a}\right).
\]
Indeed, when \(A_T^{1/a}\le M/c_1\), the lower bound on
\(\mathrm{Bias}_T\) gives the second scale directly; when
\(A_T^{1/a}>M/c_1\), one has
\(A_T^{(1-b)/a}\lesssim M^{1-b}\), so the approximation
lower bound already controls that term. Substituting this deterministic bound
and item \textbf{(3)} into the risk decomposition gives the dynamic-schedule
display in Theorem~\ref{thm:onepass-batch-sgd-scaling}. Substituting
\(A_T\asymp\gamma T\), \(B_{\mathrm{eff}}=B\), and \(T=N/B\) gives its
static-batch display.

\subsection{Assembling the proof of Theorem~\ref{thm:multipass-batch-sgd-scaling}}
\label{subsec:app-proof-multipass}

Again start from Proposition~\ref{prop:main-risk-decomp}:
\[
\mathbb{E}[R_M(u_L^{\mathrm{mp}})]
=
R_M(u^\ast)
+
\mathbb{E}\bigl[R_M(\theta_L)-R_M(u^\ast)\bigr]
+
\mathbb{E}\bigl[R_M(u_L^{\mathrm{mp}})-R_M(\theta_L)\bigr].
\]
The three contributions are assembled as follows.

\begin{enumerate}[label=\textbf{(\arabic*)}]
    \item \textbf{Common baseline risk.}
    As in the one-pass proof above,
    \[
    R_M(u^\ast)=\sigma^2+\mathrm{Approx},
    \qquad
    \mathbb{E}_{w^\ast}[\mathrm{Approx}]\asymp M^{1-b}
    \]
    by Assumption~\ref{ass:data-well-specified} and
    Lemma~\ref{lem:approx-final}.

    \item \textbf{Common GD-reference contribution.}
    Sections~\ref{sec:bias-gd} and \ref{sec:var-gd} control the deterministic
    and stochastic pieces of the normal-GD reference iterate \(\theta_L\).
    Their source-condition conclusions are Lemmas~\ref{lem:gd-bias-source}
    and \ref{lem:gd-var-source}. Since the multi-pass WSD mass satisfies
    \(A_L\asymp\gamma L\), the theorem's explicit condition
    \((\gamma L)^{1/a}\le c_1M\), with \(c_1\) chosen sufficiently small,
    implies the appendix unsaturated condition \(A_L^{1/a}\lesssim M\). The
    two lemmas therefore give
    \[
    \mathbb{E}_{w^\ast}[\mathrm{Bias}_{\mathrm{GD}}(w^\ast)]
    \asymp
    A_L^{(1-b)/a},
    \]
    and
    \[
    \mathrm{Var}_{\mathrm{GD}}
    \asymp
    \frac{\min\{M,A_L^{1/a}\}}{N}.
    \]

    \item \textbf{Multi-pass fluctuation term.}
    Write
    \[
    \mathcal K_{\mathrm d}
    :=
    \frac{\min\{M,(A_{\mathrm d}^{\mathrm{mp}})^{1/a}\}}
    {\tau_{\mathrm d}^{\mathrm{mp}}}.
    \]
    Corollary~\ref{cor:wor-batch-fluc}, whose proof uses the auxiliary
    with-replacement analysis in Lemma~\ref{lem:wr-batch-fluc-source}, gives
    \[
    \rho_{\mathrm{eff},\mathrm{WSD}}^{\mathrm{wor}}
    \mathcal K_{\mathrm d}
    \lesssim
    \mathbb{E}[\mathrm{Fluc}^{\mathrm{wor}}_{B_{1:L}}]
    \lesssim
    \rho_{\mathrm{eff},\mathrm{WSD}}^{\mathrm{wor}}
    \log N
    \left[1+\frac{\log^2N\,A_L^{2-s}}{N^2}\right]
    \mathcal K_{\mathrm d}.
    \]
\end{enumerate}

Thus WSD has two distinct horizons: \(A_L\asymp\gamma L\) controls GD
learning and leave-one-out fitting, whereas
\(A_{\mathrm d}^{\mathrm{mp}}\asymp\gamma L/\log L\) controls the final
stochastic-memory kernel. Combining the baseline term, the two GD-reference
orders, and the fluctuation estimate gives the claimed four-part decomposition and the sharp
fluctuation sandwich. For a static schedule, substituting
\(\rho_B=(N-B)/(B(N-1))\) gives the second display in the theorem. In the
main-text notation,
\(u_L^{\mathrm{mp}}=u_L^{\mathrm{wor}}\) and
\(\rho_{\mathrm{eff}}=\rho_{\mathrm{eff},\mathrm{WSD}}^{\mathrm{wor}}\).
\section{Approximation Error}
\label{sec:approx}

This section studies the approximation term, which is common to the three
optimization procedures \eqref{eq:proc-normal-gd},
\eqref{eq:proc-onepass-batch-sgd}, and
\eqref{eq:proc-multipass-batch-sgd}.

Retaining the common setup from Section~\ref{sec:prelim}, define
\[
\mathrm{Approx}
:=
\min_{u\in\mathbb{R}^M} R_M(u)-\min_{w\in\mathcal{H}}R(w)
=
R_M(u^\ast)-R(w^\ast)
\]
which further yields, by substituting the value of $u^*$:
\begin{equation}
\mathrm{Approx}
=
\bigl\|(I-H^{1/2}S^\top\Sigma^{-1}SH^{1/2})H^{1/2}w^\ast\bigr\|_2^2.
\label{eq:approx-basic-form}
\end{equation}

\begin{assumption}[Gaussian sketching]
\label{ass:gaussian-sketch}
The sketching operator \(S:\mathcal{H}\to\mathbb{R}^M\) is Gaussian, meaning
that in the diagonal coordinates of \(H\), its entries are i.i.d.
distributed as \(\mathcal{N}(0,1/M)\).
\end{assumption}

\begin{assumption}[Source-condition regime]
\label{ass:approx-source-regime}
In addition to Assumptions~\ref{ass:data-power-law} and
\ref{ass:source-diag}, we work in the regime
\[
1<b<a+1.
\]
\end{assumption}

\subsection{Upper bound}
\label{subsec:approx-upper}

\begin{lemma}[Upper bound on the approximation error]
\label{lem:approx-upper}
Fix any integer \(k\ge 0\) such that \(r(H)\ge k+M\), and define
\[
A_k := S_{k:\infty}H_{k:\infty}S_{k:\infty}^\top.
\]
Then the approximation error satisfies the deterministic bound
\[
\mathrm{Approx}
\lesssim
\|w^\ast_{k:\infty}\|_{H_{k:\infty}}^2
+
 w_{0:k}^{\ast\top}
\bigl(H_{0:k}^{-1}+S_{0:k}^\top A_k^{-1}S_{0:k}\bigr)^{-1}
 w_{0:k}^{\ast}.
\]
If in addition Assumption~\ref{ass:gaussian-sketch} holds and \(k\le M/2\), then with probability at least
\[
1-\exp(-\Omega(M))
\]
over the randomness of \(S\),
\[
\mathrm{Approx}
\lesssim
\|w^\ast_{k:\infty}\|_{H_{k:\infty}}^2
+
\beta_k\,\|w^\ast_{0:k}\|_2^2,
\]
where
\[
\beta_k
:=
\frac{\sum_{i>k}\lambda_i}{M}
+
\lambda_{k+1}
+
\sqrt{\frac{\sum_{i>k}\lambda_i^2}{M}}.
\]
\end{lemma}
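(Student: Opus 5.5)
Write \(P:=H^{1/2}S^\top\Sigma^{-1}SH^{1/2}\). This is the orthogonal projection onto the range of \(H^{1/2}S^\top\), so
\[
\mathrm{Approx}=\|(I-P)H^{1/2}w^\ast\|_2^2=\min_{u\in\mathbb R^M}\|H^{1/2}(w^\ast-S^\top u)\|_2^2 .
\]
The plan is to exploit this variational form by choosing a good competitor \(u\) adapted to the split \(w^\ast=(w^\ast_{0:k},w^\ast_{k:\infty})\).

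\textbf{Deterministic bound.} Take any \(u\) and write
\[
\|H^{1/2}(w^\ast-S^\top u)\|^2=\|H_{0:k}^{1/2}(w^\ast_{0:k}-S_{0:k}^\top u)\|^2+\|H_{k:\infty}^{1/2}(w^\ast_{k:\infty}-S_{k:\infty}^\top u)\|^2 .
\]
By \((x+y)^2\le2x^2+2y^2\), the tail block is at most \(2\|w^\ast_{k:\infty}\|_{H_{k:\infty}}^2+2u^\top A_k u\). So
\[
\mathrm{Approx}\lesssim\|w^\ast_{k:\infty}\|_{H_{k:\infty}}^2+\min_u\Bigl\{\|H_{0:k}^{1/2}(w^\ast_{0:k}-S_{0:k}^\top u)\|^2+u^\top A_ku\Bigr\}.
\]
The assumption \(r(H)\ge k+M\) makes \(A_k\) invertible almost surely. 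The inner problem is a generalized ridge problem. Minimizing over \(u\) gives
\[
w^{\ast\top}_{0:k}\Bigl(H_{0:k}-H_{0:k}S_{0:k}^\top(A_k+S_{0:k}H_{0:k}S_{0:k}^\top)^{-1}S_{0:k}H_{0:k}\Bigr)w^\ast_{0:k}.
\]
The Woodbury identity turns this into \((H_{0:k}^{-1}+S_{0:k}^\top A_k^{-1}S_{0:k})^{-1}\). This yields the first display.

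\textbf{Gaussian bound.} The quadratic form above is at most \(\|(H_{0:k}^{-1}+S_{0:k}^\top A_k^{-1}S_{0:k})^{-1}\|\,\|w^\ast_{0:k}\|^2\). Its operator norm is bounded by
\[
\|(S_{0:k}^\top A_k^{-1}S_{0:k})^{-1}\|\le\frac{\|A_k\|}{\mu_{\min}(S_{0:k}^\top S_{0:k})}.
\]
I would bound the two factors separately, each with probability \(1-e^{-\Omega(M)}\).

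First, \(S_{0:k}\) is an \(M\times k\) Gaussian matrix with \(\mathcal N(0,1/M)\) entries and \(k\le M/2\). The Davidson--Szarek singular-value bound then gives \(\mu_{\min}(S_{0:k}^\top S_{0:k})\ge(1-\sqrt{1/2}-t)^2\gtrsim1\), with the stated probability.

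Second, \(A_k=\sum_{i>k}\lambda_i s_is_i^\top\) with \(s_i\sim\mathcal N(0,I_M/M)\) independent. Its mean is \((\sum_{i>k}\lambda_i/M)I\). A concentration bound for weighted sums of Gaussian rank-one matrices, via a covering argument over the sphere in \(\mathbb R^M\) plus Hanson--Wright or Bernstein for \(\sum_i\lambda_i\langle s_i,v\rangle^2\), gives
\[
\|A_k\|\lesssim\frac{\sum_{i>k}\lambda_i}{M}+\lambda_{k+1}+\sqrt{\frac{\sum_{i>k}\lambda_i^2}{M}}
\]
with probability \(1-e^{-\Omega(M)}\). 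In that bound, the subexponential scale gives the \(\lambda_{k+1}\) term and the variance scale gives the square-root term. This is exactly \(\beta_k\).

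\textbf{Main obstacle.} The main obstacle is this operator-norm bound for \(A_k\), which is infinite-dimensional when \(k=\infty\). The net argument contributes a factor \(e^{CM}\), so one needs the one-direction tail at deviation level \(u\asymp M\). There the Bernstein tail is \(\exp(-c\min\{u^2M^2/\sum\lambda_i^2,\ uM/\lambda_{k+1}\})\). Choosing \(u\) as the sum of the two corresponding terms beats the \(e^{CM}\) net cost and yields \(\beta_k\). Approximating the infinite sum by finite truncations and passing to the limit then completes the argument.
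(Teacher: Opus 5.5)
Your proof is correct. The probabilistic half matches the paper's argument. You bound the head form by $\|(S_{0:k}^\top A_k^{-1}S_{0:k})^{-1}\|\le\|A_k\|/\mu_{\min}(S_{0:k}^\top S_{0:k})$, use $S_{0:k}^\top S_{0:k}\succeq cI_k$ for $k\le M/2$, and use $\|A_k\|\lesssim\beta_k$. For the last step the paper simply cites its tail-concentration lemma (Lemma~\ref{lem:aux-G2-E3}). You instead sketch a proof via a net and Bernstein's inequality for $v^\top A_kv=\sum_{i>k}\lambda_i g_i^2/M$, with variance proxy $\sum_{i>k}\lambda_i^2/M^2$ and scale $\lambda_{k+1}/M$, and this sketch is sound. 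Two small wording points: what you need is a tail exponent of order $M$, not a deviation level $u\asymp M$; and since $A_k\succeq0$, a one-sided net bound already suffices.

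The deterministic half takes a different route. The paper writes $\mathcal T=P-I$ in $2\times2$ block form and splits $\mathcal T H^{1/2}w^\ast$ into head and tail contributions. It then verifies two exact identities. For the tail block, $W^2+V^\top V=-W\preceq I$. For the head block, Woodbury applied to $\Sigma^{-1}=(S_{0:k}H_{0:k}S_{0:k}^\top+A_k)^{-1}$ gives $U^2+VV^\top=H_{0:k}^{-1/2}(H_{0:k}^{-1}+S_{0:k}^\top A_k^{-1}S_{0:k})^{-1}H_{0:k}^{-1/2}$.

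You instead start from $\mathrm{Approx}=\min_u\|H^{1/2}(w^\ast-S^\top u)\|_2^2$ and pay for the tail residual with $2\|w^\ast_{k:\infty}\|^2_{H_{k:\infty}}+2u^\top A_ku$. You then recognize the head term as a generalized ridge problem in which the tail sketch $A_k$ acts as the regularizer, and Woodbury only rewrites the ridge minimum. Both routes produce the same quantity with the same factor $2$; indeed the paper's head block equals your ridge minimum.

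Your route is shorter and explains why the head expression has that form. The paper's identities are exact equalities rather than inequalities, which is what lets it reuse them in the matching lower bound (Lemma~\ref{lem:approx-lower}). There $\mathbb E_{w^\ast}[\mathrm{Approx}]$ splits exactly into head and tail traces, and $-W$ is compared with a projection. A competitor argument like yours can only ever give upper bounds.
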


\begin{proof}
For the deterministic identities below, it is enough to work in an eigenbasis of \(H\), so we write \(H\) in diagonal form and use the block notation from Section~\ref{subsec:app-block-notation}. When Assumption~\ref{ass:gaussian-sketch} is invoked later for the high-probability part, this is also the natural coordinate system by rotational invariance. Set
\[
\mathcal{T} := H^{1/2}S^\top\Sigma^{-1}SH^{1/2}-I.
\]
Using the block decomposition induced by the split \(0\text{:}k\) and \(k\text{:}\infty\), write
\[
\mathcal{T}
=
\begin{pmatrix}
U & V \\
V^\top & W
\end{pmatrix},
\]
where
\[
U := H_{0:k}^{1/2}S_{0:k}^\top\Sigma^{-1}S_{0:k}H_{0:k}^{1/2}-I,
\]
\[
V := H_{0:k}^{1/2}S_{0:k}^\top\Sigma^{-1}S_{k:\infty}H_{k:\infty}^{1/2},
\]
\[
W := H_{k:\infty}^{1/2}S_{k:\infty}^\top\Sigma^{-1}S_{k:\infty}H_{k:\infty}^{1/2}-I.
\]
Then \eqref{eq:approx-basic-form} gives
\[
\mathrm{Approx}
=
\|\mathcal{T} H^{1/2}w^\ast\|_2^2.
\]
By the inequality \((a+b)^2\le 2a^2+2b^2\),
\begin{align}
\mathrm{Approx}
&\le
2 w_{0:k}^{\ast\top}H_{0:k}^{1/2}(U^2+VV^\top)H_{0:k}^{1/2}w_{0:k}^{\ast}
\nonumber\\
&\qquad+
2 w_{k:\infty}^{\ast\top}H_{k:\infty}^{1/2}(W^2+V^\top V)H_{k:\infty}^{1/2}w_{k:\infty}^{\ast}.
\label{eq:approx-two-blocks}
\end{align}

We first control the tail block. Since
\[
\Sigma = S_{0:k}H_{0:k}S_{0:k}^\top + A_k,
\]
a direct calculation shows that
\begin{equation}
W^2+V^\top V = -W.
\label{eq:approx-W-identity}
\end{equation}
Moreover,
\[
0\preceq H_{k:\infty}^{1/2}S_{k:\infty}^\top\Sigma^{-1}S_{k:\infty}H_{k:\infty}^{1/2}\preceq I,
\]
so \(-I\preceq W\preceq 0\). Combining this with \eqref{eq:approx-W-identity}, we obtain
\[
0\preceq W^2+V^\top V = -W \preceq I,
\]
and therefore
\begin{equation}
 w_{k:\infty}^{\ast\top}H_{k:\infty}^{1/2}(W^2+V^\top V)H_{k:\infty}^{1/2}w_{k:\infty}^{\ast}
\le
\|w^\ast_{k:\infty}\|_{H_{k:\infty}}^2.
\label{eq:approx-tail-control}
\end{equation}

We next control the head block. Applying the Woodbury identity to
\[
\Sigma^{-1} = (S_{0:k}H_{0:k}S_{0:k}^\top + A_k)^{-1},
\]
we get
\[
\Sigma^{-1}
=
A_k^{-1}
-
A_k^{-1}S_{0:k}
\bigl(H_{0:k}^{-1}+S_{0:k}^\top A_k^{-1}S_{0:k}\bigr)^{-1}
S_{0:k}^\top A_k^{-1}.
\]
Substituting this identity into the definitions of \(U\) and \(V\), we then have
\begin{equation}
U^2+VV^\top
=
H_{0:k}^{-1/2}
\bigl(H_{0:k}^{-1}+S_{0:k}^\top A_k^{-1}S_{0:k}\bigr)^{-1}
H_{0:k}^{-1/2}.
\label{eq:approx-head-identity}
\end{equation}
Hence
\begin{equation}
 w_{0:k}^{\ast\top}H_{0:k}^{1/2}(U^2+VV^\top)H_{0:k}^{1/2}w_{0:k}^{\ast}
=
 w_{0:k}^{\ast\top}
\bigl(H_{0:k}^{-1}+S_{0:k}^\top A_k^{-1}S_{0:k}\bigr)^{-1}
 w_{0:k}^{\ast}.
\label{eq:approx-head-control}
\end{equation}
Putting \eqref{eq:approx-tail-control} and \eqref{eq:approx-head-control} into \eqref{eq:approx-two-blocks} proves the first claim.

For the high-probability bound, define
\[
\beta_k
:=
\frac{\sum_{i>k}\lambda_i}{M}
+
\lambda_{k+1}
+
\sqrt{\frac{\sum_{i>k}\lambda_i^2}{M}}.
\]
By Lemma~\ref{lem:aux-G2-E3}, with probability at least \(1-\exp(-\Omega(M))\),
\[
\|A_k\|_2 \lesssim \beta_k.
\]
Equivalently,
\[
A_k^{-1} \succeq c\,\beta_k^{-1}I
\]
for some absolute constant \(c>0\). Also, since \(k\le M/2\), a standard Gaussian covariance concentration bound gives
\[
S_{0:k}^\top S_{0:k} \succeq c_0 I_k
\]
with probability at least \(1-\exp(-\Omega(M))\), for some absolute constant
\(c_0>0\). Therefore,
\[
S_{0:k}^\top A_k^{-1}S_{0:k}
\succeq
c\,\beta_k^{-1} S_{0:k}^\top S_{0:k}
\succeq
c'\beta_k^{-1}I_k.
\]
Hence
\[
\bigl(H_{0:k}^{-1}+S_{0:k}^\top A_k^{-1}S_{0:k}\bigr)^{-1}
\preceq
\bigl(c'\beta_k^{-1}I_k\bigr)^{-1}
\lesssim
\beta_k I_k.
\]
Substituting this into the deterministic bound gives
\[
\mathrm{Approx}
\lesssim
\|w^\ast_{k:\infty}\|_{H_{k:\infty}}^2
+
\beta_k\|w^\ast_{0:k}\|_2^2,
\]
which completes the proof.
\end{proof}

\subsection{Lower bound}
\label{subsec:approx-lower}

\begin{lemma}[Lower bound on the approximation error under the source condition]
\label{lem:approx-lower}
Assume Assumption~\ref{ass:source-diag}, and let
\[
H_w := \mathbb{E}[w^\ast w^{\ast\top}].
\]
Then, conditioned on the sketch matrix \(S\),
\[
\mathbb{E}_{w^\ast}[\mathrm{Approx}]
\gtrsim
\sum_{i>M} \lambda_i i^{a-b}.
\]
In particular, under Assumptions~\ref{ass:data-power-law} and \ref{ass:source-diag},
\[
\mathbb{E}_{w^\ast}[\mathrm{Approx}] \gtrsim M^{1-b}.
\]
\end{lemma}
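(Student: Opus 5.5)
Since \(\mathbb{E}[\lambda_i (w_i^\ast)^2]\asymp i^{-b}\) and \(\lambda_i\asymp i^{-a}\), we have \(\lambda_i\, i^{a-b}\asymp i^{-b}\). The target bound is therefore \(\mathbb{E}_{w^\ast}[\mathrm{Approx}]\gtrsim\sum_{i>M} i^{-b}\asymp M^{1-b}\). I would prove it in three steps: identify the operator in \eqref{eq:approx-basic-form} as an orthogonal projection, average over \(w^\ast\) to obtain a trace, and then apply a Ky Fan type minimization over rank-\(M\) projections.

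\textbf{Step 1 (projection structure).} Set \(P:=H^{1/2}S^\top\Sigma^{-1}SH^{1/2}\). We work on the event that \(\Sigma=SHS^\top\) is invertible, which has probability one for a Gaussian sketch. A direct computation gives \(P^2=H^{1/2}S^\top\Sigma^{-1}(SHS^\top)\Sigma^{-1}SH^{1/2}=P\), and clearly \(P=P^\top\). Hence \(P\) is the orthogonal projection onto the range of \(H^{1/2}S^\top\), whose dimension is at most \(M\). Consequently \(I-P\) is also an orthogonal projection, and \eqref{eq:approx-basic-form} becomes
\[
\mathrm{Approx}=\langle H^{1/2}w^\ast,(I-P)H^{1/2}w^\ast\rangle .
\]

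\textbf{Step 2 (average over \(w^\ast\)).} Conditional on \(S\), the projection \(P\) is fixed. Taking expectation over \(w^\ast\) and using \(H_w=\mathbb{E}[w^\ast w^{\ast\top}]\) gives
\[
\mathbb{E}_{w^\ast}[\mathrm{Approx}]=\operatorname{tr}\bigl((I-P)D\bigr),
\qquad D:=H^{1/2}H_wH^{1/2}.
\]
By Assumption~\ref{ass:source-diag}, \(D\) is diagonal with entries \(d_i=\lambda_i\mathbb{E}[(w_i^\ast)^2]\asymp i^{-b}\). These entries are summable because \(b>1\), so \(D\) is trace class and all traces are finite.

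\textbf{Step 3 (variational lower bound).} By Ky Fan's principle, for any orthogonal projection \(P\) of rank at most \(M\),
\[
\operatorname{tr}(PD)\le\sum_{k\le M}d^{\downarrow}_k,
\]
where \(d^{\downarrow}\) is the decreasing rearrangement of the \(d_i\). It follows that
\[
\operatorname{tr}((I-P)D)\ge\sum_{k>M}d^{\downarrow}_k .
\]
The \(d_i\) are decreasing only up to constants, so we compare the rearrangement with the original sequence: since at least \(k\) entries (namely \(d_1,\dots,d_k\)) are each \(\ge\min_{i\le k}d_i\gtrsim k^{-b}\), we get \(d^{\downarrow}_k\gtrsim k^{-b}\). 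Summing over \(k>M\) yields
\[
\sum_{k>M}k^{-b}\asymp\sum_{i>M}\lambda_i i^{a-b}\asymp M^{1-b},
\]
which proves both claims.

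The only delicate point is the infinite-dimensional Ky Fan step. I would handle it by truncating to the first \(n\) coordinates, applying the finite-dimensional inequality (compressing \(P\) to those coordinates keeps the trace of the compressed operator at most \(M\)), and letting \(n\to\infty\) by monotone convergence, using the summability of the \(d_i\).
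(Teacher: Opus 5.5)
Your proof is correct and is essentially the paper's argument specialized to $k=0$. The paper routes through the block decomposition of $H^{1/2}S^\top\Sigma^{-1}SH^{1/2}-I$, the comparison $-W\succeq P_k$ with a projection having at most $M$ zero eigenvalues, and von Neumann's trace inequality, and then sets $k=0$; at that point $-W$ is exactly your $I-P$, and the von Neumann bound is your Ky Fan step. Your version is more direct, and your rearrangement estimate $d_k^{\downarrow}\gtrsim k^{-b}$ supplies a justification the paper skips when it identifies $\mu_i(HH_w)$ with $\lambda_i\,\mathbb{E}[(w_i^\ast)^2]$.
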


\begin{proof}
Reuse the block decomposition from the proof of Lemma~\ref{lem:approx-upper}. Since Assumption~\ref{ass:source-diag} implies that \(H\) and \(H_w\) are diagonal and that the coordinates of \(w^\ast\) are uncorrelated, the cross terms vanish after taking expectation over \(w^\ast\), and we obtain
\begin{align}
\mathbb{E}_{w^\ast}[\mathrm{Approx}]
&=
\operatorname{tr}\bigl((U^2+VV^\top)H_{0:k}H_{w,0:k}\bigr)
+
\operatorname{tr}\bigl((W^2+V^\top V)H_{k:\infty}H_{w,k:\infty}\bigr)
\nonumber\\
&\ge
\operatorname{tr}\bigl((W^2+V^\top V)H_{k:\infty}H_{w,k:\infty}\bigr)
\nonumber\\
&=
-\operatorname{tr}\bigl(W H_{k:\infty}H_{w,k:\infty}\bigr),
\label{eq:approx-lower-start}
\end{align}
where the last identity uses \eqref{eq:approx-W-identity}.

Define
\[
P_k
:=
I-H_{k:\infty}^{1/2}S_{k:\infty}^\top A_k^{-1}S_{k:\infty}H_{k:\infty}^{1/2}.
\]
Since
\[
\Sigma = S_{0:k}H_{0:k}S_{0:k}^\top + A_k \succeq A_k,
\]
we have
\[
\Sigma^{-1}\preceq A_k^{-1}.
\]
The matrix
\[
H_{k:\infty}^{1/2}S_{k:\infty}^\top A_k^{-1}S_{k:\infty}H_{k:\infty}^{1/2}
\]
is an orthogonal projection onto the row space induced by the tail sketch, hence
\(P_k\) is also a projection matrix. Moreover,
\[
H_{k:\infty}^{1/2}S_{k:\infty}^\top\Sigma^{-1}S_{k:\infty}H_{k:\infty}^{1/2}
\preceq
H_{k:\infty}^{1/2}S_{k:\infty}^\top A_k^{-1}S_{k:\infty}H_{k:\infty}^{1/2},
\]
so
\[
-W
=
I-H_{k:\infty}^{1/2}S_{k:\infty}^\top\Sigma^{-1}S_{k:\infty}H_{k:\infty}^{1/2}
\succeq
P_k.
\]
Since \(H_{k:\infty}H_{w,k:\infty}\succeq 0\), \eqref{eq:approx-lower-start} implies
\[
\mathbb{E}_{w^\ast}[\mathrm{Approx}]
\ge
\operatorname{tr}\bigl(P_k H_{k:\infty}H_{w,k:\infty}\bigr).
\]
Finally, all eigenvalues of \(P_k\) are either \(0\) or \(1\), with at most
\(M\) zeros. Applying Von Neumann's trace inequality to the last display, we obtain
\[
\mathbb{E}_{w^\ast}[\mathrm{Approx}]
\ge
\sum_{i>k+M} \mu_i(HH_w).
\]
Since Assumption~\ref{ass:source-diag} gives
\[
\mu_i(HH_w)=\lambda_i\,\mathbb{E}[(w_i^\ast)^2]\asymp i^{-b}=\lambda_i i^{a-b},
\]
we conclude that
\[
\mathbb{E}_{w^\ast}[\mathrm{Approx}]
\gtrsim
\sum_{i>k+M}\lambda_i i^{a-b}.
\]
Setting \(k=0\) gives
\[
\mathbb{E}_{w^\ast}[\mathrm{Approx}]
\gtrsim
\sum_{i>M}\lambda_i i^{a-b}.
\]
Under the power-law assumption \(\lambda_i\asymp i^{-a}\), this becomes
\[
\mathbb{E}_{w^\ast}[\mathrm{Approx}]
\gtrsim
\sum_{i>M} i^{-b}
\asymp
M^{1-b},
\]
which completes the proof.
\end{proof}

\subsection{Bounds under our assumptions}
\label{subsec:approx-ours}

\begin{lemma}[Approximation error under our assumptions]
\label{lem:approx-final}
Assume Assumptions~\ref{ass:source-diag}, \ref{ass:gaussian-sketch}, and
\ref{ass:approx-source-regime}. Then with probability at least
\[
1-\exp(-\Omega(M))
\]
over the randomness of \(S\),
\[
\mathbb{E}_{w^\ast}[\mathrm{Approx}] \asymp M^{1-b}.
\]
\end{lemma}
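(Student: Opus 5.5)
The lower bound is already in hand: Lemma~\ref{lem:approx-lower} gives \(\mathbb{E}_{w^\ast}[\mathrm{Approx}]\gtrsim M^{1-b}\) deterministically in \(S\). It holds conditional on every sketch, so it survives intersection with any high-probability event. It therefore remains to prove the matching upper bound \(\mathbb{E}_{w^\ast}[\mathrm{Approx}]\lesssim M^{1-b}\) on an event of probability \(1-\exp(-\Omega(M))\).

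For the upper bound I would apply the high-probability part of Lemma~\ref{lem:approx-upper} with the cutoff \(k=\lfloor M/2\rfloor\). This choice satisfies \(k\le M/2\) and \(r(H)\ge k+M\), since the spectrum is infinite or at least large. The event on which \(\|A_k\|_2\lesssim\beta_k\) and \(S_{0:k}^\top S_{0:k}\succeq c_0I_k\) depends only on \(S\), not on \(w^\ast\). Hence on this event the pathwise bound
\[
\mathrm{Approx}\lesssim\|w^\ast_{k:\infty}\|_{H_{k:\infty}}^2+\beta_k\|w^\ast_{0:k}\|_2^2
\]
holds for every \(w^\ast\), and I can take expectation over \(w^\ast\) inside the event.

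\textbf{Tail term.} By Assumption~\ref{ass:source-diag},
\[
\mathbb{E}\|w^\ast_{k:\infty}\|_{H_{k:\infty}}^2=\sum_{i>k}\mathbb{E}[\lambda_i(w_i^\ast)^2]\asymp\sum_{i>k}i^{-b}\asymp k^{1-b}\asymp M^{1-b},
\]
using \(b>1\).

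\textbf{Head term.} Here \(\mathbb{E}\|w^\ast_{0:k}\|_2^2=\sum_{i\le k}i^{-b}/\lambda_i\asymp\sum_{i\le k}i^{a-b}\). Since \(b<a+1\) we have \(a-b>-1\), so this sum is \(\asymp k^{a-b+1}\). This is exactly where Assumption~\ref{ass:approx-source-regime} is used; otherwise a logarithm or a constant would appear. Next, \(\beta_k\) is computed from the power law. First, \(\sum_{i>k}\lambda_i\asymp k^{1-a}\), so \(\sum_{i>k}\lambda_i/M\asymp k^{1-a}/M\asymp k^{-a}\). Second, \(\lambda_{k+1}\asymp k^{-a}\). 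Third, \(\sqrt{\sum_{i>k}\lambda_i^2/M}\asymp\sqrt{k^{1-2a}/M}\asymp k^{-a}\). Thus \(\beta_k\asymp k^{-a}\), and the head term is \(\lesssim k^{-a}k^{a-b+1}=k^{1-b}\asymp M^{1-b}\).

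Combining the two terms gives the upper bound on the event of probability \(1-\exp(-\Omega(M))\), and together with Lemma~\ref{lem:approx-lower} this yields \(\mathbb{E}_{w^\ast}[\mathrm{Approx}]\asymp M^{1-b}\). The main point requiring care is the order of quantifiers. The concentration event must be independent of \(w^\ast\), which holds because it involves only \(S\) via Lemma~\ref{lem:aux-G2-E3} and the Gaussian lower-eigenvalue bound. The computation of \(\beta_k\) must also use \(k\asymp M\), so that all three pieces collapse to \(k^{-a}\). There is also a minor edge case when \(M\) is small enough that \(k=0\). In that case the bound reduces to the tail term alone, \(\sum_i i^{-b}\asymp 1\asymp M^{1-b}\).
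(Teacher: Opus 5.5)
Your proposal is correct and follows the paper's proof essentially step for step. The lower bound comes from Lemma~\ref{lem:approx-lower}, and the upper bound from the high-probability part of Lemma~\ref{lem:approx-upper} at $k=\lfloor M/2\rfloor$, using $\beta_k\lesssim M^{-a}$ and $\sum_{i\le k}i^{a-b}\lesssim k^{a-b+1}$ (where $b<a+1$ enters). Your remarks that the sketch event does not depend on $w^\ast$, and on the degenerate case $k=0$, are sound details the paper leaves implicit.
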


\begin{proof}
We first prove the upper bound. Let \(k = \lfloor M/2\rfloor\). By Lemma~\ref{lem:approx-upper},
\[
\mathbb{E}_{w^\ast}[\mathrm{Approx}]
\lesssim
\mathbb{E}_{w^\ast}\|w^\ast_{k:\infty}\|_{H_{k:\infty}}^2
+
\beta_k\,\mathbb{E}_{w^\ast}\|w^\ast_{0:k}\|_2^2
\]
with probability at least \(1-\exp(-\Omega(M))\). Under Assumptions~\ref{ass:data-power-law} and \ref{ass:source-diag},
\[
\mathbb{E}_{w^\ast}\|w^\ast_{k:\infty}\|_{H_{k:\infty}}^2
=
\sum_{i>k} \lambda_i\mathbb{E}[(w_i^\ast)^2]
\asymp
\sum_{i>k} i^{-b}
\asymp
k^{1-b}.
\]
Also, using \(\lambda_i\asymp i^{-a}\),
\[
\beta_k
\lesssim
\frac{\sum_{i>k}i^{-a}}{M}
+
 k^{-a}
+
\sqrt{\frac{\sum_{i>k}i^{-2a}}{M}}
\lesssim M^{-a}
\]
when \(k\asymp M\). Moreover,
\[
\mathbb{E}_{w^\ast}\|w^\ast_{0:k}\|_2^2
=
\sum_{i\le k} \mathbb{E}[(w_i^\ast)^2]
\asymp
\sum_{i\le k} i^{a-b}
\lesssim
k^{a-b+1},
\]
where the last step uses Assumption~\ref{ass:approx-source-regime}. Therefore,
\[
\beta_k\,\mathbb{E}_{w^\ast}\|w^\ast_{0:k}\|_2^2
\lesssim
M^{-a}k^{a-b+1}
\asymp
M^{1-b}.
\]
Since also \(k^{1-b}\asymp M^{1-b}\), this proves
\[
\mathbb{E}_{w^\ast}[\mathrm{Approx}] \lesssim M^{1-b}.
\]

For the lower bound, Lemma~\ref{lem:approx-lower} gives
\[
\mathbb{E}_{w^\ast}[\mathrm{Approx}]
\gtrsim
\sum_{i>M}\lambda_i i^{a-b}
\asymp
\sum_{i>M} i^{-b}
\asymp
M^{1-b}.
\]
Combining the two bounds yields the claim.
\end{proof}
\section{Bias Error under Normal GD}
\label{sec:bias-gd}

This section focuses on the normal GD procedure \eqref{eq:proc-normal-gd}. Similarly, we retain the notation of Section~\ref{sec:prelim}. In particular,
\[
\theta_t
=
\theta_{t-1}
-
\gamma_t\widehat{\Sigma}\theta_{t-1}
+
\gamma_t\widehat{b},
\qquad
\theta_0=0,
\]
with
\[
\Sigma:=SHS^\top,
\qquad
\widehat{\Sigma}:=\frac{1}{N}SX^\top X S^\top,
\qquad
u^\ast:=\Sigma^{-1}SHw^\ast,
\qquad
\widehat{b}:=\frac{1}{N}SX^\top y.
\]
Define
\[
\widetilde{\varepsilon}_i:=y_i-x_i^\top S^\top u^\ast,
\qquad
\widetilde{\varepsilon}:=(\widetilde{\varepsilon}_1,\dots,\widetilde{\varepsilon}_N)^\top,
\qquad
\widehat{c}:=\frac{1}{N}SX^\top\widetilde{\varepsilon},
\]
and introduce the shorthand
\[
C_L:=\prod_{t=1}^L (I-\gamma_t\widehat{\Sigma}),
\qquad
V(\widehat{\Sigma})
:=
\frac{1}{N}\sum_{t=1}^L \gamma_t\prod_{i=t+1}^L (I-\gamma_i\widehat{\Sigma}).
\]
Then
\[
\theta_L-u^\ast
=
-C_Lu^\ast+V(\widehat{\Sigma})SX^\top\widetilde{\varepsilon}.
\]
Accordingly, the GD bias term is
\[
\mathrm{Bias}_{\mathrm{GD}}(w^\ast)
:=
\mathbb{E}_X\bigl[\|\Sigma^{1/2}C_Lu^\ast\|_2^2\bigr].
\]

\subsection{Upper and lower bounds}
\label{subsec:bias-gd-bounds}

\begin{lemma}[Upper bound on the GD bias term]
\label{lem:gd-bias-upper}
Assume Assumptions~\ref{ass:data-gaussian}, \ref{ass:data-power-law},
\ref{ass:gaussian-sketch}, and \ref{ass:stepsize}, and work on the derived
regularity event in Lemma~\ref{lem:app-derived-multipass-regularity}. Suppose
\[
A_L\lesssim N^a.
\]
Fix any integer \(k\le M/3\) such that \(\operatorname{rank}(H)\ge k+M\), and
define
\[
A_k:=S_{k:\infty}H_{k:\infty}S_{k:\infty}^\top,
\qquad
\widetilde{k}:=\lceil N/2\rceil.
\]
Writing \(\Sigma=\sum_{j=1}^M\mu_jq_jq_j^\top\), define its spectral tail
\[
\Sigma_{>\widetilde{k}}
:=
\sum_{j=\widetilde{k}+1}^M\mu_jq_jq_j^\top,
\]
with the convention \(\Sigma_{>\widetilde{k}}=0\) when
\(\widetilde{k}\ge M\).
Then with probability at least
\[
1-\exp(-\Omega(M))
\]
over the randomness of \(S\),
\[
\mathrm{Bias}_{\mathrm{GD}}(w^\ast)
\lesssim
\frac{\|w^\ast_{0:k}\|_2^2}{A_L}
\left(\frac{\mu_{M/2}(A_k)}{\mu_M(A_k)}\right)^2
+
\overline{B}\,\|w^\ast_{k:\infty}\|_{H_{k:\infty}}^2,
\]
where
\[
\overline{B}
:=
1
+(A_L^2+1)
\left(
\frac{\operatorname{tr}(\Sigma_{>\widetilde{k}})^2}{N^2}
+\|\Sigma_{>\widetilde{k}}\|_2^2
+\frac{\operatorname{tr}(\Sigma_{>\widetilde{k}}^2)}{N}
+\sqrt{\frac{\operatorname{tr}(\Sigma_{>\widetilde{k}}^4)}{N}}
\right).
\]
\end{lemma}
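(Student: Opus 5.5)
We follow the structure of the GD bias analysis in \citet{lin2025datareuse}, adapted to the WSD product \(C_L=\prod_t(I-\gamma_t\widehat\Sigma)\). The first step is to split \(u^\ast=\Sigma^{-1}SHw^\ast\) into head and tail parts,
\[
u^\ast=\Sigma^{-1}S_{0:k}H_{0:k}w^\ast_{0:k}+\Sigma^{-1}S_{k:\infty}H_{k:\infty}w^\ast_{k:\infty}=:u_{\mathrm h}+u_{\mathrm t}.
\]
Using \((x+y)^2\le2x^2+2y^2\), we then bound \(\mathbb E_X\|\Sigma^{1/2}C_Lu_{\mathrm h}\|^2\) and \(\mathbb E_X\|\Sigma^{1/2}C_Lu_{\mathrm t}\|^2\) separately. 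Throughout, we use Assumption~\ref{ass:stepsize} (\(\gamma\mu_1(\widehat\Sigma)\le c_0\)) to control \(C_L\). Since all factors commute, \(C_L\) is a function of \(\widehat\Sigma\) with \(0\preceq C_L\preceq\exp(-cA_L\widehat\Sigma)\) by the scalar bounds of Lemma~\ref{lem:multipass-wsd-calculus}. This reduces every estimate to the constant-step GD estimates with effective time \(A_L\).

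\textbf{Head term.} By the Woodbury identity from the proof of Lemma~\ref{lem:approx-upper},
\[
u_{\mathrm h}=A_k^{-1}S_{0:k}\bigl(H_{0:k}^{-1}+S_{0:k}^\top A_k^{-1}S_{0:k}\bigr)^{-1}w^\ast_{0:k}.
\]
The head component therefore lives essentially in the span of \(A_k^{-1}S_{0:k}\), and its \(A_k\)-weighted size is controlled by \(\|w^\ast_{0:k}\|_2\) times a condition-number factor \(\mu_{M/2}(A_k)/\mu_M(A_k)\). This factor arises by comparing \(S_{0:k}^\top A_k^{-1}S_{0:k}\) from above and below, using \(k\le M/3\) so that a Gaussian \(M\times k\) block sees the top \(M/2\) eigenvalues. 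Next we write
\[
\Sigma^{1/2}C_L=(\Sigma^{1/2}\widehat\Sigma^{-1/2})\,\widehat\Sigma^{1/2}C_L,
\]
and use \(\|\widehat\Sigma^{1/2}C_L\widehat\Sigma^{1/2}\|\lesssim 1/A_L\), which comes from \(\sup_x x e^{-cA_Lx}\lesssim1/A_L\). This yields the \(\|w^\ast_{0:k}\|^2/A_L\) rate. To replace \(\Sigma\) by \(\widehat\Sigma\) on the head directions, we use that the top \(\widetilde k=\lceil N/2\rceil\) eigendirections of \(\Sigma\) are well estimated: \(\widehat\Sigma\succeq c\,\Sigma_{\le\widetilde k}\) with high probability in \(X\).

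\textbf{Tail term.} First, \(\|\Sigma^{1/2}u_{\mathrm t}\|^2\le\|w^\ast_{k:\infty}\|^2_{H_{k:\infty}}\), by the projection argument of \eqref{eq:approx-W-identity}, where \(-W\preceq I\). We then write
\[
\Sigma^{1/2}C_L\Sigma^{-1/2}=C_L+\Sigma^{1/2}\bigl[C_L,\Sigma^{-1/2}\bigr]\text{-type corrections}.
\]
It is simpler to bound \(\Sigma^{1/2}C_L\Sigma^{-1/2}\) in operator norm squared in expectation over \(X\). Split \(\Sigma=\Sigma_{\le\widetilde k}+\Sigma_{>\widetilde k}\). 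On the head block, \(\widehat\Sigma\) is comparable to \(\Sigma\), and \(C_L\) is a contraction there, giving the constant \(1\). On the tail block, use
\[
\|C_L\|\le1,\qquad \|\Sigma^{1/2}_{>\widetilde k}\,\cdot\,\|\ \text{bounded via}\ \widehat\Sigma-\Sigma.
\]
Writing \(C_L-I\) as a sum of at most \(A_L\) steps, each of size \(\gamma_t\widehat\Sigma\), produces the factor \((A_L^2+1)\) multiplying \(\mathbb E\|\widehat\Sigma_{>\widetilde k}\text{-block}\|^2\). We then bound this second moment by Gaussian covariance moment bounds,
\[
\frac{\operatorname{tr}(\Sigma_{>\widetilde k})^2}{N^2}+\|\Sigma_{>\widetilde k}\|^2+\frac{\operatorname{tr}(\Sigma_{>\widetilde k}^2)}{N}+\sqrt{\frac{\operatorname{tr}(\Sigma_{>\widetilde k}^4)}{N}},
\]
which is exactly \(\overline B-1\) divided by \((A_L^2+1)\). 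The condition \(A_L\lesssim N^a\) ensures that the head-block comparison cutoff \(1/A_L\) stays above \(\mu_{\widetilde k}\asymp N^{-a}\), so no direction in the head block is under-resolved.

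\textbf{Main obstacle.} The delicate step is the head-block comparison between \(\Sigma\) and \(\widehat\Sigma\) inside the non-commuting product \(\Sigma^{1/2}C_L\Sigma^{-1/2}\), uniformly over the WSD schedule. I would handle it by working on the event \(\{\widehat\Sigma_{\le\widetilde k}\succeq c\Sigma_{\le\widetilde k}\}\), which holds with probability \(1-e^{-\Omega(N)}\). I would treat the complement using the deterministic bound \(\|C_L\|\le1\) together with the max-norm tail \eqref{eq:app-derived-maxnorm-tail}. The sketch-side events (\(\mu_i(A_k)\) power law and \(\|A_k\|\lesssim\beta_k\)) come from Lemmas~\ref{lem:aux-G2-E3} and \ref{lem:aux-G4-E5}, each with probability \(1-\exp(-\Omega(M))\).
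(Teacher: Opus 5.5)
Your proposal has a genuine gap in the tail term. You reduce it to \(\mathbb E_X\|\Sigma^{1/2}C_L\Sigma^{-1/2}\|_2^2\), and this quantity is not bounded by \(\overline B\).

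\(C_L\) is a function of \(\widehat\Sigma\) and does not preserve the eigenspaces of \(\Sigma\). The right factor \(\Sigma^{-1/2}\) amplifies the small-eigenvalue directions. So ``\(C_L\) is a contraction on the head block'' has no content here, and expanding \(C_L-I\) into \(L\) steps only gives bounds like \(A_L\|\Sigma\|_2\|G\|_2^2\).

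The loss is real, not just an artifact of the proof. Write \(\widehat\Sigma\stackrel{d}{=}\Sigma^{1/2}G^\top G\Sigma^{1/2}\), where \(G\in\mathbb R^{N\times M}\) has \(\mathcal N(0,1/N)\) entries, and set \(K:=G\Sigma G^\top\). The intertwining \(\widehat\Sigma f(\widehat\Sigma)=\Sigma^{1/2}G^\top f(K)G\Sigma^{1/2}\) gives \(C_L=I-\Sigma^{1/2}G^\top R_L(K)G\Sigma^{1/2}\), with \(\|R_L(K)\|_2\le A_L\) and \(\|R_L(K)K\|_2\le1\). Hence \(\Sigma^{-1/2}C_L\Sigma^{1/2}=I-G^\top R_L(K)G\Sigma\). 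Suppose \(M\gg N\) and \(A_L\gtrsim1\). Then \(R_L(K)\succeq cI\) and \(GG^\top\approx(M/N)I_N\), so this operator sends the top eigenvector of \(\Sigma\) to a vector of norm \(\gtrsim\sqrt{M/N}\). Your expected squared norm is therefore \(\gtrsim M/N\), whereas \(\overline B\lesssim1\) under the power law. The statement places no restriction on \(M/N\).

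Two ideas are missing:
\begin{enumerate}
\item The intertwining identity above, which cancels \(\Sigma^{-1/2}\).
\item Since \(\Sigma^{-1/2}S_{k:\infty}H_{k:\infty}w^\ast_{k:\infty}\) does not depend on \(X\), you only need the norm of the expectation, \(\|\mathbb E_X[\Sigma^{-1/2}C_L\Sigma C_L\Sigma^{-1/2}]\|_2\), not the expected norm.
\end{enumerate}
With these, the argument runs as follows. This expectation is \(\preceq 2I+2\mathbb E_G[Q]\), where \(Q:=G^\top RG\Sigma^2G^\top RG\) and \(R:=R_L(K)\). Next show \(Q\preceq Z\,G^\top G\) for a scalar \(Z\). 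If \(M\le N/2\), use \(G^\top G\succeq cI\). Otherwise, split \(K=K_{\mathrm h}+K_{\mathrm t}\) along the top \(\widetilde k\) and remaining eigendirections of \(\Sigma\), and use \(G_{\mathrm h}^\top G_{\mathrm h}\succeq cI\) together with \(\|RK_{\mathrm h}\|_2\le1+A_L\|K_{\mathrm t}\|_2\). Finally, \(\|\mathbb E[ZG^\top G]\|_2\le\sup_{\|v\|_2=1}(\mathbb EZ^2)^{1/2}(\mathbb E\|Gv\|_2^4)^{1/2}\). This last step exploits \(\|Gv\|_2^2=O(1)\) for fixed \(v\), even though \(\|G\|_2^2\asymp M/N\).

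The four terms of \(\overline B\) are exactly the Gaussian moment bounds for \(\|G_{\mathrm t}\Sigma_{>\widetilde k}G_{\mathrm t}^\top\|_2^2\) and \(\|G_{\mathrm t}\Sigma_{>\widetilde k}^2G_{\mathrm t}^\top\|_2\). In your proposal they are matched to the target rather than derived. Likewise, the failure event in the tail cannot be handled by \(\|C_L\|_2\le1\), because \(\Sigma^{1/2}C_L\Sigma^{-1/2}\) is not a contraction. You need \(v^\top Qv\le A_L^2\|G\Sigma^2G^\top\|_2\|Gv\|_2^2\) together with \(A_L\lesssim N^a\).

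The head term also needs repair. The factor \(\Sigma^{1/2}\widehat\Sigma^{-1/2}\) does not exist when \(M>N\). The claimed \(\widehat\Sigma\succeq c\,\Sigma_{\le\widetilde k}\) is false for every \(M>N\): by a dimension count, \(\ker\widehat\Sigma=\ker(G\Sigma^{1/2})\), of dimension at least \(M-N\), almost surely contains vectors with nonzero component in the top-\(\widetilde k\) eigenspace of \(\Sigma\). Only the compression to that block is comparable.

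The fix is to regularize at the scale \(1/A_L\) you already mention. Take \(\lambda=c_\lambda/A_L\). The effective-dimension bound of Lemma~\ref{lem:app-derived-multipass-regularity} then allows Lemma~\ref{lem:aux-2025-E1}, so \(\Sigma\preceq 9(\widehat\Sigma+\lambda I)\) off an \(e^{-\Omega(N)}\) event. This gives \(\|C_L\Sigma C_L\|_2\le 9(\|C_L^2\widehat\Sigma\|_2+\lambda)\lesssim1/A_L\). The remainder is absorbed using \(A_L\lesssim N^a\) and \(\|\Sigma\|_2\lesssim1\). Then \(\|u_{\mathrm h}\|_2\le\|\Sigma^{-1}S_{0:k}H_{0:k}\|_2\|w^\ast_{0:k}\|_2\) and Lemma~\ref{lem:aux-head-tail-resolvent} finish the head term. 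No split of the sketched spectrum at \(\widetilde k\) is needed there.
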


\begin{proof}
By rotational invariance of Gaussian sketching, we may work in the diagonal coordinates of \(H\) and use the block notation from Section~\ref{subsec:app-block-notation}. Define
\[
M_L:=C_L\Sigma C_L.
\]
Substituting
\[
SH=(S_{0:k}H_{0:k},\,S_{k:\infty}H_{k:\infty})
\]
into the identity
\[
\mathrm{Bias}_{\mathrm{GD}}(w^\ast)
=
\mathbb{E}_X\bigl[w^{\ast\top}HS^\top\Sigma^{-1}M_L\Sigma^{-1}SHw^\ast\bigr]
\]
and splitting head and tail blocks gives
\[
\mathrm{Bias}_{\mathrm{GD}}(w^\ast)
\le 2T_1+2T_2,
\]
where
\[
T_1
:=
\mathbb{E}_X\bigl[w_{0:k}^{\ast\top}H_{0:k}S_{0:k}^\top\Sigma^{-1}M_L\Sigma^{-1}S_{0:k}H_{0:k}w_{0:k}^\ast\bigr],
\]
and
\[
T_2
:=
\mathbb{E}_X\bigl[w_{k:\infty}^{\ast\top}H_{k:\infty}S_{k:\infty}^\top\Sigma^{-1}M_L\Sigma^{-1}S_{k:\infty}H_{k:\infty}w_{k:\infty}^\ast\bigr].
\]

We first bound the head term. For any \(\lambda>0\), inserting
\((\widehat\Sigma+\lambda I)^{1/2}
(\widehat\Sigma+\lambda I)^{-1/2}\) gives
\[
\begin{aligned}
\|M_L\|_2
&=\|C_L\Sigma^{1/2}\|_2^2\\
&\le
\|C_L(\widehat\Sigma+\lambda I)^{1/2}\|_2^2
\|(\widehat\Sigma+\lambda I)^{-1/2}\Sigma^{1/2}\|_2^2\\
&\le
\left(\|C_L^2\widehat\Sigma\|_2+\lambda\right)
\| (\widehat\Sigma+\lambda I)^{-1/2}
(\Sigma+\lambda I)^{1/2}\|_2^2.
\end{aligned}
\]
The contraction condition and Lemma~\ref{lem:multipass-wsd-calculus} imply
\[
\begin{aligned}
\|C_L^2\widehat\Sigma\|_2
&=
\max_{j\in[M]}
\mu_j(\widehat\Sigma)
\prod_{t=1}^L(1-\gamma_t\mu_j(\widehat\Sigma))^2\\
&\lesssim
\sup_{x\ge0}xe^{-cA_Lx}
\lesssim
\frac1{A_L}.
\end{aligned}
\]
Choose \(\lambda=c_\lambda/A_L\), for a sufficiently large absolute
constant \(c_\lambda\). Since \(A_L\asymp\gamma L\), the
effective-dimension bound in
Lemma~\ref{lem:app-derived-multipass-regularity} permits the use
of Lemma~\ref{lem:aux-2025-E1}. Its fourth-moment estimate and
Cauchy--Schwarz yield
\[
\begin{aligned}
\mathbb E_X\|M_L\|_2
&\lesssim
\frac1{A_L}
\left(
\mathbb E_X
\|(\widehat\Sigma+\lambda I)^{-1/2}
(\Sigma+\lambda I)^{1/2}\|_2^4
\right)^{1/2}\\
&\lesssim
\frac1{A_L}.
\end{aligned}
\]
The exponentially small remainder in Lemma~\ref{lem:aux-2025-E1} is absorbed
using \(A_L\lesssim N^a\) and
\(\|\Sigma\|_2^2\le\operatorname{tr}(\Sigma^2)\lesssim1\).
Consequently,
\[
T_1
\le
\mathbb{E}_X\|M_L\|_2\cdot \|\Sigma^{-1}S_{0:k}H_{0:k}\|_2^2\cdot \|w_{0:k}^\ast\|_2^2.
\]
Lemma~\ref{lem:aux-head-tail-resolvent} gives
\[
\|\Sigma^{-1}S_{0:k}H_{0:k}\|_2
\lesssim
\frac{\mu_{M/2}(A_k)}{\mu_M(A_k)}
\]
with probability at least \(1-\exp(-\Omega(M))\). Hence
\[
T_1
\lesssim
\frac{\|w_{0:k}^\ast\|_2^2}{A_L}
\left(\frac{\mu_{M/2}(A_k)}{\mu_M(A_k)}\right)^2.
\]

For the tail term, define
\[
\mathcal{B}:=\mathbb{E}_X\bigl[\Sigma^{-1/2}M_L\Sigma^{-1/2}\bigr].
\]
Then
\[
T_2
\le
\|\mathcal{B}\|_2\cdot
\|H_{k:\infty}^{1/2}S_{k:\infty}^\top\Sigma^{-1}S_{k:\infty}H_{k:\infty}^{1/2}\|_2
\cdot \|w_{k:\infty}^\ast\|_{H_{k:\infty}}^2.
\]
To see that the middle norm is at most one, put
\(D_k:=S_{k:\infty}H_{k:\infty}^{1/2}\). Since
\(D_kD_k^\top=A_k\preceq\Sigma\),
\[
\|H_{k:\infty}^{1/2}S_{k:\infty}^\top
\Sigma^{-1}S_{k:\infty}H_{k:\infty}^{1/2}\|_2
=
\|\Sigma^{-1/2}D_k\|_2^2
\le1.
\]
It remains to prove \(\|\mathcal B\|_2\lesssim\overline B\).

Condition on \(S\). Let \(G\in\mathbb R^{N\times M}\) have independent
\(\mathcal N(0,1/N)\) entries. Gaussian design gives
\[
\frac{XS^\top}{\sqrt N}
\stackrel{d}{=}
G\Sigma^{1/2},
\qquad
\widehat\Sigma
\stackrel{d}{=}
\Sigma^{1/2}G^\top G\Sigma^{1/2}.
\]
Set
\[
K:=G\Sigma G^\top,
\qquad
R_L(K)
:=
\sum_{i=0}^{L-1}\gamma_{i+1}
\prod_{t=1}^{i}(I-\gamma_tK),
\]
where the empty product is the identity. The polynomial intertwining identity
between \(\widehat\Sigma\) and \(K\), followed by telescoping, gives
\[
C_L
=
I-\Sigma^{1/2}G^\top R_L(K)G\Sigma^{1/2}.
\]
Moreover, \(R_L(K)\) is a polynomial in \(K\), and the contraction condition
implies
\begin{equation}
\|R_L(K)\|_2\le\sum_{t=1}^L\gamma_t=A_L,
\qquad
\|R_L(K)K\|_2
=
\left\|I-\prod_{t=1}^L(I-\gamma_tK)\right\|_2
\le1.
\label{eq:gd-bias-residual-bounds}
\end{equation}
For brevity write \(R:=R_L(K)\). Applying
\((U-V)(U-V)^\top\preceq2UU^\top+2VV^\top\) to the representation of
\(C_L\) yields
\begin{equation}
\mathcal B
\preceq
2I+2\mathbb E_G[Q],
\qquad
Q:=G^\top RG\Sigma^2G^\top RG.
\label{eq:gd-bias-normalized-covariance}
\end{equation}

We record the Gaussian moment estimate used below. If \(E\succeq0\) is
deterministic and \(G_E\) is the corresponding column block of \(G\), then,
for every \(u\ge1\), Gaussian covariance concentration gives
\begin{equation}
\begin{aligned}
\|G_EEG_E^\top\|_2
\lesssim{}&
\frac{\operatorname{tr}(E)}{N}+\|E\|_2
+\sqrt{\frac{\operatorname{tr}(E^2)}{N}}\\
&+\frac{u\|E\|_2}{N}
+\sqrt{\frac{u\operatorname{tr}(E^2)}{N}}
\end{aligned}
\label{eq:gd-bias-gaussian-tail}
\end{equation}
with probability at least \(1-2e^{-u}\). Integrating this tail bound gives
the corresponding second and fourth moments. Since, for every unit vector
\(v\), \(Gv\sim\mathcal N(0,I_N/N)\), Cauchy--Schwarz then implies
\begin{equation}
\begin{aligned}
\left\|
\mathbb E_G[\|G_EEG_E^\top\|_2^2G^\top G]
\right\|_2
&\lesssim
\frac{\operatorname{tr}(E)^2}{N^2}
+\|E\|_2^2
+\frac{\operatorname{tr}(E^2)}{N},\\
\left\|
\mathbb E_G[\|G_EE^2G_E^\top\|_2G^\top G]
\right\|_2
&\lesssim
\frac{\operatorname{tr}(E^2)}{N}
+\|E\|_2^2
+\sqrt{\frac{\operatorname{tr}(E^4)}{N}}.
\end{aligned}
\label{eq:gd-bias-weighted-gaussian-moments}
\end{equation}
Indeed, for a nonnegative random variable \(Z\),
\[
v^\top\mathbb E_G[ZG^\top G]v
=
\mathbb E_G[Z\|Gv\|_2^2]
\le
(\mathbb E_GZ^2)^{1/2}
(\mathbb E_G\|Gv\|_2^4)^{1/2},
\]
and \(\mathbb E_G\|Gv\|_2^4\lesssim1\). Combining this observation with
the moments obtained from \eqref{eq:gd-bias-gaussian-tail} proves
\eqref{eq:gd-bias-weighted-gaussian-moments}.

We now bound \(\mathbb E_G[Q]\) in two regimes.

\emph{Case 1: \(M\le N/2\).}
With probability at least \(1-e^{-cN}\), Gaussian singular-value
concentration gives \(G^\top G\succeq cI_M\). On this event,
\[
G\Sigma^2G^\top
\preceq
c^{-1}(G\Sigma G^\top)^2
=c^{-1}K^2.
\]
Since \(R\) commutes with \(K\),
\eqref{eq:gd-bias-residual-bounds} implies
\[
Q
\preceq
c^{-1}G^\top RK^2RG
\preceq
c^{-1}G^\top G.
\]
The complement contributes only a constant: for every unit vector \(v\),
\[
v^\top Qv
\le
A_L^2\|G\Sigma^2G^\top\|_2\|Gv\|_2^2,
\]
and Cauchy--Schwarz, Gaussian moments,
\(\operatorname{tr}(\Sigma^2)\lesssim1\), and
\(A_L\lesssim N^a\) absorb the exponentially small failure probability.
Therefore \(\|\mathbb E_GQ\|_2\lesssim1\).

\emph{Case 2: \(M>N/2\).}
Let \(\widetilde k=\lceil N/2\rceil\), rotate the sketched coordinates so
that \(\Sigma\) is diagonal, and partition
\[
G=(G_{\mathrm h},G_{\mathrm t}),
\qquad
\Sigma=\operatorname{diag}(\Sigma_{\mathrm h},\Sigma_{\mathrm t}),
\qquad
\Sigma_{\mathrm t}=\Sigma_{>\widetilde k}.
\]
Define
\[
K_{\mathrm h}:=G_{\mathrm h}\Sigma_{\mathrm h}G_{\mathrm h}^\top,
\quad
K_{\mathrm t}:=G_{\mathrm t}\Sigma_{\mathrm t}G_{\mathrm t}^\top,
\quad
K_{\mathrm t,2}:=G_{\mathrm t}\Sigma_{\mathrm t}^2G_{\mathrm t}^\top.
\]
On an event of probability at least \(1-e^{-cN}\),
\(G_{\mathrm h}^\top G_{\mathrm h}\succeq cI_{\widetilde k}\), so
\[
G_{\mathrm h}\Sigma_{\mathrm h}^2G_{\mathrm h}^\top
\preceq
c^{-1}K_{\mathrm h}^2.
\]
Using \(K=K_{\mathrm h}+K_{\mathrm t}\),
\eqref{eq:gd-bias-residual-bounds}, and \(\|R\|_2\le A_L\), we obtain
\begin{equation}
\begin{aligned}
Q
&\preceq
C\left(
\|RK_{\mathrm h}\|_2^2
+A_L^2\|K_{\mathrm t,2}\|_2
\right)G^\top G\\
&\preceq
C\left(
1+A_L^2\|K_{\mathrm t}\|_2^2
+A_L^2\|K_{\mathrm t,2}\|_2
\right)G^\top G.
\end{aligned}
\label{eq:gd-bias-tail-reduction}
\end{equation}
Applying \eqref{eq:gd-bias-weighted-gaussian-moments} with
\(E=\Sigma_{\mathrm t}\), and treating the exponentially small complement
as in Case 1, gives
\begin{equation}
\begin{aligned}
\|\mathbb E_GQ\|_2
\lesssim{}&1+A_L^2
\left(
\frac{\operatorname{tr}(\Sigma_{\mathrm t})^2}{N^2}
+\|\Sigma_{\mathrm t}\|_2^2
+\frac{\operatorname{tr}(\Sigma_{\mathrm t}^2)}{N}
+\sqrt{\frac{\operatorname{tr}(\Sigma_{\mathrm t}^4)}{N}}
\right)\\
\lesssim{}&\overline B.
\end{aligned}
\label{eq:gd-bias-tail-expectation}
\end{equation}
Combining \eqref{eq:gd-bias-normalized-covariance} and
\eqref{eq:gd-bias-tail-expectation}, and recalling that
\(\overline B\ge1\), proves
\[
\|\mathcal B\|_2\lesssim\overline B.
\]
Combining the estimates for \(T_1\) and \(T_2\) proves the claim.
\end{proof}

\begin{lemma}[Lower bound on the GD bias term]
\label{lem:gd-bias-lower}
Assume Assumptions~\ref{ass:data-gaussian}, \ref{ass:gaussian-sketch}, and
\ref{ass:stepsize}. Let
\[
H_w:=\mathbb{E}_{w^\ast}[w^\ast w^{\ast\top}],
\qquad
\Sigma_w:=SHH_wHS^\top.
\]
Then with probability at least
\[
1-\exp(-\Omega(M))
\]
over the randomness of \(S\),
\[
\mathbb{E}_{w^\ast}[\mathrm{Bias}_{\mathrm{GD}}(w^\ast)]
\gtrsim
\sum_{i=\lceil 2\tau\rceil+1}^{\lfloor M/3\rfloor}
\frac{\mu_{3i}(\Sigma_w)}{\mu_i(\Sigma)},
\]
where
\[
\tau:=\mathbb{E}_X\Bigl[\#\bigl\{i\in[M]:\mu_i(\widehat{\Sigma})A_L>1/4\bigr\}\Bigr].
\]
The sum is understood to be zero when its lower endpoint exceeds
\(\lfloor M/3\rfloor\).
\end{lemma}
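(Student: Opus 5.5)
The plan is to average over \(w^\ast\) first, reduce the bias to a trace against a fixed matrix, and then lower-bound that trace by eigenvalue counting. Since \(u^\ast=\Sigma^{-1}SHw^\ast\) and \(\mathbb E[w^\ast w^{\ast\top}]=H_w\),
\[
\mathbb E_{w^\ast}[\mathrm{Bias}_{\mathrm{GD}}(w^\ast)]
=\mathbb E_X\operatorname{tr}\bigl(C_L\Sigma C_L\,Q\bigr),
\qquad
Q:=\Sigma^{-1}\Sigma_w\Sigma^{-1}.
\]
So it suffices to lower-bound \(\operatorname{tr}(C_L\Sigma C_L Q)\) for every fixed data realization and then take \(\mathbb E_X\).

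\emph{Step 1 (the filter is large on most directions).} Write \(\widehat\Sigma=\sum_i\widehat\mu_i\widehat q_i\widehat q_i^\top\), and let \(P\) be the projection onto \(\operatorname{span}\{\widehat q_i:\widehat\mu_iA_L\le 1/4\}\). Its corank is \(\tau_X:=\#\{i:\widehat\mu_iA_L>1/4\}\). Assumption~\ref{ass:stepsize} and the scalar bound \(\log(1-z)\ge -z/(1-c_0)\) give
\[
\prod_t(1-\gamma_t\widehat\mu_i)\ge \exp\!\bigl(-\widehat\mu_iA_L/(1-c_0)\bigr)\ge c>0
\]
on the range of \(P\). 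Since \(C_L\) is PSD and commutes with \(P\), this yields \(C_L\succeq cP\).

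\emph{Step 2 (eigenvalue counting).} I would write \(\operatorname{tr}(C_L\Sigma C_L Q)=\|\Sigma^{1/2}C_LQ^{1/2}\|_F^2=\sum_i\mu_i(\Sigma^{1/2}C_LQ^{1/2})^2\). The singular-value inequality \(s_{i+j-1}(AB)\le s_i(A)s_j(B)\) will be applied several times:
\begin{itemize}
\item The middle factor has \(s_j(C_L)\ge c\) for \(j\le M-\tau_X\), so removing it costs only an index shift by \(\tau_X\).
\item For \(\Sigma^{1/2}\) and \(Q^{1/2}\), I would pair the small singular values of \(\Sigma^{1/2}\) with the large ones of \(Q^{1/2}\), i.e.\ apply the lower bound to the inverse products.
\item Applying the same inequality to \(\Sigma_w=\Sigma Q\Sigma\) gives \(\mu_{3i}(\Sigma_w)\le\mu_i(\Sigma)^2\mu_i(Q)\), which after the index shift produces terms \(\mu_{3i}(\Sigma_w)/\mu_i(\Sigma)\).
\end{itemize}
Collecting terms, this should give, pathwise in \(X\),
\[
\operatorname{tr}(C_L\Sigma C_L Q)\gtrsim\sum_{i=\tau_X+1}^{\lfloor M/3\rfloor}\frac{\mu_{3i}(\Sigma_w)}{\mu_i(\Sigma)}.
\]

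\emph{Step 3 (expectation over \(X\)).} The summand is nonnegative and the sum is monotone in its lower limit. By Markov, \(\tau_X\le2\tau\) with probability at least \(1/2\). On that event the sum dominates the sum starting at \(\lceil2\tau\rceil+1\), so taking \(\mathbb E_X\) gives the claim with a factor \(1/2\). The sketch enters only through the choice of coordinates and the full-rank event for \(\Sigma\), which holds with probability \(1-\exp(-\Omega(M))\) and is absorbed in the statement.

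\emph{Main obstacle.} Step 2 is the hardest part, because \(C_L\) (built from \(\widehat\Sigma\)) does not commute with \(\Sigma\) or \(Q\). Monotonicity arguments such as replacing \(C_L\) by \(cP\) inside \(C_L\Sigma C_L\) are therefore invalid, and only singular-value product inequalities are safe. I would first try the chain
\[
s_i(\Sigma^{1/2}C_LQ^{1/2})\ge s_{i+\tau_X}(\Sigma^{1/2}PQ^{1/2})\,c,
\]
then handle \(P\) by interlacing, losing at most another \(\tau_X\) indices. If the bookkeeping of the index losses does not close, the fallback is to absorb the extra loss into the constant \(3\) in \(\mu_{3i}\) and the factor \(2\) in \(2\tau\) in the statement.
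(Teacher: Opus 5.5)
There is a genuine gap in Step 2. The pathwise inequality you aim for is false, and the failure is not an index loss that the constants $3$ and $2$ could absorb. Put $B:=\Sigma^{1/2}Q\Sigma^{1/2}=\Sigma^{-1/2}\Sigma_w\Sigma^{-1/2}$. Then $\operatorname{tr}(C_L\Sigma C_LQ)=\|\Sigma^{1/2}C_L\Sigma^{-1/2}B^{1/2}\|_F^2$. The matrix $\Sigma^{1/2}C_L\Sigma^{-1/2}$ is only \emph{similar} to $C_L$: its eigenvalues lie in $[c,1]$, but its singular values need not.

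A concrete counterexample: take $\Sigma=\operatorname{diag}(1,\epsilon)$ and $C_L=I-(1-c)vv^\top$ with $v=(1,1)^\top/\sqrt2$, so $cI\preceq C_L\preceq I$ and $\tau_X=0$. Take $B=\hat w\hat w^\top$ with $w=(1-c,(1+c)\sqrt\epsilon)^\top$. Then $\Sigma^{-1/2}w=(1-c,1+c)^\top$ and $C_L\Sigma^{-1/2}w=(0,2c)^\top$. Hence $\operatorname{tr}(C_L\Sigma C_LQ)=4c^2\epsilon/\|w\|^2=O(\epsilon)$, whereas $\mu_1(\Sigma_w)=\|\Sigma^{1/2}\hat w\|^2\to1$. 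Now take $K\ge3$ orthogonal copies scaled by $\sigma_1>\dots>\sigma_K$, with $\epsilon<\sigma_K/\sigma_1$. The trace is then $O(K\epsilon)$, while $\sum_{i\le\lfloor M/3\rfloor}\mu_{3i}(\Sigma_w)/\mu_i(\Sigma)$ stays of order $\sum_{3i\le K}\sigma_{3i}/\sigma_i$, independent of $\epsilon$. More copies handle any constants in place of $3$ and $2$, since the lower limit is $1$ when $\tau_X=0$.

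In particular, your first chain step $s_i(\Sigma^{1/2}C_LQ^{1/2})\ge c\,s_{i+\tau_X}(\Sigma^{1/2}PQ^{1/2})$ already fails here with $P=I$. A PSD middle factor of bounded condition number can cancel between non-commuting outer factors. Singular-value product inequalities see only the spectra of $\Sigma$, $C_L$ and $Q$, so no argument of that type can deliver Step 2. Note also that your plan never uses the Gaussian design, which the lemma assumes.

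The paper avoids this by averaging over $X$ before any eigenvalue counting.
\begin{enumerate}
\item Averaging first gives $\mathbb E_X[C_L\Sigma C_L]=A\Sigma A+\mathbb E_X[(C_L-A)\Sigma(C_L-A)]\succeq A\Sigma A$, where $A:=\mathbb E_X[C_L]$.
\item Gaussian design makes $A$ commute with $\Sigma$. Write $\Sigma=U\Lambda U^\top$; then $\widehat\Sigma\stackrel{d}{=}U\Lambda^{1/2}G^\top G\Lambda^{1/2}U^\top$ is invariant under $G\mapsto GD$ for diagonal sign matrices $D$. So $U^\top AU=D(U^\top AU)D$ for all such $D$, and $U^\top AU$ is diagonal. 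Hence $\mathbb E_{w^\ast}[\mathrm{Bias}_{\mathrm{GD}}(w^\ast)]\ge\operatorname{tr}(A^2B)$.
\item Your Step 1 is then applied to the average. One has $A\succeq c(I-\overline P)$, where $\overline P:=\mathbb E_X[P_X]$ and $P_X$ projects onto the $\tau_X$ large directions (the complement of your $P$). Since $\operatorname{tr}\overline P=\tau$, at most $\lceil2\tau\rceil$ eigenvalues of $\overline P$ exceed $1/2$. This replaces your Markov step and produces the $2\tau$ directly.
\item The lower von Neumann rearrangement gives $\operatorname{tr}(A^2B)\gtrsim\sum_{i>\lceil2\tau\rceil}\mu_i(B)$.
\item The target appears by treating $B$ as a single matrix rather than pairing the spectra of $\Sigma$ and $Q$ separately. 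From $\Sigma_w=\Sigma^{1/2}B\Sigma^{1/2}$, the product inequality gives $\mu_{3i-2}(\Sigma_w)\le\mu_i(\Sigma)\mu_i(B)$, which is the safe, upper-bound direction.
\end{enumerate}
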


\begin{proof}
Set
\[
C_L:=\prod_{t=1}^L(I-\gamma_t\widehat{\Sigma}),
\qquad
A:=\mathbb E_X[C_L],
\qquad
B:=\Sigma^{-1/2}\Sigma_w\Sigma^{-1/2}.
\]
Both \(C_L\) and \(A\) are PSD: all factors defining \(C_L\) are
commuting PSD matrices under the contraction condition in
Assumption~\ref{ass:stepsize}. We have
\[
\mathbb{E}_{w^\ast}[\mathrm{Bias}_{\mathrm{GD}}(w^\ast)]
=
\operatorname{tr}\!\Bigl(
\mathbb{E}_X\bigl[\Sigma^{-1/2}C_L\Sigma C_L^\top\Sigma^{-1/2}\bigr]
B\Bigr).
\]
Moreover,
\[
\mathbb{E}_X\bigl[\Sigma^{-1/2}C_L\Sigma C_L^\top\Sigma^{-1/2}\bigr]
\succeq
\Sigma^{-1/2}A\Sigma A^\top\Sigma^{-1/2},
\]
since
\[
\mathbb{E}_X[(C_L-A)\Sigma(C_L-A)^\top]\succeq0.
\]

We next verify that \(A\) commutes with \(\Sigma\). Condition on \(S\) and
diagonalize \(\Sigma=U\Lambda U^\top\). If
\(G\in\mathbb R^{N\times M}\) has independent \(\mathcal N(0,1/N)\)
entries, Gaussian design gives
\[
\widehat\Sigma
\stackrel{d}{=}
U\Lambda^{1/2}G^\top G\Lambda^{1/2}U^\top.
\]
For any diagonal sign matrix \(D\), \(GD\stackrel{d}{=}G\) and
\(D\Lambda D=\Lambda\). Since
\[
f(Z):=\prod_{t=1}^L(I-\gamma_tZ)
\]
satisfies \(f(DZD)=Df(Z)D\), sign invariance implies
\[
U^\top A U=D(U^\top A U)D
\qquad\text{for every diagonal sign matrix }D.
\]
Every off-diagonal entry changes sign under a suitable choice of \(D\), so
\(U^\top A U\) is diagonal. Hence \(A\Sigma=\Sigma A\), and therefore
\[
\Sigma^{-1/2}A\Sigma A^\top\Sigma^{-1/2}=A^2.
\]
Because \(B\succeq0\), trace monotonicity under the preceding PSD bound gives
\begin{equation}
\mathbb{E}_{w^\ast}[\mathrm{Bias}_{\mathrm{GD}}(w^\ast)]
\ge \operatorname{tr}(A^2B).
\label{eq:gd-bias-lower-psd-reduction}
\end{equation}

It remains to identify sufficiently many eigenvalues of \(A\) that are
bounded below by a constant. For each realization of \(X\), let
\[
r_X:=\#\{j\in[M]:\mu_j(\widehat\Sigma)A_L>1/4\},
\]
and let \(P_X\) be the spectral projector of \(\widehat\Sigma\) onto these
\(r_X\) directions. On every direction orthogonal to \(P_X\), the lower
product estimate in Lemma~\ref{lem:multipass-wsd-calculus} gives
\[
\prod_{t=1}^L(1-\gamma_t\mu_j(\widehat\Sigma))
\ge
\exp\!\left(-C A_L\mu_j(\widehat\Sigma)\right)
\ge e^{-C/4}=:c_{\mathrm{wsd}}>0.
\]
Consequently,
\[
C_L\succeq c_{\mathrm{wsd}}(I-P_X),
\qquad
A\succeq c_{\mathrm{wsd}}(I-\overline P),
\qquad
\overline P:=\mathbb E_X[P_X].
\]
The matrix \(\overline P\) satisfies \(0\preceq\overline P\preceq I\) and
\[
\operatorname{tr}(\overline P)
=\mathbb E_X[\operatorname{tr}(P_X)]
=\mathbb E_X[r_X]
=\tau.
\]
Thus at most \(\lceil2\tau\rceil\) eigenvalues of \(\overline P\) can exceed
\(1/2\). It follows from PSD eigenvalue monotonicity that \(A\) has at least
\(M-\lceil2\tau\rceil\) eigenvalues no smaller than
\(c_{\mathrm{wsd}}/2\). Equivalently,
\begin{equation}
\mu_{M-i+1}(A)\ge \frac{c_{\mathrm{wsd}}}{2},
\qquad
i=\lceil2\tau\rceil+1,\ldots,M.
\label{eq:gd-bias-lower-eigenvalue-count}
\end{equation}

We can now apply the lower rearrangement form of von Neumann's trace
inequality. For two PSD matrices, it pairs the decreasing eigenvalues of one
matrix with the increasing eigenvalues of the other and yields
\[
\operatorname{tr}(A^2B)
\ge
\sum_{i=1}^M\mu_{M-i+1}(A^2)\mu_i(B)
=
\sum_{i=1}^M\mu_{M-i+1}(A)^2\mu_i(B).
\]
Combining this inequality with
\eqref{eq:gd-bias-lower-psd-reduction} and
\eqref{eq:gd-bias-lower-eigenvalue-count}, and then retaining only the first
\(\lfloor M/3\rfloor\) admissible terms, gives
\[
\mathbb{E}_{w^\ast}[\mathrm{Bias}_{\mathrm{GD}}(w^\ast)]
\gtrsim
\sum_{i=\lceil 2\tau\rceil+1}^{\lfloor M/3\rfloor}
\mu_i(B).
\]

Finally, we derive the spectral comparison rather than invoking it
implicitly. Let \(s_i(\cdot)\) denote decreasing singular values. From
\[
\Sigma_w\Sigma^{-1/2}=\Sigma^{1/2}B,
\qquad
\Sigma_w=(\Sigma_w\Sigma^{-1/2})\Sigma^{1/2},
\]
the product inequality
\(s_{p+q-1}(YZ)\le s_p(Y)s_q(Z)\) gives, for
\(i\le\lfloor M/3\rfloor\),
\[
\begin{aligned}
s_{2i-1}(\Sigma_w\Sigma^{-1/2})
&\le
s_i(\Sigma^{1/2})s_i(B)
=\sqrt{\mu_i(\Sigma)}\,\mu_i(B),\\
\mu_{3i-2}(\Sigma_w)
&=s_{3i-2}(\Sigma_w)\\
&\le
s_{2i-1}(\Sigma_w\Sigma^{-1/2})s_i(\Sigma^{1/2}).
\end{aligned}
\]
Combining these inequalities and using monotonicity of the eigenvalues of
\(\Sigma_w\),
\[
\mu_i(B)
\ge
\frac{\mu_{3i-2}(\Sigma_w)}{\mu_i(\Sigma)}
\ge
\frac{\mu_{3i}(\Sigma_w)}{\mu_i(\Sigma)}.
\]
Substitution into the preceding trace lower bound proves the lemma.
\end{proof}

\subsection{Example under the source condition}
\label{subsec:bias-gd-source}

\begin{lemma}[Bias bounds under the source condition]
\label{lem:gd-bias-source}
Assume Assumptions~\ref{ass:data-gaussian}, \ref{ass:data-power-law},
\ref{ass:source-diag}, \ref{ass:gaussian-sketch}, and
\ref{ass:stepsize}, and use the derived regularity conclusions of
Lemma~\ref{lem:app-derived-multipass-regularity}. Suppose
\[
a>b-1,
\qquad
A_L\gtrsim1,
\qquad
A_L\lesssim N^a.
\]
Then there exists an \((a,b)\)-dependent constant \(c>0\) such that, whenever
\[
\gamma\le \frac{c}{\log N},
\]
we have with probability at least
\[
1-\exp(-\Omega(M))
\]
over the randomness of \(S\),
\[
\mathbb{E}_{w^\ast}[\mathrm{Bias}_{\mathrm{GD}}(w^\ast)]
\lesssim
\min\!\bigl\{M,A_L^{1/a}\bigr\}^{1-b}.
\]
Moreover, there exists an \((a,b)\)-dependent constant \(c_0>0\) such
that, in the unsaturated regime
\[
A_L^{1/a}\le c_0M,
\]
the matching lower bound holds:
\[
\mathbb{E}_{w^\ast}[\mathrm{Bias}_{\mathrm{GD}}(w^\ast)]
\asymp
A_L^{(1-b)/a}.
\]
\end{lemma}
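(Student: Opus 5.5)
The plan is to specialize the two general bounds, Lemma~\ref{lem:gd-bias-upper} and Lemma~\ref{lem:gd-bias-lower}, to the power-law/source setting. Throughout, we work on the sketch event of Lemma~\ref{lem:aux-G4-E5}, where \(\mu_i(\Sigma)\asymp i^{-a}\) for \(i\le M\), and on the derived regularity event of Lemma~\ref{lem:app-derived-multipass-regularity}. A union bound keeps the probability at \(1-\exp(-\Omega(M))\).

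\emph{Upper bound.} Apply Lemma~\ref{lem:gd-bias-upper} with \(k\asymp\min\{M/3,A_L^{1/a}\}\) and take \(\mathbb E_{w^\ast}\).

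1. For the tail term, \(\mathbb E\|w^\ast_{k:\infty}\|_{H_{k:\infty}}^2\asymp\sum_{i>k}i^{-b}\asymp k^{1-b}\). It remains to show \(\overline B\lesssim1\). Only indices \(j>\widetilde k=\lceil N/2\rceil\) enter, and there \(\mu_j(\Sigma)\lesssim N^{-a}\). Hence \(\|\Sigma_{>\widetilde k}\|_2^2\lesssim N^{-2a}\), \(\operatorname{tr}(\Sigma_{>\widetilde k})/N\lesssim N^{-a}\), \(\operatorname{tr}(\Sigma_{>\widetilde k}^2)/N\lesssim N^{-2a}\), and \(\sqrt{\operatorname{tr}(\Sigma^4_{>\widetilde k})/N}\lesssim N^{-2a}\). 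Multiplying by \(A_L^2\lesssim N^{2a}\) gives \(O(1)\).

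2. For the head term, \(\mathbb E\|w^\ast_{0:k}\|_2^2\asymp\sum_{i\le k}i^{a-b}\asymp k^{a-b+1}\), using \(a>b-1\). Standard Gaussian tail-sketch spectrum bounds (Lemma~\ref{lem:aux-G2-E3} and its lower counterpart in the auxiliary lemmas) give \(\mu_{M/2}(A_k)/\mu_M(A_k)\lesssim1\) for \(k\le M/3\) with high probability. The head term is therefore \(\lesssim k^{a-b+1}/A_L\).

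3. When \(A_L^{1/a}\lesssim M\), take \(k\asymp A_L^{1/a}\); both pieces are then \(\asymp A_L^{(1-b)/a}\). Otherwise take \(k\asymp M\); then \(k^{a-b+1}/A_L\le M^{a-b+1}M^{-a}=M^{1-b}\).

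\emph{Lower bound.} Apply Lemma~\ref{lem:gd-bias-lower}. First compute \(\Sigma_w=SHH_wHS^\top\). Its diagonal weights are \(\lambda_i^2\mathbb E(w_i^\ast)^2\asymp i^{-a-b}\), another power law with exponent \(a+b>1\). The same sketch lemma then gives \(\mu_i(\Sigma_w)\asymp i^{-(a+b)}\) for \(i\lesssim M\). The summand \(\mu_{3i}(\Sigma_w)/\mu_i(\Sigma)\) is thus \(\asymp i^{-b}\), and the sum from \(\lceil2\tau\rceil+1\) to \(\lfloor M/3\rfloor\) is \(\gtrsim\tau^{1-b}\), provided \(C\tau\le M/3\).

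The main obstacle is bounding \(\tau\), the expected number of empirical eigenvalues above \(1/(4A_L)\), by \(\lesssim A_L^{1/a}\). I would argue via the derived effective-dimension bound and Lemma~\ref{lem:aux-2025-E1}. With \(\lambda=c/A_L\), that lemma gives \(\widehat\Sigma+\lambda I\preceq C(\Sigma+\lambda I)\) with high probability. By Weyl/Courant–Fischer, \(\mu_i(\widehat\Sigma)\le C(\mu_i(\Sigma)+\lambda)\). This is \(\le 1/(4A_L)\) once \(\mu_i(\Sigma)\lesssim 1/A_L\), i.e.\ once \(i\gtrsim A_L^{1/a}\), after choosing \(c\) small. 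Hence \(r_X\lesssim A_L^{1/a}\) on the good event. On the complement, \(r_X\le M\) times an exponentially small probability, which is absorbed.

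The unsaturated condition \(A_L^{1/a}\le c_0M\), with \(c_0\) small, then ensures \(2\tau+1\le M/6\). This yields \(\tau^{1-b}\gtrsim A_L^{(1-b)/a}\), and \(A_L\gtrsim1\) handles the case \(\tau\lesssim1\). Combined with the upper bound, this gives \(\asymp A_L^{(1-b)/a}\).
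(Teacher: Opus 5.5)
Your upper bound follows the paper's argument essentially verbatim: the same choice $k\asymp\min\{M/3,A_L^{1/a}\}$, and the same check that $\overline B\lesssim1$ follows from $A_L\lesssim N^a$. Your lower bound also uses the same reduction through Lemma~\ref{lem:gd-bias-lower} with $\mu_i(\Sigma_w)\asymp i^{-a-b}$. The one place you depart from the paper is the bound $\tau\lesssim A_L^{1/a}$, and there the step fails as written. Lemma~\ref{lem:aux-2025-E1} states $\|(\widehat\Sigma+\lambda I)^{-1/2}(\Sigma+\lambda I)^{1/2}\|_2\le3$. This is equivalent to $\Sigma+\lambda I\preceq9(\widehat\Sigma+\lambda I)$, which is a \emph{lower} bound on $\widehat\Sigma$. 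Counting empirical eigenvalues above $1/(4A_L)$ needs the reverse inequality $\widehat\Sigma+\lambda I\preceq C(\Sigma+\lambda I)$, and that lemma does not provide it. So your Courant--Fischer step has no valid input.

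The paper avoids this by invoking Lemma~\ref{lem:aux-2025-E8}. With probability $1-\exp(-\Omega(N))$ over $X$, it gives $\mu_j(\widehat\Sigma)\lesssim j^{-a}$ for every $j\le\min\{M,N\}$. Since $\widehat\Sigma$ has rank at most $\min\{M,N\}$, this yields $r_X\lesssim A_L^{1/a}$ with no extra condition on $A_L$. If you want to keep your Loewner-comparison route, the correct tool is the upper half of Lemma~\ref{lem:aux-two-sided-covariance}. That lemma requires $d_\lambda\le c_0N$ at $\lambda=c/A_L$, i.e.\ $(A_L/c)^{1/a}\le c_0N$ for a small absolute $c_0$. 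This does not follow from $A_L\lesssim N^a$, nor from the effective-dimension bound $\le N/4$ recorded in Lemma~\ref{lem:app-derived-multipass-regularity}. It does follow from the horizon condition $\gamma L\log N\le c_LN$ behind that lemma, but you would have to invoke it explicitly.

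Two smaller repairs are also needed. First, on the complement event, bound $r_X$ by $\operatorname{rank}(\widehat\Sigma)\le N$ rather than by $M$: nothing bounds $M$ in terms of $N$, so $Me^{-\Omega(N)}$ need not be small. Second, cite Lemma~\ref{lem:aux-G5-E6} for $\mu_{M/2}(A_k)/\mu_M(A_k)\lesssim1$. Lemma~\ref{lem:aux-G2-E3} alone cannot give it, because for $k\le M/3$ its error term $\lambda_{k+1}\asymp(k+1)^{-a}$ is at least of the order of the centering $\frac1M\sum_{i>k}\lambda_i\asymp(k+1)^{1-a}/M$.
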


\begin{proof}
We first verify that the ingredients entering Lemmas~\ref{lem:gd-bias-upper} and \ref{lem:gd-bias-lower} are all of the claimed order.

By Lemma~\ref{lem:aux-G4-E5},
\[
\mu_i(\Sigma)\asymp i^{-a}
\qquad\text{for } i\in[M]
\]
with probability at least \(1-\exp(-\Omega(M))\). Therefore,
\[
\operatorname{tr}(\Sigma^2)\lesssim 1,
\qquad
\sum_{i=1}^M\frac{\mu_i(\Sigma)}{\mu_i(\Sigma)+1/A_L}
\lesssim A_L^{1/a}.
\]

Next, Lemma~\ref{lem:aux-G5-E6} implies
\[
\frac{\mu_{M/2}(A_k)}{\mu_M(A_k)}\lesssim 1
\]
for every admissible \(k\), while the power-law tail bounds give
\[
\operatorname{tr}(\Sigma_{>\widetilde{k}})\lesssim N^{1-a},
\qquad
\|\Sigma_{>\widetilde{k}}\|_2\lesssim N^{-a},
\qquad
\operatorname{tr}(\Sigma_{>\widetilde{k}}^2)\lesssim N^{1-2a},
\qquad
\operatorname{tr}(\Sigma_{>\widetilde{k}}^4)\lesssim N^{1-4a}.
\]
Hence the quantity \(\overline{B}\) in Lemma~\ref{lem:gd-bias-upper} satisfies
\[
\overline{B}\lesssim 1
\]
whenever \(A_L\lesssim N^a\).

For the upper bound, choose
\[
k:=\min\!\bigl\{M/3,A_L^{1/a}\bigr\}.
\]
Using Assumption~\ref{ass:source-diag},
\[
\mathbb{E}_{w^\ast}\|w_{0:k}^\ast\|_2^2
\asymp
\sum_{i=1}^k i^{a-b},
\qquad
\mathbb{E}_{w^\ast}\|w_{k:\infty}^\ast\|_{H_{k:\infty}}^2
\asymp
\sum_{i>k} i^{-b}.
\]
Plugging these relations into Lemma~\ref{lem:gd-bias-upper} gives
\[
\mathbb{E}_{w^\ast}[\mathrm{Bias}_{\mathrm{GD}}(w^\ast)]
\lesssim
\frac{1}{A_L}\sum_{i=1}^k i^{a-b}
+
\sum_{i>k} i^{-b}
\lesssim
k^{1-b}.
\]
Since \(k\asymp \min\{M,A_L^{1/a}\}\), this yields
\[
\mathbb{E}_{w^\ast}[\mathrm{Bias}_{\mathrm{GD}}(w^\ast)]
\lesssim
\min\!\bigl\{M,A_L^{1/a}\bigr\}^{1-b}.
\]

For the lower bound, set
\[
k_L:=A_L^{1/a}.
\]
Lemma~\ref{lem:aux-2025-E8} gives
\(\mu_i(\widehat\Sigma)\lesssim i^{-a}\) up to the empirical rank with
probability at least \(1-\exp(-\Omega(N))\) over \(X\). On this event,
\[
\#\bigl\{i:\mu_i(\widehat\Sigma)A_L>1/4\bigr\}
\lesssim k_L.
\]
On the complementary event the count is at most \(\min\{M,N\}\). Since
\(k_L\gtrsim1\), taking expectation over \(X\) therefore gives
\[
\tau\lesssim k_L.
\]
Lemma~\ref{lem:gd-bias-lower} now gives
\[
\mathbb{E}_{w^\ast}[\mathrm{Bias}_{\mathrm{GD}}(w^\ast)]
\gtrsim
\sum_{i=\lceil2\tau\rceil+1}^{\lfloor M/3\rfloor}
\frac{\mu_{3i}(\Sigma_w)}{\mu_i(\Sigma)}.
\]
Under Assumption~\ref{ass:source-diag}, the operator \(HH_wH\) has eigenvalues of order \(i^{-a-b}\), so Lemma~\ref{lem:aux-G4-E5} applied to \(HH_wH\) yields
\[
\mu_i(\Sigma_w)\asymp i^{-a-b}.
\]
Combining this with \(\mu_i(\Sigma)\asymp i^{-a}\), and choosing \(c_0\)
small enough that \(k_L\le c_0M\) leaves a nonempty interval from a constant
multiple of \(k_L\) to \(M/3\), gives
\[
\mathbb{E}_{w^\ast}[\mathrm{Bias}_{\mathrm{GD}}(w^\ast)]
\gtrsim
\sum_{Ck_L\le i\le M/3} i^{-b}
\gtrsim
k_L^{1-b}
=
A_L^{(1-b)/a}.
\]
Together with the global upper bound, this proves the two-sided estimate in
the unsaturated regime. When \(k_L\) is comparable to or larger than \(M\),
the lower-bound index set can be empty; no positive \(M^{1-b}\) lower bound
on the GD bias is claimed in that regime.
\end{proof}
\section{Variance Error under Normal GD}
\label{sec:var-gd}

Retain the notation of Section~\ref{sec:bias-gd}. In particular,
\[
V(\widehat{\Sigma})
:=
\frac{1}{N}\sum_{t=1}^L \gamma_t\prod_{i=t+1}^L (I-\gamma_i\widehat{\Sigma}),
\qquad
V_L:=I-\prod_{t=1}^L (I-\gamma_t\widehat{\Sigma}).
\]
We define the normalized GD variance term by
\[
\mathrm{Var}_{\mathrm{GD}}
:=
\mathbb{E}_X\Bigl[\operatorname{tr}\bigl(XS^\top V(\widehat{\Sigma})\Sigma V(\widehat{\Sigma})SX^\top\bigr)\Bigr].
\]
Then using
\[
V(\widehat{\Sigma})
=
\frac{1}{N}\Bigl(I-\prod_{t=1}^L(I-\gamma_t\widehat{\Sigma})\Bigr)\widehat{\Sigma}^{-1}
=
\frac{1}{N}V_L\widehat{\Sigma}^{-1},
\]
we may rewrite
\[
\mathrm{Var}_{\mathrm{GD}}
=
\frac{1}{N}\,\mathbb{E}_X\Bigl[\operatorname{tr}\bigl(\Sigma V_L\widehat{\Sigma}^{-1}V_L\bigr)\Bigr].
\]

\subsection{Upper and lower bounds}
\label{subsec:var-gd-bounds}

\begin{lemma}[Upper and lower bounds on the GD variance term]
\label{lem:gd-var-upper-lower}
Assume Assumptions~\ref{ass:data-gaussian}, \ref{ass:data-power-law},
\ref{ass:gaussian-sketch}, and \ref{ass:stepsize}, and work on the derived
regularity event in Lemma~\ref{lem:app-derived-multipass-regularity}. Suppose
\[
A_L\lesssim N^a.
\]
Then
\[
\mathrm{Var}_{\mathrm{GD}}
\lesssim
\frac{D_U}{N},
\qquad
\mathrm{Var}_{\mathrm{GD}}
\gtrsim
\frac{D_L}{N},
\]
where
\[
D_U
:=
\mathbb{E}_X\Biggl[
\#\bigl\{i\in[M]:\mu_i(\widehat{\Sigma})A_L>1/4\bigr\}
+
A_L\sum_{i:\,\mu_i(\widehat{\Sigma})A_L\le 1/4}
\mu_i(\widehat{\Sigma})
\Biggr],
\]
and
\[
D_L
:=
\mathbb{E}_X\Biggl[
A_L^2
\sum_{i:\,\mu_i(\widehat{\Sigma})A_L\le 1/4}
\mu_i(\Sigma)\mu_i(\widehat{\Sigma})
+
\frac{1}{5}
\sum_{i:\,\mu_i(\widehat{\Sigma})A_L>1/4}
\frac{\mu_i(\Sigma)}{\mu_i(\widehat{\Sigma})}
\Biggr].
\]
\end{lemma}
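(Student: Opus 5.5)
Starting from the rewritten form
\[
\mathrm{Var}_{\mathrm{GD}}=\frac1N\,\mathbb E_X\bigl[\operatorname{tr}(\Sigma V_L\widehat\Sigma^{-1}V_L)\bigr],
\]
we diagonalize \(\widehat\Sigma=\sum_i\hat\mu_i v_iv_i^\top\) (with the convention that \(V_L\widehat\Sigma^{-1}V_L\) is interpreted via the polynomial \(V(\widehat\Sigma)\), so zero eigenvalues contribute \(0\cdot\) finite). Since \(V_L\) is a function of \(\widehat\Sigma\),
\[
\operatorname{tr}(\Sigma V_L\widehat\Sigma^{-1}V_L)=\sum_i \phi(\hat\mu_i)\,v_i^\top\Sigma v_i,
\qquad
\phi(x):=\frac{\bigl(1-\prod_t(1-\gamma_tx)\bigr)^2}{x}.
\]
The first step is a scalar two-regime estimate for \(\phi\), using Lemma~\ref{lem:multipass-wsd-calculus} under the contraction of Assumption~\ref{ass:stepsize}. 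For \(xA_L\le1/4\), the product lies in \([e^{-CxA_L},e^{-cxA_L}]\), so \(1-\prod\asymp xA_L\) and \(\phi(x)\asymp A_L^2x\). For \(xA_L>1/4\), we have \(1-\prod\le1\) and also \(1-\prod\ge1-e^{-c/4}\). The paper's constant \(1/5\) suggests the author uses a cruder bound such as \((1-\prod)^2\ge1/5\); I would check that the constants allow this, and otherwise accept an absolute constant. This gives \(\phi(x)\asymp1/x\) in the large regime.

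For the \textbf{lower bound}, I need \(v_i^\top\Sigma v_i\) compared with \(\mu_i(\Sigma)\), which is not available pointwise. Here the plan is to reuse the Gaussian sign-invariance trick of Lemma~\ref{lem:gd-bias-lower}. The weighted sum \(\sum_i\phi(\hat\mu_i)v_i^\top\Sigma v_i=\operatorname{tr}(\Sigma F(\widehat\Sigma))\) with \(F\succeq0\). I would then lower-bound \(\operatorname{tr}(\Sigma F)\) by the rearrangement form of von Neumann's inequality, which pairs the eigenvalues of \(F\) with those of \(\Sigma\) in opposite order, or else use the representation \(\widehat\Sigma\stackrel{d}{=}\Sigma^{1/2}G^\top G\Sigma^{1/2}\). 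The cleanest route is to write \(\operatorname{tr}(\Sigma V_L\widehat\Sigma^{-1}V_L)=\operatorname{tr}(V_L\widehat\Sigma^{-1/2}\Sigma\widehat\Sigma^{-1/2}V_L)\) and compare \(\widehat\Sigma^{-1/2}\Sigma\widehat\Sigma^{-1/2}\) eigen-by-eigen; pairing index \(i\) with \(\mu_i(\Sigma)\) is exactly what \(D_L\) encodes.

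This pairing is the main obstacle. I would first try the inequality \(\operatorname{tr}(\Sigma F)\ge\sum_i\mu_i(\Sigma)\mu_{\pi(i)}(F)\) in the form that matches \(\hat\mu_i\) with \(\mu_i(\Sigma)\). On the large-eigenvalue block, \(\phi\) is decreasing in \(x\), so the orderings reverse, which is favorable. On the small block, \(\phi\) is increasing, so the orderings agree, and the ordered pairing is an upper rather than a lower estimate. To handle this, I would split \(F=F_{\le}+F_{>}\) by the spectral projector at threshold \(1/(4A_L)\) and treat each block separately. If direct rearrangement fails on the small block, I would fall back on the identity \(\operatorname{tr}(\Sigma\widehat\Sigma P)\), where \(P\) projects onto the small block, and compare with \(\sum\mu_i(\Sigma)\hat\mu_i\) via the Gaussian representation.

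For the \textbf{upper bound}, I would avoid the \(\Sigma\) versus \(\widehat\Sigma\) mismatch by inserting \((\widehat\Sigma+\lambda I)\) with \(\lambda=c/A_L\), as in Lemma~\ref{lem:gd-bias-upper}:
\[
\operatorname{tr}(\Sigma F)\le\bigl\|(\widehat\Sigma+\lambda)^{-1/2}(\Sigma+\lambda)^{1/2}\bigr\|^2\operatorname{tr}\bigl((\widehat\Sigma+\lambda)F\bigr).
\]
The first factor has bounded expectation (second moment) by Lemma~\ref{lem:aux-2025-E1}, which applies thanks to the effective-dimension bound \eqref{eq:app-derived-effective-dimension}. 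Its exponentially small failure event is absorbed using \(A_L\lesssim N^a\) together with the crude bound \(\phi\le A_L^2\|\widehat\Sigma\|\). For the second factor, \((\hat\mu_i+\lambda)\phi(\hat\mu_i)\lesssim1\) in the large regime. In the small regime it is \(\lesssim A_L^2\hat\mu_i(\hat\mu_i+1/A_L)\lesssim A_L\hat\mu_i\). Summing gives \(D_U\) after Cauchy--Schwarz in \(\mathbb E_X\); to make this step clean, I would bound the counts in the second moment through their trivial \(M\wedge N\) bound on the bad event.
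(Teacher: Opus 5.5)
Your upper bound is the paper's argument: insert $(\widehat\Sigma+\lambda I)$ with $\lambda=A_L^{-1}$, bound $(x+\lambda)\phi(x)$ by a constant or by $A_Lx$ in the two regimes, and control the replacement factor with Lemma~\ref{lem:aux-2025-E1}. For that last step it is cleaner to work on the event where the replacement factor is at most $3$, and to treat its $e^{-\Omega(N)}$ complement crudely. Cauchy--Schwarz would only give $(\mathbb E_X[Z^2])^{1/2}$ for the integrand $Z$ of $D_U$, not $D_U=\mathbb E_X[Z]$.

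\textbf{The lower bound has a genuine gap.} You correctly note that $\phi(x)=v_L(x)^2/x$ is unimodal, so von Neumann's bound $\operatorname{tr}(\Sigma F)\ge\sum_i\mu_i(\Sigma)\mu_{M-i+1}(F)$ does not pair $\mu_i(\Sigma)$ with $\phi(\hat\mu_i)$. However, none of your remedies repairs this.
\begin{itemize}
\item There is no lower bound of the form $\sum_i\mu_i(\Sigma)\mu_{\pi(i)}(F)$ for a pairing $\pi$ of your choosing.
\item The block split fails even on the large block, which you called favorable. $F_>=\phi(\widehat\Sigma)P_>$ has rank $r$, so von Neumann pairs its nonzero eigenvalues with the bottom eigenvalues $\mu_{M-r+1}(\Sigma),\dots,\mu_M(\Sigma)$. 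Compressing to the range of $P_>$ instead only produces $\mu_i(P_>\Sigma P_>)$, which Cauchy interlacing bounds above, not below, by $\mu_i(\Sigma)$.
\item The sign-invariance trick of Lemma~\ref{lem:gd-bias-lower} only shows that $\mathbb E_X[F(\widehat\Sigma)]$ is diagonal in the eigenbasis $(q_j)$ of $\Sigma$. You would still need $q_j^\top F(\widehat\Sigma)q_j=\sum_k\phi(\hat\mu_k)(q_j^\top v_k)^2\gtrsim\phi(\hat\mu_j)$, which is an eigenvector-alignment statement you have not supplied.
\item The small-block fallback via $\operatorname{tr}(\Sigma\widehat\Sigma P)$ runs into the same alignment problem.
\end{itemize}

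\textbf{The missing idea is to pull $\widehat\Sigma$ out of the filter so that what remains is monotone.} Write $\operatorname{tr}(\Sigma V_L\widehat\Sigma^{-1}V_L)=\operatorname{tr}\bigl(\widehat\Sigma^{1/2}\Sigma\widehat\Sigma^{1/2}f_L(\widehat\Sigma)\bigr)$ with $f_L(x)=v_L(x)^2/x^2=\bigl(\sum_t\gamma_t\prod_{j>t}(1-\gamma_jx)\bigr)^2$. This is the paper's $f_L$, and it is positive and decreasing under the contraction condition. Because $f_L$ reverses order, $\mu_{M-i+1}(f_L(\widehat\Sigma))=f_L(\hat\mu_i)$. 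Von Neumann's lower bound is therefore automatically index-matched:
$\operatorname{tr}(\Sigma V_L\widehat\Sigma^{-1}V_L)\ge\sum_i\mu_i(\widehat\Sigma^{1/2}\Sigma\widehat\Sigma^{1/2})f_L(\hat\mu_i)$.
The scalar estimates $f_L(x)\gtrsim A_L^2$ for $xA_L\le1/4$ and $f_L(x)\gtrsim x^{-2}$ otherwise then give exactly the two terms of $D_L$, once you know $\mu_i(\widehat\Sigma^{1/2}\Sigma\widehat\Sigma^{1/2})\gtrsim\mu_i(\Sigma)\hat\mu_i$.

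That last comparison is not a deterministic matrix inequality; the paper's display with index $2(M-i)+1$ passes over it. A complete argument therefore also needs a probabilistic step, for instance:
\begin{itemize}
\item use $\Sigma^{1/2}\widehat\Sigma\Sigma^{1/2}\stackrel{d}{=}\Sigma G^\top G\Sigma$, whose nonzero spectrum is that of an empirical covariance of $\mathcal N(0,\Sigma^2)$ samples;
\item apply Lemma~\ref{lem:aux-2025-E8} once to $\Sigma^2$ (exponent $2a$) for the lower spectral bound, and once to $\Sigma$ for $\hat\mu_i\lesssim i^{-a}$, on a high-probability event over $X$;
\item handle separately the indices $i>\min\{M,N/c\}$, which that lemma does not cover, with a tail estimate.
\end{itemize}
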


\begin{proof}
Using the identity
\[
\mathrm{Var}_{\mathrm{GD}}
=
\frac{1}{N}\,\mathbb{E}_X\bigl[\operatorname{tr}(\Sigma V_L\widehat{\Sigma}^{-1}V_L)\bigr],
\]
and we fix any \(\lambda>0\) for later separation. Then
\[
\operatorname{tr}(\Sigma V_L\widehat{\Sigma}^{-1}V_L)
\le
\|\Sigma^{1/2}(\widehat{\Sigma}+\lambda I)^{-1/2}\|_2^2
\cdot
\operatorname{tr}\bigl(V_L^2+\lambda\widehat{\Sigma}^{-1}V_L^2\bigr).
\]
Let
\[
v_L(x):=1-\prod_{t=1}^L(1-\gamma_t x).
\]
The product comparison in
Lemma~\ref{lem:multipass-wsd-calculus} gives
\[
v_L(x)\asymp \min\{1,A_L x\}
\qquad
\text{for }0\le x\le \|\widehat\Sigma\|_2.
\]
Consequently, choosing \(\lambda=A_L^{-1}\),
\[
\operatorname{tr}\bigl(V_L^2+\lambda\widehat{\Sigma}^{-1}V_L^2\bigr)
\lesssim
\#\bigl\{i:\mu_i(\widehat{\Sigma})A_L>1/4\bigr\}
+
A_L\sum_{i:\mu_i(\widehat{\Sigma})A_L\le 1/4}
\mu_i(\widehat{\Sigma})
\]
up to a change of universal constants in the threshold. Applying
Lemma~\ref{lem:aux-2025-E1} then yields
\[
\mathbb{E}_X\bigl[\operatorname{tr}(\Sigma V_L\widehat{\Sigma}^{-1}V_L)\bigr]
\lesssim D_U,
\]
which proves the upper bound.

For the lower bound, Von Neumann's trace inequality gives
\[
\operatorname{tr}(\Sigma V_L\widehat{\Sigma}^{-1}V_L)
\ge
\sum_{i=1}^M
\mu_i(\Sigma)\mu_i(\widehat{\Sigma})\,
\mu_{2(M-i)+1}\bigl(V_L^2\widehat{\Sigma}^{-2}\bigr).
\]
The scalar function
\[
f_L(x):=\frac{v_L(x)^2}{x^2}
\]
is decreasing on the stable interval because
\(v_L(x)/x=\sum_{t=1}^L\gamma_t\prod_{j=t+1}^L(1-\gamma_jx)\).
The WSD product comparison also gives
\[
\mu_{2(M-i)+1}\bigl(V_L^2\widehat{\Sigma}^{-2}\bigr)
\gtrsim
\begin{cases}
A_L^2, & \mu_i(\widehat{\Sigma})A_L\le 1/4,\\[3pt]
1/\mu_i(\widehat{\Sigma})^2, & \mu_i(\widehat{\Sigma})A_L>1/4.
\end{cases}
\]
Substituting this bound into the previous display gives
\[
\operatorname{tr}(\Sigma V_L\widehat{\Sigma}^{-1}V_L)
\gtrsim
A_L^2
\sum_{i:\,\mu_i(\widehat{\Sigma})A_L\le 1/4}
\mu_i(\Sigma)\mu_i(\widehat{\Sigma})
+
\frac{1}{5}
\sum_{i:\,\mu_i(\widehat{\Sigma})A_L>1/4}
\frac{\mu_i(\Sigma)}{\mu_i(\widehat{\Sigma})}.
\]
Taking expectation over \(X\) proves the lower bound.
\end{proof}

\subsection{Example under the source condition}
\label{subsec:var-gd-source}

\begin{lemma}[Variance bounds under the source condition]
\label{lem:gd-var-source}
Assume Assumptions~\ref{ass:data-gaussian}, \ref{ass:data-power-law},
\ref{ass:gaussian-sketch}, and \ref{ass:stepsize}, and use the derived
regularity conclusions of
Lemma~\ref{lem:app-derived-multipass-regularity}. Suppose
\[
A_L\lesssim N^a.
\]
Then there exists an \(a\)-dependent constant \(c>0\) such that, whenever
\[
\gamma\le \frac{c}{\log N},
\]
we have with probability at least
\[
1-\exp(-\Omega(M))
\]
over the randomness of \(S\),
\[
\mathrm{Var}_{\mathrm{GD}}
\asymp
\frac1N\sum_{j=1}^M\min\{1,(A_L\mu_j)^2\}
\asymp
\frac{\min\{M,A_L^{1/a}\}}{N}.
\]
\end{lemma}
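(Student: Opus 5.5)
The plan is to start from Lemma~\ref{lem:gd-var-upper-lower}, which reduces everything to the empirical spectral quantities \(D_U\) and \(D_L\). We then convert these into population quantities using the power-law control of \(\widehat\Sigma\) relative to \(\Sigma\). First, on the sketch event of Lemma~\ref{lem:aux-G4-E5}, we have \(\mu_j(\Sigma)\asymp j^{-a}\) for \(j\in[M]\). Consequently
\[
\sum_{j=1}^M\min\{1,(A_L\mu_j)^2\}\asymp \min\{M,A_L^{1/a}\}.
\]
The head indices \(j\lesssim A_L^{1/a}\) each contribute order one. The tail contributes \(A_L^2\sum_{j>A_L^{1/a}}j^{-2a}\asymp A_L^{1/a}\), using \(2a>1\). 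This settles the second \(\asymp\) and leaves us to compare \(\mathrm{Var}_{\mathrm{GD}}\) with \(N^{-1}\min\{M,A_L^{1/a}\}\).

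\emph{Upper bound.} We will bound \(D_U\). By Lemma~\ref{lem:aux-2025-E8}, with probability \(1-\exp(-\Omega(N))\) over \(X\) we have \(\mu_i(\widehat\Sigma)\lesssim i^{-a}\). On this event the count \(\#\{i:\mu_i(\widehat\Sigma)A_L>1/4\}\) is \(\lesssim\min\{M,A_L^{1/a}\}\). The tail sum \(A_L\sum_{\mu_i(\widehat\Sigma)A_L\le1/4}\mu_i(\widehat\Sigma)\) is at most \(A_L\sum_{i\gtrsim A_L^{1/a}}i^{-a}\lesssim A_L^{1/a}\); when \(A_L^{1/a}\gtrsim M\) it is trivially \(\lesssim M\). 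On the complementary event the integrand is crude: the count is at most \(M\), and the tail sum is at most \(A_L\operatorname{tr}(\widehat\Sigma)\). By Gaussian moments and \(A_L\lesssim N^a\), the factor \(\exp(-\Omega(N))\) absorbs this contribution. We must also check that Lemma~\ref{lem:aux-2025-E1} is applicable inside the proof of Lemma~\ref{lem:gd-var-upper-lower}. This follows from the effective-dimension bound \eqref{eq:app-derived-effective-dimension}, with \(\gamma\le c/\log N\) supplying Lemma~\ref{lem:app-derived-multipass-regularity}.

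\emph{Lower bound.} We need \(D_L\gtrsim\min\{M,A_L^{1/a}\}\), and we will split into two cases. If \(A_L^{1/a}\le cM\) (unsaturated), we use the first sum in \(D_L\), namely \(A_L^2\sum\mu_i(\Sigma)\mu_i(\widehat\Sigma)\) over directions with \(\mu_i(\widehat\Sigma)A_L\le1/4\). This needs a matching lower bound \(\mu_i(\widehat\Sigma)\gtrsim i^{-a}\) for \(i\) up to a constant fraction of \(\min\{M,N\}\), which we take from the two-sided empirical eigenvalue comparison in Lemma~\ref{lem:aux-2025-E8} (or the concentration in Lemma~\ref{lem:aux-2025-E1}). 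The block \(i\in[C A_L^{1/a},2CA_L^{1/a}]\) then gives \(A_L^2\cdot A_L^{1/a}\cdot A_L^{-2}\). In the saturated regime we instead use the second sum \(\sum\mu_i(\Sigma)/\mu_i(\widehat\Sigma)\) over directions with \(\mu_i(\widehat\Sigma)A_L>1/4\). With \(\mu_i(\widehat\Sigma)\asymp\mu_i(\Sigma)\) on a constant fraction of indices \(i\le M\), each term is of order one, and the sum is \(\gtrsim M\). The condition \(A_L\lesssim N^a\), together with \(\gamma L\log N\le c N\), ensures \(A_L^{1/a}\ll N\), so the relevant indices sit below the empirical rank.

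The main obstacle is the lower bound. It requires a two-sided, index-wise comparison \(\mu_i(\widehat\Sigma)\asymp\mu_i(\Sigma)\) on the critical window, and this comparison must hold with high probability over \(X\) and keep the Von Neumann pairing in Lemma~\ref{lem:gd-var-upper-lower} aligned. There is also a mismatch: when \(M\) is comparable to \(N\), the tail eigenvalues of \(\widehat\Sigma\) can deviate. My first attempt will restrict to indices \(i\le\min\{M,N\}/C\) and invoke the head/tail concentration lemmas. If that is insufficient, I would fall back to a ridge-type comparison \((\widehat\Sigma+\lambda I)\asymp(\Sigma+\lambda I)\) with \(\lambda=1/A_L\) from Lemma~\ref{lem:aux-2025-E1}, which directly gives \(\operatorname{tr}(\Sigma V_L\widehat\Sigma^{-1}V_L)\gtrsim\sum_j\min\{1,(A_L\mu_j)^2\}\).
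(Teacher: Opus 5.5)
Your proposal is correct and follows essentially the same route as the paper. You reduce to $D_U$ and $D_L$ from Lemma~\ref{lem:gd-var-upper-lower}, bound $D_U$ by the count-plus-tail truncation on the power-law events of Lemmas~\ref{lem:aux-G4-E5} and~\ref{lem:aux-2025-E8}, and lower-bound $D_L$ with its first sum in the unsaturated regime and its second sum in the saturated regime.

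Two small remarks. The ridge-type fallback is not needed, since the paper uses only the index-wise empirical spectrum of Lemma~\ref{lem:aux-2025-E8}. On the bad $X$-event you should bound the count by $\operatorname{rank}(\widehat\Sigma)\le\min\{M,N\}$ rather than by $M$, so that the $\exp(-\Omega(N))$ factor absorbs it.
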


\begin{proof}
For the upper bound, Lemma~\ref{lem:gd-var-upper-lower} gives
\[
\mathrm{Var}_{\mathrm{GD}}\lesssim \frac{D_U}{N}.
\]
By Lemma~\ref{lem:aux-G4-E5}, \(\mu_i(\Sigma)\asymp i^{-a}\), and the same
truncation argument used in the proof of Lemma~\ref{lem:gd-bias-source}
implies
\[
\mathbb{E}_X\Bigl[\#\{i:\mu_i(\widehat{\Sigma})A_L>1/4\}\Bigr]
\lesssim A_L^{1/a}.
\]
Moreover,
\[
A_L
\sum_{i:\mu_i(\widehat{\Sigma})A_L\le 1/4}
\mu_i(\widehat{\Sigma})
\lesssim
A_L
\sum_{i\gtrsim A_L^{1/a}} i^{-a}
\lesssim
A_L^{1/a}.
\]
Since also \(D_U\le M\), we obtain
\[
\mathrm{Var}_{\mathrm{GD}}
\lesssim
\frac{\min\{M,A_L^{1/a}\}}{N}.
\]

For the lower bound, first assume
\[
A_L^{1/a}\le M/c
\]
for a sufficiently large constant \(c>0\). By Lemmas~\ref{lem:aux-G4-E5} and \ref{lem:aux-2025-E8}, with high probability we have
\[
\mu_i(\Sigma)\asymp i^{-a},
\qquad
\mu_i(\widehat{\Sigma})\asymp i^{-a}
\qquad\text{for } i\le c^{-1}\min\{M,N\}.
\]
Therefore the first term in \(D_L\) satisfies
\[
D_L
\gtrsim
A_L^2
\sum_{i\gtrsim A_L^{1/a}} i^{-2a}
\gtrsim
A_L^{1/a}.
\]

If instead
\[
A_L^{1/a}\ge M/c,
\]
then the second term in \(D_L\) and the same empirical-spectrum estimate imply
\[
D_L
\gtrsim
\sum_{i\le M/c}\frac{\mu_i(\Sigma)}{\mu_i(\widehat{\Sigma})}
\gtrsim M.
\]
Combining the two regimes with Lemma~\ref{lem:gd-var-upper-lower} yields
\[
\mathrm{Var}_{\mathrm{GD}}
\gtrsim
\frac{\min\{M,A_L^{1/a}\}}{N}.
\]
This proves the lemma.
\end{proof}
\section{One-pass Batch SGD: Excess Error Decomposition}
\label{sec:onepass-batch-sgd-excess}

This section focuses on the one-pass batch SGD procedure
\eqref{eq:proc-onepass-batch-sgd} and uses the notation from
Section~\ref{subsec:app-onepass-batch-sgd-setup}. In particular, the centered error \(e_t:=u_t^{\mathrm{op}}-u^\ast\) satisfies
\begin{equation}
e_t
=
\bigl(I-\gamma_t \widehat{\Sigma}_t^{\mathrm{op}}\bigr)e_{t-1}
+
\gamma_t \widehat{\xi}_t.
\label{eq:batch-one-pass-centered-recursion}
\end{equation}

\subsection{Excess error decomposition}

Since \(u^\ast\) is the minimizer of the sketched population risk, the excess error is
\[
\mathcal E_{\mathrm{ex}}
:=
R_M(u_T^{\mathrm{op}})-R_M(u^\ast)
=
\|u_T^{\mathrm{op}}-u^\ast\|_{\Sigma}^2
=
\|e_T\|_{\Sigma}^2.
\]
Taking expectation, we will decompose
\[
\mathbb E\mathcal E_{\mathrm{ex}}
=
\mathbb E\|e_T\|_{\Sigma}^2
\]
into deterministic and stochastic contributions.

We use throughout the original mean-centered decomposition
\[
e_T=m_T+(e_T-m_T),
\qquad
m_T:=\mathbb E[e_T].
\]

\begin{definition}[Bias and centered variance]
\label{def:batch-bias}
Define the mean iterate error and centered fluctuation by
\[
m_t := \mathbb E[e_t],
\qquad
\delta_t := e_t-m_t.
\]
The one-pass bias and variance terms are
\[
\mathrm{Bias}_T
:=
\|m_T\|_{\Sigma}^2,
\qquad
\mathrm{Var}_{B_{1:T}}
:=
\mathbb E\|\delta_T\|_\Sigma^2.
\]
Since
\[
\mathbb E[\widehat{\Sigma}_t^{\mathrm{op}}]=\Sigma,
\qquad
\mathbb E[\widehat{\xi}_t]=0,
\]
the mean recursion satisfies
\[
m_t=(I-\gamma_t\Sigma)m_{t-1},
\qquad
m_0=-u^\ast,
\]
and hence
\[
m_T=-\prod_{t=1}^T (I-\gamma_t\Sigma)u^\ast.
\]
Therefore
\[
\boxed{
\mathrm{Bias}_T
=
\left\|
\prod_{t=1}^T (I-\gamma_t\Sigma)u^\ast
\right\|_{\Sigma}^2.
}
\]
\end{definition}

\begin{proposition}[Exact mean-centered decomposition of the excess error]
\label{prop:batch-centered-bv-decomp}
Assume Assumptions~\ref{ass:data-gaussian} and
\ref{ass:data-well-specified}. Under the one-pass batch SGD recursion
\eqref{eq:batch-one-pass-centered-recursion},
\[
\mathbb E\mathcal E_{\mathrm{ex}}
=
\mathbb E\|e_T\|_{\Sigma}^2
=
\mathrm{Bias}_T+\mathrm{Var}_{B_{1:T}}.
\]
\end{proposition}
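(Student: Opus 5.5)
The plan is to prove the identity directly from the definition $\delta_T=e_T-m_T$, where $m_T:=\mathbb E[e_T]$, by expanding the $\Sigma$-weighted square. We write $\|e_T\|_\Sigma^2=\|m_T\|_\Sigma^2+2\langle m_T,\Sigma\delta_T\rangle+\|\delta_T\|_\Sigma^2$ and take expectations. Because $m_T$ and $\Sigma$ are deterministic with respect to the randomness being averaged, the cross term is $2\langle m_T,\Sigma\,\mathbb E[\delta_T]\rangle$, and it vanishes since $\mathbb E[\delta_T]=0$ by construction. This leaves exactly $\mathrm{Bias}_T+\mathrm{Var}_{B_{1:T}}$, as in Definition~\ref{def:batch-bias}. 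The same computation appears in the one-pass part of the proof of Proposition~\ref{prop:main-risk-decomp}; it only needs to be restated in the error variable $e_T=u_T^{\mathrm{op}}-u^\ast$.

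The one point requiring care is which expectation is meant. If $\mathbb E$ also averages over $w^\ast$, then $m_T$ is a constant vector and the argument above is literally correct. However, $\mathrm{Bias}_T$ is later treated through $\mathbb E_{w^\ast}$ with $m_T=-\prod_t(I-\gamma_t\Sigma)u^\ast$, so the intended reading is conditional on $(S,w^\ast)$. I will therefore fix $S$ and $w^\ast$ and take $\mathbb E$ over the data $D$ only. Under this reading I also need to check that the closed form of $m_T$ in Definition~\ref{def:batch-bias} is valid.

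To justify that closed form, I will use the independence of the batches across $t$ (they are disjoint blocks of an i.i.d. sample) and the fact that $e_{t-1}$ is measurable with respect to batches $1,\dots,t-1$. Together these give $\mathbb E[\widehat\Sigma_t^{\mathrm{op}}e_{t-1}]=\Sigma\,\mathbb E[e_{t-1}]$. Next I need $\mathbb E[\widehat\xi_t]=0$. By Assumption~\ref{ass:data-well-specified} and the Gaussian design in Assumption~\ref{ass:data-gaussian}, $\mathbb E[Sx(y-x^\top S^\top u^\ast)]=SHw^\ast-\Sigma u^\ast=0$, using $u^\ast=\Sigma^{-1}SHw^\ast$ on the full-rank sketch event. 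Taking expectations in \eqref{eq:batch-one-pass-centered-recursion} then yields $m_t=(I-\gamma_t\Sigma)m_{t-1}$ with $m_0=-u^\ast$.

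Finally, I will note that all second moments are finite, since Gaussian design gives finite moments of every order for the finite-step polynomial recursion. This makes the expansion and the cancellation of the cross term legitimate. I do not expect any step to be a genuine obstacle; the only delicate point is stating the conditioning consistently.
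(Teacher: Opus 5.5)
Your proposal is correct and matches the paper's proof: expand $\|m_T+\delta_T\|_\Sigma^2$ and use $\mathbb E[\delta_T]=0$ to kill the cross term. Your conditioning on $(S,w^\ast)$ and your derivation of $m_t=(I-\gamma_t\Sigma)m_{t-1}$ are extra care that agrees with how $\mathrm{Bias}_T$ is used later, but the identity itself does not need them. One small correction: only $\mathbb E[\varepsilon^2]=\sigma^2$ is assumed, so $e_T$ need not have moments of every order; it is, however, affine in the independent label noises with Gaussian-polynomial coefficients, so $\mathbb E\|e_T\|_2^2<\infty$, which is all you use.
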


\begin{proof}
Since \(e_T=m_T+\delta_T\), we have
\[
\|e_T\|_\Sigma^2
=
\|m_T\|_\Sigma^2
+
2\langle m_T,\delta_T\rangle_\Sigma
+
\|\delta_T\|_\Sigma^2.
\]
Taking expectation and using \(\mathbb E[\delta_T]=0\) gives
\[
\mathbb E\|e_T\|_\Sigma^2
=
\|m_T\|_\Sigma^2+\mathbb E\|\delta_T\|_\Sigma^2,
\]
which is exactly the claimed identity.
\end{proof}

\subsection{Fluctuation recursion and an exact split of the variance term}

Subtracting the mean recursion from
\eqref{eq:batch-one-pass-centered-recursion} gives the exact centered
fluctuation recursion
\begin{equation}
\delta_t
=
\bigl(I-\gamma_t\widehat\Sigma_t^{\mathrm{op}}\bigr)\delta_{t-1}
+
\gamma_t\bigl(\Sigma-\widehat\Sigma_t^{\mathrm{op}}\bigr)m_{t-1}
+
\gamma_t\widehat\xi_t.
\label{eq:batch-delta-recursion}
\end{equation}
Equivalently, in the shorthand notation of
Section~\ref{subsec:app-onepass-batch-sgd-setup},
\begin{equation}
\delta_t
=
\bigl(I-\gamma_t\bar Z_t\bigr)\delta_{t-1}
+
\gamma_t(\Sigma-\bar Z_t)m_{t-1}
+
\gamma_t\bar\xi_t.
\label{eq:batch-delta-recursion-shorthand}
\end{equation}

For the variance analysis it is convenient to split \(\delta_t\) into the
part caused by batch-covariance fluctuations and the part caused by the
additive label noise.

\begin{definition}[Centered covariance and noise components]
\label{def:batch-centered-components}
Define two auxiliary processes \((q_t)\) and \((v_t)\) by
\[
q_0=0,
\qquad
v_0=0,
\]
and, for \(t\ge 1\),
\[
q_t=(I-\gamma_t\bar Z_t)q_{t-1}+\gamma_t(\Sigma-\bar Z_t)m_{t-1},
\]
\[
v_t=(I-\gamma_t\bar Z_t)v_{t-1}+\gamma_t\bar\xi_t.
\]
Define the corresponding quadratic terms by
\[
\mathrm{Var}_{B_{1:T}}^{\mathrm{cov}}
:=
\mathbb E\|q_T\|_\Sigma^2,
\qquad
\mathrm{Var}_{B_{1:T}}^{\mathrm{noise}}
:=
\mathbb E\|v_T\|_\Sigma^2.
\]
\end{definition}

\begin{proposition}[Exact split of the centered variance]
\label{prop:batch-centered-var-split}
Assume Assumptions~\ref{ass:data-gaussian} and
\ref{ass:data-well-specified}. Then, for every \(t\),
\[
\delta_t=q_t+v_t,
\qquad
\mathbb E[q_t]=0.
\]
Moreover, if
\[
\mathcal G_t:=\sigma\bigl(z_{s,b}:1\le s\le t,\ 1\le b\le B_s\bigr),
\]
then
\[
\mathbb E[v_t\mid \mathcal G_t,w^\ast]=0,
\]
and therefore
\[
\mathrm{Var}_{B_{1:T}}
=
\mathrm{Var}_{B_{1:T}}^{\mathrm{cov}}
+\mathrm{Var}_{B_{1:T}}^{\mathrm{noise}}.
\]
\end{proposition}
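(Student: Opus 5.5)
The plan is to argue by induction on \(t\) for the decomposition, and then to use a conditional-expectation argument for the orthogonality. For \(\delta_t=q_t+v_t\), note that \(q_0+v_0=0=\delta_0\). Adding the two recursions in Definition~\ref{def:batch-centered-components} reproduces exactly the shorthand fluctuation recursion \eqref{eq:batch-delta-recursion-shorthand}, because both are driven by the same multiplicative factor \(I-\gamma_t\bar Z_t\). So if \(\delta_{t-1}=q_{t-1}+v_{t-1}\), then \(\delta_t=q_t+v_t\).

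For \(\mathbb E[q_t]=0\), we would use the independence across \(t\) of the pairs \((\bar Z_t,\bar\xi_t)\), recorded in Section~\ref{subsec:app-onepass-batch-sgd-setup}. The iterate \(q_{t-1}\) is measurable with respect to the first \(t-1\) batches (and \(w^\ast\), through \(m_{t-1}\)), while \(\bar Z_t\) is independent of that information with mean \(\Sigma\). Hence \(\mathbb E[q_t]=(I-\gamma_t\Sigma)\mathbb E[q_{t-1}]+\gamma_t\,\mathbb E[\Sigma-\bar Z_t]\,m_{t-1}\). The first term vanishes by induction, and the second vanishes because \(m_{t-1}\) is deterministic given \(w^\ast\). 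One can condition on \(w^\ast\) throughout, since \(\bar Z_t\) is independent of \(w^\ast\).

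The key step is \(\mathbb E[v_t\mid\mathcal G_t,w^\ast]=0\). We would write \(\bar\xi_t=\frac1{B_t}\sum_b z_{t,b}\widetilde\varepsilon_{i_{t,b}}\), with \(\widetilde\varepsilon_i=y_i-x_i^\top S^\top u^\ast=x_i^\top(w^\ast-S^\top u^\ast)+\varepsilon_i\). Under Gaussian design, conditional on \(w^\ast\) and \(S\), the scalar \(x_i^\top(w^\ast-S^\top u^\ast)\) is jointly Gaussian with \(Sx_i\). Its covariance with \(Sx_i\) is \(SH(w^\ast-S^\top u^\ast)=SHw^\ast-\Sigma u^\ast=0\) by the definition of \(u^\ast\), so it is independent of \(z_i=Sx_i\) and has mean zero. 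The label noise \(\varepsilon_i\) is independent of \(x_i\) and has mean zero. Hence \(\mathbb E[\widetilde\varepsilon_i\mid z_i,w^\ast]=0\). Because samples are i.i.d., the same holds conditionally on all \(z_j\), \(j\in[N]\), so \(\mathbb E[\bar\xi_s\mid\mathcal G_T,w^\ast]=0\) for every \(s\).

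Unrolling, \(v_t=\sum_{s\le t}\gamma_s\Pi_{s+1:t}\bar\xi_s\) with \(\Pi_{s+1:t}=\prod_{r=s+1}^t(I-\gamma_r\bar Z_r)\), which is \(\mathcal G_t\)-measurable. Taking conditional expectation term by term gives \(\mathbb E[v_t\mid\mathcal G_t,w^\ast]=0\). Since \(q_t\) is \(\sigma(\mathcal G_t,w^\ast)\)-measurable (its recursion involves only \(\bar Z_s\) and \(m_{s-1}\)), we get \(\mathbb E\langle q_T,v_T\rangle_\Sigma=\mathbb E\langle q_T,\mathbb E[v_T\mid\mathcal G_T,w^\ast]\rangle_\Sigma=0\). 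Expanding \(\|q_T+v_T\|_\Sigma^2\) then yields the variance split.

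The main obstacle is the orthogonality of the misspecification part of \(\widetilde\varepsilon_i\) to \(Sx_i\). It needs independence, not merely uncorrelatedness, which is why Gaussianity (Assumption~\ref{ass:data-gaussian}) is invoked. Integrability needed to justify the conditional expectations follows from Gaussian moments for a fixed finite \(T\).
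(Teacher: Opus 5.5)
Your proposal is correct and follows essentially the same route as the paper. Both arguments use induction for $\delta_t=q_t+v_t$, independence of $q_{t-1}$ from the fresh batch to get $\mathbb E[q_t]=0$, Gaussian-design conditional centering of $\bar\xi_t$ given the sketched covariates and $w^\ast$, and $\mathcal G_T$-measurability of $q_T$ to remove the cross term. The differences are cosmetic: you state the key Gaussian step as independence of the residual $x^\top(w^\ast-S^\top u^\ast)$ from $Sx$, which is equivalent to the paper's $\mathbb E[x\mid z]=HS^\top\Sigma^{-1}z$, and you unroll $v_t$ and condition on all covariates instead of inducting on $\mathbb E[v_{t-1}\mid\mathcal G_t,w^\ast]$, which is if anything slightly cleaner.
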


\begin{proof}
The identity \(\delta_t=q_t+v_t\) follows by induction from
\eqref{eq:batch-delta-recursion-shorthand}. Indeed, it is true at time
\(t=0\), and if \(\delta_{t-1}=q_{t-1}+v_{t-1}\), then
\[
\delta_t
=
\bigl(I-\gamma_t\bar Z_t\bigr)(q_{t-1}+v_{t-1})
+
\gamma_t(\Sigma-\bar Z_t)m_{t-1}
+
\gamma_t\bar\xi_t
=q_t+v_t.
\]

Next, \(\mathbb E[q_0]=0\). If \(\mathbb E[q_{t-1}]=0\), then using
that \(q_{t-1}\) depends only on batches \(1,\dots,t-1\) while \(\bar Z_t\)
comes from the disjoint \(t\)-th batch, we may use independence of
\(q_{t-1}\) and \(\bar Z_t\), together with \(\mathbb E[\bar Z_t]=\Sigma\) and
the fact that \(m_{t-1}\) is deterministic, to obtain
\[
\mathbb E[q_t]
=
(I-\gamma_t\Sigma)\mathbb E[q_{t-1}]
+
\gamma_t\mathbb E[(\Sigma-\bar Z_t)m_{t-1}]
=0.
\]
Thus \(\mathbb E[q_t]=0\) for every \(t\).

To prove the conditional mean-zero property for \(v_t\), fix one sample in the
current batch and write \(z=Sx\). Conditioning on the sketch matrix \(S\), the
pair \((x,z)\) is jointly Gaussian with
\[
\mathbb E[x\mid z]=HS^\top(SHS^\top)^{-1}z=HS^\top\Sigma^{-1}z.
\]
Since \(u^\ast=\Sigma^{-1}SHw^\ast\), the Gaussian conditional-regression
identity yields
\[
\mathbb E[\langle x,w^\ast\rangle\mid z,w^\ast]
=
\langle \mathbb E[x\mid z],w^\ast\rangle
=
\langle HS^\top\Sigma^{-1}z,w^\ast\rangle
=
z^\top u^\ast.
\]
On the other hand, Assumption~\ref{ass:data-well-specified} implies
\[
\mathbb E[y-\langle x,w^\ast\rangle\mid x,w^\ast]=0.
\]
Therefore, by the tower property,
\[
\mathbb E[y-z^\top u^\ast\mid z,w^\ast]
=
\mathbb E[y-\langle x,w^\ast\rangle\mid z,w^\ast]
+
\mathbb E[\langle x,w^\ast\rangle-z^\top u^\ast\mid z,w^\ast]
=0.
\]
Multiplying by \(z\), which is measurable with respect to \(\sigma(z,w^\ast)\),
gives
\[
\mathbb E\bigl[z(y-z^\top u^\ast)\mid z,w^\ast\bigr]=0.
\]
Applying this to each summand in
\(\bar\xi_t=\frac1{B_t}\sum_{b=1}^{B_t}
z_{t,b}(y_{i_{t,b}}-z_{t,b}^\top u^\ast)\), and
conditioning on \(\mathcal G_t\), which fixes the current sketched covariates
\((z_{t,b})_{b=1}^{B_t}\), we obtain
\[
\mathbb E[\bar\xi_t\mid \mathcal G_t,w^\ast]=0.
\]
Because \(v_{t-1}\) depends only on the first \(t-1\) batches, it is
independent of the current block \((z_{t,b})_{b=1}^B\). Hence
\[
\mathbb E[v_t\mid \mathcal G_t,w^\ast]
=
(I-\gamma_t\bar Z_t)\mathbb E[v_{t-1}\mid \mathcal G_t,w^\ast]
+
\gamma_t\mathbb E[\bar\xi_t\mid \mathcal G_t,w^\ast].
\]
Since conditioning on the additional current block does not change
\(\mathbb E[v_{t-1}\mid \mathcal G_t,w^\ast]\), the induction hypothesis gives
\(\mathbb E[v_{t-1}\mid \mathcal G_t,w^\ast]
=
\mathbb E[v_{t-1}\mid \mathcal G_{t-1},w^\ast]=0\). Starting from \(v_0=0\), we
therefore obtain by induction that
\[
\mathbb E[v_t\mid \mathcal G_t,w^\ast]=0
\qquad\text{for all }t.
\]

Finally, \(q_T\) is \(\mathcal G_T\)-measurable, so
\[
\mathbb E\langle q_T,v_T\rangle_\Sigma
=
\mathbb E\Bigl[\mathbb E\bigl[\langle q_T,v_T\rangle_\Sigma\mid \mathcal G_T,w^\ast\bigr]\Bigr]
=0.
\]
Using \(\delta_T=q_T+v_T\), we conclude that
\[
\mathbb E\|\delta_T\|_\Sigma^2
=
\mathbb E\|q_T\|_\Sigma^2+\mathbb E\|v_T\|_\Sigma^2
=
\mathrm{Var}_{B_{1:T}}^{\mathrm{cov}}
+\mathrm{Var}_{B_{1:T}}^{\mathrm{noise}},
\]
as claimed.
\end{proof}

\section{Bias Error for One-pass Batch SGD}
\label{sec:batch-bias}

This section focuses on the one-pass batch SGD procedure
\eqref{eq:proc-onepass-batch-sgd} and uses the notation from
Section~\ref{subsec:app-onepass-batch-sgd-setup}. In particular,
\[
A_T:=\sum_{t=1}^T\gamma_t\asymp\gamma T,
\qquad
\Sigma:=SHS^\top,
\qquad
u^\ast:=\Sigma^{-1}SHw^\ast.
\]
By Definition~\ref{def:batch-bias}, this section studies the exact one-pass bias term
\[
\mathrm{Bias}_T=\|m_T\|_\Sigma^2.
\]

Define
\[
C_T:=\prod_{t=1}^T (I-\gamma_t\Sigma).
\]
Since the mean recursion satisfies \(m_t=(I-\gamma_t\Sigma)m_{t-1}\) and
\(m_0=-u^\ast\), we have
\[
m_T=-C_Tu^\ast,
\qquad
\mathrm{Bias}_T=\|C_Tu^\ast\|_\Sigma^2.
\]

\subsection{Upper and lower bounds for the bias term}
\label{subsec:batch-bias-bounds}

\begin{lemma}[Upper bound on the one-pass batch bias]
\label{lem:batch-bias-upper}
Assume Assumptions~\ref{ass:source-diag} and \ref{ass:gaussian-sketch}, and
the WSD conditions of Theorem~\ref{thm:onepass-batch-sgd-scaling}. Fix any
integer \(k\le M/3\) such that \(\operatorname{rank}(H)\ge k+M\), and define
\[
A_k:=S_{k:\infty}H_{k:\infty}S_{k:\infty}^\top.
\]
Then, with probability at least
\[
1-\exp(-\Omega(M))
\]
over the randomness of \(S\),
\[
\mathrm{Bias}_T
\lesssim
\frac{\|w^\ast_{0:k}\|_2^2}{A_T}
\left(\frac{\mu_{M/2}(A_k)}{\mu_M(A_k)}\right)^2
+
\|w^\ast_{k:\infty}\|_{H_{k:\infty}}^2.
\]
In particular,
\[
\mathrm{Bias}_T
\lesssim
\frac{\|u^\ast\|_2^2}{A_T}.
\]
\end{lemma}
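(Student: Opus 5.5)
This is the population analogue of the head part of Lemma~\ref{lem:gd-bias-upper}, with \(\widehat\Sigma\) replaced by the deterministic \(\Sigma\). No expectation over \(X\) is needed, so the resolvent and fourth-moment machinery disappears. I would write
\[
\mathrm{Bias}_T=\|\Sigma^{1/2}C_T\Sigma^{-1}SHw^\ast\|_2^2,
\]
work in the diagonal coordinates of \(H\), and split \(SHw^\ast=S_{0:k}H_{0:k}w^\ast_{0:k}+S_{k:\infty}H_{k:\infty}w^\ast_{k:\infty}\). Applying \((x+y)^2\le2x^2+2y^2\) gives \(\mathrm{Bias}_T\le 2T_1+2T_2\), where \(T_1\) and \(T_2\) are the head and tail contributions.

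\textbf{Head term.} Bound
\[
T_1\le\|C_T\Sigma C_T\|_2\,\|\Sigma^{-1}S_{0:k}H_{0:k}\|_2^2\,\|w^\ast_{0:k}\|_2^2 .
\]
Since \(C_T\) and \(\Sigma\) commute,
\[
\|C_T\Sigma C_T\|_2=\max_j\mu_j\prod_t(1-\gamma_t\mu_j)^2 .
\]
Lemma~\ref{lem:onepass-wsd-calculus} bounds this by \(\sup_{x\ge0}xe^{-cA_Tx}\lesssim1/A_T\). Here the contraction \(\gamma\mu_1(\Sigma)\le c_0\) holds because \(\mu_1(\Sigma)\le\operatorname{tr}(\Sigma)\lesssim1\) on the sketch event of Lemma~\ref{lem:aux-G4-E5}, together with \(\gamma\le c\) small. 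Lemma~\ref{lem:aux-head-tail-resolvent} then supplies \(\|\Sigma^{-1}S_{0:k}H_{0:k}\|_2\lesssim\mu_{M/2}(A_k)/\mu_M(A_k)\) with probability \(1-\exp(-\Omega(M))\). Combining these gives the first term of the bound.

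\textbf{Tail term.} Write \(D_k=S_{k:\infty}H_{k:\infty}^{1/2}\). Then
\[
T_2=\|\Sigma^{1/2}C_T\Sigma^{-1}D_kH_{k:\infty}^{1/2}w^\ast_{k:\infty}\|_2^2 .
\]
Because \(C_T\) commutes with \(\Sigma\), we have \(\Sigma^{1/2}C_T\Sigma^{-1}=C_T\Sigma^{-1/2}\). Also \(\|C_T\|_2\le1\) by the contraction, and \(\|\Sigma^{-1/2}D_k\|_2\le1\) since \(D_kD_k^\top=A_k\preceq\Sigma\). Hence \(T_2\le\|w^\ast_{k:\infty}\|_{H_{k:\infty}}^2\). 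This is simpler than the GD case, where \(\overline B\) was needed.

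\textbf{The ``in particular'' claim.} Directly,
\[
\mathrm{Bias}_T=\|\Sigma^{1/2}C_Tu^\ast\|_2^2\le\|C_T\Sigma C_T\|_2\|u^\ast\|_2^2\lesssim\|u^\ast\|_2^2/A_T ,
\]
using the same scalar estimate as in the head term.

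\textbf{Expected obstacle.} The only delicate point is justifying the contraction \(\gamma_t\mu_j\le c_0<1\) uniformly in \(j\) and \(t\), and making sure the cited head--tail resolvent lemma applies with \(k\le M/3\) and \(\operatorname{rank}(H)\ge k+M\). The plan is to intersect the sketch events of Lemmas~\ref{lem:aux-G4-E5} and \ref{lem:aux-head-tail-resolvent} with a union bound, which keeps the probability at \(1-\exp(-\Omega(M))\).
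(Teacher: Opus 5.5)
Your proposal is correct and follows essentially the paper's argument. You use the same head/tail split, with the $1/A_T$ spectral bound plus Lemma~\ref{lem:aux-head-tail-resolvent} for the head and $A_k\preceq\Sigma$ for the tail. The only differences are cosmetic: the paper first dominates $C_T\Sigma C_T$ by the exponential majorant $e^{-cA_T\Sigma}\Sigma e^{-cA_T\Sigma}\preceq\Sigma$, whereas you use $\|C_T\|_2\le 1$ and $\Sigma^{1/2}C_T\Sigma^{-1}=C_T\Sigma^{-1/2}$ directly, and you make explicit the contraction $\gamma\mu_1(\Sigma)\le c_0$ that the paper leaves implicit.
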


\begin{proof}
By Assumption~\ref{ass:gaussian-sketch}, we may work in the diagonal coordinates of \(H\), as encoded in Assumption~\ref{ass:source-diag}. Define
\[
M_T:=C_T\Sigma C_T.
\]
By Lemma~\ref{lem:onepass-wsd-calculus}, for a constant \(c>0\),
\[
M_T
\preceq
e^{-cA_T\Sigma}
\Sigma
e^{-cA_T\Sigma}
=:M.
\]
Therefore,
\begin{align}
\mathrm{Bias}_T
&=
\langle M_T,u^\ast u^{\ast\top}\rangle
\nonumber\\
&\le
\langle M,u^\ast u^{\ast\top}\rangle
\nonumber\\
&=
 w^{\ast\top}HS^\top\Sigma^{-1}M\Sigma^{-1}SHw^\ast.
\label{eq:batch-bias-upper-start}
\end{align}

Now decompose
\[
SH=(S_{0:k}H_{0:k},\,S_{k:\infty}H_{k:\infty}).
\]
Then using the idea of spectral truncation, we have
\[
2T_1+2T_2,
\]
where
\[
T_1:=w_{0:k}^{\ast\top}H_{0:k}S_{0:k}^\top\Sigma^{-1}M\Sigma^{-1}S_{0:k}H_{0:k}w_{0:k}^\ast,
\]
and
\[
T_2:=w_{k:\infty}^{\ast\top}H_{k:\infty}S_{k:\infty}^\top\Sigma^{-1}M\Sigma^{-1}S_{k:\infty}H_{k:\infty}w_{k:\infty}^\ast.
\]

For the head term, we have
\[
T_1
\le
\|M\|_2\cdot \|\Sigma^{-1}S_{0:k}H_{0:k}\|_2^2\cdot \|w_{0:k}^\ast\|_2^2.
\]
By spectral calculus,
\[
\|M\|_2
=
\max_{x\in[0,\|\Sigma\|_2]} xe^{-2cA_Tx}
\lesssim
\frac{1}{A_T}.
\]
Moreover, Lemma~\ref{lem:aux-head-tail-resolvent} yields
\[
\|\Sigma^{-1}S_{0:k}H_{0:k}\|_2
\lesssim
\frac{\mu_{M/2}(A_k)}{\mu_M(A_k)}
\]
with probability at least \(1-\exp(-\Omega(M))\). Hence
\[
T_1
\lesssim
\frac{\|w_{0:k}^\ast\|_2^2}{A_T}
\left(\frac{\mu_{M/2}(A_k)}{\mu_M(A_k)}\right)^2.
\]

For the tail term, we also have
\[
0\preceq M=e^{-cA_T\Sigma}\Sigma
e^{-cA_T\Sigma}\preceq \Sigma.
\]
Therefore,
\begin{align*}
T_2
&\le
 w_{k:\infty}^{\ast\top}H_{k:\infty}S_{k:\infty}^\top\Sigma^{-1}\Sigma\Sigma^{-1}S_{k:\infty}H_{k:\infty}w_{k:\infty}^\ast
\\
&=
 w_{k:\infty}^{\ast\top}H_{k:\infty}^{1/2}
\Bigl(H_{k:\infty}^{1/2}S_{k:\infty}^\top\Sigma^{-1}S_{k:\infty}H_{k:\infty}^{1/2}\Bigr)
H_{k:\infty}^{1/2}w_{k:\infty}^\ast
\\
&\le
\|w_{k:\infty}^\ast\|_{H_{k:\infty}}^2,
\end{align*}
where the last step uses
\[
0\preceq H_{k:\infty}^{1/2}S_{k:\infty}^\top\Sigma^{-1}S_{k:\infty}H_{k:\infty}^{1/2}\preceq I.
\]
Combining the bounds on \(T_1\) and \(T_2\) proves the first claim.

For the simpler bound, we may ignore the head--tail split and use
\[
\mathrm{Bias}_T
\le
\|M\|_2\,\|u^\ast\|_2^2
\lesssim
\frac{\|u^\ast\|_2^2}{A_T}.
\]
\end{proof}

\begin{lemma}[Lower bound on the one-pass batch bias]
\label{lem:batch-bias-lower}
Assume the WSD conditions of
Theorem~\ref{thm:onepass-batch-sgd-scaling}. Let
\[
H_w:=\mathbb{E}_{w^\ast}[w^\ast w^{\ast\top}],
\qquad
\Sigma_w:=SHH_wHS^\top.
\]
Then, conditioned on the sketch matrix \(S\),
\[
\mathbb{E}_{w^\ast}[\mathrm{Bias}_T]
\gtrsim
\sum_{i:\,\mu_i(\Sigma)<1/A_T}
\frac{\mu_i(\Sigma_w)}{\mu_i(\Sigma)}.
\]
\end{lemma}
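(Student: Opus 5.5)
Since the one-pass mean recursion is deterministic, the bias is an explicit spectral quantity. I will take the expectation over \(w^\ast\) first and then compare eigenvalues. Using \(m_T=-C_Tu^\ast\) and \(u^\ast=\Sigma^{-1}SHw^\ast\), we get
\[
\mathbb{E}_{w^\ast}[\mathrm{Bias}_T]
=\operatorname{tr}\bigl(C_T\Sigma C_T\,\Sigma^{-1}\Sigma_w\Sigma^{-1}\bigr)
=\operatorname{tr}\bigl(C_T^2\,B\bigr),
\qquad B:=\Sigma^{-1/2}\Sigma_w\Sigma^{-1/2}.
\]
The second equality holds because \(C_T\) is a polynomial in \(\Sigma\), so it commutes with \(\Sigma\). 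Unlike the GD case in Lemma~\ref{lem:gd-bias-lower}, no averaging over \(X\) and no sign-symmetry argument are needed here, since \(C_T\) is deterministic given \(S\).

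Next I lower bound \(C_T^2\) on the low-eigenvalue directions. Let \(P\) be the spectral projector of \(\Sigma\) onto the eigenvectors with \(\mu_i(\Sigma)<1/A_T\). The left inequality in \eqref{eq:wsd-product-comparison} of Lemma~\ref{lem:onepass-wsd-calculus} applies under the stepsize condition \(\gamma\mu_1(\Sigma)\le c_0\), which holds since \(\gamma\le c\) and \(\operatorname{tr}(\Sigma)\asymp1\). It gives \(\prod_t(1-\gamma_t\mu)^2\ge e^{-C\mu A_T}\ge e^{-C}\) on these directions. Hence \(C_T^2\succeq e^{-C}P\), and so
\[
\mathbb{E}_{w^\ast}[\mathrm{Bias}_T]\ge e^{-C}\operatorname{tr}(PBP)=e^{-C}\operatorname{tr}\bigl(P\Sigma^{-1/2}\Sigma_w\Sigma^{-1/2}P\bigr).
\]
Write \(\Sigma=\sum_j\mu_jq_jq_j^\top\). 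The right-hand side equals \(e^{-C}\sum_{j:\mu_j<1/A_T} q_j^\top\Sigma_w q_j/\mu_j\).

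The remaining and main step is to compare \(\sum_{j:\mu_j<1/A_T}q_j^\top\Sigma_wq_j/\mu_j\) with \(\sum_{i:\mu_i<1/A_T}\mu_i(\Sigma_w)/\mu_i(\Sigma)\). The diagonal entries \(q_j^\top\Sigma_wq_j\) are not the eigenvalues of \(\Sigma_w\). One natural route is a Schur--Horn / Ky Fan argument. The set \(J=\{j:\mu_j<1/A_T\}\) is a tail set \(\{k+1,\dots,M\}\), and the weights \(1/\mu_j\) are increasing along it. Applying Abel summation to \(\sum_{j\in J}d_j/\mu_j\) with \(d_j=q_j^\top\Sigma_wq_j\) gives
\[
\sum_{j=k+1}^M\frac{d_j}{\mu_j}
=\frac{1}{\mu_{k+1}}\sum_{j=k+1}^M d_j+\sum_{m=k+2}^{M}\Bigl(\frac1{\mu_m}-\frac1{\mu_{m-1}}\Bigr)\sum_{j=m}^M d_j .
\]
Each tail sum \(\sum_{j\ge m}d_j=\operatorname{tr}(P_{\ge m}\Sigma_w)\) is a trace over an \((M-m+1)\)-dimensional subspace. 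By Ky Fan it is at least the sum of the \(M-m+1\) smallest eigenvalues of \(\Sigma_w\), namely \(\sum_{i\ge m}\mu_i(\Sigma_w)\). Reversing the Abel summation with these lower bounds, whose coefficients are nonnegative because \(\mu_m\) is nonincreasing, yields
\[
\sum_{j\in J}\frac{d_j}{\mu_j}\ge\sum_{i\in J}\frac{\mu_i(\Sigma_w)}{\mu_i(\Sigma)},
\]
which proves the claim with constant \(e^{-C}\).

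I expect the care to lie in two places. The first is checking the monotonicity and sign conventions in the Abel step. The second is ties at the threshold \(1/A_T\); these are harmless because \(J\) is defined by strict inequality and is a tail index set. If the stepsize verification is delicate, I will fall back on the assumption \(\gamma\le c\) together with \(\|\Sigma\|_2\lesssim1\) on the sketch event of Lemma~\ref{lem:aux-G4-E5}.
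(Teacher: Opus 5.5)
Your proposal is correct and follows essentially the paper's route: lower-bound the deterministic WSD filter via the product comparison of Lemma~\ref{lem:onepass-wsd-calculus}, then pair $1/\mu_i(\Sigma)$ with $\mu_i(\Sigma_w)$ by a rearrangement bound. Your Abel-summation plus Ky Fan step is a hands-on proof of the lower von Neumann trace inequality that the paper applies to the surrogate $\Sigma^{-1}(\Sigma e^{-CA_T\Sigma})\Sigma^{-1}$. Truncating to the projector $P$ first also gives a small bonus: stability is needed only for $\mu_j<1/A_T$, where $\gamma_t\mu_j\le\gamma/A_T\le 1/2$ holds automatically because $A_T\ge\gamma_{T_{\mathrm w}}+\gamma_{T_{\mathrm w}+1}=2\gamma$, so you do not need the sketch-event bound on $\mu_1(\Sigma)$.
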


\begin{proof}
Let us define
\[
M_T'
:=
\Sigma e^{-CA_T\Sigma},
\]
where \(C>0\) is a sufficiently large universal constant. The lower product
comparison in Lemma~\ref{lem:onepass-wsd-calculus} gives
\[
\prod_{t=1}^T (1-\gamma_t x)^2
\ge
e^{-CA_Tx}
\qquad\text{for every } x\in[0,\|\Sigma\|_2].
\]
By functional calculus, this gives
\[
M_T:=C_T\Sigma C_T\succeq M_T'.
\]
Therefore,
\[
\mathrm{Bias}_T
=
\langle M_T,u^\ast u^{\ast\top}\rangle
\ge
\langle M_T',u^\ast u^{\ast\top}\rangle.
\]
Since
\[
\mathbb{E}_{w^\ast}[u^\ast u^{\ast\top}]
=
\Sigma^{-1}\Sigma_w\Sigma^{-1},
\]
we have
\begin{align}
\mathbb{E}_{w^\ast}[\mathrm{Bias}_T]
&\ge
\operatorname{tr}\!\bigl(M_T'\Sigma^{-1}\Sigma_w\Sigma^{-1}\bigr)
\nonumber\\
&=
\operatorname{tr}\!\bigl(\Sigma^{-1}M_T'\Sigma^{-1}\Sigma_w\bigr).
\label{eq:batch-bias-lower-start}
\end{align}
Applying von Neumann's trace inequality to
\eqref{eq:batch-bias-lower-start} gives
\[
\mathbb{E}_{w^\ast}[\mathrm{Bias}_T]
\gtrsim
\sum_{i=1}^M
\mu_{M-i+1}(\Sigma^{-1}M_T'\Sigma^{-1})\,\mu_i(\Sigma_w).
\]
Since \(M_T'\) is positive definite, the identity
\(\mu_{M-i+1}(A)=\mu_i(A^{-1})^{-1}\) yields
\[
\mu_{M-i+1}(\Sigma^{-1}M_T'\Sigma^{-1})
=
\frac{1}{\mu_i(\Sigma^2M_T'^{-1})}
=
\frac{1}{\mu_i\!\bigl(\Sigma e^{CA_T\Sigma}\bigr)}.
\]
Therefore,
\[
\mathbb{E}_{w^\ast}[\mathrm{Bias}_T]
\gtrsim
\sum_{i=1}^M
\frac{\mu_i(\Sigma_w)}{\mu_i\!\bigl(\Sigma e^{CA_T\Sigma}\bigr)}.
\]
If \(\mu_i(\Sigma)<1/A_T\), then
\[
e^{CA_T\mu_i(\Sigma)}\lesssim 1,
\]
so the corresponding denominator is comparable to \(\mu_i(\Sigma)\). Restricting the sum to this index set gives
\[
\mathbb{E}_{w^\ast}[\mathrm{Bias}_T]
\gtrsim
\sum_{i:\,\mu_i(\Sigma)<1/A_T}
\frac{\mu_i(\Sigma_w)}{\mu_i(\Sigma)},
\]
which is exactly the claimed bound.
\end{proof}

\subsection{Bounds under the source condition}
\label{subsec:batch-bias-source}

\begin{lemma}[Bounds on the one-pass batch bias under the source condition]
\label{lem:batch-bias-source}
Assume Assumptions~\ref{ass:data-power-law}, \ref{ass:source-diag}, and
\ref{ass:gaussian-sketch}, together with the WSD conditions in
Theorem~\ref{thm:onepass-batch-sgd-scaling}. Suppose moreover that
\[
a>b-1.
\]
Then there exists an \((a,b)\)-dependent constant \(c_1>0\) such that, with
probability at least
\[
1-\exp(-\Omega(M))
\]
over the randomness of \(S\),
\[
\mathbb{E}_{w^\ast}[\mathrm{Bias}_T]
\lesssim
\min\!\bigl\{M,A_T^{1/a}\bigr\}^{1-b}.
\]
Moreover, if in addition
\[
A_T^{1/a}\le \frac{M}{c_1},
\]
then, on the same event,
\[
\mathbb{E}_{w^\ast}[\mathrm{Bias}_T]
\gtrsim
A_T^{(1-b)/a}.
\]
\end{lemma}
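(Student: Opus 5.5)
The plan is to reuse the GD source-condition argument (Lemma~\ref{lem:gd-bias-source}). It becomes simpler here, because the one-pass bias involves only the population matrix $\Sigma$ rather than an empirical $\widehat\Sigma$. We work on the intersection of the high-probability sketch events of Lemma~\ref{lem:aux-G4-E5}, Lemma~\ref{lem:aux-G5-E6} and Lemma~\ref{lem:batch-bias-upper}. On that event $\mu_i(\Sigma)\asymp i^{-a}$ for $i\in[M]$, $\mu_{M/2}(A_k)/\mu_M(A_k)\lesssim1$ for admissible $k$, and $\mu_i(\Sigma_w)\asymp i^{-a-b}$, the last by applying Lemma~\ref{lem:aux-G4-E5} to $HH_wH$, whose eigenvalues are $\lambda_i^2\mathbb E[(w_i^\ast)^2]\asymp i^{-a-b}$. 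Since only finitely many events are intersected, the probability remains $1-\exp(-\Omega(M))$.

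\textbf{Upper bound.} Take $k:=\min\{\lfloor M/3\rfloor,\lceil A_T^{1/a}\rceil\}$, where the rank condition holds because $H$ has infinite rank. Take $\mathbb E_{w^\ast}$ of Lemma~\ref{lem:batch-bias-upper} and use Assumption~\ref{ass:source-diag}, which gives $\mathbb E\|w^\ast_{0:k}\|_2^2\asymp\sum_{i\le k}i^{a-b}$ and $\mathbb E\|w^\ast_{k:\infty}\|^2_{H_{k:\infty}}\asymp\sum_{i>k}i^{-b}$. Because $a>b-1$, the first sum is $\lesssim k^{a-b+1}$, so the head term is at most $k^{a-b+1}/A_T\le k^{1-b}$ since $k^a\le A_T$. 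The tail term is $\lesssim k^{1-b}$ because $b>1$. If $A_T^{1/a}\le M/3$ then $k\asymp A_T^{1/a}$; otherwise $k\asymp M$. Either way the total is $\lesssim\min\{M,A_T^{1/a}\}^{1-b}$. The only degenerate case is $A_T\lesssim1$ with $k=1$. There the bound $\mathrm{Bias}_T\le\|u^\ast\|_\Sigma^2\lesssim1$ suffices, obtained from $C_T\Sigma C_T\preceq\Sigma$ under contraction.

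\textbf{Lower bound.} Apply Lemma~\ref{lem:batch-bias-lower}. Since $\mu_i(\Sigma)\asymp i^{-a}$, the condition $\mu_i(\Sigma)<1/A_T$ holds for all $C_1A_T^{1/a}\le i\le M$, with $C_1$ a constant from the eigenvalue comparison. On that range each summand satisfies $\mu_i(\Sigma_w)/\mu_i(\Sigma)\asymp i^{-a-b}/i^{-a}=i^{-b}$. Choose $c_1$ large enough that $A_T^{1/a}\le M/c_1$ makes the interval $[C_1A_T^{1/a},2C_1A_T^{1/a}]$ lie inside $[1,M]$. Then
\[
\mathbb E_{w^\ast}[\mathrm{Bias}_T]\gtrsim\sum_{C_1A_T^{1/a}\le i\le 2C_1A_T^{1/a}}i^{-b}\gtrsim A_T^{(1-b)/a}.
\]
This uses $A_T\gtrsim1$, which follows from $A_T\asymp\gamma T$ (Lemma~\ref{lem:onepass-wsd-calculus}) and $\gamma T/\log T\gtrsim1$, so that the interval contains $\Omega(A_T^{1/a})$ integers.

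The main obstacle is indexing in the lower bound. Lemma~\ref{lem:batch-bias-lower} as written pairs $\mu_i(\Sigma_w)$ with $\mu_i(\Sigma)$ under the same index, and the von Neumann step behind it is delicate. Since both spectra are power laws on the good event, a constant index shift, as in Lemma~\ref{lem:gd-bias-lower} with $\mu_{3i}(\Sigma_w)$, changes only constants, so I would absorb any such shift into $C_1$ and $c_1$. A secondary check is that the contraction condition $\gamma\mu_1(\Sigma)\le c_0$, needed for Lemma~\ref{lem:onepass-wsd-calculus}, holds. It does, because $\gamma\le c$ and $\mu_1(\Sigma)\lesssim1$ on the sketch event.
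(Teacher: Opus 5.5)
Your proof is correct and follows the paper's argument essentially step for step. For the upper bound you plug the truncation $k\asymp\min\{M,A_T^{1/a}\}$ into Lemma~\ref{lem:batch-bias-upper} together with Lemma~\ref{lem:aux-G5-E6}. For the lower bound you use Lemma~\ref{lem:batch-bias-lower} with the power-law spectra of $\Sigma$ and $\Sigma_w$, restricted to a window $[C\theta,2C\theta]$ with $\theta=A_T^{1/a}$; your extra handling of rounding and of $A_T\lesssim1$ is a harmless refinement.

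Your indexing concern is unnecessary. Since $\Sigma$ and $M_T'$ commute and $x\mapsto xe^{CA_Tx}$ is increasing, $\mu_i(\Sigma e^{CA_T\Sigma})=\mu_i(\Sigma)e^{CA_T\mu_i(\Sigma)}$. The same-index pairing in that lemma is therefore exact, and no shift needs to be absorbed.
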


\begin{proof}
In this proof, we verify the upper and lower bounds separately.

For the upper bound, choose
\[
k:=\min\!\bigl\{M/3,A_T^{1/a}\bigr\}.
\]
By Lemma~\ref{lem:aux-G5-E6},
\[
\frac{\mu_{M/2}(A_k)}{\mu_M(A_k)}\lesssim 1
\]
with probability at least \(1-\exp(-\Omega(M))\). Under Assumption~\ref{ass:source-diag},
\[
\mathbb{E}_{w^\ast}\|w_{0:k}^\ast\|_2^2
\asymp
\sum_{i=1}^k i^{a-b},
\qquad
\mathbb{E}_{w^\ast}\|w_{k:\infty}^\ast\|_{H_{k:\infty}}^2
\asymp
\sum_{i>k} i^{-b}.
\]
Plugging these relations into Lemma~\ref{lem:batch-bias-upper} gives
\[
\mathbb{E}_{w^\ast}[\mathrm{Bias}_T]
\lesssim
\frac{1}{A_T}\sum_{i=1}^k i^{a-b}
+
\sum_{i>k} i^{-b}
\lesssim
k^{1-b}.
\]
Since \(k\asymp \min\{M,A_T^{1/a}\}\), this yields
\[
\mathbb{E}_{w^\ast}[\mathrm{Bias}_T]
\lesssim
\min\!\bigl\{M,A_T^{1/a}\bigr\}^{1-b}.
\]

For the lower bound, set
\[
\theta:=A_T^{1/a}.
\]
Lemma~\ref{lem:aux-G4-E5} gives
\[
\mu_i(\Sigma)\asymp i^{-a}
\qquad\text{for } i\in[M]
\]
with probability at least \(1-\exp(-\Omega(M))\). Therefore, the condition
\[
\mu_i(\Sigma)<\frac{1}{A_T}
\]
forces
\[
i\gtrsim \theta.
\]
Moreover, under Assumption~\ref{ass:source-diag}, the operator \(HH_wH\) has eigenvalues of order \(i^{-a-b}\), so Lemma~\ref{lem:aux-G4-E5} applied to \(HH_wH\) yields
\[
\mu_i(\Sigma_w)\asymp i^{-a-b}.
\]
Combining these estimates with Lemma~\ref{lem:batch-bias-lower}, we obtain
\[
\mathbb{E}_{w^\ast}[\mathrm{Bias}_T]
\gtrsim
\sum_{i\gtrsim \theta} \frac{\mu_i(\Sigma_w)}{\mu_i(\Sigma)}
\asymp
\sum_{i\gtrsim \theta} i^{-b}.
\]
If \(\theta\le M/c_1\) for a sufficiently large \((a,b)\)-dependent constant
\(c_1\), then the last sum contains the range \([C\theta,2C\theta]\) for some
constant \(C>0\), and hence
\[
\mathbb{E}_{w^\ast}[\mathrm{Bias}_T]
\gtrsim
\theta^{1-b}
\gtrsim
 A_T^{(1-b)/a}.
\]
Combining the upper and lower bounds proves the claim.
\end{proof}
\section{Variance Error for One-pass Batch SGD}
\label{sec:batch-variance}

This section proves the one-pass variance bound for an arbitrary deterministic
batch-size schedule \(B_1,\dots,B_T\) satisfying
\(\sum_{t=1}^T B_t=N\). We use the notation from
Section~\ref{subsec:app-onepass-batch-sgd-setup}. In particular,
\[
\begin{aligned}
\tau_{\mathrm d}^{\mathrm{op}}&:=\frac{T-T_{\mathrm s}}{\log T},
&A_{\mathrm d}^{\mathrm{op}}&:=\gamma\tau_{\mathrm d}^{\mathrm{op}},\\
\bar Z_t&:=\frac1{B_t}\sum_{r=1}^{B_t}z_{t,r}z_{t,r}^{\top},
&\bar\xi_t&:=\frac1{B_t}\sum_{r=1}^{B_t}
z_{t,r}(y_{i_{t,r}}-z_{t,r}^{\top}u^\ast).
\end{aligned}
\]
Throughout this section, \((\gamma_t)_{t=1}^T\) is the one-pass WSD schedule
from \eqref{eq:app-onepass-wsd-schedule}.

Let \(\alpha\) be the fourth-moment constant from
Lemma~\ref{lem:aux-sketch-moment}; for Gaussian design one may take
\(\alpha=3\). Set
\[
R_\star^2:=\alpha\operatorname{tr}(\Sigma),
\qquad
\widetilde\sigma^2(w^\ast)
:=
2\bigl(\sigma^2+\alpha\|w^\ast\|_H^2\bigr),
\]
and
\[
\overline\sigma^2(w^\ast)
:=
\widetilde\sigma^2(w^\ast)+\alpha\|w^\ast\|_H^2.
\]
The constant in the one-pass stepsize assumption is chosen so that
\begin{equation}
\gamma R_\star^2\le\frac14.
\label{eq:dynamic-batch-uniform-stability}
\end{equation}
This is the same primitive stepsize condition already used in the fixed-batch
analysis. It is uniform over every schedule because \(B_t\ge1\); no additional
assumption involving \(B_{\min}\) is required.

\subsection{Time-dependent batch moments and the raw kernel}
\label{subsec:batch-variance-raw}

For any PSD matrix \(A\), independence within the \(t\)-th batch gives
\begin{align}
\mathbb E[\bar Z_tA\bar Z_t]
&=
\frac1{B_t}\mathbb E[zz^\top Azz^\top]
+\left(1-\frac1{B_t}\right)\Sigma A\Sigma,
\label{eq:dynamic-batch-fourth-moment}\\
\mathbb E[(\bar Z_t-\Sigma)A(\bar Z_t-\Sigma)]
&=
\frac1{B_t}
\left(
\mathbb E[zz^\top Azz^\top]-\Sigma A\Sigma
\right).
\label{eq:dynamic-batch-centered-fourth-moment}
\end{align}
Similarly, the conditional centering proved in
Proposition~\ref{prop:batch-centered-var-split} yields
\begin{equation}
\mathbb E[\bar\xi_t\bar\xi_t^\top]
=
\frac1{B_t}
\mathbb E\!\left[
z(y-z^\top u^\ast)^2z^\top
\right].
\label{eq:dynamic-batch-noise-moment}
\end{equation}

\begin{definition}[Dynamic raw kernel]
\label{def:raw-var-batch}
Define
\begin{align}
\mathcal K_{B_{1:T}}
&:=
\sum_{t=1}^T\frac{\gamma_t^2}{B_t}
\left\langle
\Sigma,\,
\left[\prod_{s=t+1}^T(I-\gamma_s\Sigma)^2\right]\Sigma
\right\rangle
\notag\\
&=
\sum_{t=1}^T\frac{\kappa_t^{\mathrm{op}}}{B_t}.
\label{eq:raw-var-batch}
\end{align}
By the definition of \(B_{\mathrm{eff}}\),
\begin{equation}
\mathcal K_{B_{1:T}}
=
\frac1{B_{\mathrm{eff}}}\sum_{t=1}^T\kappa_t^{\mathrm{op}}.
\label{eq:dynamic-kernel-effective-batch}
\end{equation}
\end{definition}

The main issue is to retain the factor \(1/B_t\) after the random covariance
operators are iterated. A direct replacement by \(1/B_{\min}\) would be
stable but would lose the claimed effective-batch rate. The next two
subsections use a uniform covariance bound only to control the multiplicative
perturbation; the perturbation at time \(t\) itself still carries the exact
factor \(1/B_t\).

\subsection{The additive-noise component}
\label{subsec:batch-modified-C1}

Recall
\[
v_t=(I-\gamma_t\bar Z_t)v_{t-1}+\gamma_t\bar\xi_t,
\qquad
v_0=0,
\]
and let \(C_t:=\mathbb E[v_tv_t^\top]\). For a deterministic PSD matrix \(A\),
write
\[
\mathcal A_t(A)
:=
\mathbb E[(I-\gamma_t\bar Z_t)A(I-\gamma_t\bar Z_t)]
\]
and
\[
\mathcal D_t(A)
:=
(I-\gamma_t\Sigma)A(I-\gamma_t\Sigma).
\]

\begin{lemma}[Time-dependent covariance recursion for additive noise]
\label{lem:batch-wu-C1}
Under Assumptions~\ref{ass:data-gaussian} and
\ref{ass:data-well-specified}, define
\(\Sigma_{\xi,t}:=\mathbb E[\bar\xi_t\bar\xi_t^\top]\). Then
\[
\frac{\sigma^2}{B_t}\Sigma
\preceq
\Sigma_{\xi,t}
\preceq
\frac{\widetilde\sigma^2(w^\ast)}{B_t}\Sigma
\]
and
\begin{equation}
C_t=\mathcal A_t(C_{t-1})+\gamma_t^2\Sigma_{\xi,t}.
\label{eq:batch-C-recursion}
\end{equation}
Moreover, under \eqref{eq:dynamic-batch-uniform-stability},
\begin{equation}
C_t
\preceq
\frac{\gamma\widetilde\sigma^2(w^\ast)}
{1-\gamma R_\star^2}\,I
\qquad
\text{for every }t.
\label{eq:dynamic-C-uniform}
\end{equation}
\end{lemma}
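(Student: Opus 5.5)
We establish the three claims of Lemma~\ref{lem:batch-wu-C1} in order: the sandwich bound on \(\Sigma_{\xi,t}\), the covariance recursion \eqref{eq:batch-C-recursion}, and the uniform bound \eqref{eq:dynamic-C-uniform}.

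\textbf{Noise covariance sandwich.} By \eqref{eq:dynamic-batch-noise-moment}, it suffices to bound the single-sample matrix \(\mathbb E[z(y-z^\top u^\ast)^2z^\top]\). Condition on \(S\) and \(w^\ast\), and write
\[
y-z^\top u^\ast=\varepsilon+\eta,
\qquad
\eta:=\langle x,w^\ast\rangle-z^\top u^\ast .
\]
By the Gaussian conditional-regression identity used in the proof of Proposition~\ref{prop:batch-centered-var-split}, \(\eta\) is a Gaussian variable independent of \(z\); this is because \(x\) and \(z\) are jointly Gaussian and \(\eta\) is the residual of \(\langle x,w^\ast\rangle\) regressed on \(z\). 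Moreover, \(\varepsilon\perp x\). Hence
\[
\mathbb E[z(\varepsilon+\eta)^2z^\top]=(\sigma^2+\mathbb E\eta^2)\,\Sigma .
\]
Since \(\mathbb E\eta^2\ge0\), the lower bound \(\sigma^2\Sigma\) follows. Since \(\mathbb E\eta^2\le\mathbb E\langle x,w^\ast\rangle^2=\|w^\ast\|_H^2\), the upper bound is at most \(\widetilde\sigma^2(w^\ast)\Sigma\). If one prefers to avoid exact independence, the crude bound \((\varepsilon+\eta)^2\le2\varepsilon^2+2\eta^2\), combined with the fourth-moment constant \(\alpha\) from Lemma~\ref{lem:aux-sketch-moment}, already gives the stated \(\widetilde\sigma^2\). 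Dividing by \(B_t\) proves the sandwich.

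\textbf{Recursion.} Expand \(v_tv_t^\top\). The cross term is
\[
\mathbb E[(I-\gamma_t\bar Z_t)v_{t-1}\bar\xi_t^\top].
\]
To see that it vanishes, condition on \(\mathcal G_t\) and \(w^\ast\). The factor \((I-\gamma_t\bar Z_t)v_{t-1}\) is not yet measurable, since \(v_{t-1}\) contains earlier labels. However, \(v_{t-1}\) depends only on batches \(1,\dots,t-1\), so it is independent of \((\bar Z_t,\bar\xi_t)\). Conditioning additionally on batches \(1,\dots,t-1\) and on the current covariates, \(\mathbb E[\bar\xi_t\mid\cdot]=0\) by the centering shown in Proposition~\ref{prop:batch-centered-var-split}. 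Therefore the cross term is zero. For the quadratic term, independence of \(v_{t-1}\) and \(\bar Z_t\) gives
\[
\mathbb E[(I-\gamma_t\bar Z_t)v_{t-1}v_{t-1}^\top(I-\gamma_t\bar Z_t)]=\mathcal A_t(C_{t-1}).
\]
This yields \eqref{eq:batch-C-recursion}.

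\textbf{Uniform bound.} Set \(c_\star:=\gamma\widetilde\sigma^2(w^\ast)/(1-\gamma R_\star^2)\) and argue by induction; the base case \(C_0=0\) is trivial. Assume \(C_{t-1}\preceq c_\star I\). Since \(\mathcal A_t\) is monotone in the PSD order,
\[
C_t\preceq c_\star\,\mathcal A_t(I)+\gamma_t^2\Sigma_{\xi,t}.
\]
By \eqref{eq:dynamic-batch-fourth-moment} with \(A=I\), and the Gaussian moment bound \(\mathbb E[zz^\top zz^\top]\preceq\alpha\operatorname{tr}(\Sigma)\Sigma=R_\star^2\Sigma\), we get
\[
\mathbb E[\bar Z_t^2]\preceq\Bigl(\tfrac{R_\star^2}{B_t}+\|\Sigma\|\Bigr)\Sigma\preceq R_\star^2\Sigma ,
\]
where the last step uses \(\|\Sigma\|\le\operatorname{tr}\Sigma\) and \(\alpha\ge1\). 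Hence
\[
\mathcal A_t(I)\preceq I-2\gamma_t\Sigma+\gamma_t^2R_\star^2\Sigma\preceq I-\gamma_t(2-\gamma R_\star^2)\Sigma .
\]
Combining this with \(\gamma_t^2\Sigma_{\xi,t}\preceq\gamma_t\gamma\widetilde\sigma^2\Sigma\) (using \(B_t\ge1\) and \(\gamma_t\le\gamma\)) gives
\[
C_t\preceq c_\star I-\gamma_t\Sigma\bigl[c_\star(2-\gamma R_\star^2)-\gamma\widetilde\sigma^2\bigr].
\]
The bracket is nonnegative because
\[
c_\star(2-\gamma R_\star^2)\ge c_\star(1-\gamma R_\star^2)=\gamma\widetilde\sigma^2,
\]
which closes the induction.

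The main obstacle is the fourth-moment bound used in the last step: whether \(R_\star^2\) absorbs both the \(1/B_t\) part and the \((1-1/B_t)\Sigma^2\) part of \(\mathbb E[\bar Z_t^2]\). Here \(\alpha\ge1\) is what makes it work.
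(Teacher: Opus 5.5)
Your proposal is correct and follows the paper's proof: the same split \(y-z^\top u^\ast=\varepsilon+x^\top(w^\ast-S^\top u^\ast)\), the same conditioning on the current covariates and \(w^\ast\) to kill the mixed terms, and the same induction driven by \(\mathbb E[\bar Z_t^2]\preceq R_\star^2\Sigma\). Your residual-independence observation even yields the exact identity \(\mathbb E[z(y-z^\top u^\ast)^2z^\top]=(\sigma^2+\|w^\ast-S^\top u^\ast\|_H^2)\Sigma\), which is slightly sharper than the paper's drop-a-PSD-term lower bound and Lemma~\ref{lem:aux-sketch-moment} upper bound.

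One small fix is needed. When bounding \(\mathbb E[\bar Z_t^2]\), keep the factor \((1-1/B_t)\) on \(\|\Sigma\|\). As written, \((R_\star^2/B_t+\|\Sigma\|)\Sigma\preceq R_\star^2\Sigma\) fails at \(B_t=1\). The convex combination \(\frac1{B_t}R_\star^2+(1-\frac1{B_t})\|\Sigma\|\le R_\star^2\) does hold, because \(\|\Sigma\|\le\operatorname{tr}(\Sigma)\le R_\star^2\).
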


\begin{proof}
For the lower covariance bound, write
\[
\varepsilon:=y-x^\top w^\ast,
\qquad
r:=w^\ast-S^\top u^\ast.
\]
Then \(y-z^\top u^\ast=\varepsilon+x^\top r\). By the well-specified,
homoscedastic assumption,
\[
\mathbb E[\varepsilon^2zz^\top]=\sigma^2\Sigma,
\qquad
\mathbb E[\varepsilon(x^\top r)zz^\top]=0.
\]
The remaining quadratic term is PSD, so
\[
\mathbb E[z(y-z^\top u^\ast)^2z^\top]\succeq\sigma^2\Sigma.
\]
Together with \eqref{eq:dynamic-batch-noise-moment}, this gives the lower
bound on \(\Sigma_{\xi,t}\). The upper bound follows from
Lemma~\ref{lem:aux-sketch-moment}.
The mixed terms in the second moment of \(v_t\) vanish after conditioning on
the current sketched covariates and \(w^\ast\), which proves
\eqref{eq:batch-C-recursion}.

For the uniform bound, the fourth-moment inequality and
\eqref{eq:dynamic-batch-fourth-moment} imply
\[
\mathbb E[\bar Z_t^2]\preceq R_\star^2\Sigma
\qquad\text{for all }t.
\]
Set
\[
K_v:=\frac{\gamma\widetilde\sigma^2(w^\ast)}
{1-\gamma R_\star^2}.
\]
If \(C_{t-1}\preceq K_vI\), monotonicity of the fourth-moment operator,
\(B_t\ge1\), and \(\gamma_t\le\gamma\) give
\[
C_t
\preceq
K_v\bigl(I-2\gamma_t\Sigma+\gamma_t^2R_\star^2\Sigma\bigr)
+\gamma_t^2\widetilde\sigma^2(w^\ast)\Sigma
\preceq K_vI.
\]
Indeed, using
\(\gamma\widetilde\sigma^2(w^\ast)
=K_v(1-\gamma R_\star^2)\), the coefficient of \(\Sigma\) in
\(C_t-K_vI\) is at most
\[
-K_v\gamma_t\bigl(1+(\gamma-\gamma_t)R_\star^2\bigr)\le0.
\]
Since \(C_0=0\),
induction proves \eqref{eq:dynamic-C-uniform}.
\end{proof}

\begin{theorem}[Two-sided bound for the additive-noise component]
\label{thm:batch-wu-C1}
Under the assumptions of Lemma~\ref{lem:batch-wu-C1} and
\eqref{eq:dynamic-batch-uniform-stability},
\[
\sigma^2\mathcal K_{B_{1:T}}
\le
\mathrm{Var}_{B_{1:T}}^{\mathrm{noise}}
\le
\frac{\widetilde\sigma^2(w^\ast)}
{1-\gamma R_\star^2}\,
\mathcal K_{B_{1:T}}.
\]
\end{theorem}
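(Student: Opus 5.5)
The plan is to work with the second-moment matrices \(C_t=\mathbb E[v_tv_t^\top]\) from Lemma~\ref{lem:batch-wu-C1}, conditionally on \(S\) and \(w^\ast\). The starting point is the identity
\[
\mathrm{Var}_{B_{1:T}}^{\mathrm{noise}}=\mathbb E\|v_T\|_\Sigma^2=\langle\Sigma,C_T\rangle .
\]
Both bounds will come from sandwiching the random-operator recursion \eqref{eq:batch-C-recursion} between deterministic recursions driven by \(\mathcal D_t\), and then unrolling them. The key observation is that \(\mathcal D_t\) is a monotone linear map, which gives
\[
\Bigl\langle \Sigma,\ \mathcal D_T\circ\cdots\circ\mathcal D_{t+1}(\Sigma)\Bigr\rangle
=\Bigl\langle\Sigma,\Bigl[\prod_{s=t+1}^T(I-\gamma_s\Sigma)^2\Bigr]\Sigma\Bigr\rangle ,
\]
using that all these factors commute with \(\Sigma\). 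Consequently any source term of the form \(\gamma_t^2 c/B_t\,\Sigma\) contributes exactly \(c\,\kappa_t^{\mathrm{op}}/B_t\), and summing over \(t\) gives \(c\,\mathcal K_{B_{1:T}}\).

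\emph{Lower bound.} By \eqref{eq:dynamic-batch-centered-fourth-moment},
\[
\mathcal A_t(A)=\mathcal D_t(A)+\gamma_t^2\,\mathbb E[(\bar Z_t-\Sigma)A(\bar Z_t-\Sigma)]\succeq\mathcal D_t(A)
\]
for every PSD \(A\). Lemma~\ref{lem:batch-wu-C1} gives \(\Sigma_{\xi,t}\succeq(\sigma^2/B_t)\Sigma\). Inducting from \(C_0=0\) and using monotonicity of each \(\mathcal D_s\), I obtain
\[
C_T\succeq\sigma^2\sum_t\frac{\gamma_t^2}{B_t}\,\mathcal D_T\circ\cdots\circ\mathcal D_{t+1}(\Sigma).
\]
Pairing this with \(\Sigma\) yields \(\sigma^2\mathcal K_{B_{1:T}}\).

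\emph{Upper bound.} This is the delicate step, because the multiplicative perturbation has to be controlled without losing the factor \(1/B_t\). I will use the fourth-moment bound of Lemma~\ref{lem:aux-sketch-moment} in the form
\[
\mathbb E[zz^\top Azz^\top]\preceq\alpha\operatorname{tr}(\Sigma A)\,\Sigma .
\]
Together with \eqref{eq:dynamic-batch-centered-fourth-moment} this gives
\[
\mathcal A_t(A)\preceq\mathcal D_t(A)+\frac{\gamma_t^2}{B_t}\,\alpha\operatorname{tr}(\Sigma A)\,\Sigma .
\]
Next I insert the uniform bound \eqref{eq:dynamic-C-uniform}, \(C_{t-1}\preceq K_vI\) with \(K_v=\gamma\widetilde\sigma^2(w^\ast)/(1-\gamma R_\star^2)\). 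This gives \(\alpha\operatorname{tr}(\Sigma C_{t-1})\le K_vR_\star^2\), and hence
\[
C_t\preceq\mathcal D_t(C_{t-1})+\frac{\gamma_t^2}{B_t}\bigl(\widetilde\sigma^2(w^\ast)+K_vR_\star^2\bigr)\Sigma .
\]
The constant simplifies exactly:
\[
\widetilde\sigma^2(w^\ast)+K_vR_\star^2=\widetilde\sigma^2(w^\ast)\Bigl(1+\tfrac{\gamma R_\star^2}{1-\gamma R_\star^2}\Bigr)=\frac{\widetilde\sigma^2(w^\ast)}{1-\gamma R_\star^2}.
\]
Unrolling this deterministic comparison recursion as in the lower bound and pairing with \(\Sigma\) gives the claimed upper bound with \(\mathcal K_{B_{1:T}}\).

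The main obstacle is justifying the fourth-moment inequality with the precise trace-times-\(\Sigma\) form. For Gaussian \(z\) it follows from the Isserlis identity
\[
\mathbb E[zz^\top Azz^\top]=2\Sigma A\Sigma+\operatorname{tr}(\Sigma A)\Sigma ,
\]
combined with \(\Sigma A\Sigma\preceq\operatorname{tr}(\Sigma A)\Sigma\), so \(\alpha=3\) suffices. A second point to check is that the uniform bound \eqref{eq:dynamic-C-uniform} is applied only inside the perturbation term. The source term \(\gamma_t^2\Sigma_{\xi,t}\) is kept exactly, which is what preserves the schedule dependence through \(1/B_t\) rather than \(1/B_{\min}\).
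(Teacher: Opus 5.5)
Your proposal is correct and follows essentially the same route as the paper: for the lower bound you drop the PSD centered-transition term to get \(\mathcal A_t\succeq\mathcal D_t\) and unroll with \(\Sigma_{\xi,t}\succeq(\sigma^2/B_t)\Sigma\); for the upper bound you apply the fourth-moment bound together with the uniform estimate \(C_{t-1}\preceq K_vI\) only inside the perturbation term, which yields the same exact constant \(\widetilde\sigma^2(w^\ast)/(1-\gamma R_\star^2)\) while keeping each driving term's own \(1/B_t\). The only cosmetic difference is that you start from the centered identity \eqref{eq:dynamic-batch-centered-fourth-moment} and discard \(-\Sigma A\Sigma\), whereas the paper starts from the uncentered identity \eqref{eq:dynamic-batch-fourth-moment}; the two are equivalent.
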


\begin{proof}
For every PSD \(A\),
\[
\mathcal A_t(A)-\mathcal D_t(A)
=
\gamma_t^2
\mathbb E[(\bar Z_t-\Sigma)A(\bar Z_t-\Sigma)]
\succeq0.
\]
Combining this with the lower bound on \(\Sigma_{\xi,t}\), then unrolling
\eqref{eq:batch-C-recursion}, gives
\[
C_T
\succeq
\sigma^2\sum_{t=1}^T\frac{\gamma_t^2}{B_t}
\left[\prod_{s=t+1}^T(I-\gamma_s\Sigma)^2\right]\Sigma.
\]
Taking the inner product with \(\Sigma\) proves the lower bound.

For the upper bound, \eqref{eq:dynamic-batch-fourth-moment},
Lemma~\ref{lem:aux-sketch-moment}, and
\eqref{eq:dynamic-C-uniform} imply
\[
\begin{aligned}
\mathcal A_t(C_{t-1})
&\preceq
\mathcal D_t(C_{t-1})
+
\frac{\alpha\gamma_t^2}{B_t}
\operatorname{tr}(\Sigma C_{t-1})\Sigma\\
&\preceq
\mathcal D_t(C_{t-1})
+
\frac{\alpha K_v\operatorname{tr}(\Sigma)\gamma_t^2}{B_t}\Sigma.
\end{aligned}
\]
After adding the driving covariance,
\[
C_t
\preceq
\mathcal D_t(C_{t-1})
+
\frac{\gamma_t^2}{B_t}
\left(\widetilde\sigma^2(w^\ast)
+\alpha K_v\operatorname{tr}(\Sigma)\right)\Sigma.
\]
Since
\[
\widetilde\sigma^2(w^\ast)
+\alpha K_v\operatorname{tr}(\Sigma)
=
\frac{\widetilde\sigma^2(w^\ast)}
{1-\gamma R_\star^2},
\]
unrolling the deterministic maps \(\mathcal D_t\) and taking the inner
product with \(\Sigma\) proves the upper bound. Notice that the intermediate
uniform estimate does not replace \(B_t\): every driving term retains its
own factor \(1/B_t\).
\end{proof}

\subsection{The covariance-fluctuation component}
\label{subsec:batch-centered-covariance-component}

Recall
\[
q_t=(I-\gamma_t\bar Z_t)q_{t-1}
+\gamma_t(\Sigma-\bar Z_t)m_{t-1},
\qquad
q_0=0,
\]
and put
\[
Q_t:=\mathbb E[q_tq_t^\top],
\qquad
\Lambda_t
:=
\mathbb E[
(\Sigma-\bar Z_t)m_{t-1}m_{t-1}^\top(\Sigma-\bar Z_t)
].
\]

\begin{lemma}[Time-dependent covariance recursion for covariance fluctuation]
\label{lem:batch-centered-C1}
Under Assumptions~\ref{ass:data-gaussian} and
\ref{ass:data-well-specified},
\[
\Lambda_t
\preceq
\frac{\alpha\|m_{t-1}\|_\Sigma^2}{B_t}\Sigma
\preceq
\frac{\alpha\|u^\ast\|_\Sigma^2}{B_t}\Sigma,
\]
and
\begin{equation}
Q_t=\mathcal A_t(Q_{t-1})+\gamma_t^2\Lambda_t.
\label{eq:batch-Q-recursion}
\end{equation}
Under \eqref{eq:dynamic-batch-uniform-stability},
\[
Q_t
\preceq
\frac{\gamma\alpha\|u^\ast\|_\Sigma^2}
{1-\gamma R_\star^2}\,I
\qquad
\text{for every }t.
\]
\end{lemma}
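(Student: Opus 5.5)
We treat the three claims separately, mirroring the proof of Lemma~\ref{lem:batch-wu-C1}.

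\emph{Bound on \(\Lambda_t\).} Since \(m_{t-1}\) is deterministic, set \(A:=m_{t-1}m_{t-1}^\top\) and apply the centered fourth-moment identity \eqref{eq:dynamic-batch-centered-fourth-moment}. This gives
\[
\Lambda_t=\frac1{B_t}\bigl(\mathbb E[zz^\top Azz^\top]-\Sigma A\Sigma\bigr)\preceq\frac1{B_t}\mathbb E[zz^\top Azz^\top].
\]
Lemma~\ref{lem:aux-sketch-moment} then yields \(\mathbb E[zz^\top Azz^\top]\preceq\alpha\operatorname{tr}(\Sigma A)\Sigma=\alpha\|m_{t-1}\|_\Sigma^2\Sigma\). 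For the second inequality we use \(m_{t-1}=-\prod_{s<t}(I-\gamma_s\Sigma)u^\ast\). Under \eqref{eq:dynamic-batch-uniform-stability} each factor \(I-\gamma_s\Sigma\) commutes with \(\Sigma\) and satisfies \(0\preceq I-\gamma_s\Sigma\preceq I\). Hence \(\|m_{t-1}\|_\Sigma\le\|u^\ast\|_\Sigma\).

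\emph{Recursion \eqref{eq:batch-Q-recursion}.} Expand \(q_tq_t^\top\). The first piece is \((I-\gamma_t\bar Z_t)q_{t-1}q_{t-1}^\top(I-\gamma_t\bar Z_t)\). Because \(q_{t-1}\) depends only on batches \(1,\dots,t-1\), it is independent of \(\bar Z_t\), and its expectation is \(\mathcal A_t(Q_{t-1})\). The driving piece contributes \(\gamma_t^2\Lambda_t\). The cross term is
\[
\gamma_t\mathbb E[(I-\gamma_t\bar Z_t)q_{t-1}m_{t-1}^\top(\Sigma-\bar Z_t)].
\]
Conditioning on batches \(1,\dots,t-1\), this equals \(\gamma_t\,\mathbb E[(I-\gamma_t\bar Z_t)\,\mathbb E[q_{t-1}]\,m_{t-1}^\top(\Sigma-\bar Z_t)]\), since \(\bar Z_t\) is independent and \(q_{t-1}\) enters linearly. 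Proposition~\ref{prop:batch-centered-var-split} gives \(\mathbb E[q_{t-1}]=0\), so the cross term and its transpose vanish. This is the only place the zero-mean property is needed, and it is where care is required.

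\emph{Uniform bound.} Put \(K_q:=\gamma\alpha\|u^\ast\|_\Sigma^2/(1-\gamma R_\star^2)\) and induct from \(Q_0=0\). Assume \(Q_{t-1}\preceq K_qI\). Monotonicity of \(\mathcal A_t\) and \(\mathbb E[\bar Z_t^2]\preceq R_\star^2\Sigma\), which follows from \eqref{eq:dynamic-batch-fourth-moment} and Lemma~\ref{lem:aux-sketch-moment} as before, give
\[
\mathcal A_t(Q_{t-1})\preceq K_q\bigl(I-2\gamma_t\Sigma+\gamma_t^2R_\star^2\Sigma\bigr).
\]
Combining with the bound on \(\Lambda_t\) and \(B_t\ge1\),
\[
Q_t\preceq K_qI+\gamma_t\Bigl(-2K_q+\gamma_tK_qR_\star^2+\gamma_t\alpha\|u^\ast\|_\Sigma^2\Bigr)\Sigma.
\]
By definition \(\gamma\alpha\|u^\ast\|_\Sigma^2=K_q(1-\gamma R_\star^2)\), and \(\gamma_t\le\gamma\), so the bracket is at most \(K_q(-2+2\gamma R_\star^2)\). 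Under \eqref{eq:dynamic-batch-uniform-stability} this is negative, hence \(Q_t\preceq K_qI\), which closes the induction exactly as in Lemma~\ref{lem:batch-wu-C1}.
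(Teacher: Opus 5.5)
Your proof is correct and follows the paper's route: the centered fourth-moment identity with Lemma~\ref{lem:aux-sketch-moment} and $\Sigma$-contraction of the mean recursion for $\Lambda_t$, vanishing cross terms from independence and $\mathbb E[q_{t-1}]=0$, and the same induction as in Lemma~\ref{lem:batch-wu-C1}. One arithmetic slip in the last step: using $\gamma_t\alpha\|u^\ast\|_\Sigma^2\le\gamma\alpha\|u^\ast\|_\Sigma^2=K_q(1-\gamma R_\star^2)$, the bracket is at most $-2K_q+\gamma K_qR_\star^2+K_q(1-\gamma R_\star^2)=-K_q$, not $K_q(-2+2\gamma R_\star^2)$ (your bound fails whenever $\gamma R_\star^2<1/2$ and $\gamma_t=\gamma$), but $-K_q\le 0$, so the induction still closes.
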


\begin{proof}
Equation~\eqref{eq:dynamic-batch-centered-fourth-moment} and
Lemma~\ref{lem:aux-sketch-moment} give
\[
\Lambda_t
\preceq
\frac{\alpha}{B_t}
\operatorname{tr}(\Sigma m_{t-1}m_{t-1}^\top)\Sigma.
\]
The deterministic mean recursion is contractive in the \(\Sigma\)-norm, so
\(\|m_{t-1}\|_\Sigma\le\|u^\ast\|_\Sigma\).

The cross terms in the second moment of \(q_t\) vanish: \(q_{t-1}\) is
independent of the current batch and \(\mathbb E[q_{t-1}]=0\). This proves
\eqref{eq:batch-Q-recursion}. The uniform estimate follows from the same
induction as in Lemma~\ref{lem:batch-wu-C1}, with
\(\widetilde\sigma^2(w^\ast)\) replaced by
\(\alpha\|u^\ast\|_\Sigma^2\).
\end{proof}

\begin{theorem}[Bound on the covariance-fluctuation component]
\label{thm:batch-centered-C1}
Under the assumptions of Lemma~\ref{lem:batch-centered-C1} and
\eqref{eq:dynamic-batch-uniform-stability},
\[
0\le
\mathrm{Var}_{B_{1:T}}^{\mathrm{cov}}
\le
\frac{\alpha\|u^\ast\|_\Sigma^2}
{1-\gamma R_\star^2}\,
\mathcal K_{B_{1:T}}.
\]
\end{theorem}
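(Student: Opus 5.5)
The plan mirrors the upper-bound half of Theorem~\ref{thm:batch-wu-C1}, using the recursion and uniform estimate from Lemma~\ref{lem:batch-centered-C1}. The lower bound \(\mathrm{Var}_{B_{1:T}}^{\mathrm{cov}}\ge0\) is immediate, since it is the expectation of a squared norm. For the upper bound, write \(K_q:=\gamma\alpha\|u^\ast\|_\Sigma^2/(1-\gamma R_\star^2)\). By Lemma~\ref{lem:batch-centered-C1}, \(Q_t\preceq K_qI\) for every \(t\), and \(Q_t=\mathcal A_t(Q_{t-1})+\gamma_t^2\Lambda_t\) with \(\Lambda_t\preceq \alpha\|u^\ast\|_\Sigma^2\Sigma/B_t\).

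The first step splits the multiplicative operator into its deterministic part and the batch-covariance fluctuation:
\[
\mathcal A_t(Q_{t-1})=\mathcal D_t(Q_{t-1})+\gamma_t^2\mathbb E[(\bar Z_t-\Sigma)Q_{t-1}(\bar Z_t-\Sigma)].
\]
By \eqref{eq:dynamic-batch-centered-fourth-moment}, the second term equals \(\frac{\gamma_t^2}{B_t}(\mathbb E[zz^\top Q_{t-1}zz^\top]-\Sigma Q_{t-1}\Sigma)\). Lemma~\ref{lem:aux-sketch-moment} bounds it by \(\frac{\alpha\gamma_t^2}{B_t}\operatorname{tr}(\Sigma Q_{t-1})\Sigma\). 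The uniform bound then gives \(\operatorname{tr}(\Sigma Q_{t-1})\le K_q\operatorname{tr}(\Sigma)\).

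Combining the two pieces, the recursion becomes
\[
Q_t\preceq\mathcal D_t(Q_{t-1})+\frac{\gamma_t^2}{B_t}\bigl(\alpha\|u^\ast\|_\Sigma^2+\alpha K_q\operatorname{tr}(\Sigma)\bigr)\Sigma.
\]
The constant in parentheses simplifies because \(\gamma\alpha\operatorname{tr}(\Sigma)=\gamma R_\star^2\):
\[
\alpha\|u^\ast\|_\Sigma^2\Bigl(1+\frac{\gamma R_\star^2}{1-\gamma R_\star^2}\Bigr)=\frac{\alpha\|u^\ast\|_\Sigma^2}{1-\gamma R_\star^2}.
\]
Each driving term keeps its own factor \(1/B_t\), so the effective-batch structure is preserved.

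The second step unrolls this from \(Q_0=0\). The maps \(\mathcal D_s\) are monotone on PSD matrices and commute with \(\Sigma\), so
\[
Q_T\preceq\frac{\alpha\|u^\ast\|_\Sigma^2}{1-\gamma R_\star^2}\sum_{t=1}^T\frac{\gamma_t^2}{B_t}\Bigl[\prod_{s=t+1}^T(I-\gamma_s\Sigma)^2\Bigr]\Sigma.
\]
Taking the inner product with \(\Sigma\) and using \(\mathrm{Var}^{\mathrm{cov}}_{B_{1:T}}=\langle\Sigma,Q_T\rangle\) together with definition \eqref{eq:raw-var-batch} yields the claim.

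The only delicate point is justifying monotone propagation. Unrolling an inequality \(Q_t\preceq\mathcal D_t(Q_{t-1})+(\cdot)\) requires \(\mathcal D_t\) to be order-preserving. This holds because \(\mathcal D_t(A)=P_tAP_t\) with \(P_t\) symmetric. The fourth-moment bound is used with \(A=Q_{t-1}\succeq0\), which is valid since \(Q_{t-1}\) is a covariance.
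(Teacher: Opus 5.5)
Your proposal is correct and follows the paper's proof essentially step for step. You split $\mathcal A_t$ into $\mathcal D_t$ plus the centered fourth-moment term, and you bound that term through the uniform estimate $Q_{t-1}\preceq K_qI$, so each driving term keeps its own $1/B_t$. You then simplify the constant via $\gamma\alpha\operatorname{tr}(\Sigma)=\gamma R_\star^2$ and unroll the deterministic maps before pairing with $\Sigma$; your explicit checks that $\mathcal D_t$ is order-preserving and that $-\Sigma Q_{t-1}\Sigma$ may be dropped fill in details the paper leaves implicit.
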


\begin{proof}
Let
\[
K_q:=
\frac{\gamma\alpha\|u^\ast\|_\Sigma^2}
{1-\gamma R_\star^2}.
\]
As in the additive-noise proof,
\[
\mathcal A_t(Q_{t-1})
\preceq
\mathcal D_t(Q_{t-1})
+
\frac{\alpha K_q\operatorname{tr}(\Sigma)\gamma_t^2}{B_t}\Sigma.
\]
Combining this with the bound on \(\Lambda_t\) gives
\[
Q_t
\preceq
\mathcal D_t(Q_{t-1})
+
\frac{\gamma_t^2}{B_t}
\frac{\alpha\|u^\ast\|_\Sigma^2}
{1-\gamma R_\star^2}\Sigma.
\]
Unrolling and taking the inner product with \(\Sigma\) proves the result.
\end{proof}

\begin{proposition}[Raw dynamic-batch variance bound]
\label{prop:raw-var-controls-variance}
Under Assumptions~\ref{ass:data-gaussian} and
\ref{ass:data-well-specified}, and under
\eqref{eq:dynamic-batch-uniform-stability},
\[
\sigma^2\mathcal K_{B_{1:T}}
\le
\mathrm{Var}_{B_{1:T}}^{\mathrm{noise}}
\le
\frac{\widetilde\sigma^2(w^\ast)}
{1-\gamma R_\star^2}\mathcal K_{B_{1:T}},
\]
\[
0\le
\mathrm{Var}_{B_{1:T}}^{\mathrm{cov}}
\le
\frac{\alpha\|u^\ast\|_\Sigma^2}
{1-\gamma R_\star^2}\mathcal K_{B_{1:T}},
\]
and consequently
\[
\sigma^2\mathcal K_{B_{1:T}}
\le
\mathrm{Var}_{B_{1:T}}
\le
\frac{\overline\sigma^2(w^\ast)}
{1-\gamma R_\star^2}\mathcal K_{B_{1:T}}.
\]
\end{proposition}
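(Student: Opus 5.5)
The plan is to assemble the three displays directly from results already established in this section, since Proposition~\ref{prop:raw-var-controls-variance} is a packaging statement. The first display is exactly Theorem~\ref{thm:batch-wu-C1}: its lower bound comes from unrolling \eqref{eq:batch-C-recursion} with $\mathcal A_t\succeq\mathcal D_t$ and $\Sigma_{\xi,t}\succeq(\sigma^2/B_t)\Sigma$. Its upper bound comes from the uniform control \eqref{eq:dynamic-C-uniform} combined with the fourth-moment bound, which inflates the driving term by the factor $1/(1-\gamma R_\star^2)$ while keeping the exact $1/B_t$. The second display is Theorem~\ref{thm:batch-centered-C1}. Its nonnegativity is immediate because $\mathrm{Var}^{\mathrm{cov}}_{B_{1:T}}=\mathbb E\|q_T\|_\Sigma^2$.

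For the combined display, I will invoke Proposition~\ref{prop:batch-centered-var-split}, which gives the exact orthogonal split
\[
\mathrm{Var}_{B_{1:T}}=\mathrm{Var}^{\mathrm{cov}}_{B_{1:T}}+\mathrm{Var}^{\mathrm{noise}}_{B_{1:T}}.
\]
The lower bound then follows by dropping the nonnegative covariance part and using $\sigma^2\mathcal K_{B_{1:T}}\le\mathrm{Var}^{\mathrm{noise}}_{B_{1:T}}$. For the upper bound, I add the two upper bounds to get the prefactor
\[
\frac{\widetilde\sigma^2(w^\ast)+\alpha\|u^\ast\|_\Sigma^2}{1-\gamma R_\star^2}.
\]
It remains to compare this with $\overline\sigma^2(w^\ast)=\widetilde\sigma^2(w^\ast)+\alpha\|w^\ast\|_H^2$, which requires $\|u^\ast\|_\Sigma^2\le\|w^\ast\|_H^2$. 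Write $\|u^\ast\|_\Sigma^2=\|\Sigma^{1/2}u^\ast\|^2=w^{\ast\top}H^{1/2}\Pi H^{1/2}w^\ast$ with
\[
\Pi:=H^{1/2}S^\top\Sigma^{-1}SH^{1/2}.
\]
This $\Pi$ is an orthogonal projection, which is the same fact used as $0\preceq\cdot\preceq I$ in the proof of Lemma~\ref{lem:approx-upper}. Hence $\|u^\ast\|_\Sigma^2\le\|H^{1/2}w^\ast\|^2=\|w^\ast\|_H^2$.

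The only genuine obstacle is bookkeeping, namely making sure the same stability condition \eqref{eq:dynamic-batch-uniform-stability} and the same constant $\alpha$ are used in both components so the denominators agree. All the delicate work, in particular retaining the per-step factor $1/B_t$ rather than replacing it by $1/B_{\min}$ in the multiplicative perturbation, was already done in Lemmas~\ref{lem:batch-wu-C1} and \ref{lem:batch-centered-C1}. I will therefore simply cite them and state the combined sandwich.
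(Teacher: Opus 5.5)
Your proposal is correct and follows the paper's proof essentially verbatim: combine Proposition~\ref{prop:batch-centered-var-split} with Theorems~\ref{thm:batch-wu-C1} and \ref{thm:batch-centered-C1}. Then use that $H^{1/2}S^\top\Sigma^{-1}SH^{1/2}$ is an orthogonal projector to get $\|u^\ast\|_\Sigma^2\le\|w^\ast\|_H^2$, which turns the summed prefactor into $\overline\sigma^2(w^\ast)/(1-\gamma R_\star^2)$.
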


\begin{proof}
Combine Proposition~\ref{prop:batch-centered-var-split},
Theorem~\ref{thm:batch-wu-C1}, and
Theorem~\ref{thm:batch-centered-C1}. To compare the two norms, set
\[
P:=H^{1/2}S^\top\Sigma^{-1}SH^{1/2}.
\]
Since \(\Sigma=SHS^\top\), one has \(P^2=P\), so \(P\) is an orthogonal
projector. Therefore
\[
\|u^\ast\|_\Sigma^2
=
w^{\ast\top}H^{1/2}PH^{1/2}w^\ast
\le
\|w^\ast\|_H^2,
\]
which yields the stated last constant.
\end{proof}

\subsection{Bounds under the source condition}
\label{subsec:batch-variance-source}

\begin{lemma}[Source-condition bounds for the exact one-pass variance terms]
\label{lem:batch-var-source}
Assume Assumptions~\ref{ass:data-gaussian},
\ref{ass:data-well-specified}, \ref{ass:data-power-law},
\ref{ass:source-diag}, and \ref{ass:gaussian-sketch}, together with the WSD
conditions in Theorem~\ref{thm:onepass-batch-sgd-scaling}. Suppose
\[
\sigma^2\asymp1,
\qquad
A_{\mathrm d}^{\mathrm{op}}=\gamma\tau_{\mathrm d}^{\mathrm{op}}\gtrsim1.
\]
Then, with probability at least \(1-\exp(-\Omega(M))\) over \(S\),
\[
\mathbb E_{w^\ast}[
\mathrm{Var}_{B_{1:T}}^{\mathrm{noise}}]
\asymp
\frac{\min\{M,(A_{\mathrm d}^{\mathrm{op}})^{1/a}\}}
{\tau_{\mathrm d}^{\mathrm{op}}B_{\mathrm{eff}}},
\]
\[
\mathbb E_{w^\ast}[
\mathrm{Var}_{B_{1:T}}^{\mathrm{cov}}]
\lesssim
\frac{\min\{M,(A_{\mathrm d}^{\mathrm{op}})^{1/a}\}}
{\tau_{\mathrm d}^{\mathrm{op}}B_{\mathrm{eff}}},
\]
and therefore
\[
\mathbb E_{w^\ast}[\mathrm{Var}_{B_{1:T}}]
\asymp
\frac{\min\{M,(A_{\mathrm d}^{\mathrm{op}})^{1/a}\}}
{\tau_{\mathrm d}^{\mathrm{op}}B_{\mathrm{eff}}}.
\]
\end{lemma}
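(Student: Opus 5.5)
The argument starts from Proposition~\ref{prop:raw-var-controls-variance}, which sandwiches both variance components between constant multiples of the dynamic raw kernel \(\mathcal K_{B_{1:T}}\). The constants are \(\sigma^2\) from below and \(\overline\sigma^2(w^\ast)/(1-\gamma R_\star^2)\) or \(\alpha\|u^\ast\|_\Sigma^2/(1-\gamma R_\star^2)\) from above. Since \(\mathcal K_{B_{1:T}}\) depends only on \(S\) and the schedule, not on \(w^\ast\), taking \(\mathbb E_{w^\ast}\) only affects these prefactors. By Assumption~\ref{ass:source-diag}, \(\mathbb E\|w^\ast\|_H^2\asymp\sum_i i^{-b}\asymp1\) because \(b>1\). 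Also \(\|u^\ast\|_\Sigma^2\le\|w^\ast\|_H^2\), as shown in the proof of that proposition. Together with \(\sigma^2\asymp1\) and \(\gamma R_\star^2\le1/4\), this gives \(\mathbb E_{w^\ast}[\overline\sigma^2(w^\ast)]\asymp1\), and hence
\[
\mathbb E_{w^\ast}[\mathrm{Var}^{\mathrm{noise}}_{B_{1:T}}]\asymp\mathcal K_{B_{1:T}},\qquad
\mathbb E_{w^\ast}[\mathrm{Var}^{\mathrm{cov}}_{B_{1:T}}]\lesssim\mathcal K_{B_{1:T}}.
\]
We work on the event of Lemma~\ref{lem:aux-G4-E5}, where \(\mu_j(\Sigma)\asymp j^{-a}\) for \(j\in[M]\), so \(\operatorname{tr}(\Sigma)\asymp1\) and \(\gamma R_\star^2\le1/4\) is guaranteed by \(\gamma\le c\).

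Using \eqref{eq:dynamic-kernel-effective-batch}, \(\mathcal K_{B_{1:T}}=B_{\mathrm{eff}}^{-1}\sum_t\kappa^{\mathrm{op}}_t\). The claim therefore reduces to the schedule-free estimate
\[
\sum_{t=1}^T\kappa_t^{\mathrm{op}}\asymp\frac{\min\{M,(A_{\mathrm d}^{\mathrm{op}})^{1/a}\}}{\tau^{\mathrm{op}}_{\mathrm d}}.
\]
For this I exchange the sums and write \(\sum_t\kappa_t=\sum_j\mu_j^2 F(\mu_j)\), where \(F(\mu):=\sum_t\gamma_t^2\prod_{s>t}(1-\gamma_s\mu)^2\). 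The key scalar claim is
\[
F(\mu)\asymp\gamma\min\{\tau_{\mathrm d},1/(\gamma\mu)\}\Big/\bigl(\mu\cdot\text{stuff}\bigr),
\]
and more concretely I aim to show \(\mu^2F(\mu)\asymp\tau_{\mathrm d}^{-1}\min\{1,(A_{\mathrm d}\mu)^2\}\).

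Heuristically, the noise injected during the stable phase must survive the entire decay phase, whose mass is \(A_T-A_{T_{\mathrm s}}\asymp A_{\mathrm d}\log T\). The decay mass \(\gamma\tau_{\mathrm d}(1-T^{-1})\) is indeed \(\asymp A_{\mathrm d}\), so an extra \(\log T\) factor has to be tracked carefully. In the decay phase, \(\gamma_t=\gamma e^{-r/\tau_{\mathrm d}}\) and the remaining mass is \(R_t\approx\gamma_t\tau_{\mathrm d}\), as noted after Lemma~\ref{lem:app-final-risk-influence}. Substituting \(g=\gamma_t\), the sum \(\sum_t\gamma_t^2e^{-c\mu R_t}\) becomes \(\tau_{\mathrm d}\int_{\gamma/T}^{\gamma} g\,e^{-c\mu\tau_{\mathrm d}g}\,dg\). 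This is \(\asymp\tau_{\mathrm d}\gamma^2\) when \(\mu A_{\mathrm d}\lesssim1\), and \(\asymp\tau_{\mathrm d}^{-1}\mu^{-2}\) when \(\mu A_{\mathrm d}\gg1\), as long as \(\mu\gamma/T\cdot\tau_{\mathrm d}\lesssim1\), which holds since \(\mu\le\mu_1\lesssim1\).

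The warmup and stable contributions carry the factor \(e^{-c\mu(R_{T_{\mathrm s}}+\cdots)}\) with \(R_{T_{\mathrm s}}\asymp A_{\mathrm d}\). They contribute at most \(\gamma^2/(\gamma\mu)\cdot e^{-c\mu A_{\mathrm d}}\lesssim\gamma\mu^{-1}e^{-c\mu A_{\mathrm d}}\), which is dominated by the decay contribution in both regimes. In the regime \(\mu A_{\mathrm d}\lesssim1\) it is at most \(T\gamma^2\), which is not obviously dominated by \(\tau_{\mathrm d}\gamma^2\). Here I would instead use the pointwise bound from the stable phase, \(\gamma^2\min\{T,1/(\gamma\mu)\}\), and note that directions with \(\mu A_{\mathrm d}\le1\) then contribute \(\mu^2\gamma^2\min\{T,(\gamma\mu)^{-1}\}\). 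Summing over \(j\) with \(\mu_j\asymp j^{-a}\), and splitting at \(j\asymp (A_{\mathrm d})^{1/a}\) in the style of the proof of Lemma~\ref{lem:app-final-risk-influence}, should give \(\gamma^2\tau_{\mathrm d}\sum_{j\gtrsim A_{\mathrm d}^{1/a}}\mu_j^2+\tau_{\mathrm d}^{-1}\#\{j\lesssim A_{\mathrm d}^{1/a}\}\asymp\tau_{\mathrm d}^{-1}\min\{M,A_{\mathrm d}^{1/a}\}\). The convergent sum \(\sum_{j>k}j^{-2a}\asymp k^{1-2a}\) with \(k=A_{\mathrm d}^{1/a}\) gives \(\gamma^2\tau_{\mathrm d}A_{\mathrm d}^{1/a-2}=\tau_{\mathrm d}^{-1}A_{\mathrm d}^{1/a}\). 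The lower bound comes from restricting to the decay phase and to the block \(j\in[k,2k]\cap[M]\), or to \(j\le M\) when \(k>M\).

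The main obstacle is controlling the stable-phase contribution for the small-\(\mu\) directions \(\mu\lesssim1/A_{\mathrm d}\). There the noise accumulated over \(\asymp T\) stable steps is damped only by \(e^{-c\mu\gamma(T-t)}\), so it gives \(\mu^2\gamma^2\min\{T,1/(\gamma\mu)\}\). To handle this I would compare \(1/(\gamma\mu)\) with \(\tau_{\mathrm d}\) and check that the tail sum \(\sum_{j:\mu_j\le 1/A_{\mathrm d}}\gamma\mu_j\) is still \(\lesssim\tau_{\mathrm d}^{-1}A_{\mathrm d}^{1/a}\), i.e.\ \(\gamma A_{\mathrm d}^{1/a-1}\cdot\tau_{\mathrm d}\gamma/\ldots\). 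If this \(\log T\) bookkeeping fails, I would fall back on the exact product bounds of Lemma~\ref{lem:onepass-wsd-calculus}, applied phase by phase.
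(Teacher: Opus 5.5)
Your proposal is correct and follows the paper's route. Proposition~\ref{prop:raw-var-controls-variance}, together with $\mathbb E_{w^\ast}\|w^\ast\|_H^2\lesssim1$ and $\|u^\ast\|_\Sigma\le\|w^\ast\|_H$, reduces everything to $\sum_t\kappa_t^{\mathrm{op}}$, which the paper simply cites from Lemma~\ref{lem:batch-var-E2} (resting on the phase-by-phase Lemma~\ref{lem:batch-var-E1}) instead of re-deriving it. The stable-phase tail check you flag is exactly the paper's step, and it closes, since $\gamma\sum_{j\gtrsim (A_{\mathrm d}^{\mathrm{op}})^{1/a}}j^{-a}\asymp\gamma(A_{\mathrm d}^{\mathrm{op}})^{1/a-1}=(A_{\mathrm d}^{\mathrm{op}})^{1/a}/\tau_{\mathrm d}^{\mathrm{op}}$. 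You should drop the per-eigenvalue target $\mu^2F(\mu)\asymp(\tau_{\mathrm d}^{\mathrm{op}})^{-1}\min\{1,(A_{\mathrm d}^{\mathrm{op}}\mu)^2\}$: it is off by a factor up to $\log T$ when $\mu\ll1/A_{\mathrm d}^{\mathrm{op}}$ and holds only after power-law summation.
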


\begin{proof}
On the power-law sketch event,
\(\operatorname{tr}(\Sigma)\lesssim\sum_{j\ge1}j^{-a}\lesssim1\). Thus the
constant in \(\gamma\le c\) can be chosen so that
\(\gamma R_\star^2\le1/4\), and Proposition~\ref{prop:raw-var-controls-variance}
applies.
Under the source condition and \(b>1\),
\[
\mathbb E_{w^\ast}\|w^\ast\|_H^2
\asymp
\sum_{j\ge1}j^{-b}
\lesssim1.
\]
Hence
\(\mathbb E_{w^\ast}\widetilde\sigma^2(w^\ast)\lesssim1\) and
\(\mathbb E_{w^\ast}\overline\sigma^2(w^\ast)\lesssim1\).
Apply Proposition~\ref{prop:raw-var-controls-variance} and
Lemma~\ref{lem:batch-var-E2} below. The additive-noise lower bound supplies
the matching lower order for the total centered variance.
\end{proof}

\subsection{The WSD influence kernel}
\label{subsec:batch-variance-spectral}

\begin{lemma}[Two-sided WSD influence-kernel bound under power-law decay]
\label{lem:batch-var-E1}
Assume the WSD phase-length and stability conditions in
Theorem~\ref{thm:onepass-batch-sgd-scaling}. On the power-law sketch event
\(\mu_j(\Sigma)\asymp j^{-a}\), suppose
\(A_{\mathrm d}^{\mathrm{op}}=\gamma\tau_{\mathrm d}^{\mathrm{op}}\gtrsim1\). Then
\begin{equation}
\sum_{t=1}^T\kappa_t^{\mathrm{op}}
\asymp
\frac1{\tau_{\mathrm d}^{\mathrm{op}}}
\sum_{j=1}^M
\min\!\left\{
1,(A_{\mathrm d}^{\mathrm{op}}\mu_j(\Sigma))^2
\right\}.
\label{eq:wsd-kernel-effective-dimension}
\end{equation}
Consequently,
\begin{equation}
\sum_{t=1}^T\kappa_t^{\mathrm{op}}
\asymp
\frac{\min\{M,(A_{\mathrm d}^{\mathrm{op}})^{1/a}\}}{\tau_{\mathrm d}^{\mathrm{op}}}.
\label{eq:wsd-kernel-power-law}
\end{equation}
\end{lemma}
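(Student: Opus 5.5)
The plan is to interchange the sums over $t$ and $j$. Writing
\[
\sum_{t=1}^T\kappa_t^{\mathrm{op}}=\sum_{j=1}^M\mu_j^2F(\mu_j),
\qquad
F(\mu):=\sum_{t=1}^T\gamma_t^2\prod_{s=t+1}^T(1-\gamma_s\mu)^2,
\]
I estimate the scalar filter $F(\mu)$ for each fixed $\mu\in(0,\|\Sigma\|_2]$. Then I sum against the power-law spectrum $\mu_j\asymp j^{-a}$, which holds on the sketch event of Lemma~\ref{lem:aux-G4-E5}. As in Lemma~\ref{lem:onepass-wsd-calculus}, the scalar bounds $-u/(1-c_0)\le\log(1-u)\le -u$ give, for the residual mass $R_t:=\sum_{s>t}\gamma_s$,
\[
\prod_{s>t}(1-\gamma_s\mu)^2\in[e^{-C\mu R_t},\,e^{-c\mu R_t}].
\]
So it suffices to control $\sum_t\gamma_t^2e^{-c'\mu R_t}$ for fixed constants $c'$. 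I split this sum into the warmup/stable part ($t\le T_{\mathrm s}$) and the decay part.

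\emph{Decay phase.} For $t>T_{\mathrm s}$ one has $R_t\asymp\tau_{\mathrm d}^{\mathrm{op}}(\gamma_t-\gamma/T)$ up to constants, which is $\asymp\tau_{\mathrm d}^{\mathrm{op}}\gamma_t$ except in the last $O(\tau_{\mathrm d}^{\mathrm{op}})$ steps. The map $t\mapsto\gamma_t$ is geometric, with $\gamma_t/\gamma_{t+1}=e^{1/\tau_{\mathrm d}^{\mathrm{op}}}$, so the sum becomes a Riemann sum with measure $\tau_{\mathrm d}^{\mathrm{op}}\,d\gamma/\gamma$. This gives
\[
\sum_{t>T_{\mathrm s}}\gamma_t^2e^{-c'\mu R_t}\asymp\tau_{\mathrm d}^{\mathrm{op}}\int_{\gamma/T}^{\gamma}ge^{-c''\mu\tau_{\mathrm d}^{\mathrm{op}}g}\,dg\asymp\min\Bigl\{\gamma^2\tau_{\mathrm d}^{\mathrm{op}},\frac{1}{\mu^2\tau_{\mathrm d}^{\mathrm{op}}}\Bigr\},
\]
where I use $A_{\mathrm d}^{\mathrm{op}}\gtrsim1$ to absorb the endpoint effects. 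Multiplied by $\mu^2$, this is exactly $(\tau_{\mathrm d}^{\mathrm{op}})^{-1}\min\{1,(A_{\mathrm d}^{\mathrm{op}}\mu)^2\}$. This already yields the lower bound in \eqref{eq:wsd-kernel-effective-dimension}, and hence in \eqref{eq:wsd-kernel-power-law}.

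\emph{Stable and warmup phases.} For $t\le T_{\mathrm s}$, I have $R_t\ge R_{T_{\mathrm s}}\asymp A_{\mathrm d}^{\mathrm{op}}$ and $R_t\ge\gamma(T_{\mathrm s}-t)$. Summing the geometric tail in $t$ therefore gives
\[
\sum_{t\le T_{\mathrm s}}\gamma_t^2e^{-c'\mu R_t}\lesssim\gamma^2\min\Bigl\{T,\frac1{\gamma\mu}\Bigr\}e^{-c\mu A_{\mathrm d}^{\mathrm{op}}}.
\]
Here I expect the main obstacle. Per direction, this stable contribution is \emph{not} dominated by the decay term. For $1/A_T\lesssim\mu\lesssim1/A_{\mathrm d}^{\mathrm{op}}$ it is of order $\gamma\mu$ after multiplying by $\mu^2$, whereas the right side of \eqref{eq:wsd-kernel-effective-dimension} is of order $\gamma A_{\mathrm d}^{\mathrm{op}}\mu^2$, which is smaller by the factor $A_{\mathrm d}^{\mathrm{op}}\mu$. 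The comparison must therefore be made only after summing over $j$. I would prove \eqref{eq:wsd-kernel-power-law} directly and obtain \eqref{eq:wsd-kernel-effective-dimension} by observing that its right side is itself $\asymp\min\{M,(A_{\mathrm d}^{\mathrm{op}})^{1/a}\}/\tau_{\mathrm d}^{\mathrm{op}}$, via the usual split at $k=(A_{\mathrm d}^{\mathrm{op}})^{1/a}$ (cf.\ Lemma~\ref{lem:app-final-risk-influence}).

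Concretely, I sum $\mu_j^2\cdot\gamma^2\min\{T,1/(\gamma\mu_j)\}e^{-c\mu_jA_{\mathrm d}^{\mathrm{op}}}$ over $j\le M$ in three ranges.
\begin{itemize}
\item For $j\lesssim(A_{\mathrm d}^{\mathrm{op}})^{1/a}$, the exponential kills the sum. It is at most $\gamma\sum_j\mu_je^{-c\mu_jA_{\mathrm d}^{\mathrm{op}}}\lesssim\gamma(A_{\mathrm d}^{\mathrm{op}})^{1/a-1}$ by the dyadic argument of Lemma~\ref{lem:app-final-risk-influence}.
\item For $(A_{\mathrm d}^{\mathrm{op}})^{1/a}\lesssim j\lesssim A_T^{1/a}$, the sum is at most $\gamma\sum_{j\gtrsim (A_{\mathrm d}^{\mathrm{op}})^{1/a}}j^{-a}\lesssim\gamma(A_{\mathrm d}^{\mathrm{op}})^{1/a-1}$, using $a>1$.
\item For $j\gtrsim A_T^{1/a}$, the sum is at most $\gamma^2T\sum_{j\gtrsim A_T^{1/a}}j^{-2a}\lesssim\gamma A_T^{1/a-1}$, which is even smaller.
\end{itemize}
Since $\gamma(A_{\mathrm d}^{\mathrm{op}})^{1/a-1}=(A_{\mathrm d}^{\mathrm{op}})^{1/a}/\tau_{\mathrm d}^{\mathrm{op}}$, the stable part is absorbed when $(A_{\mathrm d}^{\mathrm{op}})^{1/a}\lesssim M$.

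When $M\lesssim(A_{\mathrm d}^{\mathrm{op}})^{1/a}$, every retained direction has $\mu_jA_{\mathrm d}^{\mathrm{op}}\gtrsim1$. The stable part per direction is then $\le\gamma\mu_je^{-c\mu_jA_{\mathrm d}^{\mathrm{op}}}\lesssim1/\tau_{\mathrm d}^{\mathrm{op}}$, so the total is $\lesssim M/\tau_{\mathrm d}^{\mathrm{op}}$. This matches the decay-phase sum, which in this regime is $\asymp M/\tau_{\mathrm d}^{\mathrm{op}}$ since each direction contributes $\asymp1/\tau_{\mathrm d}^{\mathrm{op}}$.

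Combining the decay-phase two-sided estimate with these absorption bounds proves \eqref{eq:wsd-kernel-power-law}, and hence \eqref{eq:wsd-kernel-effective-dimension}. The delicate points are the Riemann-sum comparison in the decay phase near $t=T$, where $R_t\to0$ (handled by $A_{\mathrm d}^{\mathrm{op}}\gtrsim1$ and monotonicity), and the middle spectral range in the stable phase discussed above.
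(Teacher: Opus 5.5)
Your proposal is correct and follows essentially the same route as the paper. In both, a per-eigenvalue geometric-filter computation on the decay phase gives the two-sided main term, and hence the lower bound. The earlier phases are bounded only after summing over the power-law spectrum, and the saturated case $M\lesssim (A_{\mathrm d}^{\mathrm{op}})^{1/a}$ is handled by $ye^{-cy}\lesssim 1$.

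The differences are cosmetic. You fold warmup into the stable-phase estimate rather than treating it separately. You also derive the effective-dimension form from the power-law form via the split at $(A_{\mathrm d}^{\mathrm{op}})^{1/a}$, which the paper uses implicitly; this split is the step that genuinely needs $A_{\mathrm d}^{\mathrm{op}}\gtrsim 1$, not the decay-phase endpoint.
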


\begin{proof}
Write
\[
R_t:=\sum_{s=t+1}^T\gamma_s.
\]
The stability condition and the logarithmic product bound imply
\[
\prod_{s=t+1}^T(1-\gamma_s\mu)^2
\le e^{-c\mu R_t}.
\]
For a power-law spectrum we repeatedly use
\begin{equation}
\sum_{j=1}^M\mu_j^2e^{-r\mu_j}
\lesssim
(1+r)^{-(2-1/a)}.
\label{eq:wsd-spectral-sum}
\end{equation}

First consider the decay phase. With \(t=T_{\mathrm s}+r\), its rates are
\(\gamma e^{-r/\tau_{\mathrm d}^{\mathrm{op}}}\), \(1\le r\le T_{\mathrm d}\), and
\(e^{-T_{\mathrm d}/\tau_{\mathrm d}^{\mathrm{op}}}=T^{-1}\). The geometric filter
calculation, using the logarithmic product bounds valid for
\(\gamma\mu_j\le c_0<1\), gives
\begin{equation}
\sum_{t=T_{\mathrm s}+1}^T\kappa_t^{\mathrm{op}}
\asymp
\frac1{\tau_{\mathrm d}^{\mathrm{op}}}
\sum_{j=1}^M
\min\{1,(A_{\mathrm d}^{\mathrm{op}}\mu_j)^2\}.
\label{eq:wsd-decay-kernel}
\end{equation}
This already proves the required lower bound.

It remains to show that the earlier phases do not have a larger aggregate
order. Every warmup update has at least a constant fraction of the stable
phase after it, so \(R_t\gtrsim\gamma T\). Since
\(\sum_{t\le T_{\mathrm w}}\gamma_t^2\lesssim\gamma^2T\),
\eqref{eq:wsd-spectral-sum} gives
\begin{equation}
\sum_{t=1}^{T_{\mathrm w}}\kappa_t^{\mathrm{op}}
\lesssim
\gamma^{1/a}T^{1/a-1}
\lesssim
\frac{(A_{\mathrm d}^{\mathrm{op}})^{1/a}}{\tau_{\mathrm d}^{\mathrm{op}}}.
\label{eq:wsd-warmup-kernel}
\end{equation}
If \(M<(A_{\mathrm d}^{\mathrm{op}})^{1/a}\), all retained eigenvalues lie above the decay
cutoff. In this case, writing \(y_j:=\gamma T\mu_j\) gives the direct bound
\[
\sum_{t=1}^{T_{\mathrm w}}\kappa_t^{\mathrm{op}}
\lesssim
\frac1T\sum_{j=1}^M y_j^2e^{-c y_j}
\lesssim
\frac{M}{T}
\lesssim
\frac{M}{\tau_{\mathrm d}^{\mathrm{op}}}.
\]

For a stable-phase update write \(k=T_{\mathrm s}-t\). The remaining
learning-rate mass satisfies
\(R_t\gtrsim\gamma(\tau_{\mathrm d}^{\mathrm{op}}+k)\). Summing first over \(k\) yields,
for each eigenvalue,
\[
\sum_{t=T_{\mathrm w}+1}^{T_{\mathrm s}}
\gamma^2\mu_j^2e^{-c\mu_jR_t}
\lesssim
\gamma\mu_j e^{-c'A_{\mathrm d}^{\mathrm{op}}\mu_j}.
\]
Consequently, power-law summation in the unsaturated case and the elementary
bound \(ye^{-c'y}\lesssim1\) in the saturated case give
\begin{equation}
\sum_{t=T_{\mathrm w}+1}^{T_{\mathrm s}}\kappa_t^{\mathrm{op}}
\lesssim
\frac{\min\{M,(A_{\mathrm d}^{\mathrm{op}})^{1/a}\}}{\tau_{\mathrm d}^{\mathrm{op}}}.
\label{eq:wsd-stable-kernel}
\end{equation}
Combining \eqref{eq:wsd-decay-kernel}--\eqref{eq:wsd-stable-kernel} proves
\eqref{eq:wsd-kernel-effective-dimension}. Finally, splitting the spectral
sum at \(j\asymp (A_{\mathrm d}^{\mathrm{op}})^{1/a}\) proves
\eqref{eq:wsd-kernel-power-law}.
\end{proof}

\begin{remark}[Why the power-law condition is used]
For a completely arbitrary spectrum, very small eigenvalues may accumulate a
larger stable-phase contribution than the decay-only expression in
\eqref{eq:wsd-decay-kernel}. The power-law summation in the preceding lemma
shows that this tail is lower order in the present model. Thus the theorem's
WSD kernel rate is valid without any extra batch-schedule assumption, but the
intermediate equivalence is not asserted spectrum-free.
\end{remark}

\begin{lemma}[Power-law bound for the dynamic raw kernel]
\label{lem:batch-var-E2}
Assume Assumptions~\ref{ass:data-power-law} and
\ref{ass:gaussian-sketch}, and suppose
\(A_{\mathrm d}^{\mathrm{op}}=\gamma\tau_{\mathrm d}^{\mathrm{op}}\gtrsim1\). Then, with probability at least
\(1-\exp(-\Omega(M))\) over \(S\),
\[
\sum_{t=1}^T\kappa_t^{\mathrm{op}}
\asymp
\frac{\min\{M,(A_{\mathrm d}^{\mathrm{op}})^{1/a}\}}
{\tau_{\mathrm d}^{\mathrm{op}}},
\]
and
\[
\mathcal K_{B_{1:T}}
=
\sum_{t=1}^T\frac{\kappa_t^{\mathrm{op}}}{B_t}
\asymp
\frac{\min\{M,(A_{\mathrm d}^{\mathrm{op}})^{1/a}\}}
{\tau_{\mathrm d}^{\mathrm{op}}B_{\mathrm{eff}}}.
\]
\end{lemma}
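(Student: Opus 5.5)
The plan is to reduce Lemma~\ref{lem:batch-var-E2} to Lemma~\ref{lem:batch-var-E1}, which already gives the two-sided WSD kernel estimate on the power-law sketch event, and then to transfer it to the batch-weighted kernel through the identity \eqref{eq:dynamic-kernel-effective-batch}. The lemma is therefore mostly an assembly step. Its only probabilistic content is that the event \(\mu_j(\Sigma)\asymp j^{-a}\) for \(j\in[M]\) holds with probability at least \(1-\exp(-\Omega(M))\) over \(S\).

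\emph{Step 1 (sketch event).} Invoke Lemma~\ref{lem:aux-G4-E5} under Assumptions~\ref{ass:data-power-law} and \ref{ass:gaussian-sketch}. This gives, with probability at least \(1-\exp(-\Omega(M))\), that \(\mu_j(\Sigma)\asymp j^{-a}\) for all \(j\in[M]\). On this event \(\operatorname{tr}(\Sigma)\lesssim1\), so \(\gamma\le c\) yields \(\gamma\mu_1\le\gamma\operatorname{tr}(\Sigma)\le c_0<1\). This is the stability hypothesis needed for the product comparisons of Lemma~\ref{lem:onepass-wsd-calculus} inside Lemma~\ref{lem:batch-var-E1}.

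\emph{Step 2 (unweighted sum).} On this event, and under \(A_{\mathrm d}^{\mathrm{op}}\gtrsim1\), Lemma~\ref{lem:batch-var-E1} gives
\[
\sum_{t=1}^T\kappa_t^{\mathrm{op}}\asymp\frac{1}{\tau_{\mathrm d}^{\mathrm{op}}}\sum_{j=1}^M\min\{1,(A_{\mathrm d}^{\mathrm{op}}\mu_j)^2\}.
\]
To make the final power-law evaluation explicit, split the spectral sum at \(j_\ast\asymp (A_{\mathrm d}^{\mathrm{op}})^{1/a}\).
\begin{itemize}
\item Indices \(j\le\min\{M,j_\ast\}\) contribute order \(\min\{M,j_\ast\}\).
\item If \(j_\ast<M\), the tail contributes \((A_{\mathrm d}^{\mathrm{op}})^2\sum_{j>j_\ast}j^{-2a}\asymp (A_{\mathrm d}^{\mathrm{op}})^{2}j_\ast^{1-2a}\asymp j_\ast\), using \(a>1/2\).
\end{itemize}
For the lower bound it suffices to keep the indices \(j\le\min\{M,j_\ast\}/C\), each of which contributes a constant. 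Together these give \(\min\{M,(A_{\mathrm d}^{\mathrm{op}})^{1/a}\}/\tau_{\mathrm d}^{\mathrm{op}}\), which is the first claim.

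\emph{Step 3 (batch weights).} By the definition of \(B_{\mathrm{eff}}\), and as recorded in \eqref{eq:dynamic-kernel-effective-batch}, \(\mathcal K_{B_{1:T}}=B_{\mathrm{eff}}^{-1}\sum_t\kappa_t^{\mathrm{op}}\) holds exactly, with no loss, for every deterministic schedule. The denominator \(\sum_t\kappa_t^{\mathrm{op}}\) is positive on the full-rank event, so \(B_{\mathrm{eff}}\) is well defined. Multiplying the two-sided bound of Step 2 by \(1/B_{\mathrm{eff}}\) then gives the second display, with constants that do not depend on the schedule.

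\emph{Main obstacle.} Almost all of the analytic difficulty sits in Lemma~\ref{lem:batch-var-E1}, in particular the decay-phase filter estimate \eqref{eq:wsd-decay-kernel}. Here the obstacle is only bookkeeping: the constants must be uniform in \(B_{1:T}\), and the sketch event must be the same one used elsewhere so that a union bound preserves \(1-\exp(-\Omega(M))\). Uniformity holds because \(\kappa_t^{\mathrm{op}}\) depends only on \(\Sigma\) and \((\gamma_t)\), not on the batches. One point to confirm when writing this out is that the constant in \(A_{\mathrm d}^{\mathrm{op}}\gtrsim1\) is large enough that \(j_\ast\ge1\). Otherwise the split degenerates, and I would absorb that case using the \(j=1\) term.
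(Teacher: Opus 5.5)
Your proposal is correct and follows the paper's proof essentially verbatim. You restrict to the power-law sketch event of Lemma~\ref{lem:aux-G4-E5}, apply Lemma~\ref{lem:batch-var-E1} with the spectral sum split at \(j\asymp (A_{\mathrm d}^{\mathrm{op}})^{1/a}\), and then read off the batch-weighted bound from the exact identity \eqref{eq:dynamic-kernel-effective-batch}. Your extra checks, namely the stability constant and the degenerate case \(A_{\mathrm d}^{\mathrm{op}}\asymp1\) handled via the \(j=1\) term, only make explicit what the paper leaves implicit.
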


\begin{proof}
Let \(s:=A_{\mathrm d}^{\mathrm{op}}\). On the high-probability sketch event from
Lemma~\ref{lem:aux-G4-E5},
\(\mu_j(\Sigma)\asymp j^{-a}\) for \(j\le M\). Splitting the sum in
Lemma~\ref{lem:batch-var-E1} at
\(k=\min\{M,\lfloor s^{1/a}\rfloor\}\) gives
\[
\sum_{j=1}^M\min\{1,(s\mu_j)^2\}
\asymp
\min\{M,s^{1/a}\}.
\]
This proves the first display. The second follows exactly from
\eqref{eq:dynamic-kernel-effective-batch}; no comparison between the
individual \(B_t\)'s is needed.
\end{proof}

\begin{remark}[Constant-batch consistency check]
If \(B_t\equiv B\), then \(B_{\mathrm{eff}}=B\) and \(T=N/B\). Therefore
\[
\mathcal K_{B_{1:T}}
\asymp
\frac{\min\{M,(A_{\mathrm d}^{\mathrm{op}})^{1/a}\}}
{B\tau_{\mathrm d}^{\mathrm{op}}},
\]
which is the fixed-batch WSD variance law.
\end{remark}

\section{Auxiliary With-Replacement Fluctuation Analysis}
\label{sec:multipass-batch-sgd-wr-fluc}

This proof-only section develops a with-replacement comparison process for a
deterministic batch-size schedule \(B_1,\dots,B_L\), with \(1\le B_t\le N\).
Appendix~\ref{sec:multipass-batch-sgd-wor-fluc} applies the resulting bounds,
together with the finite-population correction, to the without-replacement
multi-pass procedure studied in the main text.

Write
\[
\begin{aligned}
A_L&:=\sum_{t=1}^L\gamma_t,
&A_{\mathrm d}^{\mathrm{mp}}&:=\gamma\tau_{\mathrm d}^{\mathrm{mp}},
&R_t&:=\sum_{s=t+1}^L\gamma_s,\\
\rho_t^{\mathrm{wr}}&:=\frac1{B_t},
&\rho_t^{\mathrm{wor}}&:=\frac{N-B_t}{B_t(N-1)}.
\end{aligned}
\]
For later use, abbreviate the WSD kernel scale by
\[
\mathcal K_{\mathrm d}
:=
\frac{\min\{M,(A_{\mathrm d}^{\mathrm{mp}})^{1/a}\}}
{\tau_{\mathrm d}^{\mathrm{mp}}}.
\]
Lemma~\ref{lem:multipass-wsd-calculus} gives
\(A_L\asymp\gamma L\) and
\(A_{\mathrm d}^{\mathrm{mp}}\asymp\gamma L/\log L\).
Condition on the sketched dataset sigma-field
\(\mathcal D_{\mathrm{data}}=\sigma(S,w^\ast,D)\), and use the filtration
\[
\mathcal F_t^{\mathrm{wr}}
:=
\mathcal D_{\mathrm{data}}\vee
\sigma(i_{s,r}:1\le s\le t,\ 1\le r\le B_s).
\]

Define
\[
\mathrm{Fluc}_{B_{1:L}}^{\mathrm{wr}}
:=
\mathbb E_{\mathrm{batch}}
\bigl[\|\Sigma^{1/2}(u_L^{\mathrm{wr}}-\theta_L)\|_2^2\bigr].
\]
The fluctuation process satisfies
\begin{equation}
\Delta_t
=
(I-\gamma_t\widehat\Sigma_t^{\mathrm{wr}})\Delta_{t-1}
+\gamma_t\xi_t^{\mathrm{wr}},
\qquad
\Delta_0=0,
\label{eq:batch-fluc-recursion}
\end{equation}
where
\begin{equation}
\xi_t^{\mathrm{wr}}
:=
-(\widehat\Sigma_t^{\mathrm{wr}}-\widehat\Sigma)
(\theta_{t-1}-u^\ast)
+(\widehat c_t^{\mathrm{wr}}-\widehat c).
\label{eq:batch-xi-def}
\end{equation}
Since \(\theta_{t-1}\) is determined by \(\mathcal D_{\mathrm{data}}\), the current batch is
independent of \(\mathcal F_{t-1}^{\mathrm{wr}}\) and
\begin{equation}
\mathbb E[
\xi_t^{\mathrm{wr}}
\mid\mathcal F_{t-1}^{\mathrm{wr}}
]
=0.
\label{eq:wr-dynamic-mds}
\end{equation}

\subsection{Conditional covariance identities}
\label{subsec:batch-upper-bound}

For \(i\in[N]\), define the centered single-sample perturbation
\[
\zeta_t(i)
:=
-(Sx_ix_i^\top S^\top-\widehat\Sigma)(\theta_{t-1}-u^\ast)
+(Sx_i\widetilde\varepsilon_i-\widehat c).
\]
Then
\[
\frac1N\sum_{i=1}^N\zeta_t(i)=0,
\qquad
\xi_t^{\mathrm{wr}}
=
\frac1{B_t}\sum_{r=1}^{B_t}\zeta_t(i_{t,r}).
\]
Consequently, if
\[
G_t:=\frac1N\sum_{i=1}^N\zeta_t(i)\zeta_t(i)^\top,
\]
then
\begin{equation}
\mathbb E[
\xi_t^{\mathrm{wr}}(\xi_t^{\mathrm{wr}})^\top
\mid\mathcal F_{t-1}^{\mathrm{wr}}
]
=
\rho_t^{\mathrm{wr}}G_t.
\label{eq:wr-dynamic-noise-cov}
\end{equation}

Let \(z_i:=Sx_i\), \(Z_i:=z_iz_i^\top\), and
\[
C_A:=\max_{i\in[N]}\|z_i\|_2^2.
\]
For any deterministic PSD matrix \(C\), the transition covariance obeys
\begin{equation}
\mathbb E[
(\widehat\Sigma_t^{\mathrm{wr}}-\widehat\Sigma)
C
(\widehat\Sigma_t^{\mathrm{wr}}-\widehat\Sigma)
\mid\mathcal F_{t-1}^{\mathrm{wr}}
]
=
\rho_t^{\mathrm{wr}}
\frac1N\sum_{i=1}^N
(Z_i-\widehat\Sigma)C(Z_i-\widehat\Sigma).
\label{eq:wr-dynamic-transition-cov}
\end{equation}
In particular,
\[
\mathbb E[
(\widehat\Sigma_t^{\mathrm{wr}}-\widehat\Sigma)^2
\mid\mathcal F_{t-1}^{\mathrm{wr}}
]
\preceq
\rho_t^{\mathrm{wr}}C_A\widehat\Sigma.
\]
These identities show that both the additive stochastic source and the
random-transition correction created at iteration \(t\) carry the same
time-dependent factor \(\rho_t^{\mathrm{wr}}\).

\begin{lemma}[Dynamic-batch conditional covariance bound]
\label{lem:wr-dynamic-covariance}
Define
\[
\sigma_\xi^2
:=
2\max_{\substack{i\in[N]\\t\in[L]}}
\left[
\bigl(x_i^\top S^\top(\theta_{t-1}-u^\ast)\bigr)^2
+\widetilde\varepsilon_i^2
\right].
\]
Then, almost surely with respect to \(\mathcal D_{\mathrm{data}}\),
\[
G_t\preceq\sigma_\xi^2\widehat\Sigma
\qquad
\text{for every }t,
\]
and hence
\begin{equation}
\mathbb E[
\xi_t^{\mathrm{wr}}(\xi_t^{\mathrm{wr}})^\top
\mid\mathcal F_{t-1}^{\mathrm{wr}}
]
\preceq
\rho_t^{\mathrm{wr}}\sigma_\xi^2\widehat\Sigma.
\label{eq:batch-claim19}
\end{equation}
\end{lemma}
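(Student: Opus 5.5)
The plan is to bound $G_t$ summand by summand. Since $\frac1N\sum_i\zeta_t(i)=0$, $G_t$ is the empirical covariance of the centered vectors $\zeta_t(i)$. Write $\zeta_t(i)=g_t(i)-\bar g_t$ with
\[
g_t(i):=-Z_i(\theta_{t-1}-u^\ast)+z_i\widetilde\varepsilon_i
=z_i\bigl(\widetilde\varepsilon_i-x_i^\top S^\top(\theta_{t-1}-u^\ast)\bigr),
\]
and $\bar g_t:=\frac1N\sum_i g_t(i)=-\widehat\Sigma(\theta_{t-1}-u^\ast)+\widehat c$. The empirical covariance never exceeds the uncentered second moment:
\[
G_t=\frac1N\sum_i g_t(i)g_t(i)^\top-\bar g_t\bar g_t^\top\preceq\frac1N\sum_i g_t(i)g_t(i)^\top.
\]
So centering is simply dropped, and this step needs no work.

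Each $g_t(i)$ is a scalar multiple of $z_i$, namely $g_t(i)=s_{t,i}z_i$ with
\[
s_{t,i}:=\widetilde\varepsilon_i-x_i^\top S^\top(\theta_{t-1}-u^\ast).
\]
Hence $g_t(i)g_t(i)^\top=s_{t,i}^2Z_i$. By $(p-q)^2\le 2p^2+2q^2$,
\[
s_{t,i}^2\le 2\bigl[(x_i^\top S^\top(\theta_{t-1}-u^\ast))^2+\widetilde\varepsilon_i^2\bigr]\le\sigma_\xi^2,
\]
uniformly in $i\in[N]$ and $t\in[L]$, by the definition of $\sigma_\xi^2$ as a maximum. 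Since $Z_i\succeq0$, this gives
\[
\frac1N\sum_i s_{t,i}^2Z_i\preceq\sigma_\xi^2\,\frac1N\sum_i Z_i=\sigma_\xi^2\widehat\Sigma,
\]
which proves $G_t\preceq\sigma_\xi^2\widehat\Sigma$ for every realization of $\mathcal D_{\mathrm{data}}$.

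For \eqref{eq:batch-claim19}, multiply by $\rho_t^{\mathrm{wr}}\ge0$ and invoke the identity \eqref{eq:wr-dynamic-noise-cov}. That identity itself follows because the $B_t$ indices $i_{t,r}$ are i.i.d.\ uniform on $[N]$ and independent of $\mathcal F_{t-1}^{\mathrm{wr}}$. The vectors $\zeta_t(i)$ are $\mathcal F_{t-1}^{\mathrm{wr}}$-measurable, since $\theta_{t-1}$ is data-determined, and have zero empirical mean. So the cross terms vanish and the $B_t$ diagonal terms each contribute $G_t/B_t^2$.

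There is no serious obstacle. The only point to check is the factorization $g_t(i)\propto z_i$, which uses $x_i^\top S^\top=z_i^\top$. That factorization is what makes the bound relative to $\widehat\Sigma$ rather than to $C_A I$.
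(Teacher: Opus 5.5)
Your proof is correct and is essentially the paper's argument: the covariance is dominated by the second moment, each uncentered summand is a scalar multiple of $z_i$, and a factor-$2$ splitting finishes it. The steps are in the opposite order: the paper first splits $\zeta_t(i)$ into $-(Z_i-\widehat\Sigma)(\theta_{t-1}-u^\ast)$ and $z_i\widetilde\varepsilon_i-\widehat c$ via $(a+b)(a+b)^\top\preceq 2aa^\top+2bb^\top$ and bounds each piece's covariance by its second moment, whereas you drop the centering once and split the scalar $s_{t,i}$. A small advantage of your ordering is that it keeps the per-index sum $(z_i^\top(\theta_{t-1}-u^\ast))^2+\widetilde\varepsilon_i^2$ before maximizing, so you land exactly on the stated $\sigma_\xi^2$; the paper's separate maxima $2\max_j(z_j^\top(\theta_{t-1}-u^\ast))^2+2\max_j\widetilde\varepsilon_j^2$ can exceed $\sigma_\xi^2$ by up to a factor $2$, which does not matter downstream, where only $\lesssim$ is used.
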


\begin{proof}
Put \(d_t:=\theta_{t-1}-u^\ast\). The covariance of
\(-(Z_i-\widehat\Sigma)d_t\) is bounded by its second moment, which is at
most
\[
\max_j(z_j^\top d_t)^2\,\widehat\Sigma.
\]
Likewise, the covariance of
\(z_i\widetilde\varepsilon_i-\widehat c\) is at most
\[
\max_j\widetilde\varepsilon_j^2\,\widehat\Sigma.
\]
Using
\((a+b)(a+b)^\top\preceq2aa^\top+2bb^\top\) proves the result.
\end{proof}

For the lower bound, the response noise itself supplies a non-degenerate
stochastic source. Write
\(\varepsilon_i:=y_i-\langle x_i,w^\ast\rangle\), and let
\(\mathbb E_\varepsilon\) denote expectation over
\((\varepsilon_i)_{i=1}^N\) conditional on \(S,w^\ast\), and the covariates.

\begin{lemma}[Lower covariance of the batch perturbation]
\label{lem:wr-dynamic-covariance-lower}
Let \(B_\nu:=\max_{i\in[N]}\|z_i\|_2\), so \(C_A=B_\nu^2\), and define
\[
\beta_{\xi}^{\mathrm{WSD}}
:=
\left[
\frac{N-1}{N}
-
\frac{C A_LB_\nu^2}{N}
\right]_+,
\]
where \(C>0\) is a universal constant.
Under Assumption~\ref{ass:data-well-specified}, for every \(t\in[L]\),
\begin{equation}
\mathbb E_\varepsilon[G_t\mid S,w^\ast,X]
\succeq
\sigma^2\beta_{\xi}^{\mathrm{WSD}}\widehat\Sigma.
\label{eq:dynamic-source-covariance-lower}
\end{equation}
Consequently,
\[
\mathbb E_\varepsilon\!\left[
\mathbb E_{\mathrm{batch}}\!\left[
\xi_t^{\mathrm{wr}}(\xi_t^{\mathrm{wr}})^\top
\mid\mathcal F_{t-1}^{\mathrm{wr}}
\right]
\middle|S,w^\ast,X
\right]
\succeq
\rho_t^{\mathrm{wr}}\sigma^2\beta_{\xi}^{\mathrm{WSD}}\widehat\Sigma.
\]
\end{lemma}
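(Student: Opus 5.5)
The plan is to treat $G_t$ as an empirical covariance, isolate the part of it that is linear in the label noise $(\varepsilon_i)$, and show that this part alone already dominates $\sigma^2\beta_\xi^{\mathrm{WSD}}\widehat\Sigma$. Write $v_i:=-Z_i d_t+z_i\widetilde\varepsilon_i$ with $d_t:=\theta_{t-1}-u^\ast$. Since $\frac1N\sum_i\zeta_t(i)=0$, we have $\zeta_t(i)=v_i-\bar v$ with $\bar v:=\frac1N\sum_iv_i$. Hence
\[
G_t=\frac1N\sum_{i=1}^Nv_iv_i^\top-\bar v\bar v^\top .
\]

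\emph{Step 1: $v_i$ is affine in the noise.} Write $\widetilde\varepsilon_i=\varepsilon_i+x_i^\top r$ with $r:=w^\ast-S^\top u^\ast$. Unrolling the GD recursion \eqref{eq:proc-normal-gd} gives
\[
d_t=-C_{t-1}u^\ast+\frac1N W_{t-1}\sum_j z_j\widetilde\varepsilon_j,
\qquad
W_{t-1}:=\sum_{s<t}\gamma_s\prod_{s<k<t}(I-\gamma_k\widehat\Sigma).
\]
Here $W_{t-1}$ is a polynomial in $\widehat\Sigma$. Under Assumption~\ref{ass:stepsize} it satisfies $\|W_{t-1}\|_2\le A_L$ and $0\preceq W_{t-1}\widehat\Sigma\preceq I$, exactly as in \eqref{eq:gd-bias-residual-bounds}. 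Substituting,
\[
v_i=a_i+z_i\sum_{j=1}^N K_{ij}\varepsilon_j,
\qquad
K:=I_N-\tfrac1N ZW_{t-1}Z^\top ,
\]
where $Z$ has rows $z_i^\top$ and $a_i$ does not depend on $\varepsilon$ given $(S,w^\ast,X)$. Conditionally on $(S,w^\ast,X)$, the $\varepsilon_j$ are independent, centered, and have variance $\sigma^2$ by Assumption~\ref{ass:data-well-specified} and the i.i.d.\ sampling.

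\emph{Step 2: take $\mathbb E_\varepsilon$.} The cross terms between the $a$-part and the $\varepsilon$-part vanish. The pure $a$-part of $G_t$ is itself an empirical covariance, so it is PSD and can be dropped. This leaves
\[
\mathbb E_\varepsilon[G_t\mid S,w^\ast,X]\succeq\sigma^2\Bigl[\frac1N\sum_i (KK^\top)_{ii}Z_i-\frac1{N^2}Z^\top KK^\top Z\Bigr].
\]
For the first term, use $(KK^\top)_{ii}\ge K_{ii}^2$ together with
\[
K_{ii}=1-\tfrac1N z_i^\top W_{t-1}z_i\ge 1-\tfrac{A_LB_\nu^2}{N},
\]
so $K_{ii}^2\ge1-2A_LB_\nu^2/N$. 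This gives the lower bound $(1-2A_LB_\nu^2/N)\widehat\Sigma$. For the second term, note that $\frac1NZW_{t-1}Z^\top$ has the same nonzero spectrum as $W_{t-1}^{1/2}\widehat\Sigma W_{t-1}^{1/2}$, which lies in $[0,1]$. Hence $\|K\|_2\le1$, and
\[
\frac1{N^2}Z^\top KK^\top Z\preceq\frac1N\widehat\Sigma .
\]
Combining the two, and clipping at zero using $G_t\succeq0$, yields \eqref{eq:dynamic-source-covariance-lower} with $C=2$.

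\emph{Step 3: the consequence.} The batch-covariance statement follows directly from \eqref{eq:wr-dynamic-noise-cov}, i.e.\ $\mathbb E_{\mathrm{batch}}[\xi_t\xi_t^\top\mid\mathcal F_{t-1}^{\mathrm{wr}}]=\rho_t^{\mathrm{wr}}G_t$. Taking $\mathbb E_\varepsilon$, which is linear and commutes with the deterministic factor $\rho_t^{\mathrm{wr}}$, gives the claim.

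The main obstacle is Step 1. The reference path $\theta_{t-1}$, and hence $d_t$, depends on the same $\varepsilon$, so the noise is not simply additive. The key points are to write the $\varepsilon$-dependence exactly through the kernel $K$ and to control both its diagonal and its operator norm using the GD filter bounds $\|W_{t-1}\|_2\le A_L$ and $W_{t-1}\widehat\Sigma\preceq I$. The resulting leave-one-out-type loss is what produces the correction $A_LB_\nu^2/N$ in $\beta_\xi^{\mathrm{WSD}}$.
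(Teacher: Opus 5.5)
Your proof is correct and gives the constant $C=2$. It shares the paper's first move but organizes the core estimate differently.

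\textbf{Common first step.} Both arguments use that $\theta_{t-1}$ is affine in the response noise given $(S,w^\ast,X)$. Hence the noise-free part of $\zeta_t(i)$ contributes a PSD empirical covariance that can be dropped, and the cross terms vanish.

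\textbf{The paper's route.} It writes the noise part as $a_i-b_i$, where $a_i=z_i\varepsilon_i-q$ is the naively centered noise ($q=\frac1N\sum_k z_k\varepsilon_k$) and $b_i=(Z_i-\widehat\Sigma)V_{t-1}q$ is the feedback through the GD path. It computes $\frac1N\sum_i\mathbb E_\varepsilon[a_ia_i^\top]=\sigma^2\frac{N-1}{N}\widehat\Sigma$ exactly and drops the PSD term $b_ib_i^\top$. It then bounds the cross term by the $M\times M$ operator inequality $\frac{2\sigma^2}{N^2}\sum_i(Z_i-\widehat\Sigma)V_{t-1}(Z_i-\widehat\Sigma)\preceq\frac{2\sigma^2A_LC_A}{N}\widehat\Sigma$.

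\textbf{Your route.} You regroup the same vectors as $e_i-\bar e$ with $e_i=z_i(K\varepsilon)_i$. The feedback is thus carried by the $N\times N$ residual smoother $K=I_N-\frac1N ZW_{t-1}Z^\top$. You then discard the off-diagonal mass of $K^2$ and control the centering term through $\|K\|_2\le1$.

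\textbf{What each buys.} Your version shows the correction is a leverage (leave-one-out) effect, since $K_{ii}=1-\frac1N z_i^\top W_{t-1}z_i$. It also makes explicit that the contraction condition of Assumption~\ref{ass:stepsize} supplies $\|W_{t-1}\|_2\le A_L$ and $0\preceq W_{t-1}\widehat\Sigma\preceq I$. The paper relies on the same condition only implicitly (e.g.\ for $V_{t-1}\preceq A_LI$), even though the lemma lists only Assumption~\ref{ass:data-well-specified}. The paper's version, in turn, stays within $M$-dimensional operator inequalities of the type reused elsewhere in the fluctuation analysis.

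The passage to the batch covariance via \eqref{eq:wr-dynamic-noise-cov} is the same in both.
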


\begin{proof}
Conditional on \(S,w^\ast,X\), the normal-GD iterate is affine in the
response-noise vector. Split the single-sample perturbation as
\(\zeta_t(i)=\zeta_t^{(0)}(i)+\zeta_t^{(\varepsilon)}(i)\), where the first
term is independent of the response noises and the second is linear in them.
The conditional mean-zero and homoscedastic assumptions imply that the cross
terms vanish and hence
\[
\mathbb E_\varepsilon[G_t\mid S,w^\ast,X]
\succeq
\frac1N\sum_{i=1}^N
\mathbb E_\varepsilon
\left[
\zeta_t^{(\varepsilon)}(i)
\zeta_t^{(\varepsilon)}(i)^\top
\mid S,w^\ast,X
\right].
\]

Let
\[
C_{t-1}:=\prod_{s=1}^{t-1}(I-\gamma_s\widehat\Sigma).
\]
Put
\[
V_{t-1}:=(I-C_{t-1})\widehat\Sigma^\dagger,
\qquad
q:=\frac1N\sum_{k=1}^Nz_k\varepsilon_k,
\qquad
A_i:=Z_i-\widehat\Sigma.
\]
The response-noise part of the GD error is \(V_{t-1}q\), and therefore
\[
\zeta_t^{(\varepsilon)}(i)
=
\underbrace{z_i\varepsilon_i-q}_{a_i}
-
\underbrace{A_iV_{t-1}q}_{b_i}.
\]
Expanding \((a_i-b_i)(a_i-b_i)^\top\) exactly, we use
\[
\frac1N\sum_{i=1}^N
\mathbb E_\varepsilon[a_ia_i^\top\mid S,w^\ast,X]
=
\sigma^2\frac{N-1}{N}\widehat\Sigma,
\qquad
\mathbb E_\varepsilon[qq^\top\mid S,w^\ast,X]
=
\frac{\sigma^2}{N}\widehat\Sigma.
\]
Moreover,
\[
\mathbb E_\varepsilon[a_iq^\top\mid S,w^\ast,X]
=\frac{\sigma^2}{N}A_i,
\]
so the two averaged cross terms have operator magnitude at most
\[
\frac{2\sigma^2}{N^2}\sum_{i=1}^NA_iV_{t-1}A_i
\preceq
\frac{2\sigma^2A_LC_A}{N}\widehat\Sigma.
\]
The schedule-dependent filter satisfies
\[
0\preceq
V_{t-1}\widehat\Sigma V_{t-1}
=
(I-C_{t-1})^2\widehat\Sigma^\dagger
\preceq
\bigl(I-C_{t-1}^2\bigr)\widehat\Sigma^\dagger
\preceq
2\sum_{s=1}^{t-1}\gamma_s I
\preceq
2A_LI,
\]
which follows from the scalar inequality
\(1-\prod_s(1-u_s)^2\le2\sum_su_s\). Finally,
\[
\frac1N\sum_{i=1}^N(Z_i-\widehat\Sigma)^2
\preceq
\frac1N\sum_{i=1}^NZ_i^2
\preceq
C_A\widehat\Sigma.
\]
The averaged \(b_ib_i^\top\) term is PSD and may be dropped in this lower
bound. Combining these displays, and enlarging the universal constant \(C\), proves
\eqref{eq:dynamic-source-covariance-lower}.
The batch statement follows by first applying
\eqref{eq:wr-dynamic-noise-cov} conditional on the full dataset and then
averaging over the response noises.
\end{proof}

\subsection{A time-inhomogeneous stochastic approximation lemma}
\label{subsec:batch-D2-analogue}

\begin{lemma}[Stochastic approximation with dynamic covariance factors]
\label{lem:batch-D2-analogue}
Let
\[
\mu_t=(I-\gamma_tA_t)\mu_{t-1}+\gamma_t\xi_t,
\qquad
\mu_0=0,
\]
and let
\(\mathcal F_t=\sigma((A_s,\xi_s):1\le s\le t)\). Suppose the deterministic
numbers \(\rho_t\) lie in \([0,1]\), the current pair
\((A_t,\xi_t)\) is independent of \(\mathcal F_{t-1}\), and, uniformly in
\(t\),
\[
\mathbb E[A_t]=\Sigma_\nu,
\qquad
\mathbb E[\xi_t\mid\mathcal F_{t-1}]=0,
\qquad
\mathbb E[\xi_t\xi_t^\top\mid\mathcal F_{t-1}]
\preceq\rho_t\sigma_\xi^2\Sigma_\nu,
\]
\[
\mathbb E[(A_t-\Sigma_\nu)^2]
\preceq
\rho_t C_A\Sigma_\nu,
\qquad
\mathbb E[A_t^2]\preceq C_A\Sigma_\nu,
\qquad
\|\Sigma_\nu\|_2\le C_A.
\]
Assume also \(0<\gamma_t\le\gamma\) and
\(\gamma C_A\le1/8\).
Let
\[
P_{t+1:L}:=\prod_{s=t+1}^L(I-\gamma_s\Sigma_\nu).
\]
Then
\begin{equation}
\mathbb E[\mu_L\mu_L^\top]
\preceq
\frac98\sigma_\xi^2
\sum_{t=1}^L
\rho_t\gamma_t^2
P_{t+1:L}\Sigma_\nu P_{t+1:L}.
\label{eq:dynamic-random-psd-filter}
\end{equation}
Consequently, for \(u\in[0,1]\),
\[
\mathbb E\|\Sigma_\nu^{u/2}\mu_L\|_2^2
\lesssim
\sigma_\xi^2
\sum_{t=1}^L\rho_t\gamma_t^2
\operatorname{tr}\!\left(
\Sigma_\nu^{u+1}P_{t+1:L}^2
\right).
\]
\end{lemma}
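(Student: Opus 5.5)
The plan is to show that the second-moment matrix $\Gamma_t:=\mathbb E[\mu_t\mu_t^\top]$ is dominated by a deterministic PSD recursion driven by the population filter $I-\gamma_t\Sigma_\nu$. The approach mirrors Lemma~\ref{lem:batch-wu-C1} and Theorem~\ref{thm:batch-wu-C1}: first prove a uniform bound on $\Gamma_t$, then reuse it to absorb the random-transition correction into the driving term.

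\textbf{Step 1 (exact recursion).} Because $(A_t,\xi_t)$ is independent of $\mathcal F_{t-1}$ and $\mathbb E[\xi_t\mid\mathcal F_{t-1}]=0$, the cross term $\mathbb E[(I-\gamma_tA_t)\mu_{t-1}\xi_t^\top]$ vanishes. Hence
\[
\Gamma_t=\mathbb E[(I-\gamma_tA_t)\Gamma_{t-1}(I-\gamma_tA_t)]+\gamma_t^2\mathbb E[\xi_t\xi_t^\top].
\]
The transition term equals $(I-\gamma_t\Sigma_\nu)\Gamma_{t-1}(I-\gamma_t\Sigma_\nu)+\gamma_t^2\mathbb E[(A_t-\Sigma_\nu)\Gamma_{t-1}(A_t-\Sigma_\nu)]$. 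In this step $\Gamma_{t-1}$ is a deterministic matrix and is independent of $A_t$.

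\textbf{Step 2 (uniform bound).} I will show by induction that $\Gamma_t\preceq K I$ with $K:=c\,\gamma\sigma_\xi^2$, where $c$ is an absolute constant. Assume $\Gamma_{t-1}\preceq KI$. Using $\mathbb E[A_t^2]\preceq C_A\Sigma_\nu$ and $\gamma_t\le\gamma$, we get
\[
\Gamma_t\preceq K(I-2\gamma_t\Sigma_\nu+\gamma_t^2C_A\Sigma_\nu)+\gamma_t^2\rho_t\sigma_\xi^2\Sigma_\nu .
\]
Since $\rho_t\le1$ and $\gamma C_A\le1/8$, the coefficient of $\Sigma_\nu$ is at most $\gamma_t(-2K+K/8+\gamma\sigma_\xi^2)$. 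This is $\le0$ once $K\ge\tfrac{8}{15}\gamma\sigma_\xi^2$, which closes the induction.

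\textbf{Step 3 (absorbing the correction).} Apply the bound of Step~2 inside the centered term:
\[
\mathbb E[(A_t-\Sigma_\nu)\Gamma_{t-1}(A_t-\Sigma_\nu)]\preceq K\rho_tC_A\Sigma_\nu .
\]
Therefore
\[
\Gamma_t\preceq(I-\gamma_t\Sigma_\nu)\Gamma_{t-1}(I-\gamma_t\Sigma_\nu)+\gamma_t^2\rho_t\sigma_\xi^2(1+c\gamma C_A)\Sigma_\nu .
\]
The crucial point is that the factor $\rho_t$ survives in both pieces. It is exactly why the hypothesis on $A_t-\Sigma_\nu$ carries $\rho_t$.

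\textbf{Step 4 (unrolling and tracing).} The map $X\mapsto(I-\gamma_t\Sigma_\nu)X(I-\gamma_t\Sigma_\nu)$ is monotone on PSD matrices. Unrolling the recursion of Step~3 gives \eqref{eq:dynamic-random-psd-filter} with constant $1+c\gamma C_A\le 1+c/8$.

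\textbf{Main obstacle: the constant $9/8$.} The crude choice $K=\gamma\sigma_\xi^2$ gives $1+1/8=9/8$, which is exactly the stated constant. To keep $K=\gamma\sigma_\xi^2$, I need the coefficient in Step~2 to be nonpositive, i.e.
\[
-2K\gamma_t+K\gamma_t^2C_A+\gamma_t^2\sigma_\xi^2\le0 .
\]
This is equivalent to $\gamma_t\sigma_\xi^2\le K(2-\gamma_tC_A)$, which holds for $K=\gamma\sigma_\xi^2$. So the constant does come out exactly.

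\textbf{Corollary.} Take the inner product of \eqref{eq:dynamic-random-psd-filter} with $\Sigma_\nu^u$. The matrices $P_{t+1:L}$ and $\Sigma_\nu$ commute, so the trace is $\operatorname{tr}(\Sigma_\nu^{u+1}P_{t+1:L}^2)$, which gives the consequence.
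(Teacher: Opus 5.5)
Your proposal is correct and follows the paper's proof essentially step for step. The steps are the exact second-moment recursion, the crude uniform bound $\Gamma_t\preceq\gamma\sigma_\xi^2 I$, absorbing the centered transition term into the driving term to get the factor $1+\gamma C_A\le 9/8$, and then unrolling and tracing against $\Sigma_\nu^u$.

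One detail in Step~1 needs tightening: $A_t$ and $\xi_t$ may be correlated, so $\mathbb E[A_t\mu_{t-1}\xi_t^\top\mid\mathcal F_{t-1}]$ need not vanish. The cross term therefore dies only after taking full expectation and using $\mathbb E[\mu_{t-1}]=0$, which follows by induction from $\mathbb E[\mu_t]=(I-\gamma_t\Sigma_\nu)\mathbb E[\mu_{t-1}]$; the paper states this point explicitly.
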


\begin{proof}
The independence of the current pair from the past and the martingale
difference property imply \(\mathbb E[\mu_t]=0\). They also make the mixed
terms in the unconditional second-moment recursion vanish, even though
\(A_t\) and \(\xi_t\) may be dependent within the current pair. Thus, with
\(C_t:=\mathbb E[\mu_t\mu_t^\top]\),
\[
C_t
=
\mathbb E[(I-\gamma_tA_t)C_{t-1}(I-\gamma_tA_t)]
+\gamma_t^2\mathbb E[\xi_t\xi_t^\top].
\]
First, induction gives the schedule-independent crude estimate
\[
C_t\preceq\sigma_\xi^2\gamma I.
\]
Indeed, use \(\rho_t\le1\),
\(\mathbb E[A_t^2]\preceq C_A\Sigma_\nu\), and
\(\gamma C_A\le1/8\).

Next, decompose the transition map around its mean:
\[
\begin{aligned}
\mathbb E[(I-\gamma_tA_t)C_{t-1}(I-\gamma_tA_t)]
={}&
(I-\gamma_t\Sigma_\nu)C_{t-1}(I-\gamma_t\Sigma_\nu)
\\
&+
\gamma_t^2
\mathbb E[(A_t-\Sigma_\nu)C_{t-1}(A_t-\Sigma_\nu)].
\end{aligned}
\]
The crude estimate and the centered transition bound imply that the last
term is at most
\[
\rho_t\sigma_\xi^2\gamma C_A\gamma_t^2\Sigma_\nu.
\]
Adding the driving covariance and using \(\gamma C_A\le1/8\) gives
\[
C_t
\preceq
(I-\gamma_t\Sigma_\nu)C_{t-1}(I-\gamma_t\Sigma_\nu)
+
\frac98\rho_t\sigma_\xi^2\gamma_t^2\Sigma_\nu.
\]
Unrolling proves \eqref{eq:dynamic-random-psd-filter}; taking the appropriate
trace proves the second claim.
\end{proof}

\begin{lemma}[Verification for dynamic with-replacement batches]
\label{lem:wr-batch-verify-D2}
Conditioned on \(\mathcal D_{\mathrm{data}}\), set
\[
A_t:=\widehat\Sigma_t^{\mathrm{wr}},
\qquad
\Sigma_\nu:=\widehat\Sigma,
\qquad
\rho_t:=\rho_t^{\mathrm{wr}}.
\]
Then all transition-matrix assumptions of
Lemma~\ref{lem:batch-D2-analogue} hold. Together with
Lemma~\ref{lem:wr-dynamic-covariance}, its noise assumptions also hold with
the \(\sigma_\xi^2\) defined there.
\end{lemma}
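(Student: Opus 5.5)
The plan is to verify the hypotheses of Lemma~\ref{lem:batch-D2-analogue} one at a time, working conditionally on \(\mathcal D_{\mathrm{data}}\). Under this conditioning, \(\widehat\Sigma\), \(\widehat c\), the vectors \(z_i=Sx_i\), the residuals \(\widetilde\varepsilon_i\), and the normal-GD path \((\theta_t)\) are all deterministic. The only randomness left is the batch indices \(i_{t,r}\), which are i.i.d.\ uniform on \([N]\), independent across \(r\) and across \(t\).

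\textbf{Independence and martingale structure.} The pair \((A_t,\xi_t^{\mathrm{wr}})\) is a deterministic function of the fresh indices \((i_{t,r})_{r\le B_t}\) together with \(\mathcal D_{\mathrm{data}}\). Here \(\theta_{t-1}\) is data-measurable and does not depend on the SGD iterate. Hence this pair is independent of \(\mathcal F_{t-1}^{\mathrm{wr}}\), and a fortiori of the smaller \(\sigma\)-field generated by the past pairs that is used in Lemma~\ref{lem:batch-D2-analogue}. The martingale-difference property is exactly \eqref{eq:wr-dynamic-mds}. The noise covariance bound \(\preceq\rho_t\sigma_\xi^2\widehat\Sigma\) is \eqref{eq:batch-claim19} from Lemma~\ref{lem:wr-dynamic-covariance}. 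Finally, \(\rho_t^{\mathrm{wr}}=1/B_t\in(0,1]\) because \(B_t\ge1\).

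\textbf{Transition moments.} Since each index is uniform, \(\mathbb E[Z_{i_{t,r}}]=\widehat\Sigma\), so \(\mathbb E[A_t]=\widehat\Sigma=\Sigma_\nu\). Taking \(C=I\) in \eqref{eq:wr-dynamic-transition-cov} gives
\[
\mathbb E[(A_t-\widehat\Sigma)^2]
=\rho_t\Bigl(\frac1N\sum_{i=1}^N Z_i^2-\widehat\Sigma^2\Bigr)
\preceq\frac{\rho_t}{N}\sum_{i=1}^N\|z_i\|_2^2Z_i
\preceq\rho_tC_A\widehat\Sigma,
\]
using \(Z_i^2=\|z_i\|_2^2Z_i\). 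For the uncentered moment I would write
\[
\mathbb E[A_t^2]=\mathbb E[(A_t-\widehat\Sigma)^2]+\widehat\Sigma^2
\preceq\rho_tC_A\widehat\Sigma+\frac1N\sum_{i=1}^N Z_i^2-\mathbb E[(A_t-\widehat\Sigma)^2]\big/\rho_t .
\]
More simply, I would bound it directly: \(\mathbb E[A_t^2]\preceq\mathbb E[\frac1{B_t}\sum_rZ_{i_{t,r}}^2]\) by convexity of \(X\mapsto X^2\) in the PSD order, which gives \(\mathbb E[A_t^2]\preceq\frac1N\sum_iZ_i^2\preceq C_A\widehat\Sigma\). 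The norm bound \(\|\widehat\Sigma\|_2\le\max_i\|z_i\|_2^2=C_A\) is immediate from \(\widehat\Sigma\preceq C_AI\).

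\textbf{Stepsize condition.} The stepsize requirements \(0<\gamma_t\le\gamma\) hold by construction of the WSD schedule \eqref{eq:app-multipass-wsd-schedule}. The condition \(\gamma C_A\le1/8\) is the one genuinely data-dependent hypothesis, and I expect it to be the only delicate point. It is not almost sure. It holds on the \(r=1\) good event of Lemma~\ref{lem:app-derived-multipass-regularity}, namely \(8\max_i\|z_i\|_2^2\le1/\gamma\), which is itself measurable with respect to \(\mathcal D_{\mathrm{data}}\). So the verification should be stated on that event. The complement has conditional probability at most \(N^{-c_{\max}}\), and the tail bound \eqref{eq:app-derived-maxnorm-tail} for larger \(r\) is what later allows this complement to be absorbed. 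On this event every hypothesis of Lemma~\ref{lem:batch-D2-analogue} holds with \(\Sigma_\nu=\widehat\Sigma\), \(\rho_t=1/B_t\), and the \(\sigma_\xi^2\) of Lemma~\ref{lem:wr-dynamic-covariance}.
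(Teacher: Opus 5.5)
Your proposal is correct and follows the paper's proof. You check the mean, the centered transition bound via \eqref{eq:wr-dynamic-transition-cov}, the uncentered bound \(\mathbb E[A_t^2]\preceq C_A\widehat\Sigma\), the norm bound \(\|\widehat\Sigma\|_2\le C_A\), and \(\gamma C_A\le 1/8\) only on the maximum-norm event of Lemma~\ref{lem:app-derived-multipass-regularity}, exactly as the paper does; you also spell out the independence, martingale-difference and \(\rho_t^{\mathrm{wr}}\le1\) checks that the paper leaves implicit. The only difference is cosmetic: the paper gets the uncentered bound from the pathwise inequality \(A_t^2\preceq C_AA_t\) (using \(0\preceq A_t\preceq C_AI\)) rather than from operator convexity of the square, and your first, abandoned display for \(\mathbb E[A_t^2]\) can simply be deleted.
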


\begin{proof}
The conditional mean is \(\widehat\Sigma\), and
\eqref{eq:wr-dynamic-transition-cov} gives the centered transition bound.
Moreover, pathwise
\[
0\preceq A_t\preceq C_AI,
\qquad
A_t^2\preceq C_AA_t.
\]
Taking expectations gives
\(\mathbb E[A_t^2]\preceq C_A\widehat\Sigma\), while
\(\widehat\Sigma\preceq C_AI\). Finally, on the maximum-norm stability event
controlled by Lemma~\ref{lem:app-derived-multipass-regularity},
\(\gamma C_A\le1/8\).
\end{proof}

\subsection{Influence kernels and covariance replacement}
\label{subsec:multipass-dynamic-kernel}

For \(u\in\{0,1\}\), define the empirical kernels
\[
\widehat\kappa_t^{(u)}
:=
\gamma_t^2
\operatorname{tr}\!\left(
\widehat\Sigma^{u+1}
\prod_{s=t+1}^L(I-\gamma_s\widehat\Sigma)^2
\right).
\]
Define also the empirical cross-kernel
\begin{equation}
\widehat\kappa_t^{(\times)}
:=
\gamma_t^2
\operatorname{tr}\!\left(
\Sigma\widehat\Sigma
\prod_{s=t+1}^L(I-\gamma_s\widehat\Sigma)^2
\right).
\label{eq:dynamic-cross-kernel}
\end{equation}
The population final-risk kernel is
\begin{equation}
\kappa_t^{\mathrm{mp}}
:=
\gamma_t^2
\operatorname{tr}\!\left(
\Sigma^2
\prod_{s=t+1}^L(I-\gamma_s\Sigma)^2
\right)
=
\gamma_t^2
\sum_{j=1}^M\mu_j^2
\prod_{s=t+1}^L(1-\gamma_s\mu_j)^2.
\label{eq:multipass-dynamic-kappa}
\end{equation}
Define
\begin{equation}
\rho_{\mathrm{eff},\mathrm{WSD}}^{\mathrm{wr}}
:=
\frac{\sum_{t=1}^L\rho_t^{\mathrm{wr}}\kappa_t^{\mathrm{mp}}}
{\sum_{t=1}^L\kappa_t^{\mathrm{mp}}},
\qquad
\rho_{\mathrm{eff},\mathrm{WSD}}^{\mathrm{wor}}
:=
\frac{\sum_{t=1}^L\rho_t^{\mathrm{wor}}\kappa_t^{\mathrm{mp}}}
{\sum_{t=1}^L\kappa_t^{\mathrm{mp}}}.
\label{eq:multipass-dynamic-rho-eff}
\end{equation}

\begin{lemma}[Uniform empirical-to-population kernel comparison]
\label{lem:dynamic-kernel-comparison}
Assume the power-law and sketch conditions, and let
\[
\lambda_L:=A_L^{-1}.
\]
Under the horizon condition
\[
A_L\lesssim N^{(1-\varepsilon)a},
\qquad
A_L^{1/a}\lesssim M,
\]
the same high-probability empirical-covariance event used in the fixed-batch
proof gives, uniformly in \(t\),
\[
\widehat\kappa_t^{(1)}
+\lambda_L\widehat\kappa_t^{(0)}
\lesssim
\kappa_t^{\mathrm{mp}}.
\]
Consequently, for every deterministic sequence
\(\rho_t\in[0,1]\),
\[
\sum_{t=1}^L\rho_t
\left(
\widehat\kappa_t^{(1)}
+\lambda_L\widehat\kappa_t^{(0)}
\right)
\lesssim
\sum_{t=1}^L\rho_t\kappa_t^{\mathrm{mp}}.
\]
\end{lemma}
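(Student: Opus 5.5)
Fix an index \(t\) and write \(g_t(x):=\prod_{s=t+1}^L(1-\gamma_s x)^2\). Since the \(\widehat\kappa\) kernels are spectral functions of \(\widehat\Sigma\) alone, I will bound them by a spectral expression first. By the contraction condition (Assumption~\ref{ass:stepsize}, via the maximum-norm event of Lemma~\ref{lem:app-derived-multipass-regularity}) and the scalar logarithmic bounds used in Lemma~\ref{lem:app-final-risk-influence}, \(g_t(x)\le e^{-c xR_t}\) on \([0,\|\widehat\Sigma\|_2]\). Hence
\[
\widehat\kappa_t^{(1)}+\lambda_L\widehat\kappa_t^{(0)}
\le
\gamma_t^2\sum_{i}\bigl(\hat\mu_i^2+\lambda_L\hat\mu_i\bigr)e^{-c\hat\mu_iR_t},
\qquad \hat\mu_i:=\mu_i(\widehat\Sigma).
\]
On the other side, Lemma~\ref{lem:app-final-risk-influence} (applicable uniformly in \(t\) because \(A_L^{1/a}\lesssim M\) and \(\mu_j(\Sigma)\asymp j^{-a}\) on the sketch event of Lemma~\ref{lem:aux-G4-E5}) gives \(\kappa_t^{\mathrm{mp}}\asymp\gamma_t^2(1+R_t)^{1/a-2}\). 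So it suffices to show
\[
\sum_i\bigl(\hat\mu_i^2+\lambda_L\hat\mu_i\bigr)e^{-c\hat\mu_iR_t}\lesssim(1+R_t)^{1/a-2}
\]
for all \(0\le R_t\le A_L\).

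The empirical spectrum enters through the high-probability event of Lemma~\ref{lem:aux-2025-E8}. Under \(A_L\lesssim N^{(1-\varepsilon)a}\), this event gives \(\hat\mu_i\lesssim i^{-a}\) for \(i\lesssim N\), and it also controls the empirical tail, e.g.\ \(\sum_{i>k}\hat\mu_i\lesssim k^{1-a}\) for \(k\lesssim N\). With this, the \(\hat\mu_i^2\) part repeats the dyadic-block argument of Lemma~\ref{lem:app-final-risk-influence}, now using only upper bounds. Put \(k=\lceil(1+R_t)^{1/a}\rceil\). The tail \(j\ge k\) contributes \(\lesssim k^{1-2a}\). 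Each head block has size \(\lesssim k2^{-\ell}\) and contributes \(\lesssim kR_t^{-2}2^{(2a-1)\ell}e^{-c2^{a\ell}}\), with the bound \(x^2e^{-cxR}\lesssim R^{-2}(xR)^2e^{-cxR}\) handling entries whose \(\hat\mu_i\) is smaller than \(i^{-a}\). Summing gives \(\lesssim(1+R_t)^{1/a-2}\).

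The \(\lambda_L\hat\mu_i\) part is where I expect the real difficulty. For \(i\le k\), the bound \(\hat\mu_ie^{-c\hat\mu_iR}\lesssim R^{-1}\) gives a head contribution \(\lesssim\lambda_L kR_t^{-1}\). The tail gives \(\lambda_L\sum_{i>k}\hat\mu_i\lesssim\lambda_L k^{1-a}\asymp\lambda_L k(1+R_t)^{-1}\). Both are \(\lesssim k(1+R_t)^{-2}\) exactly when \(\lambda_L\lesssim(1+R_t)^{-1}\), i.e.\ \(R_t\lesssim A_L\), which always holds since \(R_t\le A_L\). This is why the choice \(\lambda_L=A_L^{-1}\) matters.

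The main obstacle is the empirical tail. Beyond rank \(N\), or in the low-eigenvalue bulk, \(\hat\mu_i\) need not follow \(i^{-a}\), and the estimate \(\sum_{i>k}\hat\mu_i\lesssim k^{1-a}\) must hold for all \(k\) up to \(A_L^{1/a}\). To get it, I would first write \(\sum_{i>k}\hat\mu_i\le\operatorname{tr}(\widehat\Sigma_{>k})\). I would then compare with the population tail \(\sum_{j>k}\mu_j\) by Gaussian concentration on the projected sample covariance, as in \eqref{eq:gd-bias-gaussian-tail}. The deviation term \(\sqrt{\operatorname{tr}(\Sigma_{>k}^2)/N}\) is absorbed because \(k\lesssim A_L^{1/a}\lesssim N^{1-\varepsilon}\). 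That margin is exactly where the \(N^{(1-\varepsilon)a}\) condition is used.

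A union bound over the dyadic values of \(k\) then makes the estimate uniform in \(t\). The final summed inequality follows by multiplying the pointwise bound by \(\rho_t\ge0\) and summing over \(t\).
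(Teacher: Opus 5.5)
Your proposal is correct and follows the paper's argument. You reduce both empirical kernels to spectral sums \(\gamma_t^2\sum_i\hat\mu_i^{q}e^{-c\hat\mu_iR_t}\) via the logarithmic product bound, control them with the power-law upper bound of Lemma~\ref{lem:aux-2025-E8}, lower-bound \(\kappa_t^{\mathrm{mp}}\asymp\gamma_t^2(1+R_t)^{1/a-2}\) through the block \(j\asymp(1+R_t)^{1/a}\) (kept below \(M\) by \(A_L^{1/a}\lesssim M\)), and absorb the \(\lambda_L\) term via \(\lambda_L(1+R_t)=(1+R_t)/A_L\lesssim1\).

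The empirical-tail step you flag as the main obstacle is in fact immediate. Since \(\operatorname{rank}(\widehat\Sigma)\le\min\{M,N\}\), that lemma's bound \(\mu_j(\widehat\Sigma)\lesssim j^{-a}\) already covers every nonzero eigenvalue. So the extra Gaussian concentration and the union bound are unnecessary, and this upper-bound direction does not need the \(N^{(1-\varepsilon)a}\) margin. Likewise, the head sum needs only the per-index bound \(x^2e^{-cxR}\lesssim(1+R)^{-2}\), not the dyadic decay, which would require lower bounds on \(\hat\mu_i\).
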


\begin{proof}
For either \(\Sigma\) or \(\widehat\Sigma\), the scalar logarithmic bounds
for \(0\le\gamma_s\nu\le c_0<1\) reduce the two kernels to spectral
sums of the form
\[
\gamma_t^2\sum_j\nu_j^q e^{-cR_t\nu_j},
\qquad
q\in\{1,2\}.
\]
For a power-law spectrum,
\[
\sum_j\nu_j^q e^{-cR\nu_j}
\lesssim
(1+R)^{-(q-1/a)}.
\]
The matching population lower estimate follows by summing over the indices
\(j\asymp(1+R)^{1/a}\). The condition \(A_L^{1/a}\lesssim M\) keeps these
indices below the sketch cutoff uniformly in \(t\). The empirical lower
spectral range from
Lemma~\ref{lem:aux-2025-E8} is sufficient because
\(R_t\le A_L\lesssim N^{(1-\varepsilon)a}\); the remaining empirical tail is
undamped and is controlled by its upper power-law bound.

Finally,
\[
\lambda_L
\frac{(1+R_t)^{-(1-1/a)}}
{(1+R_t)^{-(2-1/a)}}
\lesssim
\frac{1+R_t}{A_L}
\lesssim1.
\]
This proves the pointwise comparison. Multiplication by arbitrary
nonnegative \(\rho_t\)'s and summation preserve it.
\end{proof}

\begin{lemma}[Uniform lower comparison for the cross-kernel]
\label{lem:dynamic-kernel-lower-comparison}
Assume the power-law and Gaussian-design conditions. Suppose
\[
A_L\gtrsim1,
\qquad
A_L^{1/a}\le c_0M,
\qquad
A_L\log N\le c_0N
\]
for a sufficiently small \(a\)-dependent constant \(c_0>0\). Then,
conditioned on the usual high-probability sketch event, with probability at
least \(1-\exp(-\Omega(N))\) over the sample,
\begin{equation}
\widehat\kappa_t^{(\times)}
\gtrsim
\kappa_t^{\mathrm{mp}}
\qquad\text{uniformly for }t\in[L].
\label{eq:dynamic-cross-kernel-lower}
\end{equation}
\end{lemma}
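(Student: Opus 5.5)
The plan is to reduce both sides of \eqref{eq:dynamic-cross-kernel-lower} to the same scalar spectral sum at the remaining learning-rate mass \(R_t:=\sum_{s=t+1}^L\gamma_s\), and then compare the two sums on a common set of \(\asymp(1+R_t)^{1/a}\) indices. For the population side, Lemma~\ref{lem:app-final-risk-influence} already gives
\[
\kappa_t^{\mathrm{mp}}\asymp\gamma_t^2(1+R_t)^{1/a-2}
\]
uniformly in \(t\), because \((1+A_L)^{1/a}\le c_\star M\) follows from \(A_L^{1/a}\le c_0M\) and \(A_L\gtrsim1\) once \(c_0\) is small. It therefore suffices to prove
\[
\operatorname{tr}\Bigl(\Sigma\widehat\Sigma\prod_{s>t}(I-\gamma_s\widehat\Sigma)^2\Bigr)\gtrsim(1+R_t)^{1/a-2}
\]
uniformly in \(t\). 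I will work on the event of the derived-regularity Lemma~\ref{lem:app-derived-multipass-regularity}, where \(\gamma\mu_1(\widehat\Sigma)\le 1/8\). On that event the scalar logarithmic bound gives
\[
\prod_{s>t}(I-\gamma_s\widehat\Sigma)^2\succeq\exp(-CR_t\widehat\Sigma).
\]

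\textbf{Step 1: remove the mismatch between \(\Sigma\) and the eigenbasis of \(\widehat\Sigma\).} Write \(F:=\widehat\Sigma e^{-CR_t\widehat\Sigma}\succeq0\). Since \(\Sigma^{-1/2}\widehat\Sigma\Sigma^{-1/2}\) is controlled only on the head, I would not try to bound \(\Sigma\succeq c\widehat\Sigma\) globally. Instead, I will restrict to the top block. Let \(k:=\lceil C'(1+R_t)^{1/a}\rceil\le C'(1+A_L)^{1/a}\ll\min\{M,N/\log N\}\), using \(A_L^{1/a}\le c_0M\) and \(A_L\log N\le c_0N\) (with \(a>1\)). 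Let \(P_k\) be the projector onto the top \(2k\) eigenvectors of \(\Sigma\). Then \(\operatorname{tr}(\Sigma F)\ge\operatorname{tr}(P_k\Sigma P_kF)\ge\mu_{2k}(\Sigma)\operatorname{tr}(P_kF)\). So it remains to lower bound \(\operatorname{tr}(P_kF)\).

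\textbf{Step 2: head covariance concentration.} On the head subspace, the whitened empirical covariance \(\Sigma_k^{-1/2}P_k\widehat\Sigma P_k\Sigma_k^{-1/2}\) is a \(2k\)-dimensional Wishart matrix with \(N\gg k\) samples. With probability \(1-\exp(-\Omega(N))\) its eigenvalues lie in \([1/2,2]\), and this event does not depend on \(t\), since \(k\) is bounded by its largest value. Consequently \(P_k\widehat\Sigma P_k\asymp P_k\Sigma P_k\). By Cauchy interlacing together with Lemma~\ref{lem:aux-2025-E8}, the indices \(j\in[k,2k]\) satisfy \(\mu_j(\widehat\Sigma)\asymp j^{-a}\asymp(1+R_t)^{-1}\). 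For these indices the scalar function \(x e^{-CR_tx}\) evaluates to \(\asymp(1+R_t)^{-1}\).

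The hard step is turning this into \(\operatorname{tr}(P_kF)\gtrsim k(1+R_t)^{-1}\), because \(F\) is a function of the full \(\widehat\Sigma\), not of its compression. My first attempt is a variational (Ky Fan / interlacing) argument. Let \(U\) be the span of the eigenvectors of \(\widehat\Sigma\) with eigenvalues in the window \([c_1(1+R_t)^{-1},c_2(1+R_t)^{-1}]\). Its dimension is \(\gtrsim k\) by the empirical power law of Lemma~\ref{lem:aux-2025-E8}. On \(U\), \(F\succeq c(1+R_t)^{-1}\Pi_U\). So \(\operatorname{tr}(P_kF)\gtrsim(1+R_t)^{-1}\operatorname{tr}(P_k\Pi_U)\).

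To bound \(\operatorname{tr}(P_k\Pi_U)\) from below, I would show that most of \(U\) lies inside \(\operatorname{range}(P_k)\). For \(v\in U\), \(v^\top\widehat\Sigma v\asymp(1+R_t)^{-1}\). If \(P_k\) is taken to be the top \(C''k\) population directions, the complement has \(\Sigma\)-eigenvalues at most \((C''k)^{-a}\ll(1+R_t)^{-1}\). The tail portion of \(\widehat\Sigma\) is bounded by the tail trace concentration estimates (Lemma~\ref{lem:aux-G2-E3}-type bounds), giving \(\|(I-P_k)\widehat\Sigma(I-P_k)\|\lesssim(C''k)^{-a}+\operatorname{tr}(\Sigma_{>C''k})/N\). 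Since \(N\gg k^a\) by the horizon condition, this is \(\ll(1+R_t)^{-1}\). A Davis--Kahan-type leakage bound then gives \(\operatorname{tr}((I-P_k)\Pi_U)\le\frac12\dim U\).

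Combining the steps: \(\operatorname{tr}(\Sigma F)\gtrsim k^{-a}\cdot k(1+R_t)^{-1}\asymp(1+R_t)^{1/a-2}\). Multiplying by \(\gamma_t^2\) and comparing with the population scaling from the start gives \(\widehat\kappa_t^{(\times)}\gtrsim\kappa_t^{\mathrm{mp}}\). Uniformity in \(t\) holds because all the events used do not depend on \(t\); only the windows do, and these are dyadic ranges up to the largest \(k\). I expect the leakage step to be the delicate point.
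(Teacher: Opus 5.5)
Your reduction is sound and genuinely different from the paper's. You bound $\kappa_t^{\mathrm{mp}}$ from above by Lemma~\ref{lem:app-final-risk-influence}, lower-bound $\operatorname{tr}(\Sigma F)$ by $\mu_m(\Sigma)\operatorname{tr}(PF)$ for the top-$m$ population projector $P$, and lower-bound $\operatorname{tr}(PF)$ by $c\lambda\operatorname{tr}(P\Pi_U)$, with $\lambda:=(1+R_t)^{-1}$ and $U$ the empirical window. The gap is the step you flag yourself.

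A Davis--Kahan bound does not give $\operatorname{tr}((I-P)\Pi_U)\le\frac12\dim U$ from $\|(I-P)\widehat\Sigma(I-P)\|\le\delta\ll\lambda$ alone. The perturbation that would have to be small against the gap $\asymp\lambda$ is the off-diagonal block $(I-P)\widehat\Sigma P$. Its generic bound, $\|P\widehat\Sigma P\|^{1/2}\delta^{1/2}$, is of order $\lambda^{1/2}\gg\lambda$ when $R_t$ is large. What closes the step is positivity applied to the window eigenvectors. Set $Q:=I-P$ and take a unit eigenvector $v=p+q$ ($p=Pv$, $q=Qv$) with eigenvalue $\mu\ge c_1\lambda\ge4\delta$. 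Then:
\begin{itemize}
\item $(\mu-Q\widehat\Sigma Q)q=Q\widehat\Sigma p$ on $\operatorname{range}(Q)$, so $\|q\|\le\|Q\widehat\Sigma p\|/(\mu-\delta)$.
\item Cauchy--Schwarz for the form $\widehat\Sigma$ gives $\|Q\widehat\Sigma p\|^2\le\delta\,p^\top\widehat\Sigma p$.
\item Since $p^\top\widehat\Sigma q=(Q\widehat\Sigma p)^\top(\mu-Q\widehat\Sigma Q)^{-1}Q\widehat\Sigma p\ge0$, also $p^\top\widehat\Sigma p=\mu\|p\|^2-p^\top\widehat\Sigma q\le\mu$.
\end{itemize}
Hence $\|q\|^2\le\delta\mu/(\mu-\delta)^2\le4/9$, and summing over an eigenbasis of $U$ gives the leakage bound.

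An even shorter repair avoids eigenvectors. Since $\Pi_U\preceq(c_1\lambda)^{-1}\widehat\Sigma$, you get $\operatorname{tr}(Q\Pi_U)\le(c_1\lambda)^{-1}\operatorname{tr}(Q\widehat\Sigma Q)\lesssim(c_1\lambda)^{-1}\sum_{j>m}\mu_j(\Sigma)\lesssim c_1^{-1}(C'')^{1-a}\lambda^{-1/a}$, with probability $1-e^{-\Omega(N)}$ by Hanson--Wright. Because $a>1$, this is a small fraction of $\dim U\asymp\lambda^{-1/a}$ once $C''$ is large.

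Two smaller points. The tail compression bound comes from Gaussian sample-covariance concentration over $X$, not from Lemma~\ref{lem:aux-G2-E3}, which concerns $S$. The max-norm event of Lemma~\ref{lem:app-derived-multipass-regularity} has probability only $1-N^{-c_{\max}}$. For the product lower bound, use $\|\widehat\Sigma\|\lesssim1$ on the event of Lemma~\ref{lem:aux-2025-E8} with small $\gamma$ (or the standing contraction assumption) to keep the claimed $1-\exp(-\Omega(N))$.

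For comparison, the paper never passes through a population head, so it has no leakage to control. It sets $\lambda=R_t^{-1}$ and invokes Lemma~\ref{lem:aux-two-sided-covariance}, which applies because $d_\lambda\lesssim A_L^{1/a}$ is a small multiple of $N$. It uses only the upper half, $\widehat\Sigma+\lambda I\preceq C(\Sigma+\lambda I)$. If $Q_t$ projects onto the empirical eigenvalues in $[K_1\lambda,K_2\lambda]$ with $K_1>C-1$, every $v$ in its range satisfies $v^\top\Sigma v\ge((K_1+1)/C-1)\lambda\|v\|^2$. So $\operatorname{tr}(\Sigma Q_t)\gtrsim\lambda\operatorname{rank}(Q_t)\asymp\lambda R_t^{1/a}$ directly, and the filter $x\prod_{s>t}(1-\gamma_sx)^2\gtrsim\lambda$ on the window finishes.

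That route buys a one-line lower bound for $\Sigma$ on the empirical window. The cost is that it needs $K_1\lambda\lesssim\|\widehat\Sigma\|$, hence $R_t\ge R_0$, and a separate argument via $\operatorname{tr}(\Sigma\widehat\Sigma)\gtrsim1$ for $R_t<R_0$. Your route, once repaired, uses only a tail-trace or tail-compression estimate plus the empirical power law. It covers all $R_t$ at once, because your window constant $c_1$ may be small. In exchange it needs a union bound over the $t$-dependent cutoffs $m$ and a careful ordering of constants: $c_1,c_2$ first, then $C''$, then $c_0$.
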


\begin{proof}
Set \(\lambda_0:=A_L^{-1}\). Under the power law,
\[
d_{\lambda_0}
=
\operatorname{tr}\!\left(
\Sigma(\Sigma+\lambda_0I)^{-1}
\right)
\lesssim
A_L^{1/a}.
\]
The assumptions therefore imply the two-sided comparison in
Lemma~\ref{lem:aux-two-sided-covariance} at \(\lambda_0\), and hence at every
\(\lambda\ge\lambda_0\). Intersect this event with the empirical power-law
event in Lemma~\ref{lem:aux-2025-E8}. The condition
\(A_L\log N\le c_0N\) implies
\(A_L^{1/a}\le cN\), so all spectral indices used below lie in the range of
that lemma.

Fix \(t\), write \(R:=R_t\), and first suppose \(R\ge R_0\), where \(R_0\)
is a sufficiently large absolute constant. Put \(\lambda:=R^{-1}\).
Choose fixed constants \(K_2>K_1>1\), sufficiently large relative to the
constants in \eqref{eq:two-sided-regularized-covariance}, and let \(Q_t\) be
the spectral projector of \(\widehat\Sigma\) corresponding to eigenvalues in
\([K_1\lambda,K_2\lambda]\). Lemma~\ref{lem:aux-2025-E8} gives
\[
\operatorname{rank}(Q_t)\asymp R^{1/a}.
\]
Moreover, the upper half of the regularized comparison gives
\[
\widehat\Sigma+\lambda I
\preceq
C(\Sigma+\lambda I).
\]
Consequently, by choosing \(K_1>C-1\),
\[
Q_t\Sigma Q_t\succeq c\lambda Q_t,
\qquad
\operatorname{tr}(\Sigma Q_t)\gtrsim
\lambda R^{1/a}.
\]

Let
\[
h_t(x):=x\prod_{s=t+1}^L(1-\gamma_sx)^2.
\]
The stepsize stability condition and the logarithmic product bound imply
\[
h_t(x)\gtrsim \lambda
\qquad
\text{for }x\in[K_1\lambda,K_2\lambda],
\]
because \(xR\) is bounded above and below by fixed constants on that
interval. Hence
\[
\widehat\kappa_t^{(\times)}
=
\gamma_t^2\operatorname{tr}\bigl(\Sigma h_t(\widehat\Sigma)\bigr)
\gtrsim
\gamma_t^2\lambda\operatorname{tr}(\Sigma Q_t)
\gtrsim
\gamma_t^2R^{1/a-2}.
\]
The population power law and the same product comparison give
\(\kappa_t^{\mathrm{mp}}\lesssim\gamma_t^2R^{1/a-2}\).

It remains to consider \(R<R_0\). On the empirical power-law event,
\(\|\widehat\Sigma\|_2\lesssim1\), so all future products are bounded below
by a fixed positive constant. Therefore
\[
\widehat\kappa_t^{(\times)}
\gtrsim
\gamma_t^2\operatorname{tr}(\Sigma\widehat\Sigma).
\]
The scalar Gaussian concentration bound applied to
\(N^{-1}\sum_i z_i^\top\Sigma z_i\), together with
\(\operatorname{tr}(\Sigma^2)\asymp1\), yields
\(\operatorname{tr}(\Sigma\widehat\Sigma)\gtrsim1\). On the other hand,
\(\kappa_t^{\mathrm{mp}}\lesssim\gamma_t^2\operatorname{tr}(\Sigma^2)
\lesssim\gamma_t^2\). This proves the uniform comparison.
\end{proof}

\begin{lemma}[Shape and total mass of the multi-pass WSD influence kernel]
\label{lem:multipass-kernel-shape}
Under the power-law spectrum,
\[
\kappa_t^{\mathrm{mp}}
\lesssim
\gamma_t^2(1+R_t)^{-(2-1/a)}.
\]
Moreover, if \(A_{\mathrm d}^{\mathrm{mp}}\gtrsim1\), then
\begin{equation}
\sum_{t=1}^L\kappa_t^{\mathrm{mp}}
\asymp
\frac1{\tau_{\mathrm d}^{\mathrm{mp}}}
\sum_{j=1}^M
\min\!\left\{1,(A_{\mathrm d}^{\mathrm{mp}}\mu_j)^2\right\}
\asymp
\frac{\min\{M,(A_{\mathrm d}^{\mathrm{mp}})^{1/a}\}}
{\tau_{\mathrm d}^{\mathrm{mp}}}.
\label{eq:multipass-kernel-total}
\end{equation}
The constants may depend on the fixed WSD phase fractions and on the
stepsize-stability constant.
\end{lemma}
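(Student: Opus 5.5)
The pointwise bound reduces directly to Lemma~\ref{lem:app-final-risk-influence}. Stability gives $\gamma_s\mu_1\le c_0<1$, so the scalar bound $\log(1-z)\le -z$ yields
\[
\prod_{s>t}(1-\gamma_s\mu_j)^2\le e^{-c\mu_jR_t}.
\]
The upper half of the spectral sum estimate \eqref{eq:app-final-risk-spectral-sum} (the dyadic-block argument, which needs no cutoff condition) then gives $\kappa_t^{\mathrm{mp}}\lesssim\gamma_t^2(1+R_t)^{1/a-2}$ for every $t$.

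For the total mass, I would follow the proof of Lemma~\ref{lem:batch-var-E1} verbatim with $T$ replaced by $L$, using Lemma~\ref{lem:multipass-wsd-calculus} in place of Lemma~\ref{lem:onepass-wsd-calculus}. I would split $\sum_t\kappa_t^{\mathrm{mp}}$ into warmup, stable and decay phases and handle each as follows.

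\textbf{Decay phase.} Write $t=L_{\mathrm s}+r$ and fix an eigenvalue $\mu$. Then $\gamma_t=\gamma e^{-r/\tau}$ with $\tau:=\tau_{\mathrm d}^{\mathrm{mp}}$. The remaining mass is $R_t\asymp \gamma\tau(e^{-r/\tau}-L^{-1})$, which is $\asymp\gamma_t\tau$ as long as $r\le L_{\mathrm d}-\tau$. Substituting $y=\gamma_t\tau\mu$ and converting the sum over $r$ into an integral with $dr=-\tau\,dy/y$, I get
\[
\sum_r\gamma_t^2\mu^2e^{-cR_t\mu}\asymp\tau^{-1}\int_{\mu\gamma\tau/L}^{\mu\gamma\tau} y\,e^{-cy}\,dy .
\]
This integral is $\asymp\min\{1,(A_{\mathrm d}^{\mathrm{mp}}\mu)^2\}$. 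The lower limit only matters when $\mu A_{\mathrm d}/L$ is non-negligible, and the upper limit $A_{\mathrm d}\mu$ produces the square in the small-$\mu$ regime. The lower bound on the product comes from Lemma~\ref{lem:multipass-wsd-calculus}. This gives a two-sided estimate for the decay part, and in particular the full lower bound.

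\textbf{Warmup phase.} Here $R_t\gtrsim\gamma L$, so summing the pointwise bound gives a contribution $\lesssim\gamma^{1/a}L^{1/a-1}\le (A_{\mathrm d}^{\mathrm{mp}})^{1/a}/\tau$, using $\log L\ge1$. In the saturated case $M<(A_{\mathrm d})^{1/a}$, I would instead bound $\sum_j y_j^2e^{-cy_j}/L\lesssim M/L$.

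\textbf{Stable phase.} Put $k=L_{\mathrm s}-t$, so that $R_t\gtrsim\gamma(\tau+k)$. Summing the geometric series in $k$ gives, per eigenvalue, $\lesssim\gamma\mu e^{-c'A_{\mathrm d}\mu}$. Summing over $j$ with $\mu_j\asymp j^{-a}$ gives $\lesssim\gamma\min\{M,A_{\mathrm d}^{1/a}\}/A_{\mathrm d}=\min\{M,A_{\mathrm d}^{1/a}\}/\tau$. The unsaturated and saturated cases are handled separately: the first by power-law tail summation, the second by $ye^{-c'y}\lesssim1$.

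The second $\asymp$ in \eqref{eq:multipass-kernel-total} follows by splitting $\sum_j\min\{1,(A_{\mathrm d}\mu_j)^2\}$ at $j\asymp A_{\mathrm d}^{1/a}$, exactly as in Lemma~\ref{lem:batch-var-E2}.

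\textbf{Main obstacle.} I expect the delicate step to be the stable-phase bound in the unsaturated case. One must check that the factor $\gamma\mu_je^{-c'A_{\mathrm d}\mu_j}$, summed over small eigenvalues $j>A_{\mathrm d}^{1/a}$, does not exceed $A_{\mathrm d}^{1/a}/\tau$. This sum gives $\gamma\sum_{j\gtrsim A_{\mathrm d}^{1/a}}j^{-a}\asymp\gamma A_{\mathrm d}^{1/a-1}$, which is exactly the target order, and it is this step that uses $a>1$.
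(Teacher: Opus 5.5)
Your proposal is correct and follows essentially the same argument as the paper. The shared steps are the pointwise bound from the logarithmic product comparison plus the cutoff-free upper half of the dyadic spectral sum, a two-sided per-eigenvalue geometric/integral estimate in the decay phase that supplies the lower bound, and domination of the warmup and stable phases, including the per-eigenvalue bound \(ze^{-cz}\lesssim1\) in the saturated case. Your separate bound \(M/L\le M/\tau_{\mathrm d}^{\mathrm{mp}}\) for the warmup phase when \(M<(A_{\mathrm d}^{\mathrm{mp}})^{1/a}\) is actually a useful addition, because the paper's remark that the warmup estimate \(\gamma^{1/a}L^{1/a-1}\) is smaller than the decay order by a power of \(\log L\) only holds in the unsaturated regime.
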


\begin{proof}
The first claim follows from the logarithmic product comparison and the
power-law spectral-sum estimate with exponent two. We split the time sum into
warmup, stable, and decay phases.

For \(t\le L_{\mathrm w}\), the remaining stable phase has mass
\(R_t\gtrsim\gamma L\). Since
\(\sum_{t\le L_{\mathrm w}}\gamma_t^2\lesssim\gamma^2L\),
\[
\sum_{t=1}^{L_{\mathrm w}}\kappa_t^{\mathrm{mp}}
\lesssim
\gamma^{1/a}L^{1/a-1}.
\]
For the stable phase put \(k=L_{\mathrm s}-t\). Direct summation of the
decay schedule gives
\[
R_t\asymp\gamma(k+\tau_{\mathrm d}^{\mathrm{mp}}).
\]
Since \(A_{\mathrm d}^{\mathrm{mp}}\gtrsim1\), summing the pointwise kernel
bound over \(k\ge0\) yields
\[
\sum_{t=L_{\mathrm w}+1}^{L_{\mathrm s}}
\kappa_t^{\mathrm{mp}}
\lesssim
\gamma^{1/a}(\tau_{\mathrm d}^{\mathrm{mp}})^{1/a-1}
=
\frac{(A_{\mathrm d}^{\mathrm{mp}})^{1/a}}
{\tau_{\mathrm d}^{\mathrm{mp}}}.
\]

For the decay phase put \(q=e^{-1/\tau_{\mathrm d}^{\mathrm{mp}}}\) and
\(r=t-L_{\mathrm s}\). Then \(\gamma_t=\gamma q^r\),
\(q^{L_{\mathrm d}}=L^{-1}\), and
\(1-q\asymp1/\tau_{\mathrm d}^{\mathrm{mp}}\). For each eigenvalue \(\mu\),
a geometric-series/integral comparison gives
\begin{equation}
\sum_{t=L_{\mathrm s}+1}^L
\gamma_t^2\mu^2
\prod_{s=t+1}^L(1-\gamma_s\mu)^2
\asymp
\frac1{\tau_{\mathrm d}^{\mathrm{mp}}}
\min\!\left\{1,(A_{\mathrm d}^{\mathrm{mp}}\mu)^2\right\}.
\label{eq:multipass-wsd-decay-filter}
\end{equation}
Indeed, after setting
\(z_r=A_{\mathrm d}^{\mathrm{mp}}\mu q^r\), the sum is comparable to
\((\tau_{\mathrm d}^{\mathrm{mp}})^{-2}
\sum_r z_r^2e^{-c z_r}\), which is of order
\((\tau_{\mathrm d}^{\mathrm{mp}})^{-1}\min\{1,z_1^2\}\).
Thus the decay phase alone supplies the lower bound.

For the upper bound, the warmup estimate is smaller than the decay order by a
fixed power of \(\log L\). The stable estimate has the same order when the
cutoff lies below \(M\); when \(M<(A_{\mathrm d}^{\mathrm{mp}})^{1/a}\),
the per-eigenvalue bound
\(z e^{-cz}\lesssim1\), with
\(z=A_{\mathrm d}^{\mathrm{mp}}\mu_j\), improves it to
\(O(M/\tau_{\mathrm d}^{\mathrm{mp}})\). Hence all three phases are bounded
by the decay expression. Finally,
\[
\sum_{j=1}^M\min\{1,(A_{\mathrm d}^{\mathrm{mp}}\mu_j)^2\}
\asymp
\min\{M,(A_{\mathrm d}^{\mathrm{mp}})^{1/a}\}
\]
under \(\mu_j\asymp j^{-a}\), completing the proof.
\end{proof}

\begin{remark}[Exact weights are essential]
The phase estimates determine the total mass, but do not give a uniform
phasewise proxy for arbitrary dynamic batch schedules. We therefore retain
the exact \(\kappa_t^{\mathrm{mp}}\) in
\(\rho_{\mathrm{eff},\mathrm{WSD}}\), especially near phase boundaries and
the final endpoint.
\end{remark}

\subsection{Dynamic fluctuation bounds}
\label{subsec:wr-batch-fluc-source}

For \(i\in[N]\), let \(\theta_t^{(-i)}\) be the leave-one-out GD iterate
\[
\theta_t^{(-i)}
=
(I-\gamma_t\widehat\Sigma^{(-i)})\theta_{t-1}^{(-i)}
+\gamma_t(SX^\top y)^{(-i)},
\qquad
\theta_0^{(-i)}=0,
\]
where
\[
\widehat\Sigma^{(-i)}
:=
\frac1N\sum_{j\ne i}Sx_jx_j^\top S^\top,
\qquad
(SX^\top y)^{(-i)}
:=
\frac1N\sum_{j\ne i}Sx_jy_j.
\]
Define
\[
a_{\max}
:=
\max_{\substack{i\in[N]\\t\in[L]}}
|y_i-x_i^\top S^\top\theta_t^{(-i)}|,
\qquad
\lambda_L:=A_L^{-1},
\]
\[
R_p
:=
\|(\Sigma+\lambda_LI)^{1/2}
(\widehat\Sigma+\lambda_LI)^{-1/2}\|_2^2,
\]
\[
B_\Delta
:=
a_{\max}^2
\max_i\|Sx_i\|_2^2
R_p\frac{A_L^{2-s}}{N^2},
\]
and
\[
F_{\mathrm{mp}}
:=
R_p\left(
\max_i(x_i^\top S^\top u^\ast)^2
+\max_i\widetilde\varepsilon_i^2
+\max_{i,t}(x_i^\top S^\top\theta_t^{(-i)})^2
+\max_i\|x_i^\top S^\top\|_{\Sigma^{-s}}^2B_\Delta
\right).
\]
The leave-one-out argument used in the fixed-batch proof gives
\[
\sigma_\xi^2\lesssim F_{\mathrm{mp}}/R_p.
\]

\begin{lemma}[Upper bound for dynamic with-replacement batches]
\label{lem:wr-batch-fluc-upper}
Assume Assumptions~\ref{ass:data-gaussian},
\ref{ass:data-well-specified}, \ref{ass:data-power-law},
\ref{ass:gaussian-sketch}, and \ref{ass:stepsize}, and work on the
maximum-norm stability event controlled by
Lemma~\ref{lem:app-derived-multipass-regularity}. For a fixed
\(\varepsilon\in(0,1)\), suppose
\[
A_L\lesssim N^{(1-\varepsilon)a},
\qquad
A_L^{1/a}\lesssim M,
\].
Then
\[
\mathbb E[
\mathrm{Fluc}_{B_{1:L}}^{\mathrm{wr}}
\mid\mathcal D_{\mathrm{data}}
]
\lesssim
F_{\mathrm{mp}}
\sum_{t=1}^L
\rho_t^{\mathrm{wr}}\kappa_t^{\mathrm{mp}}.
\]
\end{lemma}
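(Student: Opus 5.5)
The bound follows by conditioning on \(\mathcal D_{\mathrm{data}}\) and applying the stochastic-approximation Lemma~\ref{lem:batch-D2-analogue} to the fluctuation recursion \eqref{eq:batch-fluc-recursion}. The abstract filter is then replaced by the population influence weights via Lemma~\ref{lem:dynamic-kernel-comparison}.

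Conditionally on the data, the recursion for \(\Delta_t\) has the form required in Lemma~\ref{lem:batch-D2-analogue}, with \(A_t=\widehat\Sigma_t^{\mathrm{wr}}\), \(\Sigma_\nu=\widehat\Sigma\) and \(\rho_t=\rho_t^{\mathrm{wr}}\in[0,1]\). The conditional centering \eqref{eq:wr-dynamic-mds}, the transition bounds of Lemma~\ref{lem:wr-batch-verify-D2}, and the noise covariance bound \eqref{eq:batch-claim19} supply the hypotheses. Stability \(\gamma C_A\le 1/8\) holds on the maximum-norm event of Lemma~\ref{lem:app-derived-multipass-regularity}. The lemma then gives
\[
\mathbb E[\Delta_L\Delta_L^\top\mid\mathcal D_{\mathrm{data}}]\preceq\tfrac98\sigma_\xi^2\sum_{t=1}^L\rho_t^{\mathrm{wr}}\gamma_t^2P_{t+1:L}\widehat\Sigma P_{t+1:L},
\]
with \(P_{t+1:L}=\prod_{s>t}(I-\gamma_s\widehat\Sigma)\). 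Taking the inner product with \(\Sigma\) yields \(\mathrm{Fluc}^{\mathrm{wr}}\lesssim\sigma_\xi^2\sum_t\rho_t^{\mathrm{wr}}\gamma_t^2\operatorname{tr}(\Sigma\widehat\Sigma P_{t+1:L}^2)\).

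Next, the population \(\Sigma\) in this trace must be converted to empirical quantities. Set \(\lambda_L=A_L^{-1}\) and insert \((\widehat\Sigma+\lambda_L I)^{\pm1/2}\). Since \(\widehat\Sigma\) commutes with \(P_{t+1:L}\),
\[
\operatorname{tr}(\Sigma\widehat\Sigma P^2)\le R_p\operatorname{tr}\bigl((\widehat\Sigma+\lambda_L I)\widehat\Sigma P^2\bigr)=R_p\bigl(\widehat\kappa^{(1)}_t+\lambda_L\widehat\kappa^{(0)}_t\bigr)/\gamma_t^2 .
\]
This step uses \(\Sigma\preceq\Sigma+\lambda_LI\preceq R_p(\widehat\Sigma+\lambda_LI)\), which is the definition of \(R_p\). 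Lemma~\ref{lem:dynamic-kernel-comparison} applies under the stated horizon conditions \(A_L\lesssim N^{(1-\varepsilon)a}\) and \(A_L^{1/a}\lesssim M\). It bounds \(\widehat\kappa^{(1)}_t+\lambda_L\widehat\kappa^{(0)}_t\lesssim\kappa_t^{\mathrm{mp}}\) uniformly in \(t\), so nonnegative weights \(\rho_t^{\mathrm{wr}}\) can be summed through. Combining these gives \(\mathrm{Fluc}^{\mathrm{wr}}\lesssim\sigma_\xi^2R_p\sum_t\rho_t^{\mathrm{wr}}\kappa_t^{\mathrm{mp}}\).

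Finally, the leave-one-out estimate \(\sigma_\xi^2\lesssim F_{\mathrm{mp}}/R_p\) stated just before the lemma is invoked, so \(\sigma_\xi^2R_p\lesssim F_{\mathrm{mp}}\) and the claim follows. The main obstacle is this last bound: controlling \(\max_{i,t}(x_i^\top S^\top(\theta_{t-1}-u^\ast))^2\) uniformly over the trajectory. I would decompose \(\theta_{t-1}=\theta_{t-1}^{(-i)}+(\theta_{t-1}-\theta_{t-1}^{(-i)})\), which produces the terms \((x_i^\top S^\top\theta^{(-i)}_t)^2\) and \((x_i^\top S^\top u^\ast)^2\) appearing in \(F_{\mathrm{mp}}\). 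The leave-one-out perturbation is then bounded in the \(\Sigma^{s}\)-weighted norm by \(B_\Delta\). To get that bound, I would write \(\theta_t-\theta_t^{(-i)}\) as a GD filter driven by the rank-one term \(N^{-1}Sx_i(y_i-x_i^\top S^\top\theta^{(-i)})\), which gives the factors \(a_{\max}^2\|Sx_i\|^2A_L^{2-s}/N^2\) and \(R_p\). Since the paper records this bound as carried over from the fixed-batch argument, I take it as given; the dynamic schedule plays no role in it, because \(\theta_t\) is full-batch GD.
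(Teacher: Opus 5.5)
Your proposal is correct and follows the paper's proof essentially step for step. You apply Lemma~\ref{lem:batch-D2-analogue} conditionally on \(\mathcal D_{\mathrm{data}}\), use \(\Sigma\preceq R_p(\widehat\Sigma+\lambda_LI)\) to pass to \(\widehat\kappa_t^{(1)}+\lambda_L\widehat\kappa_t^{(0)}\), invoke Lemma~\ref{lem:dynamic-kernel-comparison}, and finish with \(\sigma_\xi^2R_p\lesssim F_{\mathrm{mp}}\). The only cosmetic difference is where the covariance replacement happens: you apply it to the trace after the matrix filter bound, while the paper applies it pathwise to \(\|\Sigma^{1/2}\Delta_L\|_2^2\) and then uses that lemma's \(u\in\{0,1\}\) trace bounds; like the paper, you take the leave-one-out estimate for \(\sigma_\xi^2\) as given.
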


\begin{proof}
Apply Lemma~\ref{lem:batch-D2-analogue} with
\(A_t=\widehat\Sigma_t^{\mathrm{wr}}\) and
\(\Sigma_\nu=\widehat\Sigma\). The covariance replacement
\[
\|\Sigma^{1/2}\Delta_L\|_2^2
\le
R_p\left(
\|\widehat\Sigma^{1/2}\Delta_L\|_2^2
+\lambda_L\|\Delta_L\|_2^2
\right)
\]
and \(\sigma_\xi^2\lesssim F_{\mathrm{mp}}/R_p\) yield
\[
\mathbb E[
\mathrm{Fluc}_{B_{1:L}}^{\mathrm{wr}}
\mid\mathcal D_{\mathrm{data}}
]
\lesssim
F_{\mathrm{mp}}
\sum_{t=1}^L\rho_t^{\mathrm{wr}}
\left(
\widehat\kappa_t^{(1)}
+\lambda_L\widehat\kappa_t^{(0)}
\right).
\]
Lemma~\ref{lem:dynamic-kernel-comparison} proves the claim.
\end{proof}

\begin{lemma}[Empirical lower bound for dynamic with-replacement batches]
\label{lem:wr-batch-fluc-lower-empirical}
Under the notation above,
\begin{equation}
\mathbb E_{\varepsilon}
\left[
\mathrm{Fluc}_{B_{1:L}}^{\mathrm{wr}}
\mid S,w^\ast,X
\right]
\ge
\sigma^2\beta_{\xi}^{\mathrm{WSD}}
\sum_{t=1}^L
\rho_t^{\mathrm{wr}}\widehat\kappa_t^{(\times)}.
\label{eq:dynamic-fluc-empirical-lower}
\end{equation}
\end{lemma}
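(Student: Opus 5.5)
We prove the lower bound \eqref{eq:dynamic-fluc-empirical-lower} by computing the second moment of the fluctuation recursion \eqref{eq:batch-fluc-recursion} exactly, dropping the nonnegative transition-fluctuation terms, and then inserting the source covariance bound of Lemma~\ref{lem:wr-dynamic-covariance-lower}.

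Condition on \(\mathcal D_{\mathrm{data}}\) and set \(C_t:=\mathbb E_{\mathrm{batch}}[\Delta_t\Delta_t^\top\mid\mathcal D_{\mathrm{data}}]\).

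\textbf{Step 1 (exact recursion).} The current batch is independent of \(\mathcal F_{t-1}^{\mathrm{wr}}\), \(\Delta_{t-1}\) is \(\mathcal F_{t-1}^{\mathrm{wr}}\)-measurable, and \(\mathbb E[\xi_t^{\mathrm{wr}}\mid\mathcal F_{t-1}^{\mathrm{wr}}]=0\) by \eqref{eq:wr-dynamic-mds}. We must be careful with the cross term \(\mathbb E[(I-\gamma_t\widehat\Sigma_t^{\mathrm{wr}})\Delta_{t-1}(\xi_t^{\mathrm{wr}})^\top]\): \(A_t\) and \(\xi_t\) are correlated within a batch, so this term need not vanish. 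It does vanish after averaging, because conditioning on \(\mathcal F_{t-1}^{\mathrm{wr}}\) factors out \(\Delta_{t-1}\), and \(\Delta_{t-1}\) has zero conditional mean given \(\mathcal D_{\mathrm{data}}\) (by induction, exactly as in Lemma~\ref{lem:batch-D2-analogue}). Hence
\[
C_t=\mathbb E[(I-\gamma_tA_t)C_{t-1}(I-\gamma_tA_t)]+\gamma_t^2\rho_t^{\mathrm{wr}}G_t,
\]
using \eqref{eq:wr-dynamic-noise-cov}.

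\textbf{Step 2 (drop transition fluctuations).} Expand the transition term around its conditional mean \(\widehat\Sigma\):
\[
\mathbb E[(I-\gamma_tA_t)C(I-\gamma_tA_t)]=(I-\gamma_t\widehat\Sigma)C(I-\gamma_t\widehat\Sigma)+\gamma_t^2\mathbb E[(A_t-\widehat\Sigma)C(A_t-\widehat\Sigma)].
\]
For PSD \(C\) the last term is PSD, so it can be dropped. Since the map \(C\mapsto(I-\gamma\widehat\Sigma)C(I-\gamma\widehat\Sigma)\) preserves the PSD order, induction from \(C_0=0\) gives
\[
C_L\succeq\sum_{t=1}^L\gamma_t^2\rho_t^{\mathrm{wr}}P_{t+1:L}G_tP_{t+1:L},\qquad P_{t+1:L}:=\prod_{s>t}(I-\gamma_s\widehat\Sigma).
\]

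\textbf{Step 3 (average over response noise).} The kernel \(P_{t+1:L}\) depends only on \(X\) and \(S\), not on the response noise. We can therefore take \(\mathbb E_\varepsilon[\cdot\mid S,w^\ast,X]\) inside the sum and apply \eqref{eq:dynamic-source-covariance-lower} termwise. Both the \(\mathbb E_\varepsilon\) step and the congruence by \(P_{t+1:L}\) preserve PSD order, so
\[
\mathbb E_\varepsilon[C_L]\succeq\sigma^2\beta_\xi^{\mathrm{WSD}}\sum_t\gamma_t^2\rho_t^{\mathrm{wr}}P_{t+1:L}\widehat\Sigma P_{t+1:L}.
\]

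\textbf{Step 4 (identify the cross-kernel).} Pair with \(\Sigma\) via \(\mathrm{Fluc}=\operatorname{tr}(\Sigma C_L)\). Because \(P_{t+1:L}\) commutes with \(\widehat\Sigma\), each summand equals \(\gamma_t^2\operatorname{tr}(\Sigma\widehat\Sigma P_{t+1:L}^2)=\widehat\kappa_t^{(\times)}\) by \eqref{eq:dynamic-cross-kernel}. This gives \eqref{eq:dynamic-fluc-empirical-lower}.

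\textbf{Main obstacle.} The delicate point is Step 1, namely the cross term discussed there. A secondary point is Step 3: it needs \(P_{t+1:L}\) to be noise-free, which holds because \(\widehat\Sigma\) is built from covariates only. No stepsize or stability condition is needed anywhere, since \(\beta_\xi^{\mathrm{WSD}}\ge0\) by definition.
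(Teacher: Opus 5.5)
Your proposal is correct and follows essentially the same route as the paper. Both arguments:
\begin{itemize}
\item use the exact conditional second-moment recursion, killing the mixed terms via $\mathbb E_{\mathrm{batch}}[\Delta_{t-1}\mid\mathcal D_{\mathrm{data}}]=0$;
\item drop the PSD transition-fluctuation term and unroll by congruence;
\item average over the response noise using Lemma~\ref{lem:wr-dynamic-covariance-lower};
\item pair with $\Sigma$ to identify $\widehat\kappa_t^{(\times)}$.
\end{itemize}
You are in fact more explicit than the paper about why the within-batch correlation of $A_t$ and $\xi_t^{\mathrm{wr}}$ is harmless.

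One small caveat: your closing remark that no stability condition is needed applies only to this deduction. The proof of Lemma~\ref{lem:wr-dynamic-covariance-lower} itself relies on the empirical contraction, for example to place $\prod_{s<t}(I-\gamma_s\widehat\Sigma)$ between $0$ and $I$.
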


\begin{proof}
First condition on the entire sketched dataset. Since
\(\mathbb E_{\mathrm{batch}}[\Delta_t]=0\), the mixed terms vanish in the
conditional second-moment recursion. For every deterministic PSD matrix
\(C\),
\[
\begin{aligned}
&\mathbb E_{\mathrm{batch}}\!\left[
(I-\gamma_t\widehat\Sigma_t^{\mathrm{wr}})
C
(I-\gamma_t\widehat\Sigma_t^{\mathrm{wr}})
\mid\mathcal F_{t-1}^{\mathrm{wr}}
\right]
\\
&\qquad=
(I-\gamma_t\widehat\Sigma)C
(I-\gamma_t\widehat\Sigma)
\\
&\qquad\quad+
\gamma_t^2
\mathbb E_{\mathrm{batch}}\!\left[
(\widehat\Sigma_t^{\mathrm{wr}}-\widehat\Sigma)
C
(\widehat\Sigma_t^{\mathrm{wr}}-\widehat\Sigma)
\mid\mathcal F_{t-1}^{\mathrm{wr}}
\right]
\\
&\qquad\succeq
(I-\gamma_t\widehat\Sigma)C
(I-\gamma_t\widehat\Sigma).
\end{aligned}
\]
Unrolling this lower recursion and then taking expectation over the response
noises gives, by Lemma~\ref{lem:wr-dynamic-covariance-lower},
\[
\mathbb E_{\varepsilon,\mathrm{batch}}[\Delta_L\Delta_L^\top
\mid S,w^\ast,X]
\succeq
\sigma^2\beta_{\xi}^{\mathrm{WSD}}
\sum_{t=1}^L
\rho_t^{\mathrm{wr}}\gamma_t^2
\prod_{s=t+1}^L(I-\gamma_s\widehat\Sigma)
\widehat\Sigma
\prod_{s=t+1}^L(I-\gamma_s\widehat\Sigma).
\]
Multiplying by \(\Sigma\), taking the trace, and using
\eqref{eq:dynamic-cross-kernel} proves the claim.
\end{proof}

\begin{lemma}[Source-condition bound for dynamic with-replacement batches]
\label{lem:wr-batch-fluc-source}
Assume the conditions of Theorem~\ref{thm:multipass-batch-sgd-scaling}.
Then, with probability at least \(1-\exp(-\Omega(M))\) over \(S\),
\begin{align}
\sum_{t=1}^L\rho_t^{\mathrm{wr}}\kappa_t^{\mathrm{mp}}
&\lesssim
\mathbb E[\mathrm{Fluc}_{B_{1:L}}^{\mathrm{wr}}]
\lesssim
\log N
\left[
1+\frac{\log^2N\,A_L^{2-s}}{N^2}
\right]
\sum_{t=1}^L\rho_t^{\mathrm{wr}}\kappa_t^{\mathrm{mp}},
\label{eq:dynamic-fluc-kernel-sharp}\\
\rho_{\mathrm{eff},\mathrm{WSD}}^{\mathrm{wr}}
\mathcal K_{\mathrm d}
&\lesssim
\mathbb E[\mathrm{Fluc}_{B_{1:L}}^{\mathrm{wr}}]
\lesssim
\rho_{\mathrm{eff},\mathrm{WSD}}^{\mathrm{wr}}
\log N
\left[
1+\frac{\log^2N\,A_L^{2-s}}{N^2}
\right]
\mathcal K_{\mathrm d}.
\label{eq:dynamic-fluc-source-sharp}
\end{align}
In particular, the fluctuation rate is sharp up to the logarithmic factor.
\end{lemma}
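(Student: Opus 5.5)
The plan is to sandwich the fluctuation between the empirical lower bound of Lemma~\ref{lem:wr-batch-fluc-lower-empirical} and the conditional upper bound of Lemma~\ref{lem:wr-batch-fluc-upper}. The argument then converts both to the population kernel sum $\sum_t\rho_t^{\mathrm{wr}}\kappa_t^{\mathrm{mp}}$ and applies the total-mass identity of Lemma~\ref{lem:multipass-kernel-shape}. We first intersect the high-probability sketch event ($\mu_j(\Sigma)\asymp j^{-a}$, Lemma~\ref{lem:aux-G4-E5}) with the derived-regularity event of Lemma~\ref{lem:app-derived-multipass-regularity}. We then check that the theorem's hypotheses imply the horizon conditions needed downstream. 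Specifically, $\gamma L\log N\le c_1N$ gives $A_L\log N\le c_0N$, and hence $A_L\lesssim N^{(1-\varepsilon)a}$. The condition $(\gamma L)^{1/a}\le c_1M$ gives $A_L^{1/a}\le c_0M$, and $\gamma L/\log L\gtrsim1$ gives $A_{\mathrm d}^{\mathrm{mp}}\gtrsim1$.

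\emph{Lower bound.} We start from \eqref{eq:dynamic-fluc-empirical-lower}. On the max-norm event, $C_A=B_\nu^2\lesssim\log N$, so $A_LB_\nu^2/N\lesssim A_L\log N/N\le c_0$. After shrinking $c_1$, this gives $\beta_\xi^{\mathrm{WSD}}\ge1/2$. On the sample event of Lemma~\ref{lem:dynamic-kernel-lower-comparison}, which has probability $1-e^{-\Omega(N)}$, we have $\widehat\kappa_t^{(\times)}\gtrsim\kappa_t^{\mathrm{mp}}$ uniformly in $t$. Since the integrand is nonnegative, restricting the outer expectation over $X$ to this event and using $\sigma^2\asymp1$ yields $\mathbb E[\mathrm{Fluc}^{\mathrm{wr}}]\gtrsim\sum_t\rho_t^{\mathrm{wr}}\kappa_t^{\mathrm{mp}}$.

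\emph{Upper bound.} Lemma~\ref{lem:wr-batch-fluc-upper} gives the conditional bound $F_{\mathrm{mp}}\sum_t\rho_t^{\mathrm{wr}}\kappa_t^{\mathrm{mp}}$, and the factor $\sum_t\rho_t^{\mathrm{wr}}\kappa_t^{\mathrm{mp}}$ is deterministic given $S$. It therefore remains to show that $\mathbb E[F_{\mathrm{mp}}]\lesssim\log N\,[1+\log^2N\,A_L^{2-s}/N^2]$, and this is the main obstacle. The plan is as follows.
\begin{itemize}
\item Bound $R_p$ in the fourth-moment sense via Lemma~\ref{lem:aux-2025-E1}, which applies by the effective-dimension bound \eqref{eq:app-derived-effective-dimension}.
\item Bound each maximum of a Gaussian quadratic form (or of a label-noise square) over $i\in[N]$ by $O(\log N)$ times its variance, using Gaussian tails and a union bound. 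For the leave-one-out terms $x_i^\top S^\top\theta_t^{(-i)}$, the key point is that $\theta_t^{(-i)}$ is independent of $x_i$. It then suffices to show $\|\theta_t^{(-i)}\|_\Sigma\lesssim\|u^\ast\|_\Sigma+O(1)$, using GD stability together with the source condition $\mathbb E\|w^\ast\|_H^2\lesssim1$.
\item Bound $a_{\max}^2$ in the same way by $O(\log N)$, and $\max_i\|Sx_i\|^2\lesssim\log N$.
\item Control $\max_i\|x_i^\top S^\top\|^2_{\Sigma^{-s}}$ by $\operatorname{tr}(\Sigma^{1-s})$ plus a $\log N$ factor, choosing $s$ so that this trace is $O(1)$. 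Then $B_\Delta$ contributes $\log^3N\,A_L^{2-s}/N^2$, which is exactly the bracket in the statement.
\end{itemize}
The complement events, namely failure of the max-norm or contraction event with probability $N^{-c_{\max}r}$, are handled by integrating the tail \eqref{eq:app-derived-maxnorm-tail} against polynomial crude bounds on $\mathrm{Fluc}$ (which is at most polynomial in $N$ and $A_L$). This gives a negligible contribution.

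Finally, \eqref{eq:dynamic-fluc-source-sharp} follows immediately from \eqref{eq:dynamic-fluc-kernel-sharp}. By definition, $\sum_t\rho_t^{\mathrm{wr}}\kappa_t^{\mathrm{mp}}=\rho^{\mathrm{wr}}_{\mathrm{eff},\mathrm{WSD}}\sum_t\kappa_t^{\mathrm{mp}}$, and \eqref{eq:multipass-kernel-total} gives $\sum_t\kappa_t^{\mathrm{mp}}\asymp\mathcal K_{\mathrm d}$. A final union bound over the finitely many events yields probability $1-\exp(-\Omega(M))$ over $S$.
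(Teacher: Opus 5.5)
Your proposal follows the paper's proof essentially step for step. For the lower bound, both use Lemma~\ref{lem:wr-batch-fluc-lower-empirical} with $\beta_\xi^{\mathrm{WSD}}\ge c$ on a $C_A\lesssim\log N$ event, intersected with the event of Lemma~\ref{lem:dynamic-kernel-lower-comparison}. For the upper bound, both combine Lemma~\ref{lem:wr-batch-fluc-upper} with a $\log N\,[1+\log^2N\,A_L^{2-s}/N^2]$ bound on $\mathbb E[F_{\mathrm{mp}}]$. Both then conclude with $\sum_t\rho_t^{\mathrm{wr}}\kappa_t^{\mathrm{mp}}=\rho_{\mathrm{eff},\mathrm{WSD}}^{\mathrm{wr}}\sum_t\kappa_t^{\mathrm{mp}}$ and \eqref{eq:multipass-kernel-total}. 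Your itemized plan for $F_{\mathrm{mp}}$, with $s<1-1/a$ so that $\operatorname{tr}(\Sigma^{1-s})=O(1)$, spells out what the paper compresses into ``the leave-one-out argument gives''.

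The one step you add beyond the paper does not work as stated. On the complement of the stability event $\gamma C_A\le 1/8$, the per-step maps $I-\gamma_t\widehat\Sigma_t^{\mathrm{wr}}$ need not be contractions. Assumption~\ref{ass:stepsize} controls only $\mu_1(\widehat\Sigma)$, which still allows $\gamma\|z_i\|^2$ as large as $c_0N$. Take $B_t=1$ and one sample with $K:=\gamma\|z_i\|^2$ large. That sample alone contributes $(1-K)^2/N$ to the averaged one-step second-moment map along $z_i$, so the map is expansive there once $K^2\gg K+\gamma N$. Hence $\mathrm{Fluc}$ can grow geometrically in $L$. It admits no crude bound polynomial in $N$ and $A_L$, and integrating against the $N^{-c_{\max}r}$ tail does not make that contribution negligible. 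The paper does not attempt this. Lemma~\ref{lem:wr-batch-fluc-upper}, via Lemma~\ref{lem:wr-batch-verify-D2}, is proved on the maximum-norm stability event, and the paper's ``taking expectations'' is effectively an expectation on that event. You should do the same, or add a per-sample stability condition or truncation, rather than assert a polynomial bound on the complement.

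There are also two smaller points.
\begin{itemize}
\item For $\beta_\xi^{\mathrm{WSD}}\ge 1/2$ you need the Gaussian union-bound event $C_A\lesssim\log N$, which is what the paper uses. The stability event itself is not enough: it only gives $C_A\le 1/(8\gamma)$, hence $A_LC_A/N\lesssim L/N$, which is not small in general.
\item Bounding $\max_i\widetilde\varepsilon_i^2$ and $a_{\max}^2$ by $O(\log N)$ ``via Gaussian tails'' requires sub-Gaussian label noise. Assumption~\ref{ass:data-well-specified} only gives $\mathbb E\varepsilon^2=\sigma^2$. The paper's one-line appeal to the leave-one-out bound silently relies on the same unstated tail assumption.
\end{itemize}
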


\begin{proof}
For the upper bound, choose a fixed
\(\varepsilon\in(0,1-1/a)\). The horizon condition in the theorem implies
\(A_L\log N\le c_0N\), and hence
\(A_L\lesssim N^{(1-\varepsilon)a}\), and the leave-one-out argument gives
\[
\mathbb E[F_{\mathrm{mp}}\mid S,w^\ast]
\lesssim
(\sigma^2+\|w^\ast\|_H^2)\log N
\left[
1+\frac{\log^2N\,A_L^{2-s}}{N^2}
\right]
\]
for
\[
s=1-\frac1{a(1-\varepsilon/2)}.
\]
As in the fixed-batch proof, the theorem's moderate-reuse condition also gives
\[
\frac{\log^2N\,A_L^{2-s}}{N^2}
\lesssim
\frac{A_L}{N}
\lesssim1.
\]
Taking expectations in Lemma~\ref{lem:wr-batch-fluc-upper} proves the upper
half of \eqref{eq:dynamic-fluc-kernel-sharp}.

For the lower bound, condition on a sketch in the power-law event. Gaussian
concentration and a union bound give
\[
C_A=\max_{i\in[N]}\|Sx_i\|_2^2\lesssim\log N
\]
with probability at least \(1-\exp(-\Omega(N))\) over the covariates. If the
constant in \(A_L\log N\le c_0N\) is sufficiently small, this event implies
\(\beta_{\xi}^{\mathrm{WSD}}\ge c_\xi\) for an absolute
\(c_\xi>0\). Intersect it with the event in
Lemma~\ref{lem:dynamic-kernel-lower-comparison}. On this intersection,
Lemma~\ref{lem:wr-batch-fluc-lower-empirical} gives
\[
\mathbb E_{\varepsilon}
\left[
\mathrm{Fluc}_{B_{1:L}}^{\mathrm{wr}}
\mid S,w^\ast,X
\right]
\gtrsim
\sum_{t=1}^L
\rho_t^{\mathrm{wr}}\kappa_t^{\mathrm{mp}}.
\]
The intersection has probability bounded below by an absolute constant, so
taking expectation over the covariates and \(w^\ast\) proves the lower half
of \eqref{eq:dynamic-fluc-kernel-sharp}.

Finally,
\[
\sum_t\rho_t^{\mathrm{wr}}\kappa_t^{\mathrm{mp}}
=
\rho_{\mathrm{eff},\mathrm{WSD}}^{\mathrm{wr}}
\sum_t\kappa_t^{\mathrm{mp}},
\]
and the sharp total-mass estimate
\eqref{eq:multipass-kernel-total} proves
\eqref{eq:dynamic-fluc-source-sharp}.
\end{proof}

\begin{remark}[Constant-batch sharp rate]
If \(B_t\equiv B\), then
\(\rho_t^{\mathrm{wr}}\equiv1/B\) and
\(\rho_{\mathrm{eff},\mathrm{WSD}}^{\mathrm{wr}}=1/B\) algebraically.
Consequently,
\[
\frac{\mathcal K_{\mathrm d}}{B}
\lesssim
\mathbb E[\mathrm{Fluc}_{B}^{\mathrm{wr}}]
\lesssim
\frac{\log N}{B}
\left[
1+\frac{\log^2N\,A_L^{2-s}}{N^2}
\right]
\mathcal K_{\mathrm d}.
\]
\end{remark}

\section{Fluctuation Error under Multi-pass Batch SGD without Replacement}
\label{sec:multipass-batch-sgd-wor-fluc}

This section treats the deterministic schedule \(B_1,\dots,B_L\) when the
batch at each iteration is sampled uniformly without replacement. Define
\[
\rho_t^{\mathrm{wor}}
:=
\frac{N-B_t}{B_t(N-1)},
\qquad
\mathcal F_t^{\mathrm{wor}}
:=
\mathcal D_{\mathrm{data}}\vee\sigma(I_1,\dots,I_t).
\]
The fluctuation process is
\[
\Delta_t^{\mathrm{wor}}
=
(I-\gamma_t\widehat\Sigma_t^{\mathrm{wor}})
\Delta_{t-1}^{\mathrm{wor}}
+\gamma_t\xi_t^{\mathrm{wor}},
\qquad
\Delta_0^{\mathrm{wor}}=0,
\]
and
\[
\mathbb E[
\xi_t^{\mathrm{wor}}
\mid\mathcal F_{t-1}^{\mathrm{wor}}
]
=0.
\]
Set
\[
\mathrm{Fluc}_{B_{1:L}}^{\mathrm{wor}}
:=
\mathbb E_{\mathrm{batch}}
\|\Sigma^{1/2}\Delta_L^{\mathrm{wor}}\|_2^2.
\]

\begin{lemma}[Finite-population covariance identity]
\label{lem:wor-finite-pop-cov}
Let \(v_1,\dots,v_N\) be vectors in a finite-dimensional inner-product
space and suppose \(N^{-1}\sum_i v_i=0\). If \(I\subset[N]\) is uniform
among subsets of size \(B\), then
\[
\mathbb E_I\left[
\left(\frac1B\sum_{i\in I}v_i\right)
\left(\frac1B\sum_{i\in I}v_i\right)^\top
\right]
=
\frac{N-B}{B(N-1)}
\frac1N\sum_{i=1}^Nv_iv_i^\top.
\]
The identity also holds for centered matrices \(V_i\) in the form
\[
\mathbb E_I\left[
\left(\frac1B\sum_{i\in I}V_i\right)
C
\left(\frac1B\sum_{i\in I}V_i\right)
\right]
=
\frac{N-B}{B(N-1)}
\frac1N\sum_{i=1}^NV_iCV_i
\]
for every deterministic PSD matrix \(C\).
\end{lemma}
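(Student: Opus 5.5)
The plan is to reduce both identities to the second moments of the inclusion indicators of a uniformly random \(B\)-subset. Write \(\chi_i:=\mathbf 1\{i\in I\}\), so that \(\frac1B\sum_{i\in I}v_i=\frac1B\sum_{i=1}^N\chi_iv_i\). By exchangeability of the uniform subset,
\[
\mathbb E[\chi_i]=\frac BN,
\qquad
\mathbb E[\chi_i\chi_j]=\frac{B(B-1)}{N(N-1)}\quad(i\ne j).
\]
The second formula holds because the number of \(B\)-subsets containing both \(i\) and \(j\) is \(\binom{N-2}{B-2}\), and \(\binom{N-2}{B-2}/\binom NB=\frac{B(B-1)}{N(N-1)}\).

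Expanding the outer product and taking expectations gives
\[
\frac1{B^2}\Bigl[\frac BN\sum_iv_iv_i^\top+\frac{B(B-1)}{N(N-1)}\sum_{i\ne j}v_iv_j^\top\Bigr].
\]
The centering hypothesis \(\sum_iv_i=0\) gives \(\sum_{i\ne j}v_iv_j^\top=\bigl(\sum_iv_i\bigr)\bigl(\sum_jv_j\bigr)^\top-\sum_iv_iv_i^\top=-\sum_iv_iv_i^\top\). Substituting, the coefficient of \(\frac1N\sum_iv_iv_i^\top\) is
\[
\frac1B\Bigl(1-\frac{B-1}{N-1}\Bigr)=\frac{N-B}{B(N-1)},
\]
which is the first claim.

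For the matrix version, the same expansion applies to \(\frac1{B^2}\sum_{i,j}\chi_i\chi_jV_iCV_j\), since \(C\) is deterministic and independent of \(I\). Centering now reads \(\sum_iV_i=0\), so \(\sum_{i\ne j}V_iCV_j=\bigl(\sum_iV_i\bigr)C\bigl(\sum_jV_j\bigr)-\sum_iV_iCV_i=-\sum_iV_iCV_i\). The coefficient computation is identical to the vector case, so the second identity follows.

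Neither the PSD property of \(C\) nor finite dimensionality beyond bilinearity is needed. No step is a genuine obstacle; the only point requiring care is the pair-inclusion probability \(\mathbb E[\chi_i\chi_j]\), including the degenerate cases \(B=1\), where it is \(0\), and \(B=N\), where the result is \(0\) as it should be.
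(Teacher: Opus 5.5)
Your proof is correct and follows the paper's own argument: it uses the inclusion probabilities \(B/N\) and \(B(B-1)/(N(N-1))\) together with the centering identity \(\sum_{i\ne j}v_iv_j^\top=-\sum_i v_iv_i^\top\). For the matrix case the paper reduces to the vector calculation via \(V_iC^{1/2}\), which tacitly uses \(C\succeq 0\) and symmetric \(V_i\). Your direct bilinear expansion needs neither, which is a harmless simplification rather than a different route.
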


\begin{proof}
Expand the two sums and use
\[
\mathbb P(i\in I)=\frac BN,
\qquad
\mathbb P(i,j\in I)=\frac{B(B-1)}{N(N-1)}
\quad(i\ne j),
\]
together with
\(\sum_{i\ne j}v_iv_j^\top=-\sum_iv_iv_i^\top\). The matrix statement is
the same calculation applied to \(V_iC^{1/2}\).
\end{proof}

For the single-sample perturbations \(\zeta_t(i)\) from
Appendix~\ref{sec:multipass-batch-sgd-wr-fluc},
\[
\xi_t^{\mathrm{wor}}
=
\frac1{B_t}\sum_{i\in I_t}\zeta_t(i).
\]
Conditionally on \(\mathcal F_{t-1}^{\mathrm{wor}}\), the family
\((\zeta_t(i))_{i=1}^N\) is deterministic and centered. Therefore
\begin{equation}
\mathbb E[
\xi_t^{\mathrm{wor}}(\xi_t^{\mathrm{wor}})^\top
\mid\mathcal F_{t-1}^{\mathrm{wor}}
]
=
\rho_t^{\mathrm{wor}}G_t.
\label{eq:wor-dynamic-noise-cov}
\end{equation}
Likewise, with \(Z_i=Sx_ix_i^\top S^\top\),
\begin{equation}
\begin{aligned}
&\mathbb E[
(\widehat\Sigma_t^{\mathrm{wor}}-\widehat\Sigma)
C
(\widehat\Sigma_t^{\mathrm{wor}}-\widehat\Sigma)
\mid\mathcal F_{t-1}^{\mathrm{wor}}
]
\\
&\qquad=
\rho_t^{\mathrm{wor}}
\frac1N\sum_{i=1}^N
(Z_i-\widehat\Sigma)C(Z_i-\widehat\Sigma).
\end{aligned}
\label{eq:wor-dynamic-transition-cov}
\end{equation}
In particular,
\[
\mathbb E[
(\widehat\Sigma_t^{\mathrm{wor}}-\widehat\Sigma)^2
\mid\mathcal F_{t-1}^{\mathrm{wor}}
]
\preceq
\rho_t^{\mathrm{wor}}C_A\widehat\Sigma.
\]

\begin{corollary}[Dynamic fluctuation bounds without replacement]
\label{cor:wor-batch-fluc}
Under the assumptions of Lemma~\ref{lem:wr-batch-fluc-upper},
\[
\mathbb E[
\mathrm{Fluc}_{B_{1:L}}^{\mathrm{wor}}
\mid\mathcal D_{\mathrm{data}}
]
\lesssim
F_{\mathrm{mp}}
\sum_{t=1}^L
\rho_t^{\mathrm{wor}}\kappa_t^{\mathrm{mp}}.
\]
Under the source-condition assumptions of
Lemma~\ref{lem:wr-batch-fluc-source},
\begin{align}
\sum_{t=1}^L\rho_t^{\mathrm{wor}}\kappa_t^{\mathrm{mp}}
&\lesssim
\mathbb E[\mathrm{Fluc}_{B_{1:L}}^{\mathrm{wor}}]
\lesssim
\log N
\left[
1+\frac{\log^2N\,A_L^{2-s}}{N^2}
\right]
\sum_{t=1}^L\rho_t^{\mathrm{wor}}\kappa_t^{\mathrm{mp}},
\label{eq:wor-fluc-kernel-sharp}\\
\rho_{\mathrm{eff},\mathrm{WSD}}^{\mathrm{wor}}
\mathcal K_{\mathrm d}
&\lesssim
\mathbb E[\mathrm{Fluc}_{B_{1:L}}^{\mathrm{wor}}]
\lesssim
\rho_{\mathrm{eff},\mathrm{WSD}}^{\mathrm{wor}}
\log N
\left[
1+\frac{\log^2N\,A_L^{2-s}}{N^2}
\right]
\mathcal K_{\mathrm d}.
\label{eq:wor-fluc-source-sharp}
\end{align}
\end{corollary}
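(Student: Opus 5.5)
The plan is to repeat the with-replacement argument of Appendix~\ref{sec:multipass-batch-sgd-wr-fluc}, replacing \(\rho_t^{\mathrm{wr}}=1/B_t\) by \(\rho_t^{\mathrm{wor}}\) wherever a batch covariance appears. Every step of that analysis used the batch sampling only through two conditional second-moment identities: the noise covariance \eqref{eq:wr-dynamic-noise-cov} and the centered transition covariance \eqref{eq:wr-dynamic-transition-cov}. Lemma~\ref{lem:wor-finite-pop-cov} already gives their exact without-replacement analogues \eqref{eq:wor-dynamic-noise-cov} and \eqref{eq:wor-dynamic-transition-cov}, with the same \(G_t\) and the same average of \((Z_i-\widehat\Sigma)C(Z_i-\widehat\Sigma)\), but with the factor \(\rho_t^{\mathrm{wor}}\). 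Since \(0\le\rho_t^{\mathrm{wor}}\le1\), the standing hypothesis \(\rho_t\in[0,1]\) of Lemma~\ref{lem:batch-D2-analogue} still holds.

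\textbf{Upper bound conditional on the data.} Conditioned on \(\mathcal D_{\mathrm{data}}\), apply Lemma~\ref{lem:batch-D2-analogue} with \(A_t=\widehat\Sigma_t^{\mathrm{wor}}\), \(\Sigma_\nu=\widehat\Sigma\), and \(\rho_t=\rho_t^{\mathrm{wor}}\). The hypotheses are checked as follows.
\begin{enumerate}
\item The pair \((A_t,\xi_t^{\mathrm{wor}})\) is independent of the past given the data, because the batches \(I_t\) are drawn independently across iterations.
\item \(\mathbb E[A_t]=\widehat\Sigma\), and \(\xi_t^{\mathrm{wor}}\) is a martingale difference.
\item The noise covariance is bounded by \(\rho_t^{\mathrm{wor}}G_t\preceq\rho_t^{\mathrm{wor}}\sigma_\xi^2\widehat\Sigma\), using Lemma~\ref{lem:wr-dynamic-covariance}. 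That lemma concerns \(G_t\) only, so it applies unchanged.
\item The centered transition is bounded by \(\rho_t^{\mathrm{wor}}C_A\widehat\Sigma\).
\item Pathwise, \(0\preceq A_t\preceq C_AI\) and \(A_t^2\preceq C_AA_t\) hold as in Lemma~\ref{lem:wr-batch-verify-D2}.
\end{enumerate}
Then the covariance replacement through \(R_p\) and Lemma~\ref{lem:dynamic-kernel-comparison}, which holds for an arbitrary deterministic sequence \(\rho_t\in[0,1]\), give the first display verbatim, as in Lemma~\ref{lem:wr-batch-fluc-upper}.

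\textbf{Lower bound and source-condition rates.} I would redo Lemma~\ref{lem:wr-batch-fluc-lower-empirical} as follows. The lower recursion drops the PSD transition-fluctuation term, which is still nonnegative. The driving term equals \(\rho_t^{\mathrm{wor}}G_t\). Averaging over the response noise through Lemma~\ref{lem:wr-dynamic-covariance-lower}, which is again a statement about \(G_t\) alone, gives
\[
\mathbb E_\varepsilon[\mathrm{Fluc}^{\mathrm{wor}}_{B_{1:L}}\mid S,w^\ast,X]\ge\sigma^2\beta_\xi^{\mathrm{WSD}}\sum_t\rho_t^{\mathrm{wor}}\widehat\kappa_t^{(\times)}.
\]
Next, I would intersect with the events on which \(C_A\lesssim\log N\) (so that \(\beta_\xi^{\mathrm{WSD}}\ge c_\xi\)) and on which Lemma~\ref{lem:dynamic-kernel-lower-comparison} holds. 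Taking expectations then yields the lower half of \eqref{eq:wor-fluc-kernel-sharp}. For the upper half, I would take expectations of the conditional bound using the leave-one-out estimate of \(\mathbb E[F_{\mathrm{mp}}\mid S,w^\ast]\) quoted in the proof of Lemma~\ref{lem:wr-batch-fluc-source}. That estimate does not involve the sampling scheme. Finally, writing \(\sum_t\rho_t^{\mathrm{wor}}\kappa_t^{\mathrm{mp}}=\rho_{\mathrm{eff},\mathrm{WSD}}^{\mathrm{wor}}\sum_t\kappa_t^{\mathrm{mp}}\) and invoking \eqref{eq:multipass-kernel-total} gives \eqref{eq:wor-fluc-source-sharp}.

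\textbf{Main obstacle.} I expect the delicate point to be the case \(\rho_t^{\mathrm{wor}}=0\), that is, full batches. There the claim is that the fluctuation injected at step \(t\) vanishes exactly, which the identities deliver. The real check is that nothing in the with-replacement proof used a lower bound \(\rho_t\gtrsim1/B_t\) or \(\rho_t>0\); for example, the crude bound \(C_t\preceq\sigma_\xi^2\gamma I\) uses only \(\rho_t\le1\). I would also verify that the expectation step in the lower bound is legitimate. This requires that \(F_{\mathrm{mp}}\) and the kernel sum can be separated, which holds because \(\kappa_t^{\mathrm{mp}}\) and \(\rho_t^{\mathrm{wor}}\) are deterministic given \(S\).
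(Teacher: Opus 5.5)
Your proposal is correct and follows essentially the same route as the paper: the finite-population identities~(\ref{eq:wor-dynamic-noise-cov}) and~(\ref{eq:wor-dynamic-transition-cov}) verify the \(\rho_t\)-weighted hypotheses of Lemma~\ref{lem:batch-D2-analogue} with \(\rho_t^{\mathrm{wor}}\in[0,1]\). The proofs of Lemmas~\ref{lem:wr-batch-fluc-upper} and~\ref{lem:wr-batch-fluc-source} then go through with \(\rho_t^{\mathrm{wr}}\) replaced by \(\rho_t^{\mathrm{wor}}\), and the lower bound uses the same ingredients as the paper: the dropped PSD transition term, Lemma~\ref{lem:wr-dynamic-covariance-lower}, the \(C_A\lesssim\log N\) event, Lemma~\ref{lem:dynamic-kernel-lower-comparison}, and the total-mass estimate~(\ref{eq:multipass-kernel-total}).

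The only slip is in your closing remark. Separating \(F_{\mathrm{mp}}\) from the \(S\)-measurable kernel sum is what justifies taking expectations in the upper bound, not the lower bound. In the lower bound you instead restrict to the good event where \(\beta_{\xi}^{\mathrm{WSD}}\ge c_\xi\) and \(\widehat\kappa_t^{(\times)}\gtrsim\kappa_t^{\mathrm{mp}}\), and use nonnegativity of the fluctuation off that event, exactly as your lower-bound paragraph already does.
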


\begin{proof}
Equations~\eqref{eq:wor-dynamic-noise-cov} and
\eqref{eq:wor-dynamic-transition-cov} verify the two
\(\rho_t\)-weighted covariance assumptions in
Lemma~\ref{lem:batch-D2-analogue}. The remaining conditions follow
pathwise from
\[
0\preceq\widehat\Sigma_t^{\mathrm{wor}}\preceq C_AI,
\qquad
(\widehat\Sigma_t^{\mathrm{wor}})^2
\preceq C_A\widehat\Sigma_t^{\mathrm{wor}},
\]
and from independence of the batches across iterations. The proof of
Lemmas~\ref{lem:wr-batch-fluc-upper} and
\ref{lem:wr-batch-fluc-source} then applies with
\(\rho_t^{\mathrm{wr}}\) replaced by \(\rho_t^{\mathrm{wor}}\).
For the lower bound, the finite-population identity gives
\[
\mathbb E_\varepsilon\!\left[
\mathbb E_{\mathrm{batch}}\!\left[
\xi_t^{\mathrm{wor}}(\xi_t^{\mathrm{wor}})^\top
\mid\mathcal F_{t-1}^{\mathrm{wor}}
\right]
\middle|S,w^\ast,X
\right]
\succeq
\rho_t^{\mathrm{wor}}\sigma^2\beta_{\xi}^{\mathrm{WSD}}\widehat\Sigma
\]
by Lemma~\ref{lem:wr-dynamic-covariance-lower}. The transition comparison in
the proof of Lemma~\ref{lem:wr-batch-fluc-lower-empirical} is unchanged.
Thus
\[
\mathbb E_{\varepsilon}
\left[
\mathrm{Fluc}_{B_{1:L}}^{\mathrm{wor}}
\mid S,w^\ast,X
\right]
\ge
\sigma^2\beta_{\xi}^{\mathrm{WSD}}
\sum_{t=1}^L
\rho_t^{\mathrm{wor}}\widehat\kappa_t^{(\times)}.
\]
Applying Lemma~\ref{lem:dynamic-kernel-lower-comparison}, the
\(C_A\lesssim\log N\) event, and
\eqref{eq:multipass-kernel-total} proves both lower bounds.
\end{proof}

\begin{remark}[Full-batch consistency]
If \(B_t=N\) for every \(t\), then
\(\rho_t^{\mathrm{wor}}=0\) and
\(\widehat\Sigma_t^{\mathrm{wor}}=\widehat\Sigma\),
\(\widehat b_t^{\mathrm{wor}}=\widehat b\). Thus
\[
u_t^{\mathrm{wor}}=\theta_t
\qquad\text{for all }t,
\]
and the fluctuation error is exactly zero.
\end{remark}

\begin{remark}[Constant-batch sharp rate]
If \(B_t\equiv B\), then
\[
\rho_t^{\mathrm{wor}}
\equiv
\frac{N-B}{B(N-1)}
=
\rho_{\mathrm{eff},\mathrm{WSD}}^{\mathrm{wor}}.
\]
Consequently, in the sharp regime of the theorem,
\[
\frac{N-B}{B(N-1)}
\mathcal K_{\mathrm d}
\lesssim
\mathbb E[\mathrm{Fluc}_{B}^{\mathrm{wor}}]
\lesssim
\frac{N-B}{B(N-1)}\log N
\left[
1+\frac{\log^2N\,A_L^{2-s}}{N^2}
\right]
\mathcal K_{\mathrm d}.
\]
\end{remark}
\section{Collected Auxiliary Lemmas}
\label{sec:auxiliary-lemmas}

This section collects the spectral, moment, and concentration ingredients used
repeatedly across the appendix proofs for
\eqref{eq:proc-onepass-batch-sgd} and for the auxiliary multi-pass analyses in
Appendices~\ref{sec:multipass-batch-sgd-wr-fluc} and
\ref{sec:multipass-batch-sgd-wor-fluc}.

We use the common notation from Section~\ref{sec:app-prelim}, especially Section~\ref{subsec:app-block-notation}. Let \((\lambda_i)_{i\ge 1}\) denote the eigenvalues of \(H\) in non-increasing order. For integers \(0\le k_\ast\le k\le \infty\), define
\[
\Sigma_{k_\ast:k} := S_{k_\ast:k}H_{k_\ast:k}S_{k_\ast:k}^\top,
\qquad
\Sigma_{k:\infty} := S_{k:\infty}H_{k:\infty}S_{k:\infty}^\top.
\]
For any symmetric PSD matrix \(A\), we write \(\mu_1(A)\ge \mu_2(A)\ge \cdots\) for its eigenvalues.

\subsection{General concentration lemmas}

\begin{lemma}[Covariance replacement]
\label{lem:aux-2025-E1}
Let \(\lambda>0\). If
\[
\sum_{i=1}^M \frac{\mu_i(\Sigma)}{\mu_i(\Sigma)+\lambda}\le \frac{N}{4},
\]
then with probability at least \(1-\exp(-\Omega(N))\) over the sample,
\[
\bigl\|(\widehat{\Sigma}+\lambda I)^{-1/2}(\Sigma+\lambda I)^{1/2}\bigr\|_2\le 3.
\]
Moreover,
\[
\mathbb{E}_X\bigl\|(\widehat{\Sigma}+\lambda I)^{-1/2}(\Sigma+\lambda I)^{1/2}\bigr\|_2^4
\le
100+\exp(-cN)\frac{\|\Sigma\|_2^2}{\lambda^2}
\]
for some absolute constant \(c>0\).
\end{lemma}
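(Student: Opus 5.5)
We will reduce both claims to a single matrix concentration statement for the whitened empirical covariance. Conditional on \(S\), write \(z_i=Sx_i\sim\mathcal N(0,\Sigma)\) and set \(\Sigma_\lambda:=\Sigma+\lambda I\), \(\widetilde z_i:=\Sigma_\lambda^{-1/2}z_i\). Then \(\widetilde z_i\sim\mathcal N(0,\widetilde\Sigma)\) with \(\widetilde\Sigma:=\Sigma_\lambda^{-1/2}\Sigma\Sigma_\lambda^{-1/2}\), which satisfies \(\|\widetilde\Sigma\|_2\le1\) and \(\operatorname{tr}(\widetilde\Sigma)=d_\lambda:=\sum_i\mu_i(\Sigma)/(\mu_i(\Sigma)+\lambda)\le N/4\). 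Observe that
\[
\bigl\|(\widehat\Sigma+\lambda I)^{-1/2}\Sigma_\lambda^{1/2}\bigr\|_2^2
=\bigl\|\Sigma_\lambda^{1/2}(\widehat\Sigma+\lambda I)^{-1}\Sigma_\lambda^{1/2}\bigr\|_2
=\mu_{\min}\bigl(\Sigma_\lambda^{-1/2}\widehat\Sigma\Sigma_\lambda^{-1/2}+\lambda\Sigma_\lambda^{-1}\bigr)^{-1}.
\]
Hence it suffices to show, with probability \(1-e^{-\Omega(N)}\), that \(\Sigma_\lambda^{-1/2}\widehat\Sigma\Sigma_\lambda^{-1/2}\succeq \widetilde\Sigma-\tfrac{8}{9}\cdot\) something small, or more concretely that \(\Sigma_\lambda^{-1/2}(\widehat\Sigma+\lambda I)\Sigma_\lambda^{-1/2}\succeq \tfrac19 I\). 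Since \(\widetilde\Sigma+\lambda\Sigma_\lambda^{-1}=I\), this is equivalent to \(\|\widetilde\Sigma-\frac1N\sum_i\widetilde z_i\widetilde z_i^\top\|_2\le 8/9\), and a one-sided lower deviation would also suffice.

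For the main step we will use the standard Gaussian covariance concentration, e.g. Koltchinskii--Lounici or the same inequality as \eqref{eq:gd-bias-gaussian-tail}. For \(u\ge1\), with probability at least \(1-2e^{-u}\),
\[
\Bigl\|\tfrac1N\textstyle\sum_i\widetilde z_i\widetilde z_i^\top-\widetilde\Sigma\Bigr\|_2
\lesssim \|\widetilde\Sigma\|_2\Bigl(\sqrt{\tfrac{r}{N}}+\tfrac rN+\sqrt{\tfrac uN}+\tfrac uN\Bigr),
\qquad r:=\operatorname{tr}(\widetilde\Sigma)/\|\widetilde\Sigma\|_2 .
\]
The effective rank \(r\) may be as large as \(d_\lambda/\|\widetilde\Sigma\|_2\), but \(\|\widetilde\Sigma\|_2\, r=d_\lambda\le N/4\) and \(\|\widetilde\Sigma\|_2\le1\), so every term is at most an absolute multiple of \(\sqrt{d_\lambda/N}+d_\lambda/N+\sqrt{u/N}+u/N\). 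Absolute constants are a concern here, because \(d_\lambda\le N/4\) together with a generic constant need not give a bound below \(8/9\). To avoid this, we will use the \emph{lower} tail only, which has sharp constants. Gordon's inequality, applied to the Gaussian matrix \(G\widetilde\Sigma^{1/2}\) direction by direction, or equivalently the lower small-ball estimate for sums of independent PSD rank-one matrices, gives \(v^\top\frac1N\sum\widetilde z_i\widetilde z_i^\top v\ge(\sqrt{v^\top\widetilde\Sigma v}-\sqrt{d_\lambda/N}-t)_+^2\) uniformly in unit vectors \(v\), with probability at least \(1-e^{-Nt^2/2}\). Adding \(\lambda v^\top\Sigma_\lambda^{-1}v=1-v^\top\widetilde\Sigma v\) and taking \(t=1/10\) together with \(\sqrt{d_\lambda/N}\le1/2\), we minimize over \(s=v^\top\widetilde\Sigma v\in[0,1]\) the quantity \((\sqrt s-0.6)_+^2+1-s\). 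The minimum is at least about \(0.36\cdot\) a constant, which exceeds \(1/9\). This gives the operator bound by 3. The main obstacle is verifying this scalar minimization and the uniform Gordon bound with \(\widetilde\Sigma\) not isotropic. Should that approach fail, we will fall back on Chevet's inequality.

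For the fourth moment, we will split on the good event \(\mathcal E\). On \(\mathcal E\) the fourth power is at most \(81\le100\). On \(\mathcal E^c\) we use the trivial bound \(\widehat\Sigma+\lambda I\succeq\lambda I\), which gives \(\|\cdot\|_2^4\le\|\Sigma_\lambda\|_2^2/\lambda^2\le(\|\Sigma\|_2+\lambda)^2/\lambda^2\). We then have to absorb the additive term \(\lambda^2/\lambda^2\cdot e^{-cN}\le 19\) into the constant 100 and bound \(2\|\Sigma\|_2\lambda\le \|\Sigma\|_2^2+\lambda^2\), so that the total is at most \(81+2e^{-cN}+2e^{-cN}\|\Sigma\|_2^2/\lambda^2\). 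After adjusting \(c\), this yields the stated bound.
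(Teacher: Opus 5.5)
The paper states this lemma without proof; it is one of the collected auxiliary results imported from the cited prior work. The natural comparison is therefore with its in-house analogue, Lemma~\ref{lem:aux-two-sided-covariance}. That argument uses the same whitening by \((\Sigma+\lambda I)^{-1/2}\) as you do. It then applies two-sided Gaussian covariance concentration with unspecified constants, which gives \(\widehat\Sigma+\lambda I\succeq\tfrac12(\Sigma+\lambda I)\) only under \(d_\lambda\le c_0N\) for a small absolute \(c_0\).

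You are right that this cannot reach the stated threshold \(N/4\). Your one-sided Gordon route is genuinely sharper and does deliver the stated constants. With \(\delta:=\sqrt{d_\lambda/N}+t\), the minimum of \((\sqrt s-\delta)_+^2+1-s\) over \(s\in[0,1]\) is \((1-\delta)^2\), attained at \(s=1\). So you need exactly \(\delta\le 2/3\), and \(\sqrt{1/4}=1/2\) leaves slack \(1/6\). Your ``\(0.36\) times a constant'' is wrong: with \(\delta=0.6\) the minimum is \(0.16\), which still exceeds \(1/9\). Also, the two-sided bound \(\|\widetilde\Sigma-\tfrac1N\sum_i\widetilde z_i\widetilde z_i^\top\|_2\le 8/9\) is sufficient, not equivalent. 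For the paper's own applications the weaker route already suffices, because Lemma~\ref{lem:app-derived-multipass-regularity} shows \(d_\lambda=o(N)\) there; your route buys the lemma as literally stated. Your fourth-moment split is correct, and it is exactly what produces the \(e^{-cN}\|\Sigma\|_2^2/\lambda^2\) term.

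The step you flagged needs to be made precise in two ways.

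\textbf{The form of Gordon you need.} A bound ``direction by direction'' is not enough, since a net plus union bound reintroduces unspecified constants. You need Gordon's min--max comparison over the whole ellipsoid \(W=\widetilde\Sigma^{1/2}S^{M-1}\). Because \(W\) is not on the sphere, you need its threshold (deterministic-shift) form. Write the whitened empirical covariance as \(N^{-1}\widetilde\Sigma^{1/2}\Gamma^\top\Gamma\widetilde\Sigma^{1/2}\), with \(\Gamma\) an \(N\times M\) standard Gaussian matrix. Let \(h,g,g_0\) be independent standard Gaussians in \(\mathbb R^N\), \(\mathbb R^M\), \(\mathbb R\).
\begin{itemize}
\item Compare \(\langle\Gamma w,u\rangle+\|w\|_2\,g_0\) with \(\|w\|_2\langle h,u\rangle+\langle g,w\rangle\). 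The covariance gap is \((\|w\|_2\|w'\|_2-\langle w,w'\rangle)(1-\langle u,u'\rangle)\ge0\), which is the correct direction for a lower bound.
\item Subtract the deterministic shift \(a_N\|w\|_2\), where \(a_N=\mathbb E\|h\|_2\).
\item Apply Gaussian concentration to \(\Gamma\mapsto\min_{w\in W}(\|\Gamma w\|_2-a_N\|w\|_2)\), which is \(\|\widetilde\Sigma\|_2^{1/2}\le1\)-Lipschitz in Frobenius norm.
\end{itemize}
Since \(\mathbb E\|\widetilde\Sigma^{1/2}g\|_2\le\sqrt{d_\lambda}\), this gives your uniform inequality up to additive \(O(N^{-1/2})\) terms. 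These come from \((g_0)_+\), from \(|\|h\|_2-a_N|\), and from \(a_N<\sqrt N\). They are absorbed by the slack \(1/6\) once \(N\) exceeds an absolute constant, and the same applies to your final ``adjust \(c\)'' step.

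\textbf{Your fallbacks would not work.} Chevet's inequality controls only upper deviations (an \(\mathbb E\sup\)). Small-ball or lower-tail bounds for sums of PSD rank-one matrices carry unspecified constants. Neither reaches the \(N/4\) threshold, so Gordon's comparison is the right and essentially necessary tool here.
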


\begin{lemma}[Two-sided regularized covariance comparison]
\label{lem:aux-two-sided-covariance}
Condition on \(S\), and suppose \(z=Sx\sim\mathcal N(0,\Sigma)\). For
\(\lambda>0\), let
\[
d_{\lambda}
:=
\operatorname{tr}\!\left(\Sigma(\Sigma+\lambda I)^{-1}\right).
\]
There exist absolute constants \(c_0,c_1,c_2,c_3>0\) such that, whenever
\(d_{\lambda}\le c_0N\), with probability at least
\(1-2\exp(-c_1N)\) over the sample,
\begin{equation}
c_2(\Sigma+\lambda I)
\preceq
\widehat\Sigma+\lambda I
\preceq
c_3(\Sigma+\lambda I).
\label{eq:two-sided-regularized-covariance}
\end{equation}
Moreover, if \eqref{eq:two-sided-regularized-covariance} holds at
\(\lambda_0\), then, after changing only the absolute constants, it holds
simultaneously for every \(\lambda\ge\lambda_0\).
\end{lemma}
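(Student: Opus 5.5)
The plan is to whiten by the regularized population covariance and reduce both inequalities in \eqref{eq:two-sided-regularized-covariance} to a single operator-norm concentration bound for a Gaussian sample covariance whose trace is the effective dimension \(d_\lambda\). Put \(W:=(\Sigma+\lambda I)^{-1/2}\) and \(w_i:=Wz_i\). Then the \(w_i\) are i.i.d. \(\mathcal N(0,\Sigma')\) with
\[
\Sigma':=\Sigma(\Sigma+\lambda I)^{-1},\qquad
\|\Sigma'\|_2\le1,\qquad
\operatorname{tr}(\Sigma')=d_\lambda.
\]
Writing \(\widehat\Sigma':=N^{-1}\sum_iw_iw_i^\top=W\widehat\Sigma W\), we have
\[
W(\widehat\Sigma+\lambda I)W=\widehat\Sigma'+\lambda(\Sigma+\lambda I)^{-1},
\qquad
I=\Sigma'+\lambda(\Sigma+\lambda I)^{-1}.
\]
Hence \(W(\widehat\Sigma+\lambda I)W=I+(\widehat\Sigma'-\Sigma')\). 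Conjugating back by \(W^{-1}\), it suffices to show
\[
\|\widehat\Sigma'-\Sigma'\|_2\le\tfrac12
\]
with probability at least \(1-2e^{-c_1N}\). This would give \eqref{eq:two-sided-regularized-covariance} with \(c_2=1/2\) and \(c_3=3/2\).

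For this step I would invoke a dimension-free Gaussian covariance concentration inequality of Koltchinskii--Lounici type. With \(r(\Sigma')=\operatorname{tr}(\Sigma')/\|\Sigma'\|_2\), it states that with probability at least \(1-e^{-u}\),
\[
\|\widehat\Sigma'-\Sigma'\|_2\lesssim\|\Sigma'\|_2\left(\sqrt{\tfrac{r(\Sigma')}{N}}+\tfrac{r(\Sigma')}{N}+\sqrt{\tfrac uN}+\tfrac uN\right).
\]
Since \(\|\Sigma'\|_2\le1\), the first two terms are at most \(\sqrt{\|\Sigma'\|_2d_\lambda/N}+d_\lambda/N\le\sqrt{d_\lambda/N}+d_\lambda/N\), and the last two are at most \(\sqrt{u/N}+u/N\). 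Choose \(u=c_1N\) and then \(c_0,c_1\) small enough that the right-hand side is at most \(1/2\) whenever \(d_\lambda\le c_0N\). This yields the claim with failure probability \(e^{-c_1N}\).

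The dimension-freeness is the main obstacle. The sketch dimension \(M\) may exceed \(N\), so a plain \(\varepsilon\)-net argument over \(\mathbb R^M\) is not enough. One must use a bound that depends only on \(\operatorname{tr}(\Sigma')\). If a self-contained argument is preferred to citing Koltchinskii--Lounici, I would split \(\Sigma'\) spectrally at level \(1/4\). The head has at most \(4d_\lambda\) directions, where an \(\varepsilon\)-net gives two-sided control. For the tail, only an upper bound \(\|\widehat\Sigma'_{\mathrm{tail}}\|_2\lesssim1/4+d_\lambda/N+\sqrt{u/N}+u/N\) is needed, and it follows from the Gaussian quadratic-form tail used around \eqref{eq:gd-bias-gaussian-tail}. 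The cross terms are then handled by Cauchy--Schwarz.

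For the monotonicity claim, suppose \eqref{eq:two-sided-regularized-covariance} holds at \(\lambda_0\), and let \(\lambda\ge\lambda_0\). Adding \((\lambda-\lambda_0)I\) to both sides gives
\[
\widehat\Sigma+\lambda I\succeq c_2(\Sigma+\lambda_0I)+(\lambda-\lambda_0)I\succeq\min\{c_2,1\}(\Sigma+\lambda I),
\]
and similarly
\[
\widehat\Sigma+\lambda I\preceq\max\{c_3,1\}(\Sigma+\lambda I).
\]
These hold on the same event simultaneously for all \(\lambda\ge\lambda_0\), and only the constants change.
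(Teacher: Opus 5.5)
Your proposal is correct and follows the paper's proof essentially verbatim. You whiten by $(\Sigma+\lambda I)^{-1/2}$, apply Gaussian covariance concentration to the whitened sample covariance (whose population counterpart has norm at most one and trace $d_\lambda$) to get deviation at most $1/2$, conjugate back, and handle $\lambda\ge\lambda_0$ by adding $(\lambda-\lambda_0)I$ and moving the constants toward one. Your explicit appeal to a dimension-free, effective-rank bound of Koltchinskii--Lounici type is exactly what the paper's unnamed ``standard Gaussian covariance concentration bound'' has to be, since $M$ may exceed $N$.
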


\begin{proof}
Put
\[
g_i:=(\Sigma+\lambda I)^{-1/2}z_i,
\qquad
K:=\mathbb E[g_ig_i^\top]
=
(\Sigma+\lambda I)^{-1/2}\Sigma(\Sigma+\lambda I)^{-1/2}.
\]
Then \(0\preceq K\preceq I\) and
\(\operatorname{tr}(K)=d_\lambda\). The standard Gaussian covariance
concentration bound, applied to \(N^{-1}\sum_i g_ig_i^\top\), gives
\[
\left\|
\frac1N\sum_{i=1}^Ng_ig_i^\top-K
\right\|_2
\le \frac12
\]
with probability at least \(1-2\exp(-c_1N)\), provided the absolute constant
in \(d_\lambda\le c_0N\) is sufficiently small. Conjugating the inequalities
\(-I/2\preceq N^{-1}\sum_i g_ig_i^\top-K\preceq I/2\) by
\((\Sigma+\lambda I)^{1/2}\), and then adding \(\lambda I\), proves
\eqref{eq:two-sided-regularized-covariance}.

For \(\lambda\ge\lambda_0\), add
\((\lambda-\lambda_0)I\) to all three matrices in the comparison at
\(\lambda_0\).  Since the comparison constants can be enlarged toward one,
the same two-sided estimate follows uniformly for every larger regularization
level.
\end{proof}

\begin{lemma}[Sketched fourth-moment and residual covariance bounds]
\label{lem:aux-sketch-moment}
Assume Assumptions~\ref{ass:data-gaussian} and
\ref{ass:data-well-specified}. Condition on the sketch matrix \(S\) and the
target \(w^\ast\), and let
\[
z:=Sx,
\qquad
\Sigma:=SHS^\top,
\qquad
u^\ast:=\Sigma^{-1}SHw^\ast.
\]
Then there exists an absolute constant \(\alpha>0\) such that the following
hold.
\begin{enumerate}[label=\textnormal{(\roman*)}]
    \item For every PSD matrix \(A\in\mathbb R^{M\times M}\),
    \[
    \mathbb E[zz^\top A zz^\top]
    \preceq
    \alpha\,\operatorname{tr}(\Sigma A)\Sigma.
    \]

    \item Writing
    \[
    \widetilde{\sigma}^2(w^\ast):=2\bigl(\sigma^2+\alpha\|w^\ast\|_H^2\bigr),
    \]
    one has
    \[
    \mathbb E\!\left[(y-z^\top u^\ast)^2 zz^\top\right]
    \preceq
    \widetilde{\sigma}^2(w^\ast)\,\Sigma.
    \]
\end{enumerate}
Under Gaussian design, one may take \(\alpha=3\).
\end{lemma}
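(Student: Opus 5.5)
The plan is to condition on \(S\) and \(w^\ast\) throughout and use the fact that, under Assumption~\ref{ass:data-gaussian}, the pair \((x,z)\) with \(z=Sx\) is jointly Gaussian and centered, with \(z\sim\mathcal N(0,\Sigma)\). Both claims then reduce to Gaussian moment identities: Isserlis' (Wick's) formula for part (i), and a decorrelation argument for part (ii). I do not expect to need any earlier result beyond the definition \(u^\ast=\Sigma^{-1}SHw^\ast\).

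For (i), I will apply Isserlis' theorem entrywise to the fourth moment of \(z\). For any symmetric \(A\) this gives
\[
\mathbb E[zz^\top Azz^\top]=2\Sigma A\Sigma+\operatorname{tr}(\Sigma A)\Sigma.
\]
It then suffices to show \(\Sigma A\Sigma\preceq\operatorname{tr}(\Sigma A)\Sigma\) for PSD \(A\). To see this, write
\[
\Sigma A\Sigma=\Sigma^{1/2}\bigl(\Sigma^{1/2}A\Sigma^{1/2}\bigr)\Sigma^{1/2}.
\]
The middle factor is PSD with operator norm at most its trace, which equals \(\operatorname{tr}(\Sigma A)\). Conjugation by \(\Sigma^{1/2}\) therefore yields the claim, and combining the two displays gives \(\alpha=3\).

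For (ii), I will use the decomposition from the proof of Lemma~\ref{lem:batch-wu-C1}. Set \(\varepsilon=y-\langle x,w^\ast\rangle\) and \(r=w^\ast-S^\top u^\ast\), so that \(y-z^\top u^\ast=\varepsilon+x^\top r\). The inequality \((p+q)^2\le 2p^2+2q^2\) gives
\[
\mathbb E[(y-z^\top u^\ast)^2zz^\top]\preceq 2\,\mathbb E[\varepsilon^2zz^\top]+2\,\mathbb E[(x^\top r)^2zz^\top].
\]
Since \(\varepsilon\perp x\), the first term equals \(\sigma^2\Sigma\).

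The second term is the only step that needs an idea. The key observation is that
\[
\operatorname{Cov}(z,x^\top r)=SHr=SHw^\ast-\Sigma u^\ast=0.
\]
Because the scalar \(x^\top r\) and the vector \(z\) are jointly Gaussian, zero covariance means they are independent. Hence
\[
\mathbb E[(x^\top r)^2zz^\top]=\|r\|_H^2\,\Sigma.
\]
Finally, \(\|r\|_H^2=\|(I-P)H^{1/2}w^\ast\|_2^2\), where \(P=H^{1/2}S^\top\Sigma^{-1}SH^{1/2}\) is the orthogonal projector already used in Proposition~\ref{prop:raw-var-controls-variance}. This gives \(\|r\|_H^2\le\|w^\ast\|_H^2\le\alpha\|w^\ast\|_H^2\), since \(\alpha=3\ge1\). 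Summing the two terms yields the bound by \(\widetilde\sigma^2(w^\ast)\,\Sigma\).

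The main technical care is in the possibly infinite-dimensional \(\mathcal H\). There I would justify the Gaussian computations by first truncating to finitely many coordinates of \(H\) and then passing to the limit by monotone convergence, using \(\operatorname{tr}(H)<\infty\).
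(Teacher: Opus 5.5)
Your proposal is correct and follows the paper's proof: Isserlis' formula together with \(\Sigma A\Sigma\preceq\operatorname{tr}(\Sigma A)\Sigma\) for (i), and for (ii) the same split \(y-z^\top u^\ast=\varepsilon+x^\top r\), the bound \((p+q)^2\le 2p^2+2q^2\), and the \(H\)-orthogonal-projection estimate \(\|r\|_H\le\|w^\ast\|_H\). The only deviation is that you use \(SHr=0\) to make \(x^\top r\) independent of \(z\), which gives the exact identity \(\mathbb E[(x^\top r)^2zz^\top]=\|r\|_H^2\Sigma\) where the paper uses the generic fourth-moment bound \(\alpha\|r\|_H^2\Sigma\); this is a harmless sharpening.
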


\begin{proof}
Since \(z\sim\mathcal N(0,\Sigma)\), Isserlis' formula gives
\[
\mathbb E[zz^\top Azz^\top]
=
\operatorname{tr}(\Sigma A)\Sigma+2\Sigma A\Sigma.
\]
For \(A\succeq0\),
\(\Sigma A\Sigma\preceq\operatorname{tr}(\Sigma A)\Sigma\), so part
\textnormal{(i)} follows with \(\alpha=3\).

For part \textnormal{(ii)}, write
\[
\varepsilon:=y-x^\top w^\ast,
\qquad
q:=w^\ast-S^\top u^\ast.
\]
Then \(y-z^\top u^\ast=\varepsilon+x^\top q\). By
Assumption~\ref{ass:data-well-specified},
\[
\mathbb E[\varepsilon^2zz^\top]
=
\mathbb E\!\left[
\mathbb E[\varepsilon^2\mid x,w^\ast]zz^\top
\right]
=
\sigma^2\Sigma.
\]
Moreover, another application of the Gaussian fourth-moment identity yields
\[
\mathbb E[(x^\top q)^2zz^\top]
\preceq
\alpha\|q\|_H^2\Sigma.
\]
The definition \(u^\ast=\Sigma^{-1}SHw^\ast\) makes \(S^\top u^\ast\)
the \(H\)-orthogonal projection of \(w^\ast\) onto
\(\operatorname{range}(S^\top)\), and hence
\(\|q\|_H\le\|w^\ast\|_H\). Therefore, using
\((r+s)^2\le2r^2+2s^2\),
\[
\mathbb E[(y-z^\top u^\ast)^2zz^\top]
\preceq
2\bigl(\sigma^2+\alpha\|w^\ast\|_H^2\bigr)\Sigma,
\]
as claimed.
\end{proof}

\begin{lemma}[Head--tail eigenvalue comparison]
\label{lem:aux-G1-E2}
There exists an absolute constant \(c>1\) such that for every \(0\le k\le M\), with probability at least
\[
1-\exp(-\Omega(M)) - \exp(-\Omega(k)),
\]
we have for every \(j\in[M]\),
\[
\left|\mu_j(\Sigma)-\lambda_j-\frac{1}{M}\sum_{i>k}\lambda_i\right|
\le
c\left(
\frac{k}{M}\lambda_j
+\lambda_{k+1}
+\sqrt{\frac{\sum_{i>k}\lambda_i^2}{M}}
\right).
\]
In particular, if \(k\le M/c^2\), then the term \(\tfrac{k}{M}\lambda_j\) can be absorbed into the left-hand side.
\end{lemma}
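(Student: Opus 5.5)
The plan is to split the sketched covariance into a head and a tail block,
\[
\Sigma=\Sigma_{0:k}+\Sigma_{k:\infty},
\qquad
\Sigma_{0:k}=S_{0:k}H_{0:k}S_{0:k}^\top,
\]
control each block separately, and recombine them with Weyl's inequality. The shift \(\frac1M\sum_{i>k}\lambda_i\) will come entirely from the tail. The \(\frac kM\lambda_j\) term will come from the head. The remaining two error terms will come from tail concentration.

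\emph{Tail block.} I would apply the Gaussian quadratic-form and covariance concentration used in Lemma~\ref{lem:aux-G2-E3} to \(\Sigma_{k:\infty}=\sum_{i>k}\lambda_i s_is_i^\top\), where the \(s_i\sim\mathcal N(0,I_M/M)\) are independent. This gives, with probability \(1-\exp(-\Omega(M))\),
\[
\Bigl\|\Sigma_{k:\infty}-\tfrac1M\textstyle\sum_{i>k}\lambda_i\,I_M\Bigr\|_2
\lesssim\lambda_{k+1}+\sqrt{\tfrac{\sum_{i>k}\lambda_i^2}{M}}.
\]
The argument takes an \(\epsilon\)-net on the unit sphere of \(\mathbb R^M\). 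For each fixed unit \(v\), \(v^\top\Sigma_{k:\infty}v-\frac1M\sum_{i>k}\lambda_i\) is a centered weighted \(\chi^2\) sum, and Bernstein's inequality bounds it with subexponential scale \(\lambda_{k+1}/M\) and variance \(\sum_{i>k}\lambda_i^2/M^2\). Choosing deviation level \(\asymp M\) and taking a union bound over a \(5^M\) net yields the display. Weyl's inequality then reduces the claim to comparing \(\mu_j(\Sigma_{0:k})\) with \(\lambda_j\) for \(j\le k\), and with \(0\) for \(j>k\).

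\emph{Head block.} The matrix \(\Sigma_{0:k}\) has rank at most \(k\). For \(j>k\) we have \(\mu_j(\Sigma_{0:k})=0\) and \(\lambda_j\le\lambda_{k+1}\), so these indices are already covered by the \(c\lambda_{k+1}\) term. For \(j\le k\), the nonzero eigenvalues of \(\Sigma_{0:k}\) coincide with those of \(H_{0:k}^{1/2}WH_{0:k}^{1/2}\), where \(W:=S_{0:k}^\top S_{0:k}\) is a \(k\times k\) Wishart matrix with mean \(I_k\). Ostrowski's theorem places \(\mu_j(H_{0:k}^{1/2}WH_{0:k}^{1/2})\) in \([\mu_{\min}(W)\lambda_j,\ \mu_{\max}(W)\lambda_j]\), so it suffices to bound \(\|W-I_k\|_2\) with probability \(1-\exp(-\Omega(k))-\exp(-\Omega(M))\). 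This is where the allowed failure probability \(\exp(-\Omega(k))\) is consumed.

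\emph{Main obstacle.} The difficulty is entirely in the head step. Standard Wishart concentration only gives \(\|W-I_k\|_2\lesssim\sqrt{k/M}\), which produces a relative error \(\sqrt{k/M}\,\lambda_j\) rather than the stated \(\frac kM\lambda_j\). To obtain the stated rate, I would not diagonalize through \(W\) in one step. Instead I would use a Schur-complement (Courant--Fischer) argument within each head direction. Write the \(j\)-th head contribution as \(\lambda_j\|s_j\|^2\). Then separate the diagonal fluctuation \(\lambda_j(\|s_j\|^2-1)\), which is a single \(\chi^2_M/M\) deviation, from the off-diagonal interaction \(\lambda_j\sum_{i\ne j,\,i\le k}\langle s_i,s_j\rangle^2/\|s_i\|^2\). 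The off-diagonal sum has mean of order \(\frac kM\lambda_j\) and concentrates at that scale, which is exactly the stated \(\frac kM\lambda_j\) term. The diagonal fluctuation is the delicate part, and I would try to absorb it into the tail error \(c\bigl(\lambda_{k+1}+\sqrt{\sum_{i>k}\lambda_i^2/M}\bigr)\) by pairing it with the tail normalization. I expect this absorption to be the step most likely to fail.

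If it does fail, I would follow the corresponding head--tail eigenvalue lemma of \citet{lin2024scaling}, from which this statement is adapted, and use their bookkeeping for the head perturbation. The closing remark of the statement is routine once the main bound holds: if \(k\le M/c^2\), then \(c\frac kM\lambda_j\le\lambda_j/c\), and this term moves to the left-hand side with a changed constant.
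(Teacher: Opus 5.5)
Your tail concentration, the Weyl reduction, and the Ostrowski reduction of the head block to \(\|W-I_k\|_2\) are all sound, and you correctly identify where the \(\exp(-\Omega(k))\) failure probability enters. The proposal breaks at the step you flagged. The diagonal fluctuation \(\lambda_j(\|s_j\|_2^2-1)\) is of order \(\lambda_j/\sqrt M\) with constant probability and is driven by \(s_j\) alone. The error terms \(\lambda_{k+1}\) and \(\sqrt{\sum_{i>k}\lambda_i^2/M}\) do not scale with \(\lambda_j\), and the shift \(\frac1M\sum_{i>k}\lambda_i\) is deterministic, so nothing can absorb or cancel the fluctuation.

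No bookkeeping can repair this, because the statement with \(\frac kM\lambda_j\) is false, even under the paper's power law. Take \(\lambda_i=i^{-a}\), \(j=1\), and \(k=\lceil M^{(1+1/a)/4}\rceil\), so that \(M^{1/(2a)}\ll k\ll M^{1/2}\).
\begin{itemize}
\item Testing the Rayleigh quotient at \(s_1/\|s_1\|_2\) gives \(\mu_1(\Sigma)\ge\|s_1\|_2^2\). Hence \(\mu_1(\Sigma)-\lambda_1-\frac1M\sum_{i>k}\lambda_i\ge\|s_1\|_2^2-1-O(1/M)\).
\item Since \(M\|s_1\|_2^2\sim\chi^2_M\), the event \(\|s_1\|_2^2\ge1+M^{-1/2}\) has probability tending to about \(0.24\). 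On that event the left side is \(\gtrsim M^{-1/2}\).
\item The right side is \(c\bigl(k/M+(k+1)^{-a}+O(k^{1/2-a}M^{-1/2})\bigr)=o(M^{-1/2})\).
\item The promised failure probability \(e^{-\Omega(M)}+e^{-\Omega(k)}\) tends to zero, which is a contradiction for large \(M\).
\end{itemize}
A flat head \(\lambda_1=\dots=\lambda_k=1\) with no tail shows the true order. There \(\max_{j\le k}|\mu_j(W)-1|\ge\|W-I_k\|_F/\sqrt k\) and \(\mathbb E\|W-I_k\|_F^2=k(k+1)/M\), so the deviation is \(\asymp\sqrt{k/M}\), not \(k/M\).

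What your argument does prove is the following. Run the Gaussian covariance tail bound for \(W\) with deviation parameter \(u\asymp k\). This gives \(\|W-I_k\|_2\lesssim\sqrt{k/M}\) with probability \(1-2e^{-\Omega(k)}\). Hence the lemma holds with \(c\sqrt{k/M}\,\lambda_j\) in place of \(c\frac kM\lambda_j\), at the same probability. Correspondingly, the final clause needs something like \(k\le M/(4c^2)\), so that \(c\sqrt{k/M}\le\frac12\), rather than \(k\le M/c^2\).

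That corrected statement is the one to prove. Drop the Schur-complement refinement rather than look for it in a source, since no source can supply it. The paper gives no proof of this lemma. The companion head-concentration Lemma~\ref{lem:aux-G3-E4} has the same \(k/M\) versus \(\sqrt{k/M}\) defect, so it cannot patch the head step either. For constant-factor uses with \(k\) a small constant fraction of \(M\), as in the power-law comparison of Lemma~\ref{lem:aux-G4-E5}, \(\sqrt{k/M}\) is itself a small constant, so the distinction is immaterial there.
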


\begin{lemma}[Tail concentration]
\label{lem:aux-G2-E3}
For any \(k\ge 0\), with probability at least \(1-\delta\),
\[
\left\|\Sigma_{k:\infty}-\frac{1}{M}\sum_{i>k}\lambda_i\,I_M\right\|_2
\lesssim
\lambda_{k+1}\left(1+\frac{\log(1/\delta)}{M}\right)
+
\sqrt{\frac{\sum_{i>k}\lambda_i^2}{M}\left(1+\frac{\log(1/\delta)}{M}\right)}.
\]
In particular, with probability at least \(1-\exp(-\Omega(M))\),
\[
\left\|\Sigma_{k:\infty}-\frac{1}{M}\sum_{i>k}\lambda_i\,I_M\right\|_2
\lesssim
\lambda_{k+1}
+
\sqrt{\frac{\sum_{i>k}\lambda_i^2}{M}}.
\]
\end{lemma}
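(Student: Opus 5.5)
Write \(\Sigma_{k:\infty}=S_{k:\infty}H_{k:\infty}S_{k:\infty}^\top=\sum_{i>k}\lambda_i s_is_i^\top\), where the columns \(s_i\sim\mathcal N(0,I_M/M)\) are independent. The plan is to treat \(\Sigma_{k:\infty}-\frac1M\sum_{i>k}\lambda_iI_M\) as a centered sum of independent rank-one random matrices and to apply a net argument together with Hanson--Wright.

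First, fix a unit vector \(v\in\mathbb R^M\). The quadratic form is \(v^\top\Sigma_{k:\infty}v=\frac1M\sum_{i>k}\lambda_ig_i^2\), where \(g_i:=\sqrt M\,s_i^\top v\) are i.i.d. standard normal; its mean is exactly \(\frac1M\sum_{i>k}\lambda_i\). The centered variable \(\frac1M\sum_{i>k}\lambda_i(g_i^2-1)\) is a Gaussian chaos with coefficient matrix \(D:=\operatorname{diag}(\lambda_i)_{i>k}/M\). Laurent--Massart (equivalently Hanson--Wright) gives, with probability at least \(1-2e^{-u}\),
\[
\Bigl|v^\top\Sigma_{k:\infty}v-\frac1M\sum_{i>k}\lambda_i\Bigr|\lesssim \frac{\sqrt{u\sum_{i>k}\lambda_i^2}}{M}+\frac{u\lambda_{k+1}}{M},
\]
because \(\|D\|_F^2=\sum_{i>k}\lambda_i^2/M^2\) and \(\|D\|_2=\lambda_{k+1}/M\). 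If \(\sum_{i>k}\lambda_i=\infty\), we reduce to finite truncations and pass to the limit; under the paper's power-law setting the sum is finite.

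Second, we make the bound uniform over the sphere. Take a \(1/4\)-net \(\mathcal N\) of the unit sphere in \(\mathbb R^M\) with \(|\mathcal N|\le 9^M\). For the symmetric matrix \(E:=\Sigma_{k:\infty}-\frac1M\sum_{i>k}\lambda_iI_M\), the standard net inequality gives \(\|E\|_2\le 2\max_{v\in\mathcal N}|v^\top Ev|\). Apply the pointwise bound with \(u=M\log 9+\log(2/\delta)\) and take a union bound. Substituting \(u\asymp M(1+\log(1/\delta)/M)\) turns the two terms into
\[
\sqrt{\frac{\sum_{i>k}\lambda_i^2}{M}\Bigl(1+\frac{\log(1/\delta)}{M}\Bigr)}\qquad\text{and}\qquad\lambda_{k+1}\Bigl(1+\frac{\log(1/\delta)}{M}\Bigr),
\]
which is exactly the first display. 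Setting \(\delta=e^{-cM}\) makes \(\log(1/\delta)/M\) a constant, and this yields the \(1-\exp(-\Omega(M))\) statement.

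The main obstacle is the bookkeeping at the union-bound step. The net forces \(u\) to be at least of order \(M\), and one has to check that this inflation is absorbed by the target scales. It is: the Frobenius term contributes \(\sqrt{uM^{-2}\sum_{i>k}\lambda_i^2}\sim\sqrt{\sum_{i>k}\lambda_i^2/M}\), and the operator term contributes \(u\lambda_{k+1}/M\sim\lambda_{k+1}\). So no extra \(\log M\) factor appears. The only other care needed is the infinite-dimensional truncation, which follows by monotone convergence applied to the quadratic forms.
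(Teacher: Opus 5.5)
The paper gives no proof of this lemma: it appears in Appendix~\ref{sec:auxiliary-lemmas} as an auxiliary result imported from the prior sketched-regression work the appendix builds on, so there is no in-paper argument to compare against. Your proof is correct and complete as written. Laurent--Massart for the fixed-direction weighted chi-square, with $\|a\|_2^2=\sum_{i>k}\lambda_i^2/M^2$ and $\|a\|_\infty=\lambda_{k+1}/M$, gives the pointwise bound. The factor-$2$ net inequality for a symmetric matrix on a $1/4$-net of size at most $9^M$, together with a union bound at level $u=M\log 9+\log(2/\delta)$, then yields exactly both displays. This is the standard net-plus-chi-square/Bernstein argument behind such tail Gram-matrix bounds.
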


\begin{lemma}[Head concentration]
\label{lem:aux-G3-E4}
For any \(k\ge 1\), with probability at least \(1-\delta\),
\[
\bigl|\mu_j(\Sigma_{0:k})-\lambda_j\bigr|
\lesssim
\frac{k+\log(1/\delta)}{M}\,\lambda_j,
\qquad j\le k.
\]
In particular, with probability at least \(1-\exp(-\Omega(k))\),
\[
\bigl|\mu_j(\Sigma_{0:k})-\lambda_j\bigr|
\lesssim
\frac{k}{M}\lambda_j,
\qquad j\le k.
\]
\end{lemma}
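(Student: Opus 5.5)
The plan is to reduce the statement to a Wishart deviation bound through Ostrowski's theorem. Write \(\Sigma_{0:k}=S_{0:k}H_{0:k}S_{0:k}^\top\), where \(S_{0:k}\in\mathbb R^{M\times k}\) has i.i.d. \(\mathcal N(0,1/M)\) entries (Assumption~\ref{ass:gaussian-sketch}). The nonzero eigenvalues of \(\Sigma_{0:k}\) coincide with those of the \(k\times k\) matrix
\[
H_{0:k}^{1/2}\,W\,H_{0:k}^{1/2},
\qquad
W:=S_{0:k}^\top S_{0:k}.
\]
Here \(W\) is a normalized Wishart matrix with \(M\) degrees of freedom and mean \(I_k\). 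Ostrowski's theorem for congruences gives, for each \(j\le k\), some \(\vartheta_j\in[\mu_k(W),\mu_1(W)]\) with \(\mu_j(H_{0:k}^{1/2}WH_{0:k}^{1/2})=\vartheta_j\lambda_j\). Hence
\[
|\mu_j(\Sigma_{0:k})-\lambda_j|\le\|W-I_k\|_2\,\lambda_j .
\]
This reduction is exact and uses no structure of the spectrum beyond ordering, so the whole task is to bound \(\|W-I_k\|_2\).

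Next I bound \(\|W-I_k\|_2\) with the standard Gaussian singular-value estimate (Davidson--Szarek/Gordon plus Gaussian concentration of Lipschitz functions). With probability at least \(1-\delta\),
\[
\sqrt{M}\,s_{\min}(S_{0:k}),\ \sqrt{M}\,s_{\max}(S_{0:k})
\in\Bigl[\sqrt M-\sqrt k-\sqrt{2\log(1/\delta)},\ \sqrt M+\sqrt k+\sqrt{2\log(1/\delta)}\Bigr].
\]
Setting \(r:=k+\log(1/\delta)\) and squaring gives \(\|W-I_k\|_2\lesssim\sqrt{r/M}+r/M\). Substituting into the Ostrowski reduction yields
\[
|\mu_j(\Sigma_{0:k})-\lambda_j|\lesssim\Bigl(\sqrt{r/M}+r/M\Bigr)\lambda_j .
\]
The \(r/M\) piece is already the displayed form. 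The "in particular" clause then follows by taking \(\delta=e^{-k}\), which makes \(r\asymp k\).

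The main obstacle is the \(\sqrt{r/M}\) term, which must be matched to the stated linear factor \((k+\log(1/\delta))/M\). A plain Wishart bound cannot give the linear rate in general. Already for \(k=1\) one has \(\mu_1(\Sigma_{0:1})=\lambda_1\|s\|^2\) with \(M\|s\|^2\sim\chi^2_M\), whose fluctuations are of order \(M^{-1/2}\).

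I would first check how the lemma is consumed downstream. Every application of the head concentration lemma (through Lemma~\ref{lem:aux-G1-E2} and the power-law event \(\mu_i(\Sigma)\asymp i^{-a}\)) only needs a constant-factor comparison \(\mu_j(\Sigma_{0:k})\asymp\lambda_j\). That comparison holds as soon as \(\sqrt{r/M}\) is below a small constant, i.e.\ for \(k\le M/C\) and \(\log(1/\delta)\le M/C\).

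The concrete plan is therefore:
\begin{enumerate}
\item Prove the bound in the form \(|\mu_j(\Sigma_{0:k})-\lambda_j|\lesssim\bigl(\sqrt{r/M}+r/M\bigr)\lambda_j\) via the two steps above.
\item Record that this is the form actually needed in the rest of the appendix.
\item State explicitly that the linear factor \((k+\log(1/\delta))/M\) holds only in the regime \(r\gtrsim M\), where \(\sqrt{r/M}\le r/M\). Outside that regime the statement must be read with \(\sqrt{r/M}\) in place of \(r/M\).
\end{enumerate}
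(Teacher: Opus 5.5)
The paper gives no proof of this lemma. It is stated without proof among the collected auxiliary lemmas in Appendix~\ref{sec:auxiliary-lemmas} and is never cited, so there is no in-paper route to compare with. Your argument is correct and is the standard one, and your diagnosis of the statement is right.

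With absolute constants, the linear factor $(k+\log(1/\delta))/M$ is false whenever $k+\log(1/\delta)\ll M$, and your $k=1$ example settles this. There $\mu_1(\Sigma_{0:1})=\lambda_1\|s\|_2^2$ with $M\|s\|_2^2\sim\chi^2_M$. So for $\delta=1/2$ the event $|\|s\|_2^2-1|\le C/M$ has probability tending to $0$ as $M\to\infty$, rather than at least $1/2$. The true rate is $\sqrt{r/M}+r/M$ with $r=k+\log(1/\delta)$, which is exactly what you prove; the statement appears to have lost a square root.

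Lemma~\ref{lem:aux-G1-E2} has the same defect: take $k=1$, $\lambda_1=1$ and a negligible tail. Its absorption threshold $k\le M/c^2$ is also exactly the natural one for a term $c\sqrt{k/M}\,\lambda_j$. The comparisons this lemma could support are constant-factor statements, such as $\mu_j(\Sigma)\asymp j^{-a}$ in Lemma~\ref{lem:aux-G4-E5} or $S_{0:k}^\top S_{0:k}\succeq c_0I_k$ in the approximation bound. Your corrected version therefore suffices for everything the paper needs.

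Two small repairs are needed.

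First, Ostrowski's theorem applied literally to $H_{0:k}^{1/2}WH_{0:k}^{1/2}$, as a congruence of $W$ by $H_{0:k}^{1/2}$, gives $\mu_j=\vartheta_j\mu_j(W)$ with $\vartheta_j\in[\lambda_k,\lambda_1]$. That puts the multiplier on the wrong side. You have two fixes:
\begin{itemize}
\item apply Ostrowski to the isospectral matrix $W^{1/2}H_{0:k}W^{1/2}$, which requires $W$ nonsingular, i.e.\ $k\le M$; or
\item use Courant--Fischer with the sandwich $\mu_k(W)\|H_{0:k}^{1/2}x\|_2^2\le x^\top H_{0:k}^{1/2}WH_{0:k}^{1/2}x\le\mu_1(W)\|H_{0:k}^{1/2}x\|_2^2$, which needs no invertibility and so also covers $k>M$.
\end{itemize}

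Second, the two-sided Davidson--Szarek bound with $t=\sqrt{2\log(1/\delta)}$ holds with probability $1-2\delta$, not $1-\delta$, so replace $\delta$ by $\delta/2$.
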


\begin{lemma}[Head--tail resolvent estimate]
\label{lem:aux-head-tail-resolvent}
Fix an integer \(k\le M/3\) such that \(\operatorname{rank}(H)\ge k+M\), and define
\[
A_k:=S_{k:\infty}H_{k:\infty}S_{k:\infty}^\top,
\qquad
\Sigma=S_{0:k}H_{0:k}S_{0:k}^\top+A_k.
\]
Then, with probability at least
\[
1-\exp(-\Omega(M)),
\]
one has
\[
\|\Sigma^{-1}S_{0:k}H_{0:k}\|_2
\lesssim
\frac{\mu_{M/2}(A_k)}{\mu_M(A_k)}.
\]
\end{lemma}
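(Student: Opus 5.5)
We prove the head--tail resolvent estimate, Lemma~\ref{lem:aux-head-tail-resolvent}, by the Woodbury identity, reusing the algebra from the proof of Lemma~\ref{lem:approx-upper}. Write $S_0:=S_{0:k}$ and $H_0:=H_{0:k}$. Since $\Sigma=S_0H_0S_0^\top+A_k$ with $A_k$ invertible (rank of the tail is at least $M$), Woodbury gives
\[
\Sigma^{-1}S_0
=A_k^{-1}S_0\bigl(I-(H_0^{-1}+S_0^\top A_k^{-1}S_0)^{-1}S_0^\top A_k^{-1}S_0\bigr)
=A_k^{-1}S_0\bigl(H_0^{-1}+S_0^\top A_k^{-1}S_0\bigr)^{-1}H_0^{-1}.
\]
Hence
\[
\Sigma^{-1}S_0H_0=A_k^{-1}S_0\bigl(H_0^{-1}+S_0^\top A_k^{-1}S_0\bigr)^{-1},
\qquad
\|\Sigma^{-1}S_0H_0\|_2\le\|A_k^{-1}S_0\|_2\,\bigl\|(S_0^\top A_k^{-1}S_0)^{-1}\bigr\|_2 ,
\]
where the inequality drops the PSD term $H_0^{-1}$ inside the inverse.

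Next we bound the two factors using the spectrum of $A_k$. For the first, $\|A_k^{-1}S_0\|_2\le\mu_M(A_k)^{-1}\|S_0\|_2$, and Gaussian concentration for the $M\times k$ matrix with $\mathcal N(0,1/M)$ entries and $k\le M/3$ gives $\|S_0\|_2\lesssim1$ with probability $1-e^{-\Omega(M)}$. The second factor is the delicate one: the naive bound $S_0^\top A_k^{-1}S_0\succeq\mu_1(A_k)^{-1}S_0^\top S_0$ would produce $\mu_1(A_k)/\mu_M(A_k)$ rather than $\mu_{M/2}(A_k)/\mu_M(A_k)$.

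To obtain the sharper ratio, let $P$ be the spectral projector of $A_k$ onto its bottom $M/2$ eigenvectors, on whose range $A_k^{-1}\succeq\mu_{M/2}(A_k)^{-1}$. Then
\[
S_0^\top A_k^{-1}S_0\succeq\mu_{M/2}(A_k)^{-1}\,S_0^\top P S_0 .
\]
The key point is that $S_0$ (head columns) is independent of $A_k$ (tail columns), so $P$ is independent of $S_0$. Consequently $PS_0$ is distributed as an $(M/2)\times k$ Gaussian matrix with $\mathcal N(0,1/M)$ entries in an orthonormal basis of $\operatorname{range}(P)$. Since $k\le M/3<M/2$, the smallest singular value bound gives $S_0^\top PS_0\succeq cI_k$ with probability $1-e^{-\Omega(M)}$, with constants controlled by the aspect ratio $k/(M/2)\le2/3$. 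Therefore $\|(S_0^\top A_k^{-1}S_0)^{-1}\|_2\lesssim\mu_{M/2}(A_k)$.

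Combining the two factor bounds gives the claim on the intersection of the events, which has probability at least $1-\exp(-\Omega(M))$.

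The main obstacle is this projection step. One must condition on the tail block to freeze $P$, and the constants must depend only on the ratio $k/(M/2)$. The Woodbury manipulation is routine by comparison.
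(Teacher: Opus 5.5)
Your proof is correct. The paper states this lemma among its collected auxiliary results without a proof, so there is no in-paper argument to compare against.

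Your route is the standard one for such head--tail estimates. First you write $\Sigma^{-1}S_{0:k}H_{0:k}=A_k^{-1}S_{0:k}(H_{0:k}^{-1}+S_{0:k}^\top A_k^{-1}S_{0:k})^{-1}$ via Woodbury. Then you use the independence of the head columns from $A_k$ to restrict $S_{0:k}^\top A_k^{-1}S_{0:k}$ to the bottom half of $A_k$'s spectrum. This is exactly the refinement needed over the cruder step $A_k^{-1}\succeq c\beta_k^{-1}I$ that the paper uses in the proof of Lemma~\ref{lem:approx-upper}; as you note, that step would only give $\mu_1(A_k)/\mu_M(A_k)$.

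The details all check out:
\begin{itemize}
\item $A_k\succ 0$ almost surely, because at least $M$ tail eigenvalues of $H$ are positive and the corresponding Gaussian columns are almost surely linearly independent.
\item The smallest-singular-value bound for the $(M/2)\times k$ Gaussian block holds with probability $1-e^{-\Omega(M)}$, since $\sqrt{1/2}-\sqrt{1/3}>0$ when $k\le M/3$.
\end{itemize}
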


\subsection{Power-law auxiliary lemmas}

\begin{lemma}[Power-law spectrum of the sketched covariance]
\label{lem:aux-G4-E5}
Suppose the population spectrum obeys \(\lambda_j\asymp j^{-a}\) for some \(a>1\). Then, with probability at least \(1-\exp(-\Omega(M))\),
\[
\mu_j(\Sigma)\asymp j^{-a},
\qquad j\in[M].
\]
\end{lemma}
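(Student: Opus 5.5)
The plan is to derive the statement from Lemma~\ref{lem:aux-G1-E2}, working on its high-probability event. Choose the truncation level \(k\asymp M/c^2\), a small constant fraction of \(M\), so that the term \(\frac{k}{M}\lambda_j\) can be absorbed. Then, for every \(j\in[M]\),
\[
\mu_j(\Sigma)=\lambda_j+\frac1M\sum_{i>k}\lambda_i+O\!\left(\lambda_{k+1}+\sqrt{\tfrac{\sum_{i>k}\lambda_i^2}{M}}\right).
\]
The failure probability is \(\exp(-\Omega(M))+\exp(-\Omega(k))=\exp(-\Omega(M))\) because \(k\asymp M\).

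The next step is to evaluate the perturbation terms under \(\lambda_i\asymp i^{-a}\) with \(a>1\). The tail mean satisfies \(\frac1M\sum_{i>k}\lambda_i\asymp M^{-1}k^{1-a}\asymp M^{-a}\). The next eigenvalue gives \(\lambda_{k+1}\asymp M^{-a}\). The fluctuation term satisfies \(\sqrt{M^{-1}\sum_{i>k}i^{-2a}}\asymp\sqrt{M^{-1}k^{1-2a}}\asymp M^{-a}\). So every additive term is \(O(M^{-a})\), and since \(j\le M\) we have \(M^{-a}\lesssim j^{-a}\). This immediately yields the upper bound \(\mu_j(\Sigma)\lesssim \lambda_j+M^{-a}\lesssim j^{-a}\).

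For the lower bound, the nonnegative tail-mean term only helps, but the error term \(-CM^{-a}\) could cancel \(\lambda_j\) when \(j\) is comparable to \(M\). I expect this to be the main obstacle. The argument splits into two index ranges.

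\emph{Small indices.} For \(j\le \epsilon M\) with \(\epsilon\) small, we have \(\lambda_j\gtrsim \epsilon^{-a}M^{-a}\), which dominates \(CM^{-a}\). Hence \(\mu_j\gtrsim\lambda_j\asymp j^{-a}\).

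\emph{Large indices.} For \(\epsilon M<j\le M\), I would not use the additive expansion. Instead I would use \(\Sigma\succeq A_k=S_{k:\infty}H_{k:\infty}S_{k:\infty}^\top\), which gives \(\mu_j(\Sigma)\ge\mu_M(A_k)\). Lemma~\ref{lem:aux-G2-E3} shows that \(A_k\) concentrates around \(\frac1M\sum_{i>k}\lambda_i\,I\). This concentration needs error small relative to the mean, so \(k\) is taken \(\asymp\) a small multiple of \(M\), after which the mean-versus-error comparison should become favorable up to constants. If the constants do not cooperate, the fallback is to instead choose \(k\) as a large multiple \(C'M\) and apply the Gaussian Wishart lower singular value bound to the \(M\times(k'-k)\) block of indices \(i\in(k,k']\) with \(k'\asymp 2C'M\). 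This block has all \(\lambda_i\asymp M^{-a}\) and more columns than rows, so its smallest eigenvalue is \(\gtrsim M^{-a}\) with probability \(1-e^{-\Omega(M)}\). Since \(\Sigma\) dominates this block's contribution, this gives \(\mu_M(\Sigma)\gtrsim M^{-a}\asymp j^{-a}\) on the whole large-index range.

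Finally, the two ranges are combined with a union bound over the constantly many events, which completes the two-sided comparison \(\mu_j(\Sigma)\asymp j^{-a}\) for all \(j\in[M]\).
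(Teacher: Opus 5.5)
The paper gives no proof of this lemma. It is one of the auxiliary results in the appendix imported from prior work, so there is no in-paper route to compare against, and your argument has to stand on its own. It does, provided you finish with your fallback. The upper bound and the small-index lower bound, both from Lemma~\ref{lem:aux-G1-E2} with $k\asymp M/c^2$, are correct.

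Your primary argument for $\epsilon M<j\le M$ gets the dependence on $k$ backwards. Take $k=\delta M$ in Lemma~\ref{lem:aux-G2-E3}. The mean is $\frac1M\sum_{i>k}\lambda_i\asymp\delta^{1-a}M^{-a}$. The error terms are $\lambda_{k+1}\asymp\delta^{-a}M^{-a}$ and $\sqrt{M^{-1}\sum_{i>k}\lambda_i^2}\asymp\delta^{1/2-a}M^{-a}$. So the mean-to-error ratio is of order $\min\{\delta,\sqrt{\delta}\}$. Taking $k$ a small multiple of $M$ therefore makes the error dominate, the opposite of what you need. The comparison becomes favorable only for $k=C'M$ with $C'$ a sufficiently large constant. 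In that case $\mu_M(\Sigma)\ge\mu_M(A_k)\ge\tfrac12\cdot\tfrac1M\sum_{i>k}\lambda_i\asymp C'^{1-a}M^{-a}$, and that version of the route works.

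Your Wishart fallback is correct and is the cleaner way to finish. It does not need the offset $k$ either. For any fixed $C'>1$, put $n=\lceil C'M\rceil$; then $\Sigma\succeq\lambda_{n}S_{0:n}S_{0:n}^\top$. The standard lower bound on the smallest singular value of an $M\times n$ Gaussian matrix gives $\mu_M(S_{0:n}S_{0:n}^\top)\ge\tfrac14(\sqrt{C'}-1)^2$ with probability $1-e^{-\Omega(M)}$. Since $\lambda_n\asymp M^{-a}$, this yields $\mu_M(\Sigma)\gtrsim M^{-a}$, hence $\mu_j(\Sigma)\gtrsim j^{-a}$ on the whole range $j>\epsilon M$. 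So commit to the fallback, or to Lemma~\ref{lem:aux-G2-E3} with $k$ a large multiple of $M$, and drop the small-$k$ heuristic. Either way the union bound over constantly many events gives the claimed $1-\exp(-\Omega(M))$ probability.
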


\begin{lemma}[Tail spectral ratio under power law]
\label{lem:aux-G5-E6}
Suppose \(\lambda_j\asymp j^{-a}\) with \(a>1\). Then there exists an \(a\)-dependent constant \(c>0\) such that for any \(k\ge 0\),
\[
\frac{\mu_{M/2}(\Sigma_{k:\infty})}{\mu_M(\Sigma_{k:\infty})}\le c
\]
with probability at least \(1-\exp(-\Omega(M))\).
\end{lemma}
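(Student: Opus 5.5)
We need to prove Lemma aux-G5-E6: the ratio $\mu_{M/2}(\Sigma_{k:\infty})/\mu_M(\Sigma_{k:\infty})\le c$ with probability $1-\exp(-\Omega(M))$, uniformly in $k\ge0$.

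The plan is to reduce the tail sketch $\Sigma_{k:\infty}=S_{k:\infty}H_{k:\infty}S_{k:\infty}^\top$ to a head--tail split of its own spectrum, shifted by $k$. We write the reindexed eigenvalues $\lambda'_j:=\lambda_{k+j}$. Since $a>1$ and $\lambda_j\asymp j^{-a}$, we have $\lambda'_j\asymp (k+j)^{-a}$. The matrix $\Sigma_{k:\infty}$ is exactly the sketched covariance of the spectrum $(\lambda'_j)$ under an independent Gaussian sketch. So Lemma~\ref{lem:aux-G1-E2} and Lemma~\ref{lem:aux-G2-E3} apply to it verbatim, with $\lambda$ replaced by $\lambda'$.

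\textbf{Upper bound on $\mu_{M/2}$.} We choose a split $\ell=\lfloor M/(2c')\rfloor$ inside the tail, with $c'$ a large constant. Weyl's inequality, applied to the head block (rank $\ell\le M/2-1$) plus the tail block beyond $\ell$, gives
\[
\mu_{M/2}(\Sigma_{k:\infty})\le \|\Sigma_{k+\ell:\infty}\|_2 .
\]
Lemma~\ref{lem:aux-G2-E3} then yields
\[
\|\Sigma_{k+\ell:\infty}\|_2\lesssim \frac1M\sum_{i>k+\ell}\lambda_i+\lambda_{k+\ell+1}+\Bigl(\tfrac1M\sum_{i>k+\ell}\lambda_i^2\Bigr)^{1/2}.
\]
With the power law, each of the three terms is $\lesssim M^{-1}(k+M)^{1-a}+(k+M)^{-a}$, hence $\lesssim (k+M)^{-a}\,\max\{1,(k+M)/M\}$.

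\textbf{Lower bound on $\mu_M$.} We use $\Sigma_{k:\infty}\succeq \Sigma_{k+\ell':\infty}$ for a split $\ell'=C M$ with $C$ large. Lemma~\ref{lem:aux-G2-E3} gives
\[
\mu_M(\Sigma_{k+\ell':\infty})\ge \frac1M\sum_{i>k+\ell'}\lambda_i - C''\Bigl(\lambda_{k+\ell'+1}+\bigl(\tfrac1M\textstyle\sum_{i>k+\ell'}\lambda_i^2\bigr)^{1/2}\Bigr).
\]
The trace term is $\asymp M^{-1}(k+CM)^{1-a}$. The fluctuation terms are $\lesssim (k+CM)^{-a}+M^{-1/2}(k+CM)^{1/2-a}$. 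Their ratio to the trace term is $\lesssim M/(k+CM)+\bigl(M/(k+CM)\bigr)^{1/2}\le C^{-1}+C^{-1/2}$. Choosing $C$ large, the fluctuation is at most half the trace, so $\mu_M\gtrsim M^{-1}(k+M)^{1-a}$, up to constants depending on $C$ and $a$.

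\textbf{Ratio.} Dividing the two bounds gives
\[
\frac{\mu_{M/2}}{\mu_M}\lesssim \frac{(k+M)^{-a}\max\{1,(k+M)/M\}}{M^{-1}(k+M)^{1-a}}=\max\Bigl\{\frac{M}{k+M},1\Bigr\}\le 1 .
\]
So the ratio is bounded by an $a$-dependent constant. The two applications of Lemma~\ref{lem:aux-G2-E3} each fail with probability $\exp(-\Omega(M))$, and a union bound finishes the argument.

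\textbf{Main obstacle.} The delicate step is the lower bound on $\mu_M$: a minimum eigenvalue of an $M\times M$ sketch is needed, and it is nondegenerate only because the tail beyond $k+CM$ is effectively isotropic ($\approx$ trace$/M$ times $I$). This requires verifying that the constants in Lemma~\ref{lem:aux-G2-E3} allow $C$ to be chosen independently of $k$; the uniformity in $k$ comes from the factor $M/(k+CM)\le 1/C$.
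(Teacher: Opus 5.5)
Your proof is correct. The paper does not prove Lemma~\ref{lem:aux-G5-E6} itself: it is listed in Appendix~\ref{sec:auxiliary-lemmas} without proof and imported from \citet{lin2024scaling}. There is therefore no in-paper argument to compare against, and your derivation is a self-contained proof from Lemma~\ref{lem:aux-G2-E3} alone.

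Both ingredients are the right ones.
\begin{itemize}
\item The rank step $\mu_{M/2}(\Sigma_{k:\infty})\le\mu_{\ell+1}(\Sigma_{k:\infty})\le\|\Sigma_{k+\ell:\infty}\|_2$ with $\ell\asymp M$ gives $\mu_{M/2}\lesssim M^{-1}(k+M)^{1-a}+(k+M)^{-a}\le 2M^{-1}(k+M)^{1-a}$.
\item The monotonicity $\Sigma_{k:\infty}\succeq\Sigma_{k+CM:\infty}$ moves the minimum-eigenvalue question into the near-isotropic regime. There the fluctuation-to-trace ratio is $O(C^{-1}+C^{-1/2})$ independently of $k$, so $\mu_M\gtrsim C^{1-a}M^{-1}(k+M)^{1-a}$ and the ratio is $O(C^{a-1})$.
\end{itemize}

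Pushing the split past $k+CM$ is genuinely necessary. With any split $k'\le M$ (the range allowed in Lemma~\ref{lem:aux-G1-E2}) and $k=0$, the error term $\lambda_{k'+1}\asymp k'^{-a}$ is at least of the order of the trace term $k'^{1-a}/M$. So a head--tail comparison alone cannot certify $\mu_M\gtrsim M^{-a}$. As a sanity check, at $k=0$ your bounds give $\mu_{M/2}(\Sigma)\asymp\mu_M(\Sigma)\asymp M^{-a}$, consistent with Lemma~\ref{lem:aux-G4-E5}.

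A few minor points.
\begin{itemize}
\item The reindexing to $\lambda'_j$ and the appeal to Lemma~\ref{lem:aux-G1-E2} are unnecessary. Lemma~\ref{lem:aux-G2-E3} holds for every $k\ge0$ and applies directly at $k+\ell$ and at $k+CM$.
\item $c'$ need not be large: any $\ell\asymp M$ with $\ell+1\le M/2$ works.
\item Taking $\ell\asymp M$ with $\ell\ge1$ requires $M$ above a fixed constant. For bounded $M$, take $\ell=0$ and absorb everything into the constant.
\item Your phrase ``uniformly in $k$'' should be read as ``with constants independent of $k$''. Your probability bound holds for each fixed $k$, which is exactly what the lemma asserts.
\end{itemize}
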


\begin{lemma}[Source-condition approximation bound]
\label{lem:aux-C5-E7}
Suppose the source condition with exponent \(b>1\) holds. Then, with probability at least \(1-\exp(-\Omega(M))\) over the sketch matrix,
\[
M^{1-b}
\lesssim
\mathbb{E}_{w^\ast}[\mathrm{Approx}]
\lesssim
M^{1-b}.
\]
The hidden constants depend only on the source exponents.
\end{lemma}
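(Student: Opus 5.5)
The plan is to reduce the claim to the two deterministic-plus-sketch bounds already established in Appendix~\ref{sec:approx}: the upper bound of Lemma~\ref{lem:approx-upper} and the lower bound of Lemma~\ref{lem:approx-lower}. The argument then parallels the proof of Lemma~\ref{lem:approx-final}. Throughout, I work in the standing setting where the spectrum is a power law, \(\lambda_i\asymp i^{-a}\) with \(a>1\) (Assumption~\ref{ass:data-power-law}), and the sketch is Gaussian (Assumption~\ref{ass:gaussian-sketch}). I interpret ``source condition with exponent \(b\)'' as Assumption~\ref{ass:source-diag} in the regime \(1<b<a+1\) of Assumption~\ref{ass:approx-source-regime}. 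This restriction is needed below, and I flag it explicitly.

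\emph{Lower bound.} I would apply Lemma~\ref{lem:approx-lower} with \(k=0\). This is a bound conditional on \(S\) and holds for every sketch, so it costs no probability. It gives
\[
\mathbb E_{w^\ast}[\mathrm{Approx}]\gtrsim\sum_{i>M}\lambda_i\,\mathbb E[(w_i^\ast)^2]\asymp\sum_{i>M}i^{-b}\asymp M^{1-b},
\]
where the last step uses \(b>1\). The mechanism is that \(-W\) dominates a projection of co-rank at most \(M\), so von Neumann's inequality leaves at least the tail beyond index \(M\).

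\emph{Upper bound.} I would fix \(k=\lfloor M/2\rfloor\), so that \(k\le M/2\) and \(\operatorname{rank}(H)\ge k+M\). Then I invoke the high-probability part of Lemma~\ref{lem:approx-upper}, which holds with probability \(1-\exp(-\Omega(M))\). This event is the intersection of the tail concentration event of Lemma~\ref{lem:aux-G2-E3} with the event \(S_{0:k}^\top S_{0:k}\succeq c_0I\). On this event, taking expectation over \(w^\ast\) gives two terms:
\[
\mathbb E_{w^\ast}[\mathrm{Approx}]\lesssim\sum_{i>k}i^{-b}+\beta_k\sum_{i\le k}i^{a-b}.
\]
The first term is \(\asymp k^{1-b}\asymp M^{1-b}\). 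For the second, the power law gives
\[
\beta_k\lesssim \frac{k^{1-a}}{M}+k^{-a}+\sqrt{\frac{k^{1-2a}}{M}}\lesssim M^{-a}
\]
when \(k\asymp M\). Since \(b<a+1\), we have \(\sum_{i\le k}i^{a-b}\asymp k^{a-b+1}\), so the product is \(\lesssim M^{1-b}\). The two bounds together give \(\mathbb E_{w^\ast}[\mathrm{Approx}]\asymp M^{1-b}\) on an event of probability \(1-\exp(-\Omega(M))\), with constants depending only on \(a,b\) and the source constants.

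\emph{Main obstacle.} The delicate point is the head term \(\beta_k\|w^\ast_{0:k}\|_2^2\). It is only of order \(M^{1-b}\) when \(a-b+1>0\). If \(b\ge a+1\), the head sum is \(\lesssim\log M\) or \(O(1)\), and \(M^{-a}\) then exceeds \(M^{1-b}\). So I would either state the lemma under Assumption~\ref{ass:approx-source-regime}, as the main theorems do, or, if full generality in \(b>1\) were demanded, replace the crude bound \(\beta_k I\) by the sharper resolvent \(\bigl(H_{0:k}^{-1}+S_{0:k}^\top A_k^{-1}S_{0:k}\bigr)^{-1}\). That sharper bound gains the factor \(\lambda_i\) on the strong head directions. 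This second route is where I expect genuine extra work.
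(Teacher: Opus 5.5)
Your argument is correct and is the paper's own: this auxiliary lemma is an unproved restatement of Lemma~\ref{lem:approx-final}, whose proof takes \(k=\lfloor M/2\rfloor\) in Lemma~\ref{lem:approx-upper} (using \(\beta_k\lesssim M^{-a}\) and \(\sum_{i\le k}i^{a-b}\lesssim k^{a-b+1}\), i.e.\ \(b<a+1\) from Assumption~\ref{ass:approx-source-regime}) and \(k=0\) in Lemma~\ref{lem:approx-lower}, so you are right that \(b<a+1\) (and the power law) must be added to the statement. One correction to your closing remark: the sharper resolvent cannot rescue \(b\ge a+1\), because for \(i\le M/2\) one has \(\lambda_i\gtrsim M^{-a}\) while \(S_{0:k}^\top A_k^{-1}S_{0:k}\asymp M^{a}I_k\), so \((H_{0:k}^{-1}+S_{0:k}^\top A_k^{-1}S_{0:k})^{-1}\) is still of order \(M^{-a}\) on the whole head block (the crude \(\beta_k\) bound is tight there), and for \(b>a+1\) the head alone contributes \(\gtrsim M^{-a}\,\mathbb E(w_1^\ast)^2\asymp M^{-a}\gg M^{1-b}\), so the two-sided bound is genuinely false in that regime rather than merely harder to prove.
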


\begin{lemma}[Empirical spectrum under power law]
\label{lem:aux-2025-E8}
Assume \(\mu_j(\Sigma)\asymp j^{-a}\) for \(j\in[M]\). Then there exist \(a\)-dependent constants \(c,c_1,c_2>0\) such that, conditioned on \(S\), with probability at least \(1-\exp(-\Omega(N))\) over the sample,
\[
c_1 j^{-a}
\le
\mu_j(\widehat{\Sigma})
\le
c_2 j^{-a},
\qquad j\le \min\{M,N/c\},
\]
and
\[
\mu_j(\widehat{\Sigma})\lesssim j^{-a},
\qquad j\le \min\{M,N\}.
\]
\end{lemma}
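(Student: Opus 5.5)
The statement to prove is the empirical power-law spectrum lemma (Lemma~\ref{lem:aux-2025-E8}): conditioned on \(S\), \(\mu_j(\widehat\Sigma)\asymp j^{-a}\) for \(j\le\min\{M,N/c\}\), and \(\mu_j(\widehat\Sigma)\lesssim j^{-a}\) up to \(\min\{M,N\}\). The plan is to reduce both bounds to regularized covariance comparisons. The lower and moderate-range bounds come from Lemma~\ref{lem:aux-two-sided-covariance} applied at a single well-chosen regularization level. The remaining upper bound comes from a head--tail split of \(\widehat\Sigma\) in the eigenbasis of \(\Sigma\).

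\textbf{Step 1 (range \(j\le\min\{M,N/c\}\)).} Fix such a \(j\) and set \(\lambda_j:=j^{-a}\).
\begin{itemize}
\item Under \(\mu_i(\Sigma)\asymp i^{-a}\) the effective dimension satisfies \(d_{\lambda_j}=\sum_i\mu_i/(\mu_i+\lambda_j)\lesssim j\), which is at most \(c_0N\) once \(c\) is large.
\item Lemma~\ref{lem:aux-two-sided-covariance} then gives \(c_2(\Sigma+\lambda_jI)\preceq\widehat\Sigma+\lambda_jI\preceq c_3(\Sigma+\lambda_jI)\).
\item Courant--Fischer turns this into \(c_2(\mu_j(\Sigma)+\lambda_j)-\lambda_j\le\mu_j(\widehat\Sigma)\le c_3(\mu_j(\Sigma)+\lambda_j)\).
\end{itemize}
The upper half gives \(\mu_j(\widehat\Sigma)\lesssim j^{-a}\) directly. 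For the lower half, \(c_2<1\), so I do not evaluate at index \(j\) itself. Instead I apply the comparison at level \(\lambda=\epsilon j^{-a}\) and use \(\mu_{j'}(\Sigma)\ge C^{-1}j^{-a}\) for \(j'\le j\). This yields \(\mu_j(\widehat\Sigma)\ge c_2(C^{-1}j^{-a}+\epsilon j^{-a})-\epsilon j^{-a}\gtrsim j^{-a}\) for \(\epsilon\) small; the price is \(d_\lambda\lesssim\epsilon^{-1/a}j\), which is absorbed into \(c\).

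\textbf{Uniformity in \(j\).} To avoid a union bound over \(N\) levels, I note that the second part of Lemma~\ref{lem:aux-two-sided-covariance} makes the comparison hold simultaneously for every \(\lambda\ge\lambda_0\) on one event. So I take \(\lambda_0\) corresponding to the largest admissible \(j\asymp N/c\), which gives a single event of probability \(1-\exp(-\Omega(N))\).

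\textbf{Step 2 (range \(N/c<j\le\min\{M,N\}\), upper bound only).} Here \(d_\lambda\) is no longer \(\le c_0N\), so the comparison is unavailable. I split \(\Sigma=\Sigma_{\le k}+\Sigma_{>k}\) along its eigenvectors with \(k=\lfloor j/2\rfloor\), and write \(\widehat\Sigma\preceq2\widehat\Sigma_{\le k}+2\widehat\Sigma_{>k}\) using \((u+v)(u+v)^\top\preceq2uu^\top+2vv^\top\) on each sample. Then \(\widehat\Sigma_{\le k}\) has rank at most \(k\), and Weyl's inequality gives
\[
\mu_j(\widehat\Sigma)\le2\mu_{j-k}(\widehat\Sigma_{\le k})+2\|\widehat\Sigma_{>k}\|_2=2\|\widehat\Sigma_{>k}\|_2 .
\]
The Gaussian covariance bound~\eqref{eq:gd-bias-gaussian-tail} with \(E=\Sigma_{>k}\) and \(u\asymp N\) gives, with probability \(1-e^{-\Omega(N)}\),
\[
\|\widehat\Sigma_{>k}\|_2\lesssim\frac{\operatorname{tr}(\Sigma_{>k})}{N}+\|\Sigma_{>k}\|_2+\sqrt{\frac{\operatorname{tr}(\Sigma_{>k}^2)}{N}}\lesssim\frac{k^{1-a}}{N}+k^{-a}+\frac{k^{1/2-a}}{\sqrt N}.
\]
Since \(k\asymp j\asymp N\) in this range, every term is \(\lesssim j^{-a}\). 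The event depends on \(k\), but only \(O(N)\) values of \(k\) occur, so a union bound keeps the probability at \(1-\exp(-\Omega(N))\). (The same split also reproves the Step 1 upper bound, which is harmless.)

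\textbf{Main obstacle.} The delicate point is the lower bound in Step 1. The comparison constants \(c_2<1\) and \(c_3\) interact badly with subtracting \(\lambda\), and the bound must hold uniformly in \(j\). I will handle both through the monotone-in-\(\lambda\) extension of Lemma~\ref{lem:aux-two-sided-covariance} together with the rescaled choice \(\lambda=\epsilon j^{-a}\). I will also need to check that the constant-adjustment step in that lemma's proof really preserves a lower constant bounded away from zero.
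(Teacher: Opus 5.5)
The paper gives no proof of this lemma. It is one of the collected auxiliary lemmas imported from prior work, so your argument is an independent derivation, and I can only assess it on its own terms.

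Step 1 is sound and keeps the appendix self-contained. At the single level \(\lambda_0=\epsilon j_{\max}^{-a}\) you have \(d_{\lambda_0}\lesssim\epsilon^{-1/a}j_{\max}\le c_0N\), and the rescaled choice \(\lambda=\epsilon j^{-a}\) beats the loss \((1-c_2)\lambda\). Your worry about the monotone extension is unnecessary. The lemma's proof gives \(c_2=1/2\le 1\le 3/2=c_3\), so \(\widehat\Sigma+\lambda I\succeq c_2(\Sigma+\lambda_0 I)+(\lambda-\lambda_0)I\succeq c_2(\Sigma+\lambda I)\), and symmetrically above. The constants do not change at all.

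Step 2 does not go through as written. If you substitute \(u\asymp N\) into \eqref{eq:gd-bias-gaussian-tail} literally, its last term is \(\sqrt{u\operatorname{tr}(E^2)/N}\asymp\sqrt{\operatorname{tr}(\Sigma_{>k}^2)}\asymp k^{1/2-a}\). This exceeds \(j^{-a}\) by a factor \(\sqrt{k}\asymp\sqrt N\), exactly in the range where you need the bound. The three-term bound you display silently drops this term.

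The bound you want is nevertheless true; it just needs a sharper tool. Write \(\|\widehat\Sigma_{>k}\|_2\stackrel{d}{=}\|G\Sigma_{>k}^{1/2}\|_2^2\), where \(G\) has i.i.d.\ \(\mathcal N(0,1/N)\) entries.
\begin{itemize}
\item Chevet's inequality gives \(\mathbb E\|G\Sigma_{>k}^{1/2}\|_2\le\|\Sigma_{>k}\|_2^{1/2}+\sqrt{\operatorname{tr}(\Sigma_{>k})/N}\).
\item The map is \(\|\Sigma_{>k}\|_2^{1/2}/\sqrt N\)-Lipschitz in the underlying standard Gaussian entries, so Gaussian concentration applies.
\item Hence, with probability \(1-e^{-N/2}\), \(\|\widehat\Sigma_{>k}\|_2\lesssim\|\Sigma_{>k}\|_2+\operatorname{tr}(\Sigma_{>k})/N\lesssim k^{-a}+k^{1-a}/N\lesssim j^{-a}\) whenever \(k\le N\).
\end{itemize}
Koltchinskii--Lounici gives the same bound. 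With this replacement, Step 2 holds for every \(j\le\min\{M,N\}\), and the upper half of Step 1 becomes redundant.

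Finally, fix the Weyl step. For even \(j\) you have \(j-k=k\), and \(\mu_k(\widehat\Sigma_{\le k})\neq 0\) in general, so your displayed equality is false. The correct form is \(\mu_j(X+Y)\le\mu_{k+1}(X)+\mu_{j-k}(Y)\) with \(X=2\widehat\Sigma_{\le k}\) and \(Y=2\widehat\Sigma_{>k}\). Since \(X\) has rank at most \(k\), \(\mu_{k+1}(X)=0\).
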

\section{Experimental setup}
\label{sec:exp-setup}

The main-text experiments are generated by
\path{experiments/experiments_dynamic.py}, with optimization primitives in
\path{experiments/helpers.py}. From the project root, the reported results can
be regenerated with
\begin{lstlisting}[basicstyle=\ttfamily\footnotesize,breaklines=true]
python3 experiments/experiments_dynamic.py \
  --output-dir experiments \
  --d 10000 --m 64 --n 512 --t 64 --l 512 \
  --reps 300 --seed 123 \
  --one-pass-n-values 96,128,160,192,224,256,288,320,352,384,416,448,480,512 \
  --multipass-grad-budget-values 512,640,768,1024,1280,1536,2048,2560,3072,3584,4096
\end{lstlisting}
The omitted defaults are \(a=2\), \(b=1.5\), \(\gamma=0.05\), and
\(\sigma=1\). Since \texttt{--grad-budget} is omitted, the code sets
\(D_{\mathrm{grad}}=L(N/T)=4096\), which is used in Experiment 4.

\subsection{Synthetic sketched-regression model}

The ambient covariance is diagonal with \(\lambda_i=i^{-a}\). Independently
for each coordinate, the code draws
\[
w_i^\ast\sim
\mathcal N\!\left(0,\frac{i^{-b}}{\lambda_i}\right),
\]
and draws the Gaussian sketch entries from \(\mathcal N(0,1/M)\). It then
forms the exact sketched covariance \(\Sigma=SHS^\top\) and the joint Gaussian
law of the sketched feature \(z=Sx\) and clean signal
\(\langle x,w^\ast\rangle\). Each dataset is sampled directly from this joint
law with
\[
y=\langle x,w^\ast\rangle+\varepsilon,
\qquad
\varepsilon\sim\mathcal N(0,\sigma^2).
\]
The sketch and target are fixed using seed \(123\). The dataset and
optimization randomness are refreshed over \(R=300\) repetitions, and all
schedules at a common budget use the same dataset within each repetition.

\subsection{WSD influence and batch schedules}

For a horizon \(J\), the implementation sets the warmup endpoint to
\(\lfloor0.1J\rfloor\), the stable endpoint to \(\lfloor0.5J\rfloor\), and
uses the WSD schedule in Section~\ref{sec:prelim}. From the eigenvalues of the
simulated \(\Sigma\), it computes
\[
\kappa_t^{(J)}
=
\gamma_t^2\sum_{j=1}^M\mu_j^2
\prod_{s=t+1}^J(1-\gamma_s\mu_j)^2.
\]
This is done once for \(T=64\) and once for \(L=512\).

At each total budget, four integer schedules are constructed from the weights
\[
\begin{aligned}
\sqrt{\kappa_t^{(J)}}&\quad\text{(oracle)},
&1&\quad\text{(constant)},\\
J-t+1&\quad\text{(early-large/late-small)},
&t&\quad\text{(early-small/late-large)}.
\end{aligned}
\]
Each update receives at least one sample. The remaining budget is allocated
proportionally to the corresponding weights and rounded by a largest-remainder
rule that preserves the exact total; multi-pass batches are additionally
capped at \(N\). A negligible positive offset is added inside the oracle
square root only to avoid zero numerical weights.

\subsection{Experiments 1 and 2: dynamic-schedule scaling}

For one pass, we fix \(T=64\) and vary
\[
N\in\{96,128,160,192,224,256,288,320,352,384,416,448,480,512\}.
\]
Every schedule satisfies \(\sum_{t=1}^T B_t=N\), and every sample is used
once. If \(u_{T,r}^{\mathrm{op}}\) is the output on repetition \(r\), the
reported centered variance is
\[
\widehat{\mathrm{Var}}
=
\frac{1}{R-1}\sum_{r=1}^R
\left\|\Sigma^{1/2}
\left(u_{T,r}^{\mathrm{op}}-\frac1R\sum_{q=1}^R
u_{T,q}^{\mathrm{op}}\right)\right\|_2^2.
\]
Figure~\ref{fig:experiments}(a) indexes this quantity by
\[
\frac1{B_{\mathrm{eff}}}
=
\frac{\sum_t\kappa_t^{(T)}/B_t}{\sum_t\kappa_t^{(T)}}.
\]
The dotted reference is a single fitted multiple of
\(B_{\mathrm{eff}}^{-1}\) across all schedules and budgets.

For multi-pass SGD, we fix \(N=L=512\) and use
\[
D_{\mathrm{grad}}
\in\{512,640,768,1024,1280,1536,2048,2560,3072,3584,4096\}.
\]
Each mini-batch is sampled uniformly without replacement from the fixed
dataset, independently across updates. On the same dataset and WSD trajectory,
we compute the full-batch reference \(\theta_L\) and report
\[
\widehat{\mathrm{Fluc}}
=
\frac1R\sum_{r=1}^R
\left\|\Sigma^{1/2}
\left(u_{L,r}^{\mathrm{mp}}-\theta_{L,r}\right)\right\|_2^2.
\]
Figure~\ref{fig:experiments}(b) indexes this quantity by
\[
\rho_{\mathrm{eff}}
=
\frac{\sum_t\rho_t\kappa_t^{(L)}}{\sum_t\kappa_t^{(L)}},
\qquad
\rho_t=\frac{N-B_t}{B_t(N-1)}.
\]
The dotted reference is one fitted multiple of \(\rho_{\mathrm{eff}}\).

\subsection{Experiment 3: relative allocation comparison}

At \(N=512\) and \(D_{\mathrm{grad}}=4096\), the code normalizes both the
measured stochastic errors and their influence-weighted predictions by the
constant schedule. The resulting ratios are stored in
\path{experiments/dynamic_allocation_comparison.csv} and displayed in
Figure~\ref{fig:experiments}(c). This comparison directly includes the
rounded oracle schedule used in Corollary~\ref{cor:influence-matched-allocation}.

\subsection{Experiment 4: WSD influence profile}

For \(L=512\) and \(D_{\mathrm{grad}}=4096\), the warmup, stable, and decay
phases are iterations \(1{:}51\), \(52{:}256\), and \(257{:}512\). The code
normalizes \(\gamma_t\), \(\kappa_t\), and \(\sqrt{\kappa_t}\) by their
respective maxima and plots them with the oracle integer schedule and the
constant batch size \(D_{\mathrm{grad}}/L=8\). The oracle schedule sums to
\(4096\), ranges from \(1\) to \(23\), and reaches its maximum at iteration
\(256\), where \(\kappa_t\) also peaks. The full per-iteration data are saved
in \path{experiments/dynamic_oracle_influence_profile.csv}.

\subsection{Reported files}

The files \path{experiments/dynamic_one_pass_summary.csv} and
\path{experiments/dynamic_multipass_wor_summary.csv} contain the sweep
statistics. The corresponding \texttt{*\_schedules.csv} files record every
\(B_t\), \(\kappa_t\), and, for multi-pass, \(\rho_t\). The allocation table
and WSD profile use \path{experiments/dynamic_allocation_comparison.csv} and
\path{experiments/dynamic_oracle_influence_profile.csv}, respectively.
\section{Limitations and Broader Effects}
\label{sec:limitations-broader-effects}

\paragraph{Limitations.}
Our analysis is intentionally stylized. The theory is proved for sketched linear regression under a Gaussian design, a homoscedastic teacher--student label model, power-law covariance decay, and a source condition on the target parameter. These assumptions let us separate approximation, bias, variance, and fluctuation cleanly, but they do not cover misspecification, heavy-tailed or dependent data, non-Gaussian sketching, or feature learning in nonlinear models. The experiments are likewise synthetic and are designed to test the predicted scaling behavior rather than empirical competitiveness on real tasks. Accordingly, the resulting batch-size scaling laws should be interpreted as precise results for a controlled regime, not as universal prescriptions for all SGD training problems.

\paragraph{Broader effects.}
One positive effect of this work is sharper guidance for how batch size changes optimization noise in large-scale training. By isolating when batching primarily alters stochastic terms and when without-replacement sampling can further reduce fluctuation, the results can inform more compute-efficient training strategies and improve theoretical intuition for algorithm design. The main risk is overgeneralization: if stylized scaling laws derived under restrictive assumptions are transferred directly to complex real-world systems, practitioners may choose training rules that are poorly calibrated for misspecified models, distribution shift, or fairness and safety constraints that are absent from our setup. We therefore view these results as a theoretical foundation that should be paired with application-specific validation before informing deployment decisions.


\end{document}